\newcommand{\arxiv}[1]{\iftoggle{neurips}{}{#1}}
\newcommand{\neurips}[1]{\iftoggle{neurips}{#1}{}}
\global\togglefalse{neurips}
\newcommand{\loose}{\looseness=-1}
\newcommand{\draft}[1]{\iftoggle{draft}{#1}{}}
\newcommand{\multiline}[1]{\parbox[t]{\dimexpr\linewidth-\algorithmicindent}{#1}}
\declaretheorem[name=Theorem,parent=section]{theorem}
\declaretheorem[name=Lemma,parent=section]{lemma}
\declaretheorem[name=Assumption, parent=section]{assumption}
\declaretheorem[qed=$\triangleleft$,name=Remark,style=definition, parent=section]{remark}
\declaretheorem[name=Proposition, parent=section]{proposition}
  \renewenvironment{proof}[1][Proof]%
  {%
   \par\noindent{\bfseries\upshape {#1.}\ }%
  }%
  {\qed\newline}
\theoremstyle{definition}  %
\newtheorem{corollary}{Corollary}[section]
\theoremstyle{plain}
\newtheorem{definition}{Definition}[section]
\xpatchcmd{\proof}{\itshape}{\normalfont\proofnameformat}{}{}
\newcommand{\proofnameformat}{\bfseries}
\newcommand{\pref}[1]{\cref{#1}}
\newcommand{\pfref}[1]{Proof of \pref{#1}}
\Crefname{assumption}{Assumption}{Assumptions}
    \let\Cref\crtCref
    \let\cref\crtcref
\DeclarePairedDelimiter{\abs}{\lvert}{\rvert} %
\DeclarePairedDelimiter{\brk}{[}{]}
\DeclarePairedDelimiter{\crl}{\{}{\}}
\DeclarePairedDelimiter{\prn}{(}{)}
\DeclarePairedDelimiter{\tri}{\langle}{\rangle}
\DeclareMathOperator{\En}{\mathbb{E}}
\DeclareMathOperator*{\argmin}{arg\,min} %
\DeclareMathOperator*{\argmax}{arg\,max}
\newcommand{\mb}[1]{\boldsymbol{#1}}
\newcommand{\wt}[1]{\widetilde{#1}}
\newcommand{\wh}[1]{\widehat{#1}}
\newcommand{\wb}[1]{\widebar{#1}}
\def\ddefloop#1{\ifx\ddefloop#1\else\ddef{#1}\expandafter\ddefloop\fi}
\def\ddef#1{\expandafter\def\csname bb#1\endcsname{\ensuremath{\mathbb{#1}}}}
\def\ddefloop#1{\ifx\ddefloop#1\else\ddef{#1}\expandafter\ddefloop\fi}
\def\ddef#1{\expandafter\def\csname b#1\endcsname{\ensuremath{\mathbf{#1}}}}
\def\ddef#1{\expandafter\def\csname sf#1\endcsname{\ensuremath{\mathsf{#1}}}}
\def\ddef#1{\expandafter\def\csname c#1\endcsname{\ensuremath{\mathcal{#1}}}}
\def\ddef#1{\expandafter\def\csname h#1\endcsname{\ensuremath{\widehat{#1}}}}
\def\ddef#1{\expandafter\def\csname hc#1\endcsname{\ensuremath{\widehat{\mathcal{#1}}}}}
\def\ddef#1{\expandafter\def\csname t#1\endcsname{\ensuremath{\widetilde{#1}}}}
\def\ddef#1{\expandafter\def\csname tc#1\endcsname{\ensuremath{\widetilde{\mathcal{#1}}}}}
\def\ddefloop#1{\ifx\ddefloop#1\else\ddef{#1}\expandafter\ddefloop\fi}
\def\ddef#1{\expandafter\def\csname scr#1\endcsname{\ensuremath{\mathscr{#1}}}}
\newcommand{\ls}{\ell}
\newcommand{\ind}{\mathbbm{1}}    %
\newcommand{\veps}{\varepsilon}
\newcommand{\ldef}{\vcentcolon=}
\newcommand{\rdef}{=\vcentcolon}
\newcommand{\Gclass}{\cG}
\newcommand{\loss}{\ell}
\DeclarePairedDelimiter{\set}{\{}{\}}
\newcommand{\Xclass}{\cX}
\newcommand{\Yclass}{\cY}
\newcommand{\Zclass}{\cW}
\newcommand{\Kernel}{\cK}
\newcommand{\histSet}{S}
\newcommand{\x}{x}
\newcommand{\y}{y}
\newcommand{\z}{w}
\newcommand{\coef}{\beta}
\newcommand{\Event}{A}
\newcommand{\vanilla}{original\xspace}
\DeclareFontFamily{U}{mathx}{\hyphenchar\font45}
\DeclareFontShape{U}{mathx}{m}{n}{
      <5> <6> <7> <8> <9> <10>
      <10.95> <12> <14.4> <17.28> <20.74> <24.88>
      mathx10
      }{}
\DeclareSymbolFont{mathx}{U}{mathx}{m}{n}
\DeclareMathAccent{\widecheck}{0}{mathx}{"71}
\DeclareMathAccent{\wideparen}{0}{mathx}{"75}
\newcommand{\Piq}{\Pi_{\cQ}}
\newcommand{\cEhat}{\wh{\cE}}%
\newcommand{\vepsconc}{\veps_{\mathrm{conc}}(n)}%
\newcommand{\vepsconcs}{\veps^2_{\mathrm{conc}}(n)}%
\newcommand{\piest}{\pi^{\mathrm{est}}}
\newcommand{\piestq}[1][Q]{\pi^{\mathrm{est}}_{#1}}
\newcommand{\lest}{\ell^{\mathrm{est}}_h}
\newcommand{\fq}{f^{Q}}
\newcommand{\Dbipi}[3][\pi]{D^{#1}_{\mathsf{bi}}\prn*{#2\dmid#3}}
\newcommand{\Dbishort}{D_{\mathsf{bi}}}
\newcommand{\Dbiflipshort}{\Dflipshort_{\mathsf{bi}}}
\newcommand{\ocompbi}[1][\gamma]{\odec^{\mathsf{bi}}_{#1}}
\newcommand{\EstOptBi}[1][\gamma]{\EstOpt^{\mathsf{bi}}_{#1}}
\newcommand{\EstOptBilong}[1][\gamma]{\EstOpt_{#1}^{\Dbishort}}
\newcommand{\ocompsbe}[1][\gamma]{\odec^{\mathsf{sbe}}_{#1}}
\newcommand{\EstOptSB}[1][\gamma]{\EstOpt^{\mathsf{sbe}}_{#1}}
\newcommand{\EstOptSBlong}[1][\gamma]{\EstOpt_{#1}^{\Dsbeshort}}
\newcommand{\Dhelspi}[3][\pi]{D^{#1}_{\mathsf{H}}\prn*{#2,#3}}
\newcommand{\piq}[1][Q]{\pi_{#1}}
\newcommand{\tst}{\textsf{TS3}\xspace}
\newcommand{\dlike}{divergence-like\xspace}
\newcommand{\Dsqpi}[3][\pi]{D^{#1}_{\mathsf{sq}}\prn*{#2,#3}}
\newcommand{\Dsqshort}{D_{\mathsf{sq}}}
\newcommand{\Dhels}[2]{D^{2}_{\mathsf{H}}\prn*{#1,#2}}
\newcommand{\Dhelshort}{D_{\mathsf{H}}}
\newcommand{\Dsbepi}[3][\pi]{D^{#1}_{\mathsf{sbe}}\prn*{#2\dmid#3}}
\newcommand{\Dsbeshort}{D_{\mathsf{sbe}}}
\newcommand{\Dflippi}[3][\pi]{\Dflipshort^{#1}\prn*{#2\dmid{}#3}}
\newcommand{\Dflipshort}{\widecheck{D}}
\newcommand{\Dgen}[2]{D\prn*{#1\dmid{}#2}}
\newcommand{\Dgenshort}{D}
\newcommand{\Dgenpi}[3][\pi]{D^{#1}\prn*{#2\dmid{}#3}}
\newcommand{\DgenpiX}[4][\pi]{D^{#1}\prn[#2]{#3\dmid{}#4}}
\newcommand{\suff}{\psi}
\newcommand{\suffhat}{\wh{\psi}}
\newcommand{\suffmap}{\mb{\psi}}
\newcommand{\Suff}{\Psi}
\newcommand{\fsuff}{f^{\suff}}
\newcommand{\pisuff}{\pi_{\suff}}
\newcommand{\fsuffhat}{f^{\suffhat}}
\newcommand{\pisuffhat}{\pi_{\suffhat}}
\newcommand{\EstOpt}{\mathrm{\normalfont{\textbf{OptEst}}}}
\newcommand{\EstOptD}[1][\gamma]{\EstOpt_{#1}^{D}}
\newcommand{\EstOptDFull}[1][\gamma]{\EstOpt_{#1}^{D}(T,\delta)}
\newcommand{\EstOptDFullKn}[1][\gamma]{\EstOpt_{#1}^{D}(K,n,\delta)}
\newcommand{\EstOptHel}[1][\gamma]{\EstOpt^{\mathsf{H}}_{#1}}
\newcommand{\EstOptSq}[1][\gamma]{\EstOpt^{\mathsf{sq}}_{#1}}
\newcommand{\odec}{\normalfont{\textsf{o-dec}}}
\newcommand{\ocompD}[1][\gamma]{\odec^{D}_{#1}}
\newcommand{\ocompH}[1][\gamma]{\odec^{\mathsf{H}}_{#1}}
\newcommand{\ocompSq}[1][\gamma]{\odec^{\mathsf{bi}}_{#1}}
\newcommand{\newcomp}{Optimistic DEC\xspace}
\newcommand{\ocomphel}[1][\gamma]{\odec^{\mathsf{H}}_{#1}}
\newcommand{\compgen}[1][D]{\comp^{#1}}
\newcommand{\compgenrand}[1][D]{\underline{\mathsf{dec}}_{\gamma}^{#1}}
\newcommand{\compgenrandbasic}[1][D]{\underline{\mathsf{dec}}^{#1}}
\newcommand{\compsbe}[1][D]{\comp^{\Dsbeshort}}
\newcommand{\Lcont}{L_{\mathrm{lip}}}
\newcommand{\fmhatt}{f\sups{\Mhat\ind{t}}}
\newcommand{\dec}{\mathsf{dec}}
\newcommand{\RegDM}{\Reg_{\mathsf{DM}}}
\newcommand{\Empi}[1][M]{\En^{\sss{#1},\pi}}
\newcommand{\cTm}[1][M]{\cT\sups{#1}}
\newcommand{\cTmstar}[1][\Mstar]{\cT\sups{#1}}
\newcommand{\compbi}{\compgen[\mathsf{bi}]}
\newcommand{\dimbi}{d_{\mathsf{bi}}}
\newcommand{\Lbi}{L_{\mathrm{bi}}}
\newcommand{\Lbis}{L^2_{\mathrm{bi}}}
\newcommand{\pialphaq}[1][Q]{\pi^{\alpha}_{\sss{#1}}}
\renewcommand{\emptyset}{\varnothing}
\newcommand{\filt}{\mathfrak{F}}
\newcommand{\hist}{\mathcal{H}}
\newcommand{\Framework}{Decision Making with Structured Observations\xspace}
\newcommand{\FrameworkShort}{DMSO\xspace}
\newcommand{\learner}{learner\xspace}
\newcommand{\act}{\pi}
\newcommand{\Act}{\Pi}
\newcommand{\obs}{o}
\newcommand{\cMhat}{\wh{\cM}}
\newcommand{\comp}[1][\gamma]{\mathsf{dec}_{#1}}
\newcommand{\compH}{\compgen[\mathsf{H}]}
\newcommand{\CompText}{Decision-Estimation Coefficient\xspace}
\newcommand{\CompAbbrev}{DEC\xspace}
\newcommand{\CompShort}{\CompAbbrev}
\newcommand{\etdtext}{Estimation-to-Decisions\xspace}
\newcommand{\etd}{\textsf{E\protect\scalebox{1.04}{2}D}\xspace}
\newcommand{\etdopt}{\textsf{E\protect\scalebox{1.04}{2}D.Opt}\xspace}
\newcommand{\etdopttext}{Optimistic Estimation-to-Decisions\xspace}
\newcommand{\M}[1]{^{{\scriptscriptstyle M}}}  %
\newcommand{\sups}[1]{^{{\scriptscriptstyle#1}}}
\newcommand{\subs}[1]{_{{\scriptscriptstyle#1}}}
\newcommand{\sss}[1]{{\scriptscriptstyle#1}}
\newcommand{\Ens}[2]{\En^{\sss{#1},#2}}
\newcommand{\Enmpi}[2]{\En^{\sss{#1},#2}}
\newcommand{\fm}[1][M]{f\sups{#1}}
\newcommand{\pim}[1][M]{\pi_{\sss{#1}}}
\newcommand{\cFm}{\cF_{\cM}}
\newcommand{\fmbar}{f\sups{\Mbar}}
\newcommand{\pimbar}{\pi\subs{\Mbar}}
\newcommand{\fmstar}{f\sups{\Mstar}}
\newcommand{\pimstar}{\pi\subs{\Mstar}}
\newcommand{\Mbar}{\wb{M}}
\newcommand{\pimi}{\pi\subs{M_i}}
\newcommand{\Rm}[1][M]{R\sups{#1}}
\newcommand{\Pm}[1][M]{P\sups{#1}}
\newcommand{\PiRNS}{\Pi_{\mathrm{RNS}}} %
\newcommand{\Qmstarstar}[1][\Mstar]{Q^{\sss{#1},\star}}
\newcommand{\Qmstar}[1][M]{Q^{\sss{#1},\star}}
\newcommand{\Reg}{\mathrm{\mathbf{Reg}}}
\newcommand{\Est}{\mathrm{\mathbf{Est}}}
\newcommand{\EstHel}{\mathrm{\mathbf{Est}}^{\mathsf{H}}}
\newcommand{\EstH}{\EstHel}
\newcommand{\EstHFull}{\EstHel(T,\delta)}
\newcommand{\EstD}{\mathrm{\mathbf{Est}}^{D}}
\newcommand{\EstDFull}{\mathrm{\mathbf{Est}}^{D}(T,\delta)}
\newcommand{\RegOL}{\mathrm{\mathbf{Reg}}_{\mathsf{OL}}}
  \newcommand{\AlgEst}{\mathrm{\mathbf{Alg}}_{\mathsf{Est}}}
\newcommand{\Mhat}{\wh{M}}
\newcommand{\Mstar}{M^{\star}}
\newcommand{\algcommentlight}[1]{\textcolor{blue!70!black}{\transparent{0.5}\footnotesize{\texttt{\textbf{//\hspace{2pt}#1}}}}}
\newcommand{\midsem}{\,;}
\newcommand{\approxleq}{\lesssim}
\newcommand{\approxgeq}{\gtrsim}
\newcommand{\ul}[1][T]{u_{1:#1}}
\renewcommand{\ind}[1]{^{{\scriptscriptstyle #1}}}
\newcommand{\bigoh}{O}
\newcommand{\bigoht}{\wt{O}}
\newcommand{\bigom}{\Omega}
\newcommand{\indic}{\mathbb{I}}
\newcommand{\dmid}{\;\|\;}
\newcommand{\conv}{\mathrm{co}}
\newcommand{\Qstar}{Q^{\star}}
\newcommand{\unif}{\mathrm{Unif}}
\newcommand{\mathand}{\quad\text{and}\quad}
\def\multiset#1#2{\ensuremath{\left(\kern-.3em\left(\genfrac{}{}{0pt}{}{#1}{#2}\right)\kern-.3em\right)}}
\renewcommand{\ls}{\ell}
\renewcommand{\emptyset}{\varnothing}
\newcommand{\nc}{\newcommand}
\nc{\DMO}{\DeclareMathOperator}
\DMO{\prox}{prox}
\DMO{\Span}{span}
\DMO{\UCB}{UCB}
\DMO{\LCB}{LCB}
\nc{\br}[2]{{\rm br}^{#1}({#2})}
\nc{\depth}[1]{{\rm d}({#1})}
\nc{\child}[2]{{\rm ch}_{#1}({#2})}
\nc{\parent}[1]{{\rm pa}({#1})}
\nc{\dg}{\dagger}
\nc{\indsig}[2]{\mathcal{I}_{#1}({#2})}
\nc{\total}{{\rm fin}}
\nc{\early}{{\rm pre}}
\nc{\zsink}{z_{\rm sink}}
\nc{\lowv}{{\rm low}}
\nc{\ol}{\overline}
\nc{\madec}[3]{\texttt{ma-dec}_{#1}({#2}, {#3})}
\nc{\madeco}[1]{\texttt{ma-dec}_{#1}}
\nc{\madecd}[3]{\texttt{ma-dec}^{\texttt{d}}_{#1}({#2}, {#3})}
\nc{\mainf}{\texttt{ma-inf}}
\nc{\maexo}{\texttt{ma-exo}}
\nc{\decc}{\texttt{dec}^{\texttt{c}}}
\nc{\deccp}{\texttt{dec}^{\texttt{c-pac}}}
\nc{\deccr}{\texttt{dec}^{\texttt{c-reg}}}
\nc{\Alg}{{\rm\bf Alg}}
\nc{\co}{{\rm co}}
\nc{\BV}{\mathbb{V}}
\nc{\ham}[2]{d_{\rm Ham}({#1}, {#2})}
\nc{\gamvec}{\gamma}
\nc{\til}{\widetilde}
\nc{\td}{\tilde}
\nc{\todo}[1]{\ifnum\Comments=1 {\color{red}  [TODO: #1]}\fi}
\nc{\old}[1]{\ifnum\Comments=1 {\color{brown}  [OLD: #1]}\fi}
\nc{\BP}{\mathbb{P}}
\nc{\BI}{\mathbb{I}}
\nc{\fools}[3]{\MF_{#3}({#1}, {#2})}
\nc{\fool}[2]{\MF({#1},{#2})}
\nc{\clip}[2]{{\rm clip}\left[ \left. {#1} \right| {#2} \right]}
\nc{\imax}{\omega}
\nc{\CF}{\mathscr{F}}
\nc{\CG}{\mathscr{G}}
\nc{\CA}{\mathscr{A}}
\nc{\MH}{\mathcal{H}}
\nc{\MV}{\mathcal{V}}
\nc{\MC}{\mathcal{C}}
\nc{\MI}{\mathcal{I}}
\nc{\MQ}{\mathcal{Q}}
\nc{\st}{\star}
\nc{\lng}{\langle}
\nc{\rng}{\rangle}
\DMO{\OOPT}{opt}
\nc{\dopt}[2]{\ell_{\OOPT}({#1},{#2})}
\nc{\MG}{\mathcal{G}}
\nc{\MP}{\mathcal{P}}
\nc{\PP}{\mathbb{P}}
\nc{\TT}{\mathbb{T}}
\nc{\TTmax}{\TT_{\max}}
\DMO{\REG}{Reg}
\DMO{\WREG}{wReg}
\nc{\wreg}[2]{{\Delta}^{\rm w}_{{#1}}({#2})}
\nc{\wReg}[2]{{\WREG}_{{#1}}({#2})}
\DMO{\Ham}{Ham}
\DMO{\Gap}{Gap}
\DMO{\GD}{GD}
\DMO{\GDA}{GDA}
\DMO{\EG}{EG}
\DMO{\OGDA}{OGDA}
\DMO{\Unif}{Unif}
\DMO{\Tr}{Tr}
\nc{\Qu}{\ul{Q}}
\nc{\Qo}{\ol{Q}}
\nc{\Ro}{\ol{R}}
\nc{\Vu}{\ul{V}}
\nc{\Vo}{\ol{V}}
\nc{\RanQ}{\Delta Q}
\nc{\RanV}{\Delta V}
\nc{\clipQ}{\Delta \breve{Q}}
\nc{\frzQ}{\Delta \mathring{Q}}
\nc{\clipV}{\Delta \breve{V}}
\nc{\clipdelta}{\breve{\delta}}
\nc{\cliptheta}{\breve{\theta}}
\nc{\delmin}{\Delta_{{\rm min}}}
\nc{\delmins}[1]{\Delta_{{\rm min},{#1}}}
\nc{\gapfinal}[1]{\max \left\{ \frac{\frzQ_{{#1}}^{k^\st}(x,a)}{2H}, \frac{\delmin}{4H} \right\}}
\nc{\post}[2]{R({#1}; {#2})}
\nc{\posts}[3]{R_{#3}({#1}; {#2})}
\nc{\pstr}{{\rm po}}
\nc{\prior}{{\rm pr}}
\nc{\algnst}[1]{\begin{align*}#1\end{align*}}
\nc{\algn}[1]{\begin{align}#1\end{align}}
\nc{\matx}[1]{\left(\begin{matrix}#1\end{matrix}\right)}
\nc{\nuu}{\nu}
\nc{\bel}[1]{\mathbf{b}({#1})}
\nc{\nbel}[1]{\bar{\mathbf{b}}({#1})}
\nc{\sbel}[2]{\mathbf{b}'_{#1}({#2})}
\nc{\nsbel}[2]{\bar{\mathbf{b}}'_{#1}({#2})}
\nc{\bone}{\mathbf{1}}
\nc{\MO}{\mathcal O}
\nc{\MU}{\mathcal{U}}
\nc{\ME}{\mathcal{E}}
\nc{\MN}{\mathcal{N}}
\nc{\MK}{\mathcal{K}}
\nc{\MM}{\mathcal{M}}
\nc{\ML}{\mathcal{L}}
\nc{\MS}{\mathcal{S}}
\nc{\MT}{\mathcal{T}}
\nc{\BF}{\mathbb F}
\nc{\BQ}{\mathbb Q}
\nc{\MX}{\mathcal{X}}
\nc{\MA}{\mathcal{A}}
\nc{\MD}{\mathcal{D}}
\nc{\MB}{\mathcal{B}}
\nc{\MZ}{\mathcal{Z}}
\nc{\MJ}{\mathcal{J}}
\nc{\MW}{\mathcal{W}}
\nc{\MR}{\mathcal{R}}
\nc{\MY}{\mathcal{Y}}
\nc{\BZ}{\mathbb Z}
\nc{\BN}{\mathbb N}
\nc{\ep}{\epsilon}
\nc{\vep}{\varepsilon}
\nc{\gapfn}[1]{\varepsilon_{#1}}
\nc{\ggapfn}[2]{\varphi_{#1}({#2})}
\nc{\epsahk}{\gapfn{0}}
\nc{\BH}{\mathbb H}
\nc{\BG}{\mathbb{G}}
\nc{\D}{\Delta}
\nc{\MF}{\mathcal{F}}
\nc{\One}[1]{\mathbbm{1}\left\{{#1}\right\}}
\nc{\bOne}{\mathbf{1}}
\nc{\Aopt}{\mathcal{A}^{\rm opt}}
\nc{\Amul}{\mathcal{A}^{\rm mul}}
\nc{\CM}{\mathscr{M}}
\nc{\CO}{\mathscr{O}}
\nc{\CR}{\mathsscr{R}}
\nc{\SP}{\mathsf P}
\nc{\SQ}{\mathsf Q}
\nc{\SC}{\mathscr{C}}
\nc{\SD}{\mathscr{D}}
\nc{\SE}{\mathscr{E}}
\nc{\SG}{\mathscr{G}}
\nc{\DO}{\accentset{\circ}{\D}}
\nc{\mf}{\mathfrak}
\nc{\mfp}{\mathfrak{p}}
\nc{\mfq}{\mf{q}}
\nc{\Sp}{\mbox{Spec}}
\nc{\Spm}{\mbox{Specm}}
\nc{\hookuparrow}{\mathrel{\rotatebox[origin=c]{90}{$\hookrightarrow$}}}
\nc{\hookdownarrow}{\mathrel{\rotatebox[origin=c]{-90}{$\hookrightarrow$}}}
\nc{\hra}{\hookrightarrow}
\nc{\tra}{\twoheadrightarrow}
\nc{\sgn}{{\rm sgn}}
\nc{\aut}{{\rm Aut}}
\nc{\Hom}{{\rm Hom}}
\nc{\img}{{\rm Im}}
\DMO{\id}{Id}
\DMO{\KL}{KL}
\nc{\kld}[2]{\KL({#1}||{#2})}
\nc{\ren}[3]{D_{#3}({#1}||{#2})}
\nc{\chisq}[2]{\chi^2({#1},{#2})}
\nc{\dvg}[2]{D({#1} \| {#2})}
\DMO{\BSS}{BSS}
\DMO{\BES}{BES}
\DMO{\BGS}{BGS}
\nc{\indep}{\perp}
\DMO{\sink}{sink}
\nc{\fp}[1]{\MP_1({#1})}
\nc{\BO}{\mathbb{O}}
\nc{\BT}{\mathbb{T}}
\nc{\RR}{\mathbb{R}}
\nc{\Gradient}{\nabla}
\nc{\norm}[1]{\left \lVert #1 \right \rVert}
\nc{\EE}{\mathbb{E}}
\DMO{\PR}{Pr}
\nc{\E}{\mathbb{E}}
\nc{\ra}{\rightarrow}
\nc{\opo}{\texttt{opo}} \let\underbar\undefined
\let\save@mathaccent\mathaccent
\newcommand*\if@single[3]{%
  \setbox0\hbox{${\mathaccent"0362{#1}}^H$}%
  \setbox2\hbox{${\mathaccent"0362{\kern0pt#1}}^H$}%
  \ifdim\ht0=\ht2 #3\else #2\fi
  }
\newcommand*\rel@kern[1]{\kern#1\dimexpr\macc@kerna}
\newcommand*\widebar[1]{\@ifnextchar^{{\wide@bar{#1}{0}}}{\wide@bar{#1}{1}}}
\newcommand*\underbar[1]{\@ifnextchar_{{\under@bar{#1}{0}}}{\under@bar{#1}{1}}}
\newcommand*\wide@bar[2]{\if@single{#1}{\wide@bar@{#1}{#2}{1}}{\wide@bar@{#1}{#2}{2}}}
\newcommand*\under@bar[2]{\if@single{#1}{\under@bar@{#1}{#2}{1}}{\under@bar@{#1}{#2}{2}}}
\newcommand*\wide@bar@[3]{%
  \begingroup
  \def\mathaccent##1##2{%
    \let\mathaccent\save@mathaccent
    \if#32 \let\macc@nucleus\first@char \fi
    \setbox\z@\hbox{$\macc@style{\macc@nucleus}_{}$}%
    \setbox\tw@\hbox{$\macc@style{\macc@nucleus}{}_{}$}%
    \dimen@\wd\tw@
    \advance\dimen@-\wd\z@
    \divide\dimen@ 3
    \@tempdima\wd\tw@
    \advance\@tempdima-\scriptspace
    \divide\@tempdima 10
    \advance\dimen@-\@tempdima
    \ifdim\dimen@>\z@ \dimen@0pt\fi
    \rel@kern{0.6}\kern-\dimen@
    \if#31
      \overline{\rel@kern{-0.6}\kern\dimen@\macc@nucleus\rel@kern{0.4}\kern\dimen@}%
      \advance\dimen@0.4\dimexpr\macc@kerna
      \let\final@kern#2%
      \ifdim\dimen@<\z@ \let\final@kern1\fi
      \if\final@kern1 \kern-\dimen@\fi
    \else
      \overline{\rel@kern{-0.6}\kern\dimen@#1}%
    \fi
  }%
  \macc@depth\@ne
  \let\math@bgroup\@empty \let\math@egroup\macc@set@skewchar
  \mathsurround\z@ \frozen@everymath{\mathgroup\macc@group\relax}%
  \macc@set@skewchar\relax
  \let\mathaccentV\macc@nested@a
  \if#31
    \macc@nested@a\relax111{#1}%
  \else
    \def\gobble@till@marker##1\endmarker{}%
    \futurelet\first@char\gobble@till@marker#1\endmarker
    \ifcat\noexpand\first@char A\else
      \def\first@char{}%
    \fi
    \macc@nested@a\relax111{\first@char}%
  \fi
  \endgroup
}
\newcommand*\under@bar@[3]{%
  \begingroup
  \def\mathaccent##1##2{%
    \let\mathaccent\save@mathaccent
    \if#32 \let\macc@nucleus\first@char \fi
    \setbox\z@\hbox{$\macc@style{\macc@nucleus}_{}$}%
    \setbox\tw@\hbox{$\macc@style{\macc@nucleus}{}_{}$}%
    \dimen@\wd\tw@
    \advance\dimen@-\wd\z@
    \divide\dimen@ 3
    \@tempdima\wd\tw@
    \advance\@tempdima-\scriptspace
    \divide\@tempdima 10
    \advance\dimen@-\@tempdima
    \ifdim\dimen@>\z@ \dimen@0pt\fi
    \rel@kern{0.6}\kern-\dimen@
    \if#31
      \underline{\rel@kern{-0.6}\kern\dimen@\macc@nucleus\rel@kern{0.4}\kern\dimen@}%
      \advance\dimen@0.4\dimexpr\macc@kerna
      \let\final@kern#2%
      \ifdim\dimen@<\z@ \let\final@kern1\fi
      \if\final@kern1 \kern-\dimen@\fi
    \else
      \underline{\rel@kern{-0.6}\kern\dimen@#1}%
    \fi
  }%
  \macc@depth\@ne
  \let\math@bgroup\@empty \let\math@egroup\macc@set@skewchar
  \mathsurround\z@ \frozen@everymath{\mathgroup\macc@group\relax}%
  \macc@set@skewchar\relax
  \let\mathaccentV\macc@nested@a
  \if#31
    \macc@nested@a\relax111{#1}%
  \else
    \def\gobble@till@marker##1\endmarker{}%
    \futurelet\first@char\gobble@till@marker#1\endmarker
    \ifcat\noexpand\first@char A\else
      \def\first@char{}%
    \fi
    \macc@nested@a\relax111{\first@char}%
  \fi
  \endgroup
}
\let\OldStatex\Statex
\renewcommand{\Statex}[1][3]{%
  \setlength\@tempdima{\algorithmicindent}%
  \OldStatex\hskip\dimexpr#1\@tempdima\relax}
\let\oldparagraph\paragraph
\renewcommand{\paragraph}[1]{\oldparagraph{#1.}}
\newcommand{\paragraphi}[1]{\par\noindent\emph{#1.}}
\newcommand{\fakepar}[1]{\par\noindent\textbf{#1.}~}
\title{Model-Free Reinforcement Learning with \\ the
  Decision-Estimation Coefficient}
  \author{%
    Dylan J. Foster\\%
    {\small\texttt{dylanfoster@microsoft.com}} \and
    Noah Golowich\\
    {\small\texttt{nzg@mit.edu}}
    \and Jian Qian\\%
    {\small\texttt{jianqian@mit.edu}} \and Alexander
    Rakhlin\\%
    {\small\texttt{rakhlin@mit.edu}} \and Ayush
    Sekhari\\%
    {\small\texttt{sekhari@mit.edu}}
  }
\date{}
\begin{document}
\maketitle

\begin{abstract}

We consider the problem of interactive decision making, encompassing structured bandits and reinforcement
learning with general function approximation. Recently, \citet{foster2021statistical} introduced the
\CompText, a measure of statistical complexity that lower bounds the optimal regret for interactive decision
making, as well as a meta-algorithm, \etdtext, which achieves upper
bounds in terms of the same quantity. \etdtext is a \emph{reduction}, which lifts
algorithms for (supervised) online estimation into algorithms for
decision making. In this paper, we show that by combining \etdtext with
a specialized form of \emph{optimistic estimation} introduced by
\citet{zhang2022feel}, it is possible to obtain guarantees
that improve upon those of \citet{foster2021statistical} by
accommodating more lenient notions of estimation error.
We use this approach to derive regret bounds for
model-free reinforcement learning with value function approximation,
and give structural results showing when it can and cannot help more generally.

 \end{abstract}

\section{Introduction}
\label{sec:intro}
\arxiv{We study interactive decision making problems, ranging from bandits
to reinforcement learning, in which a
learning agent repeatedly interacts with an unknown environment with
the goal of maximizing a reward function, and aims to improve the quality
of their decisions on the fly by learning from feedback.
To learn to make decisions in a sample-efficient fashion, particularly in domains
with complex, high-dimensional feedback, it is critical to
incorporate prior knowledge, typically via modeling and function
approximation. In this context, a fundamental challenge is to understand what
modeling assumptions lead to low sample complexity, and what
algorithms achieve this.
While the literature on reinforcement
learning and interactive decision making contains a variety of
sufficient conditions for sample-efficient learning,
\citep{russo2013eluder,jiang2017contextual,sun2019model,du2021bilinear,jin2021bellman,dean2020sample,yang2019sample,jin2020provably,modi2020sample,ayoub2020model,krishnamurthy2016pac,du2019latent,li2009unifying,dong2019provably,zhou2021nearly},
\emph{necessary conditions} have been comparatively unexplored. Recently,
however, \citet{foster2021statistical} introduced the \CompText
(\CompShort), a measure of statistical complexity which leads to upper \emph{and}
lower bounds on the optimal sample complexity for interactive decision
making.\\
Regret bounds based on the \CompText are achieved by \etdtext (\etd),
a meta-algorithm which reduces the problem of interactive decision
making to supervised online estimation. While the \CompText leads to tight lower bounds on regret for many
problem settings, the upper bounds in \citet{foster2021statistical}
can be suboptimal in certain situations due to the need to perform estimation with respect to
\emph{Hellinger distance}, a stringent notion of estimation error. When specialized to reinforcement learning, the guarantees
for the \etd meta-algorithm in \citet{foster2021statistical} are only
tight for model-based settings (where function approximation is employed
to model and estimate transition probabilities), and do not lead to
meaningful guarantees for model-free settings with value function approximation.}
\neurips{
The theory of interactive decision making---ranging from bandits
to reinforcement learning with function approximation---contains a variety of
sufficient conditions for sample-efficient learning,
\citep{russo2013eluder,jiang2017contextual,sun2019model,du2021bilinear,jin2021bellman,dean2020sample,yang2019sample,jin2020provably,modi2020sample,ayoub2020model,krishnamurthy2016pac,du2019latent,li2009unifying,dong2019provably,zhou2021nearly},
but \emph{necessary conditions} have been comparatively unexplored. Recently,
however, \citet{foster2021statistical} introduced the \CompText
(\CompShort), a measure of statistical complexity which leads to upper \emph{and}
lower bounds on the optimal sample complexity for interactive decision
making.

Regret bounds based on the \CompText are achieved by \etdtext (\etd),
a meta-algorithm which reduces the problem of interactive decision
making to supervised online estimation. While the \CompText leads to tight lower bounds on regret for many
problem settings, the upper bounds in \citet{foster2021statistical}
can be suboptimal in certain situations due to the need to perform estimation with respect to
\emph{Hellinger distance}, a stringent notion of estimation error. When specialized to reinforcement learning, the guarantees
for the \etd meta-algorithm in \citet{foster2021statistical} are only
tight for model-based settings (where function approximation is employed
to model and estimate transition probabilities), and do not lead to
meaningful guarantees for model-free settings with value function
approximation. In this paper, we explore the prospect of developing
tighter regret bounds suitable for model-free settings.
}

\paragraph{Contributions} \arxiv{In this paper, we}\neurips{We} show that by
combining \etdtext with \emph{optimistic online estimation}, an elegant technique
recently introduced by \citet{zhang2022feel}, it is possible to obtain
regret bounds that improve upon \citet{foster2021statistical} by
accommodating weaker notions of estimation error. Our main
contributions are\arxiv{ as follows}:
\neurips{\vspace{2pt}\\
  $\bullet$ We introduce a new \emph{optimistic} variant of the \CompText, and
  show that a variant of \etdtext that incorporates optimistic
  estimation achieves regret bounds that scale with this quantity (\cref{sec:main}). Using this approach, we derive the first regret bounds for
  \etdtext applied to model-free reinforcement learning with \emph{bilinear
    classes} \citep{du2021bilinear} (\cref{sec:rl}). \vspace{2pt}\\
  $\bullet$ We show that in general, whether or not optimistic estimation
  leads to improvement depends on the \emph{divergence} with respect
  to which estimation is performed: For \emph{symmetric} divergences,
  optimistic estimation offers no improvement, but for
  \emph{asymmetric} divergences, including those found in
  reinforcement learning, the improvement can be drastic (\cref{sec:discussion}). In addition, we highlight settings in which combining optimistic
estimation with \etdtext offers provable improvement over previous
approaches that apply the technique with posterior sampling
\citep{zhang2022feel}.\loose

Perhaps the most important aspect of our work is to elucidate the connection between the \CompShort framework
and optimistic estimation, building stronger foundations
for further research into these techniques.\loose
  }
\arxiv{\begin{itemize}
\item We introduce a new \emph{optimistic} variant of the \CompText, and
  show that a variant of \etdtext that incorporates optimistic
  estimation achieves regret bounds that scale with this quantity.
\item Using this approach, we show \neurips{for the first time }how to use the
  \etdtext paradigm to derive regret bounds for model-free
  reinforcement learning with the general \emph{bilinear
    class} framework \citep{du2021bilinear} (\cref{sec:rl}).
\item We show that in general, whether or not optimistic estimation
  leads to improvement depends on the \emph{divergence} with respect
  to which estimation is performed: For \emph{symmetric} divergences,
  optimistic estimation offers no improvement, but for
  \emph{asymmetric} divergences, including those found in
  reinforcement learning, the improvement can be drastic.
\end{itemize}
}

In what follows, we \arxiv{formally introduce the interactive decision making
problem (\pref{sec:setup}), then }review the \CompText and \etdtext
meta-algorithm (\pref{sec:background}),
highlighting opportunities for improvement. In \pref{sec:main},
we present our main results, including our application to model-free
reinforcement learning. We close with discussion and structural results,
highlighting situations in which optimistic estimation can and cannot
help (\pref{sec:discussion}).

\subsection{Problem Setting}
\label{sec:setup}

We adopt the \emph{\Framework} (\FrameworkShort) framework of
\citet{foster2021statistical}, which is a general setting for
interactive decision making that encompasses bandit problems
(structured, contextual, and so forth) and reinforcement learning with function
approximation. 

The protocol consists of $T$ rounds. For each round $t=1,\ldots,T$:
  \begin{enumerate}
  \item The \learner selects a \emph{decision} $\act\ind{t}\in\Act$,
    where $\Act$ is the \emph{decision space}.
  \item The learner receives a reward $r\ind{t}\in\cR\subseteq\bbR$
    and observation $o\ind{t}\in\cO$ sampled via
    $(r\ind{t},o\ind{t})\sim{}\Mstar(\pi\ind{t})$, where
    $\Mstar:\Pi\to\Delta(\cR\times\cO)$ is the underlying \emph{model}.

  \end{enumerate}
Above, $\cR$ is the \emph{reward space} and $\cO$ is the
\emph{observation space}.
    The model (conditional distribution)
    $\Mstar$ represents the underlying
 con   environment, and is unknown to the learner, but the learner
    is assumed to have access to a \emph{model class} $\cM\subset(\Pi\to\Delta(\cR\times\cO))$ that
    is flexible enough to capture $\Mstar$.
    \begin{assumption}[Realizability]
  \label{ass:realizability}
  The learner has access to a model class $\cM$ containing the true model $\Mstar$.
\end{assumption}
The model class $\cM$ represents the learner's prior knowledge about the
decision making problem, and allows one to appeal to estimation and function
approximation. For structured bandit problems, models correspond to
reward distributions, and $\cM$ encodes structure in the reward
landscape. For reinforcement learning problems, models correspond to
Markov decision processes (MDPs), and $\cM$ typically encodes structure in value functions or transition probabilities. We refer to \citet{foster2021statistical}
  for further background.

For a model $M\in\cM$, $\Empi[M]\brk*{\cdot}$ denotes the expectation
  under the process $(r,\obs)\sim{}M(\pi)$,
  $\fm(\pi)\ldef{}\Empi[M]\brk*{r}$ denotes the mean reward function,
  and $\pim\ldef{}\argmax_{\act\in\Act}\fm(\act)$ denotes the optimal decision.
  We measure performance in terms of regret, which is given by \neurips{$    \RegDM\ldef\sum_{t=1}^{T}\En_{\pi\ind{t}\sim{}p\ind{t}}\brk*{\fmstar(\pimstar)-\fmstar(\pi\ind{t})}$,}
  \arxiv{\begin{equation}
    \label{eq:regret}
    \RegDM\ldef\sum_{t=1}^{T}\En_{\pi\ind{t}\sim{}p\ind{t}}\brk*{\fmstar(\pimstar)-\fmstar(\pi\ind{t})},
  \end{equation}}
  where $p\ind{t}$ is the learner's randomization distribution for
  round $t$.

  \paragraph{Additional notation} For an integer $n\in\bbN$, we let $[n]$ denote the set
  $\{1,\dots,n\}$. For a set $\cZ$, we let
        $\Delta(\cZ)$ denote the set of all probability distributions
        over $\cZ$. For a model class $\cM$, $\conv(\cM)$ denotes the convex hull. We write $f=\bigoht(g)$ to denote that $f =
        \bigoh(g\cdot{}\max\crl*{1,\mathrm{polylog}(g)})$, and use $\approxleq$ as shorthand for $a=\bigoh(b)$.

  \subsection{Background: Estimation-to-Decisions and \CompText}
  \label{sec:background}
\begin{algorithm}[t]
    \setstretch{1.3}
    \begin{algorithmic}[1]
      \neurips{
              \State \textbf{parameters}: Estimation oracle
              $\AlgEst$, Exp. parameter $\gamma>0$, divergence $\Dgenpi{\cdot}{\cdot}$.
       }
      \arxiv{
       \State \textbf{parameters}:
       \Statex[1] Online estimation oracle $\AlgEst$.
       \Statex[1] Exploration parameter $\gamma>0$.
       \Statex[1] Divergence $\Dgenpi{\cdot}{\cdot}$.
       }
  \For{$t=1, 2, \cdots, T$}
  \State Compute estimate $\Mhat\ind{t} = \AlgEst\ind{t}\prn[\big]{ \crl*{(\act\ind{i},
r\ind{i},\obs\ind{i})}_{i=1}^{t-1} }$.
\State 
$p\ind{t}\gets\argmin_{p\in\Delta(\Act)}\sup_{M\in\cM}\En_{\act\sim{}p}\brk[\big]{\fm(\pim)-\fm(\pi)
    -\gamma\cdot\DgenpiX{\big}{\Mhat\ind{t}}{M}}$.
\algcommentlight{Eq. \pref{eq:comp_general}}.%
\State{}Sample decision $\act\ind{t}\sim{}p\ind{t}$ and update estimation
oracle with $(\act\ind{t},r\ind{t}, \obs\ind{t})$.
\EndFor
\end{algorithmic}
\caption{\etdtext (\etd) for General Divergences}
\label{alg:etd}
\end{algorithm}

To motivate our results, this section provides a short primer on the \etdtext
meta-algorithm and the \CompText. We refer to
\citet{foster2021statistical} for further background.

\paragraph{Online estimation}
\etdtext (\pref{alg:etd}) is a reduction that lifts algorithms
for online estimation into algorithms for decision making. An
online estimation oracle, denoted by $\AlgEst$, is an algorithm
that, using knowledge of the class $\cM$, estimates the underlying model $\Mstar$ from
data in a sequential fashion. At each round $t$, given the data
$\hist\ind{t-1}=(\pi\ind{1},r\ind{1},o\ind{1}),\ldots,(\pi\ind{t-1},r\ind{t-1},o\ind{t-1})$
observed so far, the estimation oracle computes an estimate
\neurips{$
\Mhat\ind{t}=\AlgEst\ind{t}\prn*{ \crl*{(\act\ind{i},
r\ind{i},\obs\ind{i})}_{i=1}^{t-1} }$
}
\arxiv{\[
\Mhat\ind{t}=\AlgEst\ind{t}\prn*{ \crl*{(\act\ind{i},
r\ind{i},\obs\ind{i})}_{i=1}^{t-1} }
\]}
for the true model $\Mstar$.

To measure the estimation oracle's performance, we make use of a
user-specified \emph{\dlike function}, which quantifies the discrepancy between models. Formally, we define a
\dlike function (henceforth, ``divergence'') as any function
$\Dgenshort:\Pi\times\conv(\cM)\times\conv(\cM)\to\bbR_{+}$, with
$\Dgenpi[\pi]{M}{M'}$ representing the discrepancy between the models $M$
and $M'$ at the decision $\pi$. Standard choices used in past work
\citep{foster2021statistical,foster2022complexity,chen2022unified,foster2023tight,foster2023complexity,wagenmaker2023instance} include the squared
error $\Dsqpi{M}{M'}\ldef{}(\fm(\pi)-f\sups{M'}(\pi))^2$ for bandit
problems, and squared Hellinger distance\footnote{When
  $\Dgenpi{\cdot}{\cdot}$ is symmetric, we write
  $D^{\pi}(\cdot,\cdot)$ to make this explicit.}
\neurips{$
\Dhelspi{M}{M'}\ldef\Dhels{M(\pi)}{M'(\pi)}
$}
\arxiv{\[
\Dhelspi{M}{M'}\ldef\Dhels{M(\pi)}{M'(\pi)}
\]}
for \arxiv{reinforcement learning}\neurips{RL}, where \arxiv{we recall that }for distributions
$\bbP$ and $\bbQ$,
\neurips{$
  \Dhels{\bbP}{\bbQ}\ldef\int\prn[\big]{\sqrt{d\bbP}-\sqrt{d\bbQ}}^2$.
}
\arxiv{\[
\Dhels{\bbP}{\bbQ}\ldef\int\prn*{\sqrt{d\bbP}-\sqrt{d\bbQ}}^2.
\]}
We then measure the estimation oracle's performance in terms of
\emph{cumulative estimation error} with respect to $D$, defined as
    \begin{equation}
    \label{eq:general_error}
    \EstD \ldef{} \sum_{t=1}^{T}\En_{\act\ind{t}\sim{}p\ind{t}}\brk*{\Dgenpi[\pi\ind{t}]{\Mhat\ind{t}}{\Mstar}},
  \end{equation}
  where $p\ind{t}$ is the conditional distribution over $\pi\ind{t}$
  given $\cH\ind{t-1}$. We make the following assumption on the
  algorithm's performance.

\begin{assumption}
    \label{ass:oracle}
	At each time $t\in[T]$, the \emph{online estimation oracle}
        $\AlgEst$ returns,
        given  \arxiv{$$(\pi\ind{1},r\ind{1},o\ind{1}),\ldots,(\pi\ind{t-1},r\ind{t-1},o\ind{t-1})$$}\neurips{$(\pi\ind{1},r\ind{1},o\ind{1}),\ldots,(\pi\ind{t-1},r\ind{t-1},o\ind{t-1})$}
        with $(r\ind{i},o\ind{i})\sim\Mstar(\pi\ind{i})$ and
        $\pi\ind{i}\sim p\ind{i}$, an estimator
        $\Mhat\ind{t}:\Pi\to\Delta(\cR\times\cO)$ such that
        $\EstD\leq\EstDFull$,
	with probability at least $1-\delta$, where $\EstDFull$ is a
        known upper bound.
      \end{assumption}
      For the squared error, one can obtain
      $\Est^{\mathsf{sq}}(T,\delta)\ldef{}\Est^{\Dsqshort}(T,\delta)\approxleq\log(\abs{\cFm}/\delta)$, where
      $\cFm\ldef\crl{\fm\mid{}M\in\cM}$, and for
      Hellinger distance, it is possible to obtain
      $\vphantom{e^{e^{e^e}}}\EstHFull\ldef\Est^{\Dhelshort}(T,\delta)\approxleq\log(\abs{\cM}/\delta)$.

\paragraph{Estimation-to-Decisions}
A general version of the \etd meta-algorithm is displayed in
\pref{alg:etd}. At each timestep $t$, the algorithm queries estimation oracle to
obtain an estimator $\Mhat\ind{t}$ using the data
$(\pi\ind{1},r\ind{1},o\ind{1}),\ldots,(\pi\ind{t-1},r\ind{t-1},o\ind{t-1})$ observed so
far. The algorithm then computes the decision distribution $p\ind{t}$ by solving a
min-max optimization problem involving $\Mhat\ind{t}$ and $\cM$ (as
well as the divergence $D$), and
then samples the decision $\pi\ind{t}$ from this distribution.

  \paragraph{The \CompText} The min-max optimization problem in \pref{alg:etd} is
  derived from the \emph{\CompText} (\CompShort), a complexity measure whose value,
  for a given scale parameter $\gamma>0$ and reference model
  $\Mbar:\Pi\to\Delta(\cR\times\cO)$, is given by
\begin{equation}
  \label{eq:comp_general}
  \compgen(\cM,\Mbar) =
    \inf_{p\in\Delta(\Act)}\sup_{M\in\cM}\En_{\act\sim{}p}\biggl[\fm(\pim)-\fm(\pi)
    -\gamma\cdot\Dgenpi{\Mbar}{M}
    \biggr],
  \end{equation}
  with
  $\compgen(\cM)\ldef\sup_{\Mbar\in\conv(\cM)}\compgen(\cM,\Mbar)$. Informally,
  the \CompShort measures the best tradeoff between suboptimality $(\fm(\pim)-\fm(\pi))$
  and information gain (measured by $\Dgenpi{\Mbar}{M}$) that can be
  achieved by a decision distribution $p$ in the face of a worst-case
  model $M\in\cM$.

  The main result of \citet{foster2021statistical} shows that the
  regret of \etd is controlled by the \CompShort and the estimation
  oracle's cumulative error $\EstD$. \arxiv{To state the result, let}\neurips{Let} $\cMhat$ be any set
for which $\Mhat\ind{t}\in\cMhat$ for all $t$ almost surely.
\begin{restatable}[\citet{foster2021statistical}]{theorem}{uppergeneraldistance}
  \label{thm:prev}
  \pref{alg:etd} with exploration parameter $\gamma>0$ guarantees
  that
\neurips{$\RegDM \leq{}
  \sup_{\Mbar\in\cMhat}\compgen(\cM,\Mbar)\cdot{}T + \gamma\cdot\EstD
  $}
\arxiv{\begin{equation}
  \label{eq:upper_general_distance}
\RegDM \leq{}
\sup_{\Mbar\in\cMhat}\compgen(\cM,\Mbar)\cdot{}T + \gamma\cdot\EstD
\end{equation}}
almost surely.
\end{restatable}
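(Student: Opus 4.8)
The plan is to prove the bound by a per-round argument that exploits the fact that the randomization distribution $p\ind{t}$ chosen in \pref{alg:etd} is, by construction, a minimizer in the definition of the \CompShort $\compgen(\cM,\Mhat\ind{t})$ from \pref{eq:comp_general}, instantiated with reference model $\Mbar=\Mhat\ind{t}$. Concretely, I would fix a round $t$ and work on the realization of the history $\hist\ind{t-1}$, which determines both $\Mhat\ind{t}$ and $p\ind{t}$. Since $p\ind{t}$ attains the infimum over $\Delta(\Act)$ in \pref{eq:comp_general}, for \emph{every} model $M\in\cM$ we have
\[
\En_{\act\sim p\ind{t}}\brk*{\fm(\pim)-\fm(\act)-\gamma\cdot\Dgenpi{\Mhat\ind{t}}{M}} \;\leq\; \compgen(\cM,\Mhat\ind{t}) \;\leq\; \sup_{\Mbar\in\cMhat}\compgen(\cM,\Mbar),
\]
where the second inequality uses the hypothesis $\Mhat\ind{t}\in\cMhat$.

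The next step is to invoke realizability (\pref{ass:realizability}): because $\Mstar\in\cM$, I can set $M=\Mstar$ in the display above and rearrange to get
\[
\En_{\act\sim p\ind{t}}\brk*{\fmstar(\pimstar)-\fmstar(\act)} \;\leq\; \sup_{\Mbar\in\cMhat}\compgen(\cM,\Mbar) + \gamma\cdot\En_{\act\sim p\ind{t}}\brk*{\Dgenpi[\act]{\Mhat\ind{t}}{\Mstar}}.
\]
Summing over $t=1,\dots,T$: the left-hand side is, by \pref{eq:regret}, exactly $\RegDM$; the first term on the right contributes $\sup_{\Mbar\in\cMhat}\compgen(\cM,\Mbar)\cdot T$; and the second term is exactly $\gamma\cdot\EstD$ by the definition in \pref{eq:general_error}. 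This is the claimed inequality. Every step holds pointwise on each realization of the algorithm's internal randomization and the environment's feedback, so the bound holds almost surely; note that \pref{ass:oracle} is not needed here, as it only serves downstream to control $\EstD$.

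There is no genuinely difficult step; the substance is bookkeeping and well-posedness. The points I would be careful about are: (i) that $\compgen(\cM,\Mhat\ind{t})$ is defined, i.e., the domain of the divergence $\Dgenshort$ contains the estimator's output, which holds once $\cMhat$ is chosen so that $\Dgenpi{\Mhat\ind{t}}{M}$ is meaningful for all $M\in\cM$ (e.g., $\cMhat\subseteq\conv(\cM)$, or $\Dgenshort$ is extended accordingly); (ii) measurability, namely that $\Mhat\ind{t}$ and $p\ind{t}$ are measurable functions of $\hist\ind{t-1}$ so that the conditional expectations $\En_{\act\sim p\ind{t}}[\cdot]$ and their aggregation into $\RegDM$ and $\EstD$ are well-defined; and (iii) the degenerate case $\sup_{\Mbar\in\cMhat}\compgen(\cM,\Mbar)=\infty$, in which the bound is vacuously true. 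None of these obstruct the argument, so the proof is essentially immediate once the optimality identity for $p\ind{t}$ is in hand.
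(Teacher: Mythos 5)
Your proof is correct and follows the standard \etd analysis: it is the same per-round argument (minimax optimality of $p\ind{t}$ in the definition of $\compgen(\cM,\Mhat\ind{t})$, instantiating $M=\Mstar$ via realizability, and summing) that underlies the cited result of \citet{foster2021statistical} and that this paper reuses verbatim in its proof of the optimistic counterpart \pref{thm:main}. Your side remarks are also accurate — in particular that \pref{ass:oracle} plays no role in the almost-sure bound, since the statement is in terms of the realized quantity $\EstD$ rather than its high-probability upper bound.
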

For the special case of Hellinger distance, standard algorithms
(exponential weights) achieve
$\EstHFull\approxleq\log(\abs{\cM}/\delta)$ with $\cMhat=\conv(\cM)$, so that \pref{thm:prev}
gives (abbreviating $\compH\equiv\dec_{\gamma}^{\Dhelshort}$):
\begin{align}
  \RegDM \approxleq{}
  \compH(\cM)\cdot{}T + \gamma\cdot\log(\abs{\cM}/\delta).
  \label{eq:hellinger_ub}
\end{align}

\paragraph{Opportunities for improvement}
\citet{foster2021statistical,foster2023tight} provide \emph{lower bounds} on the
regret for any decision making problem that have
a similar expression to \pref{eq:hellinger_ub}, showing that the
\CompText with Hellinger distance ($\compH(\cM)$) plays a fundamental role in
determining the statistical complexity of decision making. However,
these lower bounds do contain the estimation term
$\EstHFull=\log(\abs{\cM}/\delta)$) appearing in \pref{eq:hellinger_ub}, and thus capture the price
of moving from estimation to decision making (as characterized by the \CompShort),
but not the complexity of estimation itself. 

In general, the dependence on $\EstHFull=\log(\abs{\cM}/\delta)$ in
the upper bound \cref{eq:hellinger_ub} can render the bound loose. In reinforcement learning, working with Hellinger
distance necessitates modeling transition probabilities. While this
leads to optimal results in some settings, in general the optimal rates for Hellinger
estimation error can be prohibitively
large, even in settings where model-free (or, \emph{value-based}) methods,
which directly model value functions, are known to
succeed; this drawback is shared by all subsequent work based on the
\CompShort
\citep{foster2022complexity,chen2022unified,foster2023tight,foster2023complexity,wagenmaker2023instance}. A
natural solution is to replace Hellinger distance
with a divergence (e.g., based on Bellman error)
tailored to value function approximation, but naive choices for $D$
along these lines render $\compgen(\cM,\Mbar)$ too large to give
meaningful guarantees, and \citet{foster2021statistical,foster2023tight} left this as
an open problem.\footnote{For simpler model classes, it \emph{is}
possible to improve upon
\pref{eq:hellinger_ub} by moving from Hellinger distance to lenient notions of estimation
error: In bandit problems with Gaussian rewards,\arxiv{ the mean
reward function acts as a sufficient statistic, and} it suffices to consider the
$\Dsqpi{M}{\Mbar}\ldef{}(\fm(\act)-\fmbar(\act))^2$, which leads to
upper bounds that scale with $\log\abs{\cFm}\ll\log\abs{\cM}$ \citep{foster2021statistical}.}

\section{\etdtext with Optimistic Estimation}
\label{sec:main}

To derive improved regret bounds that address the shortcomings
described in the prequel, we combine \etdtext with a
specialized estimation approach introduced by \citet{zhang2022feel}
(see also \citet{dann2021provably,agarwal2022model,agarwal2022non,zhong2022posterior}),
which we refer to as \emph{optimistic estimation}. We then use this
approach to derive regret bounds for model-free reinforcement learning.

\subsection{Optimistic Estimation}

The idea of optimistic estimation is to augment the estimation objective with
a bonus that biases the estimator toward models $M\in\cM$ for which
the value $\fm(\pim)$ is large. We present a general version of the
technique.

Let a divergence $\Dgenpi{\cdot}{\cdot}$ be fixed. Following the
development in \pref{sec:background}, an \emph{optimistic estimation oracle}
$\AlgEst$ is an algorithm which, at each step $t$, given the
observations and rewards collected so far, computes an estimate for
the underlying model. For technical reasons, it
will be useful to consider \emph{randomized estimators}
(\citet{foster2021statistical,chen2022unified}) that, at
each round, produce a distribution $\mu\ind{t}\in\Delta(\cM)$ over
models. Such estimators take the form
\neurips{$
\mu\ind{t}=\AlgEst\ind{t}\prn*{ \crl*{(\act\ind{i},
r\ind{i},\obs\ind{i})}_{i=1}^{t-1} }$.
}
\arxiv{\[
\mu\ind{t}=\AlgEst\ind{t}\prn*{ \crl*{(\act\ind{i},
r\ind{i},\obs\ind{i})}_{i=1}^{t-1} }.
\]}
where $\mu\ind{t}\in\Delta(\cM)$. For a parameter $\gamma>0$, we
define the \emph{optimistic estimation error} as
\begin{align}
  \label{eq:est_error_basic}
  \EstOptD \ldef{}
\sum_{t=1}^{T}\En_{\act\ind{t}\sim{}p\ind{t}}\En_{\Mhat\ind{t}\sim\mu\ind{t}}\brk*{\Dgenpi{\Mhat\ind{t}}{\Mstar}
  + \gamma^{-1}(\fmstar(\pimstar)-\fmhatt(\pi\subs{\Mhat\ind{t}})}.
\end{align}
This quantity is similar to \pref{eq:general_error}, but incorporates a bonus
term
\neurips{$
  \gamma^{-1}(\fmstar(\pimstar)-\fmhatt(\pi\subs{\Mhat\ind{t}}))$,
}
\arxiv{\[
  \gamma^{-1}(\fmstar(\pimstar)-\fmhatt(\pi\subs{\Mhat\ind{t}})),
\]}
which encourages the estimation algorithm to \emph{over-estimate} the
optimal value $\fmstar(\pimstar)$ for the underlying model.
\begin{assumption}
    \label{ass:optimistic_oracle}
	At each time $t\in[T]$, the \emph{optimistic estimation oracle}
        $\AlgEst$ returns,
        given  \arxiv{$$(\pi\ind{1},r\ind{1},o\ind{1}),\ldots,(\pi\ind{t-1},r\ind{t-1},o\ind{t-1})$$}\neurips{$(\pi\ind{1},r\ind{1},o\ind{1}),\ldots,(\pi\ind{t-1},r\ind{t-1},o\ind{t-1})$}
        with $(r\ind{i},o\ind{i})\sim\Mstar(\pi\ind{i})$ and
        $\pi\ind{i}\sim p\ind{i}$, a randomized estimator $\mu\ind{t}\in\Delta(\cM)$ such that
        $\EstOptD\leq{}\EstOptDFull$,
	with \arxiv{probability at least}\neurips{w.p.} $1-\delta$, where $\EstOptDFull$ is a
        known upper bound.
      \end{assumption}
      For the case of contextual bandits, \citet{zhang2022feel}
      proposes an augmented version of the exponential weights
      algorithm which, for a learning rate parameter $\eta>0$, sets
      \neurips{$
        \mu(M) \propto\exp\prn*{
          -\eta\prn*{L\ind{t}(\fm) - \gamma^{-1}\fm(\pim)}
          }
        $,}
      \arxiv{\[
        \mu(M) \propto\exp\prn*{
          -\eta\prn*{L\ind{t}(\fm) - \gamma^{-1}\fm(\pim)}
          },
        \]}
        where $L\ind{t}(\fm)$ is the squared prediction error for the
        rewards observed so far. This method achieves $\En\brk[\big]{\EstOptSq}\approxleq{}
\log(\abs{\cFm}) + \sqrt{T\log\abs{\cFm}}/\gamma$, and
\citet{zhang2022feel} combines this estimator with posterior sampling
to achieve optimal contextual bandit regret. \citet{agarwal2022model,zhong2022posterior,agarwal2022non} extend this
development to reinforcement learning, also using posterior
sampling as the exploration
mechanism.\footnote{\citet{dann2021provably} also apply the optimistic estimation
  idea to model-free reinforcement learning, but do not provide
  \emph{online} estimation guarantees.}
In what follows, we combine optimistic estimation with
\etdtext, which provides a universal mechanism for exploration. Beyond
giving guarantees which were previously out of reach for \etd
(\pref{sec:rl}), this approach generalizes and subsumes posterior
sampling, and can succeed in situations where posterior sampling fails
(\pref{sec:discussion}).

\begin{remark}
  For the non-optimistic estimation error $\EstD$, it is possible to
  obtain low error for well-behaved losses such as the square loss and
  Hellinger distance without the use of randomization by appealing to
  improper mixture estimators (e.g., \citet{foster2021statistical}). We show in \pref{sec:discussion} that
  for such divergences, randomization does not lead to statistical
  improvements. For the optimistic estimation error \pref{eq:est_error_basic}, randomization is essential due to
  the presence of the term $\gamma^{-1}(\fmstar(\pimstar)-\fmhatt(\pi\subs{\Mhat\ind{t}})$.
\end{remark}

\arxiv{\paragraph{Sufficient statistics}}
\neurips{\fakepar{Sufficient statistics}} Before proceeding, we note that many divergences of interest have
the useful property that they depend on the estimated model $\Mhat$
only through a ``sufficient statistic'' for the model class under
consideration. Formally, there exists a
\emph{sufficient statistic space} $\Suff$ and \emph{sufficient
  statistic} $\suffmap:\cM\to\Psi$ with the property that we can write
(overloading notation)
\[
\Dgenpi{M}{M'} = \Dgenpi{\suffmap(M)}{M'},\quad\fm(\pi)=f^{\suffmap(M)}(\pi),\mathand\pim=\pi_{\suffmap(M)}
\]
for all models $M,M'$. In this case, it suffices for the online
estimation oracle to directly estimate the sufficient statistic by
producing a randomized estimator $\mu\ind{t}\in\Delta(\Suff)$. We measure
performance via
\begin{align}
  \EstOptD \ldef{}
\sum_{t=1}^{T}\En_{\act\ind{t}\sim{}p\ind{t}}\En_{\suffhat\ind{t}\sim\mu\ind{t}}\brk*{\Dgenpi[\pi\ind{t}]{\suffhat\ind{t}}{\Mstar}
  + \gamma^{-1}(\fmstar(\pimstar)-f^{\suffhat\ind{t}}(\pi_{\suffhat\ind{t}}))}
\end{align}
Examples include bandit problems, where one may use squared estimation
error $\Dsqpi{\cdot}{\cdot}$ and take $\suffmap(\cM)=\fm$, and
model-free reinforcement learning, 
where we show that by choosing the divergence $D$
appropriately, one can use \emph{Q-value functions} as a sufficient
statistic. \dfedit{Note that we only focus on sufficient statistics for the
first argument to $\Dgenpi{\cdot}{\cdot}$, since this is the quantity
we wish to estimate.}

\subsection{Algorithm and Main Result}
\begin{algorithm}[t]
    \setstretch{1.3}
    \begin{algorithmic}[1]
      \neurips{\State \textbf{parameters}: Estimation oracle
        $\AlgEst$, Exp. parameter $\gamma>0$, divergence $D$ with suff.
        stat. space $\Suff$.}
       \arxiv{\State \textbf{parameters}:
       \Statex[1] Online estimation oracle $\AlgEst$.
       \Statex[1] Exploration parameter $\gamma>0$.
       \Statex[1] Divergence $\Dgen{\cdot}{\cdot}$ with sufficient
       statistic space $\Suff$.}
  \For{$t=1, 2, \cdots, T$}
  \State Receive randomized estimator $\mu\ind{t}\in\Delta(\Suff) = \AlgEst\ind{t}\prn[\big]{ \crl*{(\act\ind{i},
r\ind{i},\obs\ind{i})}_{i=1}^{t-1} }$.
\State 
\mbox{\arxiv{Compute}\neurips{Get} $p\ind{t}\gets\argmin_{p\in\Delta(\Act)}\sup_{M\in\cM}\En_{\act\sim{}p}\En_{\suffhat\sim\mu\ind{t}}\brk*{\fsuffhat(\pisuffhat)-\fm(\pi)
    -\gamma\cdot\Dgenpi{\suffhat}{M}}$.
  \arxiv{\algcommentlight{Eq. \pref{eq:optimistic_dec}}.}
  \neurips{\algcommentlight{Eq. \pref{eq:optimistic_dec}.}}}
\State{}Sample decision $\act\ind{t}\sim{}p\ind{t}$ and update estimation
oracle with $(\act\ind{t},r\ind{t}, \obs\ind{t})$.
\EndFor
\end{algorithmic}
\caption{Optimistic \etdtext (\etdopt)}
\label{alg:main}
\end{algorithm}

We provide an \emph{optimistic} variant of the \etd meta-algorithm (\etdopt) in
\pref{alg:main}. At each timestep $t$, the algorithm calls the estimation oracle to
obtain a randomized estimator $\mu\ind{t}$ using the data
$(\pi\ind{1},r\ind{1},o\ind{1}),\ldots,(\pi\ind{t-1},r\ind{t-1},o\ind{t-1})$ collected so
far. The algorithm then uses the estimator to compute a distribution
$p\ind{t}\in\Delta(\Pi)$ and samples $\pi\ind{t}$ from this
distribution, with the main change relative to \pref{alg:etd} being that the minimax problem in \pref{alg:main} is
derived from an ``optimistic'' variant of the \CompShort, which
we refer to as the \emph{Optimistic \CompText}. For $\mu\in\Delta(\cM)$, define
\begin{align}
  \ocompD(\cM,\mu)
  =
  \inf_{p\in\Delta(\Pi)}\sup_{M\in\cM}\En_{\pi\sim{}p}\En_{\Mbar\sim\mu}\brk*{
  \fmbar(\pimbar) - \fm(\pi)  - \gamma\cdot{}\Dgenpi{\Mbar}{M}
  }.
  \label{eq:optimistic_dec}
\end{align}
and \neurips{$\ocompD(\cM) = \sup_{\mu\in\Delta(\cM)}\ocompD(\cM,\mu)$.}
\arxiv{\begin{align}
  \label{eq:optimistic_max}
  \ocompD(\cM) = \sup_{\mu\in\Delta(\cM)}\ocompD(\cM,\mu).
\end{align}}
The Optimistic \CompShort has two difference from the \vanilla \CompShort. First, it is parameterized by a
distribution $\mu\in\Delta(\cM)$ rather than a reference model
$\Mbar:\Pi\to\Delta(\cR\times\cO)$, which reflects the use of
randomized estimators; the value in \pref{eq:optimistic_dec} takes
the expectation over a reference model $\Mbar$ drawn from this
distribution (this modification also appears in the randomized
\CompShort introduced in \citet{foster2021statistical}). Second, and more critically, the optimal value $\fm(\pim)$ in
\pref{eq:comp_general} is replaced by the optimal value $\fmbar(\pimbar)$ for the
(randomized) reference model. This seemingly small change is the main
advantage of incorporating optimistic
estimation, and makes it possible to bound the Optimistic
\CompShort for certain divergences $D$ for which the value of the
unmodified \CompShort would otherwise be unbounded
(cf. \pref{sec:optimistic-estimation-help}).\loose

\begin{remark}
  When the divergence $D$ admits a sufficient statistic
  $\suffmap:\cM\to\Psi$, for any distribution $\mu\in\Delta(\cM)$, if
  we define $\nu\in\Delta(\Psi)$ via
  $\nu(\psi) = \mu(\crl{M\in\cM: \suffmap(M)=\psi})$, we have
  \[
    \ocompD(\cM,\mu) =
    \inf_{p\in\Delta(\Pi)}\sup_{M\in\cM}\En_{\pi\sim{}p}\En_{\suff\sim\nu}\brk*{
      \fsuff(\pisuff) - \fm(\pi) - \gamma\cdot{}\Dgenpi{\suff}{M} }.
  \]
  In this case, by overloading notation slightly, \neurips{we may
    write $\ocompD(\cM) = \sup_{\nu\in\Delta(\Psi)}\ocompD(\cM,\nu)$.}\arxiv{we may simplify the
  definition in \pref{eq:optimistic_max} to
  \[
    \ocompD(\cM) = \sup_{\nu\in\Delta(\Psi)}\ocompD(\cM,\nu).
      \vspace{-15pt}
  \]}
\end{remark}

\arxiv{\paragraph{Main result}}
\neurips{\fakepar{Main result}}
Our main result shows that the regret of \etdopttext is controlled by
the Optimistic \CompShort and the optimistic estimation error for the
oracle $\AlgEst$.
\begin{theorem}
  \label{thm:main}
For any $\delta>0$, \pref{alg:main} ensures that with probability at
least $1-\delta$,
  \begin{align}
    \label{eq:main_single_sample}
    \RegDM \leq \ocompD(\cM)\cdot{}T + \gamma\cdot{}\EstOptDFull.
  \end{align}
\end{theorem}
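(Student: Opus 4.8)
The plan is to mirror the regret decomposition from \pref{thm:prev}, but track the extra optimistic bonus term carefully. Fix a round $t$ and recall that $p\ind{t}$ is chosen to attain (up to the $\inf$) the value $\ocompD(\cM,\mu\ind{t})$, so for \emph{every} $M\in\cM$, and in particular for $M=\Mstar$ (using realizability, \pref{ass:realizability}),
\[
  \En_{\pi\sim p\ind{t}}\En_{\Mbar\sim\mu\ind{t}}\brk*{\fmbar(\pimbar)-\fmstar(\pi)-\gamma\cdot\Dgenpi{\Mbar}{\Mstar}}
  \;\le\; \ocompD(\cM,\mu\ind{t}) \;\le\; \ocompD(\cM).
\]
Rearranging, the per-round regret $\En_{\pi\sim p\ind{t}}\brk{\fmstar(\pimstar)-\fmstar(\pi)}$ is bounded by
\[
  \ocompD(\cM) \;+\; \gamma\cdot\En_{\pi\sim p\ind{t}}\En_{\Mbar\sim\mu\ind{t}}\brk*{\Dgenpi{\Mbar}{\Mstar}}
  \;+\; \En_{\Mbar\sim\mu\ind{t}}\brk*{\fmstar(\pimstar)-\fmbar(\pimbar)}.
\]
The first term, summed over $t$, contributes $\ocompD(\cM)\cdot T$. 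The second and third terms are exactly $\gamma$ times the two pieces of the optimistic estimation error $\EstOptD$ from \pref{eq:est_error_basic}: the divergence piece $\Dgenpi{\Mhat\ind{t}}{\Mstar}$ and the bonus piece $\gamma^{-1}(\fmstar(\pimstar)-\fmhatt(\pi\subs{\Mhat\ind{t}}))$, with $\Mhat\ind{t}\sim\mu\ind{t}$. So summing over $t\in[T]$ gives
\[
  \RegDM \;\le\; \ocompD(\cM)\cdot T \;+\; \gamma\cdot\EstOptD,
\]
and invoking \pref{ass:optimistic_oracle} to replace $\EstOptD$ by its high-probability bound $\EstOptDFull$ (with probability $\ge 1-\delta$) yields \pref{eq:main_single_sample}.

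The one subtlety to handle carefully is the filtration and the fact that $\mu\ind{t}$ and $p\ind{t}$ are themselves random (adapted to $\cH\ind{t-1}$). The identity above is deterministic given $\cH\ind{t-1}$ — it uses only the defining property of the $\argmin$ in \pref{alg:main} and realizability — so the bound on per-round regret holds pointwise on every history, not merely in expectation; summing is then immediate and requires no martingale argument. This matches the "almost surely" flavor of \pref{thm:prev}. The only place randomness (via $\delta$) enters is the final substitution of the estimation-oracle guarantee from \pref{ass:optimistic_oracle}. When $D$ admits a sufficient statistic $\suffmap$, the same argument goes through verbatim with $\mu\ind{t}\in\Delta(\Suff)$ and the identification $\fmbar(\pimbar)=\fsuffhat(\pisuffhat)$, $\Dgenpi{\Mbar}{M}=\Dgenpi{\suffhat}{M}$, so no separate treatment is needed.

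I do not anticipate a genuine obstacle here: the result is essentially a bookkeeping lemma that repackages the definition of $\ocompD$ and $\EstOptD$ so that the $+\gamma^{-1}(\fmstar(\pimstar)-\fmhatt(\pi\subs{\Mhat\ind{t}}))$ bonus, which looks like it is "helping" the estimator, is precisely the term that appears (with the opposite sign) when we pass from $\fmbar(\pimbar)$ back to $\fmstar(\pimstar)$ in the regret. The "hard part," such as it is, is purely notational care — making sure the expectation over $\Mbar\sim\mu\ind{t}$ is applied consistently in both the \CompShort value and the estimation error, and that the $\gamma$ and $\gamma^{-1}$ factors cancel correctly — rather than any analytical difficulty. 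All the real work is deferred to bounding $\ocompD(\cM)$ and $\EstOptDFull$ in the applications (\pref{sec:rl}).
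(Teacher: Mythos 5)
Your proposal is correct and follows essentially the same route as the paper's proof: decompose the per-round regret by adding and subtracting $\En_{\Mbar\sim\mu\ind{t}}\brk{\fmbar(\pimbar)}$, bound the first piece by $\ocompD(\cM)$ using the defining $\argmin$ property of $p\ind{t}$ together with realizability ($\Mstar\in\cM$), and recognize the remaining two pieces as exactly $\gamma\cdot\EstOptD$ before invoking \pref{ass:optimistic_oracle}. The paper states this for the batched variant (\pref{thm:main_batched}) and obtains \pref{thm:main} as the case $n=1$, but the argument is identical, including the observation that the per-round bound holds pointwise on every history.
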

This regret bound has the same structure as \pref{thm:prev}, with the
\CompShort and estimation error replaced by their optimistic
counterparts. In the remainder of the paper, we show that 1) by adopting \emph{asymmetric} divergences
specialized to reinforcement learning, this result leads to the first
guarantees for model-free RL with \etd, but 2) for
symmetric divergences such as Hellinger distance, the result never
improves upon \pref{thm:prev}.

\arxiv{\paragraph{Estimation with batching}}
\neurips{\fakepar{Estimation with batching}} \neurips{
  For our application to reinforcement learning, we
generalize the results above to accomodate estimation algorithms
that draw \emph{batches} of multiple samples from each distribution
$p\ind{t}$. Given a \emph{batch size} $n$, we break the
$T$ rounds of the decision making protocol into $K\ldef{}T/n$
contiguous epochs (or, ``iterations''). Within each epoch, the
learner's distribution $p\ind{k}$ is unchanged (we index by $k$ rather
than $t$ to reflect this), and we create a \emph{batch}
$B\ind{k}=\crl*{(\pi\ind{k,l},r\ind{k,l},o\ind{k,l})}_{l=1}^{n}$ by
sampling $\pi\ind{k,l}\sim{}p\ind{k}$ independently and observing
$(r\ind{k,l},o\ind{k,l})\sim\Mstar(\pi\ind{k,l})$ for each
$l\in\brk{n}$. We can then appeal to estimation algorithms of
the form $\mu\ind{k}=\AlgEst\ind{k}\prn[\big]{
  \crl*{B\ind{k}}_{i=1}^{k-1}}$. Regret bounds for a variant of \etdopt with batching are given in \cref{sec:batched}.
}

\arxiv{
\arxiv{\begin{algorithm}[t]}
  \neurips{\begin{algorithm}[htp]}
    \setstretch{1.3}
     \begin{algorithmic}[1]
       \State \textbf{parameters}:
       \Statex[1] Online estimation oracle $\AlgEst$ with batch size $n$.
       \Statex[1] Exploration parameter $\gamma>0$.
       \Statex[1] Divergence $\Dgen{\cdot}{\cdot}$ with sufficient
       statistic space $\Suff$.
       \State Let $K\ldef{}T/n$.
  \For{$k=1, 2, \cdots, K$}
  \State Receive randomized estimator $\mu\ind{k}\in\Delta(\Suff) = \AlgEst\ind{t}\prn[\big]{(B\ind{i})_{i=1}^{k-1} }$.
\State \mbox{\arxiv{Compute}\neurips{Get}
$p\ind{k}\gets\argmin_{p\in\Delta(\Act)}\sup_{M\in\cM}\En_{\act\sim{}p}\En_{\suffhat\sim\mu\ind{k}}\brk*{\fsuffhat(\pisuffhat)-\fm(\pi)
    -\gamma\cdot\Dgenpi{\suffhat}{M}}$.
  \algcommentlight{Eq. \pref{eq:optimistic_dec}.}}
\State{}\multiline{Sample batch
$B\ind{k}=\crl*{(\pi\ind{k,l},r\ind{k,l},o\ind{k,l})}_{l=1}^{n}$ where
$\pi\ind{k,l}\sim{}p\ind{k}$ and
$(r\ind{k,l},o\ind{k,l})\sim\Mstar(\pi\ind{k.l})$, and update estimation
oracle with $B\ind{k}$.}
\EndFor
\end{algorithmic}
\caption{Optimistic \etdtext (\etdopt) with Batching}
\label{alg:main_batched}
\end{algorithm}

For our application to reinforcement learning, it will be useful to
generalize \etdopt to accomodate estimation algorithms
that draw \emph{batches} of multiple samples from each distribution
$p\ind{t}$. Given a \emph{batch size} $n$, we break the
$T$ rounds of the decision making protocol into $K\ldef{}T/n$
contiguous epochs (or, ``iterations''). Within each epoch, the
learner's distribution $p\ind{k}$ is unchanged (we index by $k$ rather
than $t$ to reflect this), and we create a \emph{batch}
$B\ind{k}=\crl*{(\pi\ind{k,l},r\ind{k,l},o\ind{k,l})}_{l=1}^{n}$ by
sampling $\pi\ind{k,l}\sim{}p\ind{k}$ independently and observing
$(r\ind{k,l},o\ind{k,l})\sim\Mstar(\pi\ind{k,l})$ for each
$l\in\brk{n}$. We consider estimation oracles of the form
\neurips{$\mu\ind{k}=\AlgEst\ind{k}\prn*{ \crl*{B\ind{k}}_{i=1}^{k-1}}$,}
\arxiv{\[
\mu\ind{k}=\AlgEst\ind{k}\prn*{ \crl*{B\ind{k}}_{i=1}^{k-1}},
\]}
and measure estimation error via
\begin{equation}
  \EstOptD \ldef{}
\sum_{k=1}^{K}\En_{\act\ind{k}\sim{}p\ind{k}}\En_{\Mhat\ind{k}\sim\mu\ind{k}}\brk*{\Dgenpi{\Mhat\ind{k}}{\Mstar}
  +
  \gamma^{-1}(\fmstar(\pimstar)-\fmhatt(\pi\subs{\Mhat\ind{k}}))}.\label{eq:estimation_batched}
\end{equation}
We assume that the estimation oracle ensures that with probability at least $1-\delta$,
$\EstOptD\leq{}\EstOptDFullKn$, where $\EstOptDFullKn$ is a known
upper bound.

\pref{alg:main_batched} is a variant of \etdopt that incorporates
batching. The algorithm updates the distribution $p\ind{k}$ in the
same fashion as its non-batched counterpart, but does so only at the
beginning of each epoch. The main guarantee for this method as follows.
\begin{theorem}
  \label{thm:main_batched}
  Let $T\in\bbN$ be given, and let $n$ be the batch size. For any
  $\delta>0$, \pref{alg:main_batched} ensures that with probability at
  least $1-\delta$,
  \neurips{$    \RegDM \leq \ocompD(\cM)\cdot{}T + \gamma{}n\cdot{}\EstOptD(T/n,n,\delta)$.}
\arxiv{\begin{align}
  \label{eq:main_multi_sample}
    \RegDM \leq \ocompD(\cM)\cdot{}T + \gamma{}n\cdot{}\EstOptD(T/n,n,\delta).
       \end{align}
       }
\end{theorem}
\neurips{See \cref{app:main_proofs} for the proof.}
When working with divergences for which unbiased estimates are
unavailable, this approach can lead to stronger guarantees than
\pref{thm:main}. We refer to the proof of
\pref{prop:estimation_bilinear} for a concrete example.

 }

\subsection{Application to Model-Free Reinforcement Learning}
\label{sec:rl}

In this section, we use \etdopttext to provide sample-efficient
guarantees for model-free reinforcement learning with \emph{bilinear classes}
\citep{du2021bilinear}, a general class of tractable reinforcement
learning problems which encompasses many standard settings \citep{jiang2017contextual,sun2019model,dean2020sample,yang2019sample,jin2020provably,modi2020sample,ayoub2020model,krishnamurthy2016pac,du2019latent,li2009unifying,dong2019provably,zhou2021nearly}.

\arxiv{\subsubsection{Reinforcement Learning Preliminaries}}
\neurips{\fakepar{Reinforcement learning preliminaries}} To state our results, let us recall how reinforcement learning fits into
the \FrameworkShort framework. We consider an episodic, finite-horizon reinforcement
  learning setting. With $H$ denoting the horizon, each model $M\in\cM$ specifies a non-stationary Markov decision process
  \arxiv{$$M=\crl*{\cS, \cA, \crl{\Pm_h}_{h=1}^{H}, \crl{\Rm_h}_{h=1}^{H},
      d_1},$$}
  \neurips{$M=\crl*{\cS, \cA, \crl{\Pm_h}_{h=1}^{H}, \crl{\Rm_h}_{h=1}^{H},
    d_1},$}
  where $\cS$ is the state space, $\cA$ is the action space,
  $\Pm_h:\cS\times\cA\to\Delta(\cS)$ is the probability transition
  distribution at step $h$, $\Rm_h:\cS\times\cA\to\Delta(\bbR)$ is
  the reward distribution, and $d_1\in\Delta(\cS_1)$ is the initial
  state distribution. We allow
 the reward distribution and transition
  kernel to vary across models in $\cM$, but assume that the initial
  state distribution is fixed.

    For a fixed MDP $M\in\cM$, each episode proceeds under the
  following protocol.
  At the beginning of the episode, the learner selects a
  randomized, non-stationary \emph{policy}
  $\pi=(\pi_1,\ldots,\pi_H),$ where $\pi_h:\cS\to\Delta(\cA)$; we
  let $\PiRNS$ (for ``randomized, non-stationary'') denote the set of all such policies. The episode
  then evolves through the following process, beginning from
  $s_1\sim{}d_1$: \neurips{For $h=1,\ldots,H$: $a_h\sim\pi_h(s_h)$, $r_h\sim\Rm_h(s_h,a_h)$, and $s_{h+1}\sim{}P\sups{M}_h(\cdot\mid{}s_h,a_h)$.}
  \arxiv{For $h=1,\ldots,H$,
  \begin{itemize}
  \item $a_h\sim\pi_h(s_h)$.
  \item $r_h\sim\Rm_h(s_h,a_h)$ and $s_{h+1}\sim{}P\sups{M}_h(\cdot\mid{}s_h,a_h)$.
  \end{itemize}
  }
  For notational convenience, we take $s_{H+1}$ to be a deterministic terminal
  state. We assume for simplicity that $\cR\subseteq\brk{0,1}$ (that is,
$\sum_{h=1}^{H}r_h\in\brk{0,1}$ almost surely). Within the \FrameworkShort framework, at each time $t$, the learning
agent chooses $\pi\ind{t}\in\PiRNS$, then observes the cumulative reward
$r\ind{t}=\sum_{h=1}^{H}r_h\ind{t}$ and trajectory
$o\ind{t}\ldef{}(s_1\ind{t},a_1\ind{t},r_1\ind{t}),\ldots,
(s_H\ind{t},a_H\ind{t},r_H\ind{t})$ that results from
\neurips{executing $\pi\ind{t}$.}\arxiv{running the
policy for a single episode.}

\arxiv{\paragraph{Value functions}}
\neurips{\paragraphi{Value functions}}
  The value for a policy $\pi$ under $M$ is given by $\fm(\pi)\ldef\Ens{M}{\pi}\brk[\big]{\sum_{h=1}^{H}r_h}$,
where
  $\Ens{M}{\pi}\brk{\cdot}$ denotes expectation under the process
  above. 
  For a given model $M$ and policy $\pi$, we define the state-action
  value function and state value functions via
  \neurips{$Q_h^{\sss{M},\pi}(s,a)=\En^{\sss{M},\pi}\brk[\big]{\sum_{h'=h}^{H}r_{h'}\mid{}s_h=s,
  a_h=a}$, and $V_h^{\sss{M},\pi}(s)=\En^{\sss{M},\pi}\brk[\big]{\sum_{h'=h}^{H}r_{h'}\mid{}s_h=s}$.
}
\arxiv{\[
Q_h^{\sss{M},\pi}(s,a)=\En^{\sss{M},\pi}\brk*{\sum_{h'=h}^{H}r_{h'}\mid{}s_h=s,
  a_h=a},
\mathand V_h^{\sss{M},\pi}(s)=\En^{\sss{M},\pi}\brk*{\sum_{h'=h}^{H}r_{h'}\mid{}s_h=s}.
\]}
We define $\pim$ as the optimal policy, which maximizes
$Q_h^{\sss{M},\pim}(s,a)$ for all states simultaneously. We abbreviate $\Qmstar\equiv{}Q^{\sss{M},\pim}$.

\arxiv{\paragraph{Value function approximation}}
\neurips{\paragraphi{Value function approximation}}
To apply our results to reinforcement learning, we take a model-free (or, value
function approximation) approach, and estimate value functions for the
underlying MDP $\Mstar$; this contrasts with model-based methods, such
as those considered in \citet{foster2021statistical}, which estimate transition
probabilities for $\Mstar$ directly. We assume access to a
class $\cQ$ of value functions of the form $Q=(Q_1,\ldots,Q_H)$\neurips{.}\arxiv{, and
make the following realizability assumption.}
\begin{assumption}
  \label{ass:value_realizability}
  The value function class $\cQ$ has $\Qmstarstar\in\cQ$, where
  $\Mstar$ is the underlying model.
\end{assumption}
For $Q=(Q_1,\ldots,Q_H)\in\cQ$, we
define $\piq=(\pi_{Q,1},\ldots,\pi_{Q,H})$ via
$\pi_{Q,h}(s)=\argmax_{a\in\cA}Q_h(s,a)$. We define
$\Piq=\crl*{\piq\mid{}Q\in\cQ}$ as the induced policy class. While \arxiv{it
will not be necessary for the results that follow}\neurips{is not
necessary for our results}, we mention in
passing that the class $\cQ$, under \pref{ass:value_realizability}, implicitly
induces a model class via $\cM_{\cQ} \ldef \crl*{M \mid{} \Qmstar\in\cQ}$.\loose
\arxiv{\subsubsection{Bilinear Classes}}

\neurips{\noindent\emph{Bilinear classes.}}
The bilinear class framework \citep{du2021bilinear} gives structural conditions for sample-efficient reinforcement learning that capture
most known settings where tractable guarantees are possible. The
following is an adaptation of the definition from \citet{du2021bilinear}.\footnote{For the sake of simplicity, we adopt a less general
  definition than \citet{du2021bilinear}: We 1) assume that the ``hypothesis class'' is parameterized by the $Q$-function class
  $\cQ$, and 2) limit to discrepancy functions that do not explicitly
  depend on the function $Q$ indexing the factor $X_h(Q;M)$. The results here readily extend to the full definition.}

\begin{definition}[Bilinear class]
  \label{def:bilinear}
\arxiv{   An MDP $M$ is said to be bilinear with dimension $d$ relative to a value function
  class $\cQ$ if:}
\neurips{   An MDP $M$ is said to be bilinear with dimension $d$ relative to a class $\cQ$ if:}
  \begin{enumerate}
  \item There exist functions $W_h(\cdot\midsem{}M):\cQ\to\bbR^{d}$,
    $X_h(\cdot\midsem{}M):\cQ\to\bbR^{d}$ such that for all
    $Q\in\cQ$ and $h\in\brk{H}$,
    \begin{equation}
      \label{eq:bilinear_residual}
\abs*{\Enmpi{M}{\piq}\brk*{
          Q_h(s_h, a_h) - r_h - \max_{a'\in\cA}Q_{h+1}(s_{h+1},a')
        }}
\leq\abs{\tri{X_h(Q;M),W_h(Q;M)}}.
    \end{equation}
  \item Let $z_h \ldef{} (s_h, a_h, r_h, s_{h+1})$. There exists a collection of estimation policies
    $\crl*{\piestq}_{Q\in\cQ}$ and a discrepancy function $\lest(\cdot;\cdot):\cQ\times\cZ\to\bbR$
    such that for all $Q, Q'\in\cQ$ and $h\in\brk{H}$,\footnote{For
      \arxiv{policies }$\pi$ and $\pi'$, $\pi\circ_h\pi'$ denotes the policy
      that follows $\pi$ for layers $1,\ldots,h-1$ and follows $\pi'$
      for layers $h,\ldots,H$.}
    \begin{equation}
      \label{eq:bilinear_tv}
      \abs{\tri{X_h(Q;M), W_h(Q';M)}}
      = \abs*{\Enmpi{M}{\pi^{\vphantom{\mathrm{est}}}_{Q}\circ_{h}\piest_{Q}}\brk*{
        \lest(Q';z_h)}
        }.
      \end{equation}
      If $\piestq=\piq$, we say that estimation is
      \emph{on-policy}. We assume 
      $\abs{\Enmpi{M}{\pi}\brk*{\lest(\Qmstar;z_h)}}=0$ for all $\pi$.
    \end{enumerate}
    We let $\dimbi(\cQ\midsem{}M)$ denote the minimal dimension $d$ for which the
    bilinear property holds for $M$. For a model class
    $\cM$, we define $\dimbi(\cQ;\cM) =
    \sup_{M\in\cM}\dimbi(\cQ\midsem{}M)$. We let
    $\Lbi(\cQ;M)\geq{}1$ denote any almost sure upper bound on
    $\abs{\lest(Q;z_h)}$ under $M$.
\end{definition}

\subsection{Guarantees for Bilinear Classes}
We now apply our main results to derive regret bounds for bilinear classes.
We first
provide optimistic estimation guarantees, then bound the Optimistic 
\CompShort, and conclude by \arxiv{combining these results with
  \pref{thm:main_batched}.}\neurips{applying \etdopt.}

We take $\Psi=\cQ$ as the sufficient
statistic space, with $\suffmap(M)\ldef{}\Qmstar$, and 
define $\fq(\piq)\ldef\En_{s_1\sim{}d_1}\brk*{Q_1(s_1,\piq(s_1))}$. For the divergence
$\Dgenshort$, we appeal to squared discrepancy, in the vein of \citet{jiang2017contextual,du2021bilinear}:
\begin{align}
  \Dbipi{Q}{M}
  = \sum_{h=1}^{H}\prn*{\Enmpi{M}{\pi}\brk*{
  \lest(Q;z_h)}}^2.
  \label{eq:discrepancy}
\end{align}
We abbreviate $\EstOptBi=\EstOptBilong$ and $\ocompbi(\cM,\mu) =
\odec^{\Dbishort}_{\gamma}(\cM,\mu)$.

\neurips{
\fakepar{Estimation} To perform estimation, we approximate the average discrepancy
in \pref{eq:discrepancy} from samples (drawing a batch of $n$ samples
at each step; cf. \cref{alg:main_batched}), then appeal to the exponential
weights method for online learning, with a bonus to enforce
optimism. See \cref{alg:bilinear_est} (deferred to \cref{app:main}
for space).
}

\arxiv{
\paragraph{Estimation}
To perform estimation, we adopt a batched approach
(cf. \pref{alg:main_batched}). We approximate the average discrepancy
in \pref{eq:discrepancy} from samples, then appeal to the exponential
weights method for online learning, with a bonus to enforce optimism.

Let the batch size $n$ be fixed. Recall that $K\ldef{}T/n$ is the
number of epochs, and that at the beginning of each epoch $k\in\brk{K}$, the estimation
oracle is given a new batch of examples $B\ind{k-1}=\crl*{(\pi\ind{k,l},r\ind{k,l},o\ind{k,l})}_{l=1}^{n}$ where
$\pi\ind{k,l}\sim{}p\ind{k}$ and
$(r\ind{k,l},o\ind{k,l})\sim\Mstar(\pi\ind{k,l})$; for RL, each observation
(trajectory) takes the form
$o\ind{k,l}=(s_1\ind{k,l},a_1\ind{k,l},r_1\ind{k,l}),\ldots, 
(s_H\ind{k,l},a_H\ind{k,l},r_H\ind{k,l})$. For each epoch $k$, we compute a loss function
\neurips{$
  \ls\ind{k}(Q) \ldef \sum_{h=1}^{H}\prn*{\frac{1}{n}\sum_{l=1}^{n}\lest(Q;z_h\ind{k,l})}^2
  - \frac{1}{8\gamma}\cdot{}\fq(\piq)$,
}
\arxiv{\begin{equation}
  \ls\ind{k}(Q) \ldef \sum_{h=1}^{H}\prn*{\frac{1}{n}\sum_{l=1}^{n}\lest(Q;z_h\ind{k,l})}^2
  - \frac{1}{8\gamma}\cdot{}\fq(\piq),
  \label{eq:est_loss}
\end{equation}
}
where
$z_h\ind{k,l}\ldef{}(s_h\ind{k,l},a_h\ind{k,l},r_h\ind{k,l},s_{h+1}\ind{k,l})$;
this loss approximates \pref{eq:discrepancy}, but incorporates a
bonus term $ - \frac{1}{8\gamma}\cdot{}\fq(\piq)$. We then form the randomized
estimator $\mu\ind{k}$ via
\neurips{$\mu\ind{k}(Q)
  \propto\exp\prn*{
  -\eta\sum_{i<k}\ls\ind{i}(Q)
  }$,
       }
\arxiv{\begin{align}
  \mu\ind{k}(Q)
  \propto\exp\prn*{
  -\eta\sum_{i<k}\ls\ind{i}(Q)
  },\label{eq:exp_weights}
       \end{align}
     }where $\eta>0$ is a learning rate.
   }

\begin{proposition}
  \label{prop:estimation_bilinear}
  For any batch size $n$ (with $K\ldef{}T/n$) and parameter
  $\gamma\geq{}1
  $, \cref{alg:bilinear_est}, with an appropriate learning
  rate, ensures that with probability at
  least $1-\delta$, 
    \begin{align}
  \EstOptBi  \approxleq{}
     \frac{\sqrt{K\log\abs{\cQ}}}{\gamma}
     + 
      H\Lbis(\cQ;\Mstar)\log(\abs{\cQ}KH/\delta)\prn*{1+\frac{K}{n}},
      \label{eq:bilinear_est}
    \end{align}
    whenever $\Mstar$ is bilinear relative to $\cQ$ and
    \pref{ass:value_realizability} is satisfied.   \end{proposition}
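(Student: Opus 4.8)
The plan is to analyze the estimator of \cref{alg:bilinear_est}, which is exponential weights over the finite class $\cQ$ run on the losses $\ls\ind{1},\dots,\ls\ind{K}$ from \pref{eq:est_loss}, via a standard online-learning regret bound against the comparator $Q^\star\ldef{}Q^{\Mstar,\star}$ (which lies in $\cQ$ by \pref{ass:value_realizability} and is the sufficient statistic $\psi(\Mstar)$). Write $g\ind{k}(Q)\ldef\sum_{h=1}^{H}\big(\tfrac1n\sum_{l=1}^{n}\lest(Q;z_h\ind{k,l})\big)^2$, so that $\ls\ind{k}(Q)=g\ind{k}(Q)-\tfrac1{8\gamma}f^{Q}(\pi_{Q})$, with $g\ind{k}(Q)\in[0,H\Lbis(\cQ;\Mstar)]$ and $f^{Q}(\pi_{Q})\in[0,1]$ almost surely. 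Two elementary facts drive the argument: $f^{Q^\star}(\pi_{Q^\star})=\En_{s_1\sim d_1}[V_1^{\Mstar,\star}(s_1)]=\fmstar(\pimstar)$ (the optimal $Q$-function realizes the optimal value), and by the last line of \pref{def:bilinear}, $\En^{\Mstar,\pi}[\lest(Q^\star;z_h)]=0$ for every policy $\pi$ and layer $h$, so $g\ind{k}(Q^\star)$ is a sum of squared empirical averages of mean-zero, $\Lbi(\cQ;\Mstar)$-bounded terms.

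First I would invoke the Hedge regret bound. Since $\ls\ind{k}(Q)\ge-\tfrac1{8\gamma}$ pointwise, for any $\eta\le8\gamma$ it gives, deterministically,
\[
\sum_{k=1}^{K}\En_{\hat{Q}\ind{k}\sim\mu\ind{k}}[\ls\ind{k}(\hat{Q}\ind{k})]\le\sum_{k=1}^{K}\ls\ind{k}(Q^\star)+\frac{\log\abs{\cQ}}{\eta}+\eta\sum_{k=1}^{K}\En_{\hat{Q}\ind{k}\sim\mu\ind{k}}[(\ls\ind{k}(\hat{Q}\ind{k}))^2].
\]
The key manipulation is to \emph{self-bound} the second-order term: using $0\le g\ind{k}(Q)\le H\Lbis(\cQ;\Mstar)$ one has $(\ls\ind{k}(Q))^2\le2g\ind{k}(Q)^2+\tfrac1{32\gamma^2}\le2H\Lbis(\cQ;\Mstar)g\ind{k}(Q)+\tfrac1{32\gamma^2}$, so if moreover $\eta\le(4H\Lbis(\cQ;\Mstar))^{-1}$, the resulting $2\eta H\Lbis(\cQ;\Mstar)\sum_k\En[g\ind{k}(\hat{Q}\ind{k})]$ is absorbed into the left side, which contains $\sum_k\En[g\ind{k}(\hat{Q}\ind{k})]$ because $\ls\ind{k}(Q)=g\ind{k}(Q)-\tfrac1{8\gamma}f^{Q}(\pi_{Q})$; this leaves a residual $\tfrac{\eta K}{32\gamma^2}$. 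Substituting $\sum_k\ls\ind{k}(Q^\star)=\sum_k g\ind{k}(Q^\star)-\tfrac1{8\gamma}\sum_k\fmstar(\pimstar)$ and rearranging yields, up to absolute constants,
\[
\sum_{k=1}^{K}\En_{\hat{Q}\ind{k}\sim\mu\ind{k}}[g\ind{k}(\hat{Q}\ind{k})]+\gamma^{-1}\sum_{k=1}^{K}\En_{\hat{Q}\ind{k}\sim\mu\ind{k}}[\fmstar(\pimstar)-f^{\hat{Q}\ind{k}}(\pi_{\hat{Q}\ind{k}})]\lesssim\sum_{k=1}^{K}g\ind{k}(Q^\star)+\frac{\log\abs{\cQ}}{\eta}+\frac{\eta K}{\gamma^2}.
\]
Optimizing $\eta\asymp\gamma\sqrt{\log\abs{\cQ}/K}$ (capped at $\min\{8\gamma,(4H\Lbis(\cQ;\Mstar))^{-1}\}$, where hitting either cap costs only terms of order $H\Lbis(\cQ;\Mstar)\log\abs{\cQ}$ that are absorbed into the second bound in \pref{eq:bilinear_est}, using $\gamma\ge1$) converts $\tfrac{\log\abs{\cQ}}{\eta}+\tfrac{\eta K}{\gamma^2}$ into the term $\sqrt{K\log\abs{\cQ}}/\gamma$.

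It remains to bound the comparator's loss and to identify the left-hand side with the optimistic estimation error $\EstOptBi$ of \pref{eq:estimation_batched}. For $\sum_k g\ind{k}(Q^\star)$: conditionally on the history, $\tfrac1n\sum_l\lest(Q^\star;z_h\ind{k,l})$ is a mean-zero average of $n$ i.i.d.\ terms bounded by $\Lbi(\cQ;\Mstar)$, hence has conditional second moment at most $\Lbis(\cQ;\Mstar)/n$; together with $g\ind{k}(Q^\star)\le H\Lbis(\cQ;\Mstar)$ almost surely, Freedman's inequality gives $\sum_k g\ind{k}(Q^\star)\lesssim H\Lbis(\cQ;\Mstar)\log(KH/\delta)(1+K/n)$ with probability $1-\delta$. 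For the identification: the second sum on the left already matches the bonus part of $\EstOptBi$ (its summand depends only on $\hat{Q}\ind{k}$), and the squared batch-averages are designed so that the conditional expectation of $g\ind{k}(\hat{Q}\ind{k})$ is at least the $k$-th summand of the discrepancy part of $\EstOptBi$, exceeding it only by a nonnegative variance term of order $H\Lbis(\cQ;\Mstar)/n$ --- this one-sided bias is exactly why a batch of size $n$ is needed, since a squared expectation $(\En[\cdot])^2$ admits no unbiased single-sample estimator. A Freedman bound made uniform over $\hat{Q}\in\cQ$, $k\in[K]$, and $h\in[H]$ then lets us replace $\sum_k\En_{\hat{Q}\sim\mu\ind{k}}[g\ind{k}(\hat{Q})]$ by its conditional-expectation version at the cost of an additive $H\Lbis(\cQ;\Mstar)\log(\abs{\cQ}KH/\delta)(1+K/n)$; combining everything yields $\EstOptBi\lesssim\sqrt{K\log\abs{\cQ}}/\gamma+H\Lbis(\cQ;\Mstar)\log(\abs{\cQ}KH/\delta)(1+K/n)$, which is \pref{eq:bilinear_est}.

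The main obstacles are the two absorption steps. The first is self-bounding the second-order term in the Hedge analysis: one must check that after moving $2\eta H\Lbis\sum_k\En[g\ind{k}(\hat{Q}\ind{k})]$ to the left, the optimism bonus survives with coefficient $\gamma^{-1}$ and the right sign and that no spurious $\Theta(K/\gamma)$ term appears --- this is precisely why the constant $\tfrac1{8\gamma}$ is chosen in \pref{eq:est_loss} and why $\eta$ must be taken $\lesssim(H\Lbis)^{-1}$. The second is the batch-level concentration: estimating $(\En[\cdot])^2$ rather than $\En[(\cdot)^2]$ forces the $n$-sample batch and produces the $1+K/n$ factor, the concentration must be uniform over the \emph{random} estimator $\hat{Q}\ind{k}$ (the source of the $\log\abs{\cQ}$ inside the logarithm), and one must handle carefully the dependence between $\mu\ind{k}$ and the batch $B\ind{k}$. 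Everything else --- the Hedge bound, Freedman's inequality, and the value identities --- is routine.
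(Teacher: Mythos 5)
Your proposal matches the paper's proof essentially step for step: the same Hedge bound with a second-order term, the same self-bounding trick $(\ls\ind{k})^2\lesssim H\Lbis\, g\ind{k} + \gamma^{-2}$ absorbed via $\eta\lesssim (H\Lbis)^{-1}$, the same use of $\Enmpi{\Mstar}{\pi}[\lest(\Qmstarstar;z_h)]=0$ to kill the comparator, the same uniform concentration over $\cQ\times[K]\times[H]$ producing the $(1+K/n)$ term, and the same choice $\eta\asymp\gamma\sqrt{\log\abs{\cQ}/K}$ capped at $(H\Lbis)^{-1}$. The only cosmetic deviations are that you bound $\sum_k g\ind{k}(\Qmstarstar)$ directly via Freedman with conditional variance $\Lbis/n$ where the paper uses Hoeffding plus AM--GM, and the two treatments of the Hedge lower-boundedness condition differ only in constants; neither changes the argument.
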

  This result does not actually use the bilinear class
  structure, and gives a bound on \cref{eq:discrepancy} for any choice
  of $\lest$. Similar to the optimistic estimation result given by
\citet{zhang2022feel} for contextual bandits, this guarantee consists
of ``slow'' term $     \frac{\sqrt{K\log\abs{\cQ}}}{\gamma}$ resulting
from optimism (which decays as $\gamma$ grows), and a ``fast''
term. However, compared to the contextual bandit setting, the fast
term,
$\log(\abs{\cQ}KH/\delta)\prn*{1+\frac{K}{n}}$,
scales with the ratio $\frac{K}{n}$, which reflects sampling error in
the estimated discrepancy\arxiv{ \pref{eq:est_loss}}, and necessitates a large
batch size. Previous algorithms for bilinear classes
\citep{jiang2017contextual,du2021bilinear} require large batch sizes
for similar reasons (specifically, because the expectation in
\pref{eq:discrepancy} appears inside the square, it is not possible to
form an unbiased estimate for $\Dbishort$ directly).

\arxiv{\paragraph{Bounding the \CompShort}}\neurips{\fakepar{Bounding the \CompShort}} A bound on the \newcomp follows by adapting arguments in
\citet{foster2021statistical}; our result has slightly improved
dependence on $H$ compared to the bounds \neurips{in that work.}\arxiv{on the
  \vanilla \CompShort found in that work.%
  }
\begin{proposition}
  \label{prop:dec_bilinear}
  Let $\cM$ be any model class for which the bilinear class property
  holds relative to $\cQ$. \textbf{1.} In the on-policy case where $\piestq=\piq$,
  we have that for all $\gamma>0$, \neurips{$    \ocompbi(\cM) \approxleq{} \frac{H\cdot{}\dimbi(\cQ;\cM)}{\gamma}$.}
  \arxiv{\begin{align}
    \ocompbi(\cM) \approxleq{} \frac{H\cdot{}\dimbi(\cQ;\cM)}{\gamma}.
  \end{align}}\\
\textbf{2.} In the general case ($\piestq\neq\piq$),
  we have that for all $\gamma\geq{}H^2\dimbi(\cQ;\cM)$,
    \neurips{$\ocompbi(\cM) \approxleq{} \sqrt{\frac{H^2\cdot{}\dimbi(\cQ;\cM)}{\gamma}}$.
    }
  \arxiv{\begin{align}
    \ocompbi(\cM) \approxleq{} \sqrt{\frac{H^2\cdot{}\dimbi(\cQ;\cM)}{\gamma}}.
         \end{align}
         }
\end{proposition}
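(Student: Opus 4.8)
The plan is to bound $\ocompbi(\cM,\nu)$ for an arbitrary reference distribution $\nu\in\Delta(\cQ)$ and then take the supremum over $\nu$, adapting the argument used to bound the \vanilla \CompShort for bilinear classes in \citet{foster2021statistical}, but exploiting the \emph{optimistic} form of \cref{eq:optimistic_dec} both to shorten the proof and to save a factor of $H$. Fix any $M\in\cM$, set $d\ldef{}\dimbi(\cQ;\cM)\geq{}\dimbi(\cQ\midsem{}M)$, and abbreviate $X_h(Q)\ldef{}X_h(Q;M)$, $W_h(Q)\ldef{}W_h(Q;M)$. The crucial point is that \cref{eq:optimistic_dec} contains $\fq(\piq)$---the value of the \emph{drawn} statistic $Q$---in place of the optimal value $\fm(\pim)$ of the adversary's model, so the standard telescoping (``simulation lemma'') identity applies directly: for any $Q\in\cQ$,
\[
  \fq(\piq)-\fm(\piq)=\sum_{h=1}^{H}\Enmpi{M}{\piq}\brk*{Q_h(s_h,a_h)-r_h-\max_{a'\in\cA}Q_{h+1}(s_{h+1},a')}.
\]
By \cref{eq:bilinear_residual}, the $h$-th summand is at most $\abs{\tri{X_h(Q),W_h(Q)}}$ in absolute value, so $\fq(\piq)-\fm(\piq)\leq\sum_{h=1}^{H}\abs{\tri{X_h(Q),W_h(Q)}}$.

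\textbf{Part 1 ($\piestq=\piq$).} Let $p$ be the law of $\piq[\bar{Q}]$ for $\bar{Q}\sim\nu$. Then $\En_{\pi\sim p}\brk*{\fm(\pi)}=\En_{\bar{Q}\sim\nu}\brk*{\fm(\piq[\bar{Q}])}$; since $\pi$ and the $Q$ inside \cref{eq:optimistic_dec} are independent draws from $\nu$, the first two terms collapse, after renaming, to $\En_{Q\sim\nu}\brk*{\fq(\piq)-\fm(\piq)}$. With on-policy estimation, \cref{eq:bilinear_tv} gives that $\Enmpi{M}{\piq[\bar{Q}]}\brk*{\lest(Q;z_h)}$ equals $\tri{X_h(\bar{Q}),W_h(Q)}$ up to sign, whence $\En_{\pi\sim p}\En_{Q\sim\nu}\brk*{\Dbipi{Q}{M}}=\sum_{h}\En_{Q\sim\nu}\brk*{W_h(Q)^{\trn}\Sigma_hW_h(Q)}$, where $\Sigma_h\ldef{}\En_{\bar{Q}\sim\nu}\brk*{X_h(\bar{Q})X_h(\bar{Q})^{\trn}}$. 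I would then apply Cauchy--Schwarz in the $\Sigma_h^{\pinv}$- and $\Sigma_h$-seminorms (legitimate since $X_h(Q)\in\operatorname{range}(\Sigma_h)$ $\nu$-a.s.), the trace bound $\En_{Q\sim\nu}\brk*{X_h(Q)^{\trn}\Sigma_h^{\pinv}X_h(Q)}=\operatorname{tr}(\Sigma_h^{\pinv}\Sigma_h)\leq{}d$, and a further Cauchy--Schwarz over $h\in\brk{H}$, to conclude $\En_{Q\sim\nu}\brk*{\fq(\piq)-\fm(\piq)}\leq\sqrt{dH}\cdot\prn*{\En_{\pi\sim p}\En_{Q\sim\nu}\brk*{\Dbipi{Q}{M}}}^{1/2}$. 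Plugging this into \cref{eq:optimistic_dec} and using $\sqrt{dH}\sqrt{u}-\gamma u\leq dH/(4\gamma)$ for $u\geq0$ bounds the bracketed expectation by $O(Hd/\gamma)$ for this $p$ and every $M\in\cM$, which proves the claim.

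\textbf{Part 2 ($\piestq\neq\piq$).} Now $\Dbipi{\cdot}{\cdot}$ evaluated at the greedy policy $\piq[\bar{Q}]$ need not see $\tri{X_h(\bar{Q}),\cdot}$, so $p$ must also play the estimation policies. Take $p=(1-p_0)\,p_{\mathrm{exploit}}+p_0\,p_{\mathrm{explore}}$, where $p_{\mathrm{exploit}}$ is the distribution from Part 1 and $p_{\mathrm{explore}}$ draws $h\sim\unif\brk{H}$, $\bar{Q}\sim\nu$ and plays $\piq[\bar{Q}]\circ_h\piestq[\bar{Q}]$, for a mixing weight $p_0\in(0,1]$ chosen below. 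On $p_{\mathrm{exploit}}$ the analysis of Part 1 applies verbatim; the contribution of $\fm$ along $p_{\mathrm{explore}}$ differs from that along $p_{\mathrm{exploit}}$ by at most $1$ in absolute value (rewards lie in $\brk{0,1}$) and is weighted by $p_0$, so it costs at most $p_0$. On $p_{\mathrm{explore}}$, \cref{eq:bilinear_tv} gives that $\Dbipi{Q}{M}$, at $\pi=\piq[\bar{Q}]\circ_h\piestq[\bar{Q}]$, is at least $\tri{X_h(\bar{Q}),W_h(Q)}^2$, hence $\En_{\pi\sim p}\En_{Q\sim\nu}\brk*{\Dbipi{Q}{M}}\geq\tfrac{p_0}{H}\cdot{}u$ with $u\ldef{}\sum_{h}\En_{Q\sim\nu}\brk*{W_h(Q)^{\trn}\Sigma_hW_h(Q)}\geq0$, while the same Cauchy--Schwarz/trace chain bounds the value term by $\sqrt{dH}\sqrt{u}$. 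Thus the objective is at most $\sqrt{dH}\sqrt{u}+p_0-\gamma p_0 u/H\leq dH^2/(4\gamma p_0)+p_0$, and taking $p_0\asymp H\sqrt{d}/\sqrt{\gamma}$ yields $O(\sqrt{H^2d/\gamma})$; the requirement $p_0\leq1$ holds under the stated hypothesis $\gamma\geq H^2\dimbi(\cQ;\cM)$. (The improved dependence on $H$ mentioned before the statement stems from estimating $\fq(\piq)-\fm(\piq)$ in one telescoping step, whereas the non-optimistic argument must additionally relate $\fm(\pim)$ to the value of the reference policy, incurring an extra comparison.)

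\textbf{Main obstacle.} The delicate step is Part 2: choosing $p_{\mathrm{explore}}$ and the split weight $p_0$ so that (i) the divergence $\Dbishort$ evaluated only at the \emph{played} policies $\piq[\bar{Q}]\circ_h\piestq[\bar{Q}]$ lower-bounds the full quadratic form $u$ with the correct $1/H$ normalization, and (ii) the optimistic value comparison still goes through even though the exploratory policies are not greedy. A minor but careful point throughout is handling possibly rank-deficient $\Sigma_h$ via pseudoinverses, using that $X_h(Q;M)$ lies in $\operatorname{range}(\Sigma_h)$ $\nu$-almost surely.
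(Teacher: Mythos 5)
Your proof is correct and, at bottom, rests on the same two ingredients as the paper's: the telescoping/decoupling bound $\En_{Q\sim\nu}\brk{\fq(\piq)-\fm(\piq)}\lesssim\sqrt{dH}\cdot\prn{\sum_h\En_{Q,\bar{Q}\sim\nu}\tri{X_h(\bar{Q}),W_h(Q)}^2}^{1/2}$, and a mixture policy that injects the estimation policies so that the played divergence dominates the quadratic form $\sum_h\En\brk{W_h(Q)^{\trn}\Sigma_h W_h(Q)}$. The differences are worth noting. First, the paper imports the decoupling inequality wholesale from Theorem 7.1, Eq.~(152) of \citet{foster2021statistical} (already in AM--GM'd form with a free parameter $\eta$), whereas you derive it from scratch via the simulation-lemma telescoping, Cauchy--Schwarz in the $\Sigma_h^{\pinv}$/$\Sigma_h$ seminorms, and the trace bound; your version is self-contained and makes the role of \cref{eq:bilinear_residual} versus \cref{eq:bilinear_tv} explicit. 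Second, your exploration distribution in Part~2 differs: the paper randomizes \emph{independently at each layer} between $\pi_{Q,h}$ and $\piest_{Q,h}$ (probability $\alpha/H$ each layer), which forces the $(1-\alpha/H)^{H-1}\geq 1/2$ bookkeeping and implicitly requires reading the per-layer randomization as a mixture over deterministic composite policies so that the probability weight lands \emph{outside} the square in $\Dbishort$; your two-component mixture with a uniformly random switching layer $h$ puts probability mass directly on the exact policies $\piq[\bar{Q}]\circ_h\piestq[\bar{Q}]$ appearing in \cref{eq:bilinear_tv}, which yields the clean factor $p_0/H$ and sidesteps that subtlety entirely. Both routes give the same rates; your handling of rank-deficient $\Sigma_h$ via pseudoinverses and the admissibility check $p_0\leq 1\Leftrightarrow\gamma\gtrsim H^2 d$ are also correct.
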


\arxiv{\paragraph{Putting everything together}}

Combining \pref{thm:main_batched}, \pref{prop:estimation_bilinear}, and
\pref{prop:dec_bilinear}, we obtain the following result.

\begin{corollary}[Regret bound for bilinear classes]
  \label{cor:bilinear}
Let $\cQ$ be given. Assume that $\Mstar\in\cM$, where $\cM$ is
bilinear relative to $\cQ$, and that
\pref{ass:value_realizability} holds. Abbreviate $d\equiv\dimbi(\cQ;\cM)$
and $L\equiv\Lbi(\cQ;\cM)\ldef{}\sup_{M\in\cM}\Lbi(\cQ;M)$. For an appropriate choice of $n$ and
$\gamma$, \pref{alg:main_batched}, using
the algorithm from \pref{prop:estimation_bilinear} as an oracle, enjoys
the following guarantees with probability at least $1-\delta$:\\
\neurips{\textbf{1.} In the on-policy case where $\piestq=\piq$: \neurips{$    \RegDM \approxleq{}
  (H^2dL^2\log(\abs{\cQ}TH/\delta))^{1/2}T^{3/4}$.}\\
\textbf{2.} In the general case where $\piestq\neq\piq$: \neurips{$
  \RegDM \approxleq{}
  (H^6d^2L^4\log^3(\abs{\cQ}TH/\delta))^{1/6}T^{5/6}$.}
  }
\arxiv{\begin{itemize}
\item In the on-policy case where $\piestq=\piq$: \neurips{$    \RegDM \approxleq{}
    (H^2dL^2\log(\abs{\cQ}TH/\delta))^{1/2}T^{3/4}$.}
  \arxiv{\begin{align}
    \RegDM \approxleq{}
    (H^2dL^2\log(\abs{\cQ}TH/\delta))^{1/2}T^{3/4}.
    \label{eq:bilinear1}
  \end{align}}
\item In the general case where $\piestq\neq\piq$: \neurips{$    \RegDM \approxleq{} (H^6d^2L^4\log^3(\abs{\cQ}TH/\delta))^{1/6}T^{5/6}$.}
  \arxiv{
    \begin{align}
    \RegDM \approxleq{} (H^6d^2L^4\log^3(\abs{\cQ}TH/\delta))^{1/6}T^{5/6}.
    \end{align}
    }
  \end{itemize}
  }
\end{corollary}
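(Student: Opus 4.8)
The plan is to compose the three results that the corollary cites, with no new ideas required beyond a bookkeeping optimization. \cref{thm:main_batched} (applied with the RL-to-\FrameworkShort\ reduction already set up above, with divergence $\Dbishort$ and sufficient statistic space $\Psi=\cQ$) reduces the regret to $\RegDM \leq \ocompbi(\cM)\cdot{}T + \gamma n\cdot\EstOptBi(T/n,n,\delta)$; \cref{prop:dec_bilinear} controls the first term and \cref{prop:estimation_bilinear} controls the second. What is left is to pick the batch size $n$ and the exploration parameter $\gamma$ to balance the resulting expressions.

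Concretely, writing $K=T/n$ and $L=\Lbi(\cQ;\cM)$, \cref{prop:estimation_bilinear} gives
\[
\gamma n\cdot\EstOptBi
\approxleq n\sqrt{K\log\abs{\cQ}}
 + \gamma n\cdot H\Lbis(\cQ;\Mstar)\log(\abs{\cQ}KH/\delta)\prn*{1+\tfrac{K}{n}}
\approxleq \sqrt{nT\log\abs{\cQ}}
 + \gamma HL^2\log(\abs{\cQ}TH/\delta)\prn*{n+\tfrac{T}{n}},
\]
using $n\cdot(1+K/n)=n+T/n$. The term $n+T/n$ is minimized at $n=\Theta(\sqrt{T})$, which simultaneously removes the batching penalty, and leaves $\RegDM \approxleq \ocompbi(\cM)\cdot T + T^{3/4}\sqrt{\log\abs{\cQ}} + \gamma HL^2\log(\abs{\cQ}TH/\delta)\sqrt{T}$. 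In the on-policy case, \cref{prop:dec_bilinear} gives $\ocompbi(\cM)\approxleq Hd/\gamma$ with $d=\dimbi(\cQ;\cM)$, so one optimizes $\gamma$ to balance $HdT/\gamma$ against $\gamma HL^2\log(\cdot)\sqrt{T}$, i.e.\ $\gamma\asymp\sqrt{d/(L^2\log(\cdot))}\,T^{1/4}$; both terms then become $(H^2dL^2\log(\abs{\cQ}TH/\delta))^{1/2}T^{3/4}$, which (since $H,d,L\geq 1$) also dominates the slow optimism term $T^{3/4}\sqrt{\log\abs{\cQ}}$. This yields the first claimed bound.

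In the general case, \cref{prop:dec_bilinear} gives only $\ocompbi(\cM)\approxleq\sqrt{H^2 d/\gamma}$, valid for $\gamma\geq H^2 d$, so the decision-estimation term is $\sqrt{H^2 d/\gamma}\,T$; balancing it against $\gamma HL^2\log(\cdot)\sqrt{T}$ gives $\gamma\asymp d^{1/3}T^{1/3}/(L^2\log(\cdot))^{2/3}$ and a common value of order $T^{5/6}$ with the displayed polynomial dependence on $H,d,L,\log$, which dominates the subdominant $T^{3/4}$ optimism contribution; this gives the second claimed bound. The only subtlety — and the one point requiring care rather than new ideas — is the interplay of the three free parameters: one must verify that the chosen $\gamma$ respects the constraints $\gamma\geq 1$ (from \cref{prop:estimation_bilinear}) and $\gamma\geq H^2 d$ (from \cref{prop:dec_bilinear}), the latter holding precisely in the regime $T\gtrsim H^6 d^2$ where the stated bound is nontrivial (outside this regime the bound exceeds the trivial $O(T)$ ceiling and there is nothing to prove), and that the single choice $n=\Theta(\sqrt{T})$ indeed controls both the batching ratio $K/n$ and the term $T/n$ that the optimism bonus multiplies. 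All of the genuine work is already contained in \cref{thm:main_batched,prop:dec_bilinear,prop:estimation_bilinear}.
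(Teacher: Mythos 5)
Your proposal is correct and follows essentially the same route as the paper's proof: combine \cref{thm:main_batched}, \cref{prop:estimation_bilinear}, and \cref{prop:dec_bilinear}, set $n=\Theta(\sqrt{T})$ to kill the batching penalty, and then balance $\gamma$ separately in the on-policy and general cases. Your extra care about the admissibility constraints $\gamma\geq 1$ and $\gamma\geq H^2 d$ (and the observation that the latter fails only where the stated bound is vacuous) is a slightly more explicit treatment of a point the paper glosses over with "choosing $\gamma$ to balance yields the result," but it is not a different argument.
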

This is the first regret bound for model-free reinforcement learning
with the \etdtext meta-algorithm. Importantly, the result scales only
with the horizon, the \arxiv{bilinear }dimension $d$, and the capacity
$\log\abs{\cQ}$\arxiv{ for the value function class}. Improving the dependence
on $T$ in \pref{cor:bilinear} is an interesting question: currently, there are no
algorithms for general bilinear classes that achieve $\sqrt{T}$ regret
without additional assumptions.
Let us emphasize that regret bounds for
bilinear classes can already be achieved by a number of existing
methods \citep{du2021bilinear,jin2021bellman}. The contribution here is to show that such guarantees can be achieved through the general
\CompShort framework, thereby placing this line of research on
stronger foundations.

\arxiv{
  \subsection{Tighter Guarantees under Bellman Completeness}
  \label{sec:complete}
\arxiv{In the remainder of the section}\neurips{In this section}, we show how to derive tighter
estimation guarantees (and consequently tighter regret bounds) when
$\cQ$ satisfies a \emph{Bellman completeness} assumption \citep{,zanette2020learning,jin2021bellman}. For a given
MDP $M$, let $\cTm_h\brk*{f}(s,a) \ldef \En\sups{M}\brk*{r_h +
  \max_{a'}f(s_{h+1},a')\mid{}s_h=s,a_h=a}$ denote the Bellman
operator for layer $h$.
\begin{assumption}[Completeness]
  We assume that $\cQ=\cQ_1\times\cdots\times\cQ_H$ is a product
  class, and that for all $h$ and $Q_{h+1}\in\cQ_{h+1}$,
  $\brk{\cTmstar_hQ_{h+1}}\in\cQ_{h}$.
\end{assumption}
As before, we take $\Psi=\cQ$, $\suffmap(M)\ldef{}\Qmstar$, and $\fq(\piq)\ldef\En_{s_1\sim{}d_1}\brk*{Q_1(s_1,\piq(s_1))}$. For the divergence
$\Dgenshort$, we now appeal to \emph{squared Bellman error} (e.g., \cite{jin2021bellman,xie2022role}):
\begin{align}
  \Dsbepi{Q}{M}
  = \sum_{h=1}^{H}\Enmpi{M}{\pi}\brk*{\prn*{Q_h(s_h,a_h)-\brk{\cT\sups{M}_hQ_{h+1}}(s_h,a_h)}^2}.
  \label{eq:discrepancy}
\end{align}
We abbreviate $\EstOptSB=\EstOptSBlong$ and $\ocompsbe(\cM,\mu) =
\odec^{\Dsbeshort}_{\gamma}(\cM,\mu)$.

\paragraph{Estimation}
\cref{alg:ts3_rl} %
performs optimistic online estimation with squared Bellman error. The
algorithm is an adaptation of a two-timescale exponential weights strategy
originally introduced by \citet{agarwal2022non} within an optimistic posterior
sampling algorithm referred to as \tst. We show that this technique
leads to a self-contained online estimation guarantee
outside the context of the \tst algorithm.

\begin{proposition}[Estimation for square Bellman error]
  \label{prop:ts3_rl}
  Assume that $\cQ$ satisfies completeness relative to $M^{\star}$. Moreover assume $\sum_{h=1}^{H}\sup_{s,a} r_h(s,a) \leq 1$ and $\sup_{Q,h,s,a} Q_h(s,a) \leq 1$.
  Then for any $\gamma\geq{}1
  $ and $\eta\leq 1/(2^{16}(\log (|\cQ|K/\delta) + 1))$
  , with batch size $n=H$
  ($K\ldef{}T/n$), $\lambda=1/8$,  $\beta=(12\gamma H)^{-1}$ and $\delta>0$, 
  \cref{alg:ts3_rl} ensures that with probability at least $1-\delta$,
    \begin{align}
      \EstOptSB  \approxleq{}
      \frac{H\log\abs{\cQ}}{\eta} + \frac{\eta\log (\abs{\cQ}K/\delta)K}{\gamma} + \frac{K}{\gamma^2 H}.
    \end{align}
    whenever $\cQ$ satisfies completeness relative to
    $\Mstar$.\footnote{\citet{agarwal2022non} give a tighter estimation
      error bound of roughly $      \EstOptSB  \approxleq{}
      \log^2(\abs{\cQ}HK) + \frac{K}{\gamma^2}$, but this result takes advantage of a Bellman rank assumption on the underlying MDP. The estimation error bound we state here does not require any structural assumptions on the MDP under consideration, but gives a worse rate.
      }
  \end{proposition}
Note that \cref{prop:ts3_rl} does not make use of the bilinear class
assumption, and only requires that $\cQ$ satisfies completeness. \neurips{As
such, we expect that this result will find use more broadly.}

\paragraph{Regret bound for bilinear classes}
To provide regret bounds, we assume that $\Mstar$ satisfies the
bilinear class property relative to $\cQ$ as in \cref{sec:rl}. In addition to
assuming that $\Mstar$ is bilinear, we make the following
restrictions: (1) $\piestq=\piq$, i.e. estimation is on policy, (2)
$\lest(Q;z_h)=Q_h(s_h,a_h)-r_h-\max_{a'}Q_{h+1}(s_{h+1},a')$, so that 
$\Enmpi{M}{\pi}\brk*{\lest(Q;z_h)}$ is the average
Bellman error for $Q$ under $M$.\footnote{These restrictions
  correspond to restricting attention to \emph{$Q$-type Bellman rank}, a
  special case of the bilinear class property \citep{jin2021bellman,foster2021statistical}.}
  With this discrepancy function,
Jensen's inequality implies that $\ocompsbe(\cM)\leq{}\ocompbi(\cM)$,
so combining \pref{thm:main_batched}, \pref{prop:ts3_rl}, and
\pref{prop:dec_bilinear}, we obtain the following result.
\begin{corollary}[Regret bound under completeness]
  \label{cor:regret_complete}
Let $\cQ$ be given. Assume that $\Mstar\in\cM$, where $\cM$ is
bilinear relative to $\cQ$, and that
completeness holds. Moreover assume $\sum_{h=1}^{H}\sup_{s,a} r_h(s,a) \leq 1$ and $\sup_{Q,h,s,a} Q_h(s,a) \leq 1$. Abbreviate $d\equiv\dimbi(\cQ;\cM)$. For an appropriate choice of $n$ and
$\gamma$, \pref{alg:main_batched}, using \cref{alg:ts3_rl} (with
appropriate parameter choice) as an oracle, enjoys
the following guarantees with probability at least $1-\delta$:
\begin{align}
\RegDM \approxleq{} H d^{1/3} (\log(\abs{\cQ}K/\delta))^{4/5}T^{2/3} .
  \end{align}
\end{corollary}
This improves upon the $T^{3/4}$-type rate in \cref{cor:bilinear}.

  \begin{algorithm}[tp]
      \caption{Two-Timescale Exponential Weights for Bellman Complete Value Function Classes}
\label{alg:ts3_rl}
    \begin{algorithmic}[1] 
    \State Initialize $\histSet\ind{0}=\emptyset$.
    \For{$k=1,\dots,K$}
    \State For any $Q,Q'\in \cQ$ and $h\in [H]$, define 
    \begin{gather*}
      \Delta_h\ind{k}(Q',Q) \ldef Q'(s_h\ind{k,h},a_h\ind{k,h}) - r_h\ind{k,h} - Q(s_{h+1}\ind{k,h}),\\
      q\ind{k}(Q'|Q) \ldef q\ind{k}(Q'|Q,\histSet\ind{k-1}) \propto \exp\prn*{-\lambda\cdot \frac{1}{H}\sum\limits_{s=1}^{k-1} \sum\limits_{h=1}^{H}\Delta_h\ind{s}(Q',Q)^2},\\
      L\ind{k}(Q) \ldef \frac{1}{H} \sum\limits_{h=1}^{H} \Delta_h\ind{k}(Q,Q)^2 + \frac{1}{\lambda} \log \En_{Q'\sim q\ind{k}(\cdot|Q)} \brk*{ \exp \prn*{-\lambda \cdot \frac{1}{H} \sum\limits_{h=1}^{H}\Delta_h\ind{k}(Q',Q)^2}  }    ,\\
      \mu\ind{k}(Q) \ldef  \mu\ind{k}(Q|\histSet\ind{k-1}) \propto \exp\prn*{ -\eta \sum\limits_{s=1}^{k-1} \prn*{ L\ind{s}(Q)  - \beta\cdot \frac{1}{H} \sum\limits_{h=1}^{H} \max_a Q(s_1\ind{s,h},a)} }.
    \end{gather*}
    \State Predict $\mu\ind{k}$.
    \For{$l=1,\dots,H$}
    \State \multiline{Play $\pi\ind{k,l}\sim p\ind{k}$ and obtain the trajectory $o\ind{k,l}=(s_1\ind{k,l},a_1\ind{k,l},r_1\ind{k,l}),\ldots, 
    (s_H\ind{k,l},a_H\ind{k,l},r_H\ind{k,l})$, where
    $p\ind{k}\in\Delta(\Pi)$ is a decision distribution produced by
    any batched algorithm (e.g., \cref{alg:main_batched}) that selects a
    decision adaptively based on $\mu\ind{k}$.}
    \EndFor
    \State Update $\histSet\ind{k} \gets \histSet\ind{k-1}\cup \bigcup_{l=1}^H\set{s_l\ind{k,l},a_l\ind{k,l},r_l\ind{k,l},s_{l+1}\ind{k,l}} \cup \bigcup_{l=1}^H \set{s_1\ind{k,l}}$.
    \EndFor
    \end{algorithmic}
    \end{algorithm}

\neurips{    
\subsection{Proofs from \cref{sec:complete}}

\cref{prop:ts3_rl} is an application of more general results given in
\cref{app:completeness}, which analyze a generalization of
\cref{alg:ts3_rl} for a more general online learning setting. To
\cref{prop:ts3_rl}, we simply apply these results to the reinforcement
learning framework.
\begin{proof}[\pfref{prop:ts3_rl}]
  Let the batch size $n=H$ be fixed, and let $K\ldef{}T/n$ be the number of epochs. Recall that for each step $k\in\brk{K}$, the estimation
  oracle is given a batch of examples $B\ind{k}=\crl*{(\pi\ind{k,l},r\ind{k,l},o\ind{k,l})}_{l=1}^{n}$ where
  $\pi\ind{k,l}\sim{}p\ind{k}$ and
  $(r\ind{k,l},o\ind{k,l})\sim\Mstar(\pi\ind{k,l})$. Each observation
  (trajectory) takes the form
  $o\ind{k,l}=(s_1\ind{k,l},a_1\ind{k,l},r_1\ind{k,l}),\ldots, 
  (s_H\ind{k,l},a_H\ind{k,l},r_H\ind{k,l})$. 
  We abbreviate $\Qstar=\Qmstarstar$.
  \paragraph{Estimation algorithm} For each step $k$, the randomized
  estimator $\mu\ind{k}$ selected as described in
  \pref{alg:ts3_rl}. This algorithm is an instantiation of
  \pref{alg:ts3} in the general online learning setting described in
  \cref{app:completeness}, with $\Gclass = \cQ$ and for all $h\in
  [H]$, $\Xclass_h = \cS \times \cA$, $\Yclass_h = \bbR\times \cS$ and
  $\Zclass = \ngedit{\cS^H}$. The unknown kernels are the transition
  distributions for the corresponding layers of the MDP $\Mstar$, and the loss functions are
  \begin{align*}
  \loss_{h,1}((s_h, a_h), Q) &\ldef Q_h(s_h, a_h) ,\\
  \loss_{h,2}((r_h,s_{h+1}),Q) &\ldef r_h +  \max_a Q_h(s_{h+1},a),\\
  \loss_3(\set{s_1\ind{l}}_{l\in [H]} ) &\ldef  - \frac{1}{H} \sum\limits_{l=1}^{H} \max_a Q_1(s_1\ind{l},a) 
  \end{align*}
  Finally, take $\x_h\ind{k} = (s_h\ind{k,h}, a_h\ind{k,h})$,
  $\y_h\ind{k} = (r_h\ind{k,h}, s_{h+1}\ind{k,h})$ and $\z\ind{k} =
  \set{s_1\ind{k,h}}_{h\in [H]}$. \jqedit{It is important to note that
    $s_h\ind{k,h}, a_h\ind{k,h},r_h\ind{k,h}, s_{h+1}\ind{k,h}$ are
    taken from different trajectories for $h\in [H]$, so
    $\y_h\ind{k}\mid{}\x_h\ind{k}$ are independent from one other for
    $h\in [H]$.} \ngedit{Moreover, note that the distributions
    $p\ind{k} \in \Delta(\Pi)$ play the role of nature: the
    distribution of the tuple ($x_h\ind{k}, y_h\ind{k}, w\ind{k})$ for $h \in [H]$ is determined by running a policy $\pi\ind{k} \sim p\ind{k}$ in the ground-truth MDP $M^\star$.} With this configuration, observe that in the notation of \cref{app:completeness}, we have\ngedit{, for any $Q$,} 
  \begin{align*}
    \En_{\x_{1:H}\ind{k}}\cE(Q,Q,\x_{1:H}\ind{k})^2 &= \frac{1}{H} \sum\limits_{h=1}^{H}\En_{\x_h\ind{k}}(\loss_{h,1}(\x_h\ind{k},Q) - \En[\loss_{h,2}(\y_h,Q)|\x_h\ind{k}])^2 \\
    &= \frac{1}{H} \sum\limits_{h=1}^{H}\En_{\pi\ind{k,h}\sim p\ind{k}}\Enmpi{\Mstar}{\pi\ind{k,h}}\brk*{\prn*{Q_h(s_h,a_h)-\brk{\cT\sups{\Mstar}_hQ_{h+1}}(s_h,a_h)}^2} \\
    &= \frac{1}{H} \En_{\pi\sim p\ind{k}}\Enmpi{\Mstar}{\pi}\brk*{ \sum\limits_{h=1}^{H}\prn*{Q_h(s_h,a_h)-\brk{\cT\sups{\Mstar}_hQ_{h+1}}(s_h,a_h)}^2}\\
    &= \frac{1}{H} \En_{\pi\sim p\ind{k}} \Dsbepi{Q}{\Mstar}.
  \end{align*}
and 
\begin{align*}
  \En_{\x_{1:H}\ind{k},\z\ind{k}}\iota\ind{k}(Q) &= \frac{1}{H} \sum\limits_{l=1}^{H} \En_{\pi\ind{k,h}\sim p\ind{k}}\Enmpi{\Mstar}{\pi\ind{k,h}} \brk*{ \max_a Q_1^*(s_1\ind{k,l},a) - \max_a Q_1(s_1\ind{k,l},a)}\\
  &= \fmstar(\pimstar)-\fq(\piq) .
\end{align*}

  \paragraph{Estimation error bound} We take $\alpha = 12\coef$, so that \pref{thm:main-OL} implies that with probability at least $1-\delta$, 
  \begin{align*}
    &\sum\limits_{k=1}^{K}  \En_{Q\sim \mu\ind{k}} \prn*{\frac{1}{H} \En_{\pi\sim p\ind{k}} \Dsbepi{Q}{\Mstar} + \alpha (\fmstar(\pimstar)-\fq(\piq) )  } \\
    &\approxleq \eta\alpha\log(\abs{\cQ}K/\delta)K  + \log\abs{\cQ}/\eta  + \alpha^2 K.
  \end{align*}
  Then by taking $\alpha = \frac{1}{\gamma H}$, this further implies that with probability at least $1-\delta$,
  \begin{align*}
    \EstOptSB &= \sum\limits_{k=1}^{K}  \En_{\pi\sim p\ind{k}}\En_{Q\sim \mu\ind{k}} \prn*{ \Dsbepi{Q}{\Mstar} + \frac{1}{\gamma} (\fmstar(\pimstar)-\fq(\piq) )  } \\
    &\approxleq H( \eta\alpha\log(\abs{\cQ}K/\delta)K  + \log\abs{\cQ}/\eta  + \alpha^2 K)\\
    &\approxleq{} \frac{H\log\abs{\cQ}}{\eta} + \frac{\eta\log (\abs{\cQ}K/\delta)K}{\gamma} + \frac{K}{\gamma^2 H}.
  \end{align*}
\end{proof}

\begin{proof}[\pfref{cor:regret_complete}] We choose $n=H$ and apply \cref{alg:ts3_rl} as the estimation oracle. 
We first consider the ``trivial'' parameter regime in which $ H d^{1/3}(\log(\abs{\cQ}K/\delta))^{-1/5}T^{-1/3} \geq 1/(2^{16}(\log (|\cQ|K/\delta) + 1))$. Here, $T \approxleq H d^{1/3} (\log(\abs{\cQ}K/\delta))^{4/5}T^{2/3}$, and thus
 \begin{align*}
  \RegDM
  \approxleq{} H d^{1/3} (\log(\abs{\cQ}K/\delta))^{4/5}T^{2/3}.
 \end{align*}
 When the case above, does not hold, we proceed as in the theorem statement, choosing $\eta = H d^{1/3}(\log(\abs{\cQ}K/\delta))^{-1/5}T^{-1/3} \leq  1/(2^{16}(\log (|\cQ|K/\delta) + 1))$. Combining \pref{thm:main_batched} and  \pref{prop:ts3_rl} then gives %
  \begin{align*}
    \RegDM
    \approxleq{} \ocompsbe(\cM)\cdot{}T 
    + \gamma{}\frac{H^2\log\abs{\cQ}}{\eta} + \eta\log (\abs{\cQ}K/\delta)T + \frac{K}{\gamma^2}
  \end{align*}
  with probability at least $1-\delta$.
    Next, using \pref{prop:dec_bilinear} to bound $\ocompsbe(\cM)$ in the above display, it follows that 
    \begin{align*}
          \RegDM
      \approxleq{} \frac{HdT}{\gamma} + \gamma{}\frac{H^2\log\abs{\cQ}}{\eta} + \eta\log (\abs{\cQ}K/\delta)T + \frac{K}{\gamma^2} .
    \end{align*}
    We choose $\gamma  = d^{2/3} (\log (\abs{\cQ}K/\delta))^{-2/5} T^{1/3}$ to obtain 
    \begin{align*}
      \RegDM
      \approxleq{} H d^{1/3} (\log(\abs{\cQ}K/\delta))^{4/5}T^{2/3} 
    \end{align*}
    with probability at least $1-\delta$.

  \end{proof}

 }

   }

  \section{Understanding the Role of Optimistic Estimation}
\label{sec:discussion}

We close with discussion and interpretation of our results.

\arxiv{\subsection{When Does Optimistic Estimation Help?}}
\neurips{\noindent\textbf{When does optimistic estimation help?}}
\label{sec:optimistic-estimation-help}
Perhaps the most pressing question at this point is to understand when
the regret bound
for \etdopt (\pref{thm:main}) improves upon the corresponding regret
bound for vanilla \etd (\pref{thm:prev}). In what follows, we show
that: \neurips{(1) For any divergence $D$, the \newcomp is equivalent to a variant
  of the \vanilla \CompShort which incorporates randomized estimators
  \citep{foster2021statistical}, but with the arguments to the
  divergence \emph{flipped}; (2)
 For divergences $D$ that satisfy a triangle inequality, this
  randomized \CompShort is equivalent to the \vanilla \CompShort
  itself.}
\arxiv{
\begin{enumerate}
\item For any divergence $D$, the \newcomp is equivalent to a variant
  of the \vanilla \CompShort which incorporates randomized estimators
  \citep{foster2021statistical}, but with the arguments to the
  divergence \emph{flipped}.
\item For divergences $D$ that satisfy a triangle inequality, this
  randomized \CompShort is equivalent to the \vanilla \CompShort itself.
\end{enumerate}
}Together these results show that the improvement given by the \newcomp is
limited to \emph{asymmetric} divergences such as the bilinear
divergence in \pref{sec:rl}; for more traditional
divergences such as Hellinger distance and squared error, the
optimistic approach offers no improvement. \arxiv{\\}%
Our results use the following
regularity assumption, satisfied by
all standard divergences.
\begin{assumption}
  \label{ass:continuity}
  \arxiv{  For all pairs of models $M,\Mbar\in\conv(\cM)$, we have
  $(\fmbar(\pi) - \fm(\pi))^2 \leq{}
  \Lcont^2\cdot\Dgenpi{\Mbar}{M}$ for a constant $\Lcont>0$.}
\neurips{For all $M,\Mbar\in\conv(\cM)$, 
  $(\fmbar(\pi) - \fm(\pi))^2 \leq{}
  \Lcont^2\cdot\Dgenpi{\Mbar}{M}$ for a constant $\Lcont>0$.}
\end{assumption}

Given a divergence $D$, we define the \emph{flipped divergence}, which
swaps the first and second arguments, by \neurips{$\Dflippi{\Mbar}{M}\ldef\Dgenpi{M}{\Mbar}$.}
\arxiv{\[
  \Dflippi{\Mbar}{M}\ldef\Dgenpi{M}{\Mbar}.
\]}
We define the \CompText for randomized estimators
\citep{foster2021statistical,chen2022unified}
\neurips{as $\compgenrand(\cM,\mu)
    = \inf_{p\in\Delta(\Pi)}\sup_{M\in\cM}\En_{\pi\sim{}p}\brk[\big]{
    \fm(\pim) -\fm(\pi)
    - \gamma\cdot\En_{\Mbar\sim\mu}\brk*{\Dgenpi{\Mbar}{M}}
  }$, with $\compgenrand(\cM)\ldef\sup_{\mu\in\Delta(\cM)}\compgenrand(\cM,\mu)$.
 }
\arxiv{as follows:
\begin{align}
    \compgenrand(\cM,\mu)
    = \inf_{p\in\Delta(\Pi)}\sup_{M\in\cM}\En_{\pi\sim{}p}\brk*{
    \fm(\pim) -\fm(\pi)
    - \gamma\cdot\En_{\Mbar\sim\mu}\brk*{\Dgenpi{\Mbar}{M}}
    },
  \end{align}
  with
  $\compgenrand(\cM)\ldef\sup_{\mu\in\Delta(\cM)}\compgenrand(\cM,\mu)$.}
  This definition is identical to \pref{eq:comp_general}, but allows
  \arxiv{the reference model }$\Mbar$ to be randomized.\loose
\begin{proposition}
  \label{prop:equiv_symmetry}
Whenever
  \pref{ass:continuity} holds, we have that for all $\gamma>0$,
  \begin{align}
\compgenrandbasic[\Dflipshort]_{3\gamma/2}(\cM)
    - \frac{\Lcont^2}{2\gamma} \leq    \ocompD(\cM) \leq{} \compgenrandbasic[\Dflipshort]_{\gamma/2}(\cM)
    + \frac{\Lcont^2}{2\gamma}.
  \end{align}
\end{proposition}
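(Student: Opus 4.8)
The plan is to bring both $\ocompD(\cM)$ and $\compgenrandbasic[\Dflipshort]_{\gamma}(\cM)$ into a common minimax shape and then pass between them with a single pointwise estimate. In each definition I would first replace the inner $\sup_{M\in\cM}$ by $\sup_{q\in\Delta(\cM)}$ (a maximum over models equals the maximum over mixtures of models), and then apply Sion's minimax theorem to exchange $\inf_{p\in\Delta(\Pi)}$ with that $\sup_{q}$; this is legitimate since the objective is affine in $p$ for fixed $q$ and affine in $q$ for fixed $p$, over the convex sets $\Delta(\Pi)$ and $\Delta(\cM)$, and is the standard minimax manipulation used throughout the \CompShort literature. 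Carrying also the outer supremum over the reference distribution to the front, one obtains the ``canonical forms''
\begin{align*}
  \ocompD(\cM) &= \sup_{\mu,q\in\Delta(\cM)}\;\inf_{p\in\Delta(\Pi)}\;\En_{\pi\sim p}\En_{\Mbar\sim\mu,\,M\sim q}\brk*{\fmbar(\pimbar) - \fm(\pi) - \gamma\cdot\Dgenpi{\Mbar}{M}},\\
  \compgenrandbasic[\Dflipshort]_{\gamma}(\cM) &= \sup_{q,\nu\in\Delta(\cM)}\;\inf_{p\in\Delta(\Pi)}\;\En_{\pi\sim p}\En_{M\sim q,\,N\sim\nu}\brk*{\fm(\pim) - \fm(\pi) - \gamma\cdot\Dgenpi{M}{N}},
\end{align*}
where the second identity uses $\Dflippi{N}{M}=\Dgenpi{M}{N}$. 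Inspecting the two lines, the \emph{only} difference is that in the optimistic \CompShort the played value $\fm(\pi)$ is taken at the \emph{second} argument $M$ of the divergence while the benchmark $\fmbar(\pimbar)$ uses the \emph{first} argument $\Mbar$, whereas in the flipped randomized \CompShort both the played value and the benchmark use the first argument.

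The bridge is then a two-line computation carried out pointwise in $(\pi,\Mbar,M)$. Decomposing $\fmbar(\pimbar) - \fm(\pi) = \prn*{\fmbar(\pimbar) - \fmbar(\pi)} + \prn*{\fmbar(\pi) - \fm(\pi)}$, \cref{ass:continuity} gives $\abs{\fmbar(\pi)-\fm(\pi)}\le \Lcont\sqrt{\Dgenpi{\Mbar}{M}}$, and AM--GM gives $\Lcont\sqrt{D}\le \tfrac{\Lcont^2}{2\gamma}+\tfrac{\gamma}{2}D$ for any $D\ge 0$, so
\begin{align*}
  \fmbar(\pimbar) - \fm(\pi) - \gamma\cdot\Dgenpi{\Mbar}{M}
  &\le \fmbar(\pimbar) - \fmbar(\pi) - \tfrac{\gamma}{2}\Dgenpi{\Mbar}{M} + \tfrac{\Lcont^2}{2\gamma},\\
  \fmbar(\pimbar) - \fm(\pi) - \gamma\cdot\Dgenpi{\Mbar}{M}
  &\ge \fmbar(\pimbar) - \fmbar(\pi) - \tfrac{3\gamma}{2}\Dgenpi{\Mbar}{M} - \tfrac{\Lcont^2}{2\gamma}.
\end{align*}
Applying $\En_{\pi\sim p}\En_{\Mbar\sim\mu,\,M\sim q}$, then $\inf_p$, then $\sup_{\mu,q}$ — all of which preserve both inequalities — the left-hand sides become $\ocompD(\cM)$ by the first canonical form, while the right-hand sides, after merely renaming the two dummy reference distributions (the $\mu$-model now playing the role of the ``benchmark/first-slot'' model and the $q$-model that of the ``second-slot'' model), become $\compgenrandbasic[\Dflipshort]_{\gamma/2}(\cM)+\tfrac{\Lcont^2}{2\gamma}$ and $\compgenrandbasic[\Dflipshort]_{3\gamma/2}(\cM)-\tfrac{\Lcont^2}{2\gamma}$ by the second canonical form. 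This gives both inequalities in the proposition.

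The step I expect to be the main obstacle is the bookkeeping in the minimax manipulation and the relabeling: one must see that the reference distribution $\mu$ of $\ocompD$ simultaneously feeds the benchmark and the \emph{first} divergence slot, whereas the reference distribution of $\compgenrandbasic[\Dflipshort]$ touches only the \emph{second} slot and never the benchmark, so it is exactly the ``flip'' of the divergence arguments that makes the two canonical forms coincide after relabeling. A secondary point is justifying the $\inf$--$\sup$ exchange via Sion's theorem, which is routine for the \CompShort (bilinear objective; $\Delta(\Pi)$ convex and $\Delta(\cM)$ convex and weak-$*$ compact under the paper's conventions). Everything else is the elementary estimate above.
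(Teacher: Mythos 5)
Your proposal is correct and follows essentially the same route as the paper: the same continuity-plus-AM--GM pointwise decomposition $\fmbar(\pimbar)-\fm(\pi)=(\fmbar(\pimbar)-\fmbar(\pi))+(\fmbar(\pi)-\fm(\pi))$, combined with Sion's minimax theorem to reorder the suprema so that the model feeding the benchmark/first divergence slot and the model feeding the second slot can trade places. The only (cosmetic) difference is organizational—you symmetrize both quantities into a common $\sup\sup\inf$ canonical form before comparing, whereas the paper interleaves the minimax swaps with one-sided $\sup\inf\le\inf\sup$ steps separately for each direction—but the content is identical.
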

For settings in which there exists an estimation oracle for which the flipped
estimation error
\neurips{$\Est^{\Dflipshort}=\sum_{t=1}^{T}\En_{\pi\sim{}p\ind{t}}\En_{\Mhat\ind{t}\sim{}\mu\ind{t}}\brk*{\Dgenpi[\pi\ind{t}]{\Mstar}{\Mhat\ind{t}}}$}
\arxiv{\[\Est^{\Dflipshort}=\sum_{t=1}^{T}\En_{\pi\sim{}p\ind{t}}\En_{\Mhat\ind{t}\sim{}\mu\ind{t}}\brk*{\Dgenpi[\pi\ind{t}]{\Mstar}{\Mhat\ind{t}}}\]}
is controlled, this result shows that to match the guarantee in
\pref{thm:main}, optimism is not required, and it suffices to run a
variant of vanilla \etd that incorporates randomized estimators
(cf. \citet{foster2021statistical}, Section 4.3).

We now turn to the role of randomization. When $D$
is convex in the first argument, we have
$\compgenrand(\cM)\leq\sup_{\Mbar\in\conv(\cM)}\compgen(\cM,\Mbar)=\compgen(\cM)$, but it is not immediately
apparent whether the opposite direction of this inequality holds, and
one might hope that working with the randomized \CompShort in
\pref{prop:equiv_symmetry} would lead to improvements over the
non-randomized counterpart in \pref{thm:prev}. The
next result shows that this is not the case: Under mild assumptions on the divergence $D$,
randomization offers no improvement.\loose
\begin{proposition}
  \label{prop:randomized_equivalence}
Let $\Dgenshort$ be any bounded divergence\arxiv{ with the property that
for all models}\neurips{ such that for all} $M,M',\Mbar$ and $\pi\in\Pi$, \neurips{$  \Dgenpi{M}{M'}
  \leq{} C\prn*{
    \Dgenpi{\Mbar}{M}
    + \Dgenpi{\Mbar}{M'}
    }$.}
\arxiv{\begin{equation}
  \label{eq:triangle}
  \Dgenpi{M}{M'}
  \leq{} C\prn*{
    \Dgenpi{\Mbar}{M}
    + \Dgenpi{\Mbar}{M'}
    }.
\end{equation}}
Then for all $\gamma>0$,
\neurips{$
  \sup_{\Mbar}\compgen(\cM,\Mbar)
  \leq{} \compgenrandbasic_{\gamma/(2C)}(\cM)$.
       }
\arxiv{\begin{align}
  \sup_{\Mbar}\compgen(\cM,\Mbar)
  \leq{} \compgenrandbasic_{\gamma/(2C)}(\cM).
       \end{align}
       }

\end{proposition}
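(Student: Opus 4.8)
The plan is to route through the minimax (``Bayesian'') form of the non-randomized \CompShort, then use the triangle-like inequality \pref{eq:triangle} to replace the reference-model-dependent penalty $\Dgenpi{\Mbar}{M}$ by a lower bound that does not depend on $\Mbar$, and finally recognize the resulting quantity as a lower bound on the randomized \CompShort at scale $\gamma/(2C)$, obtained by taking the randomizing distribution $\mu$ equal to the adversary's prior $q$.

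First I would fix $\Mbar\in\conv(\cM)$ and rewrite $\sup_{M\in\cM}(\cdot)$ as $\sup_{q\in\Delta(\cM)}\En_{M\sim{}q}(\cdot)$, so that the objective defining $\compgen(\cM,\Mbar)$ is bilinear in $(p,q)\in\Delta(\Pi)\times\Delta(\cM)$. A minimax theorem (valid under the regularity conditions on $\cM$ maintained throughout the \CompShort framework; cf.\ \citet{foster2021statistical}) then yields
\[
  \compgen(\cM,\Mbar) = \sup_{q\in\Delta(\cM)}\inf_{\pi\in\Pi}\En_{M\sim{}q}\brk*{\fm(\pim)-\fm(\pi)-\gamma\cdot\Dgenpi{\Mbar}{M}}.
\]
Taking $\sup_{\Mbar}$ of both sides and exchanging the two suprema, the left-hand side of the statement equals $\sup_{q\in\Delta(\cM)}\sup_{\Mbar}\inf_{\pi\in\Pi}\En_{M\sim{}q}\brk*{\fm(\pim)-\fm(\pi)-\gamma\Dgenpi{\Mbar}{M}}$.

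Next, for a fixed $q\in\Delta(\cM)$ I would draw $M,M'$ independently from $q$, apply \pref{eq:triangle} with this pair and reference $\Mbar$, and take expectations, obtaining $\En_{M,M'\sim{}q}\brk*{\Dgenpi{M}{M'}}\leq{}2C\cdot\En_{M\sim{}q}\brk*{\Dgenpi{\Mbar}{M}}$, i.e.\ $\En_{M\sim{}q}\brk*{\Dgenpi{\Mbar}{M}}\geq\tfrac{1}{2C}\En_{M,M'\sim{}q}\brk*{\Dgenpi{M}{M'}}$ \emph{for every} $\Mbar$. Since this holds for all $\Mbar$, substituting this lower bound into the penalty dominates the inner objective pointwise in $(\Mbar,\pi)$; taking $\sup_{\Mbar}$ and then $\sup_q$ therefore gives
\[
  \sup_{\Mbar}\compgen(\cM,\Mbar) \leq \sup_{q\in\Delta(\cM)}\inf_{\pi\in\Pi}\brk*{\En_{M\sim{}q}\prn*{\fm(\pim)-\fm(\pi)}-\tfrac{\gamma}{2C}\En_{M,M'\sim{}q}\brk*{\Dgenpi{M}{M'}}}.
\]
To finish, I would plug $\mu=q$ into the randomized \CompShort at scale $\gamma/(2C)$, lower bound $\sup_{M\in\cM}(\cdot)\geq\En_{M\sim{}q}(\cdot)$, and collapse $\inf_{p\in\Delta(\Pi)}$ to $\inf_{\pi\in\Pi}$ by linearity; using that $M,M'$ are \iid under $q$ so $\En_{M\sim{}q}\En_{M'\sim{}q}\brk*{\Dgenpi{M'}{M}}=\En_{M,M'\sim{}q}\brk*{\Dgenpi{M}{M'}}$, this shows that $\compgenrandbasic_{\gamma/(2C)}(\cM)\geq\compgenrandbasic_{\gamma/(2C)}(\cM,q)$ is at least the bracketed expression above, for every $q$. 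Taking $\sup_q$ and chaining the inequalities completes the argument.

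The one delicate point is the minimax swap in the first step: it relies on the usual compactness/measurability hypotheses on $\cM$ assumed throughout the \CompShort framework, and the boundedness of $D$ in the statement guarantees that the relevant values are finite so that the theorem applies cleanly. Everything else is elementary, and the factor $2C$ in the scale of the randomized \CompShort is precisely what the two-term right-hand side of \pref{eq:triangle} produces.
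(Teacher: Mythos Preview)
Your proposal is correct and follows essentially the same approach as the paper: both apply Sion's minimax theorem to reach the $\sup_{q}\inf_{\pi}$ form, use the triangle-like inequality \pref{eq:triangle} in exactly the same way to replace $\En_{M\sim q}[\Dgenpi{\Mbar}{M}]$ by $\tfrac{1}{2C}\En_{M,M'\sim q}[\Dgenpi{M}{M'}]$, and then relate the result to the randomized \CompShort. The only cosmetic difference is in the last step---the paper decouples the two copies of $q$ and uses $\sup_{\mu}\inf_{p}\leq\inf_{p}\sup_{M}$, whereas you plug $\mu=q$ directly and use $\sup_{M}\geq\En_{M\sim q}$---but these are the same weak-duality observation.
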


\fakepar{Implication for Hellinger distance}
Squared Hellinger distance is symmetric, satisfies
\pref{ass:continuity} with $\Lcont=1$ whenever
$\cR\subseteq\brk{0,1}$, and satisfies \arxiv{Condition \pref{eq:triangle} with
$C=2$}\neurips{the condition in \cref{prop:randomized_equivalence}
with $C=2$}. Hence, by combining \pref{prop:equiv_symmetry} with
\pref{prop:randomized_equivalence}, we obtain the following corollary.
\begin{corollary}
  If $\cR\subseteq\brk*{0,1}$, \neurips{then $    \ocompH[2\gamma](\cM) - \frac{1}{\gamma}
    \leq{} \sup_{\Mbar}\compH(\cM,\Mbar)
    \leq{} \ocompH[\gamma/6](\cM) + \frac{3}{\gamma}\;\;\forall{}\gamma>0$.}
  \arxiv{then for all
  $\gamma>0$,\begin{align*}
    \ocompH[2\gamma](\cM) - \frac{1}{\gamma}
    \leq{} \sup_{\Mbar}\compH(\cM,\Mbar)
    \leq{} \ocompH[\gamma/6](\cM) + \frac{3}{\gamma}.
  \end{align*}}
\end{corollary}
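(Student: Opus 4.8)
The plan is to chain Propositions~\ref{prop:equiv_symmetry} and~\ref{prop:randomized_equivalence}, using three properties of squared Hellinger distance already recorded above: (i) it is symmetric, so the flipped divergence $\Dflipshort_{\mathsf{H}}$ coincides with $D_{\mathsf{H}}$, and hence the flipped-divergence randomized \CompShort appearing in \pref{prop:equiv_symmetry} is just $\compgenrandbasic[\mathsf{H}]_{\lambda}(\cM)$, the randomized \CompShort for squared Hellinger, at every scale $\lambda$; (ii) it satisfies \pref{ass:continuity} with $\Lcont=1$ whenever $\cR\subseteq[0,1]$, since $\abs{\fmbar(\pi)-\fm(\pi)}\le\Dtv{\Mbar(\pi)}{M(\pi)}\le\Dhel{\Mbar(\pi)}{M(\pi)}$; and (iii) it obeys the quasi-triangle inequality in the hypothesis of \pref{prop:randomized_equivalence} with $C=2$. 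I will also use that $D_{\mathsf{H}}^2$ is jointly convex, hence convex in its first argument, so that averaging the reference model via Jensen gives $\compgenrandbasic[\mathsf{H}]_{\lambda}(\cM)\le\sup_{\Mbar\in\conv(\cM)}\dec^{\mathsf{H}}_{\lambda}(\cM,\Mbar)$ for every $\lambda>0$ (this is the ``easy direction'' noted in the text preceding \pref{prop:randomized_equivalence}).

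\emph{Lower bound.} Apply the right-hand inequality of \pref{prop:equiv_symmetry} with $D=D_{\mathsf{H}}$ at scale $2\gamma$, using fact (i) and $\Lcont=1$: this gives $\ocompH[2\gamma](\cM)\le\compgenrandbasic[\mathsf{H}]_{\gamma}(\cM)+\tfrac{1}{4\gamma}$. Bounding the randomized \CompShort by its non-randomized counterpart via the convexity remark yields $\ocompH[2\gamma](\cM)\le\sup_{\Mbar}\compH(\cM,\Mbar)+\tfrac{1}{4\gamma}\le\sup_{\Mbar}\compH(\cM,\Mbar)+\tfrac{1}{\gamma}$, which is the claimed lower bound (in fact with a slightly smaller additive term).

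\emph{Upper bound.} First, \pref{prop:randomized_equivalence} with $C=2$ gives $\sup_{\Mbar}\compH(\cM,\Mbar)\le\compgenrandbasic[\mathsf{H}]_{\gamma/4}(\cM)$. Next apply the left-hand inequality of \pref{prop:equiv_symmetry} with $D=D_{\mathsf{H}}$ at scale $\gamma/6$; since $\tfrac{3}{2}\cdot\tfrac{\gamma}{6}=\tfrac{\gamma}{4}$ and $\tfrac{\Lcont^2}{2\cdot(\gamma/6)}=\tfrac{3}{\gamma}$, this reads $\compgenrandbasic[\mathsf{H}]_{\gamma/4}(\cM)\le\ocompH[\gamma/6](\cM)+\tfrac{3}{\gamma}$. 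Chaining the two displays yields $\sup_{\Mbar}\compH(\cM,\Mbar)\le\ocompH[\gamma/6](\cM)+\tfrac{3}{\gamma}$.

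\emph{Main obstacle.} There is no real obstacle in the argument itself: it is bookkeeping of the scale parameters and of the additive $\Lcont^2/(2\cdot)$ slack terms in \pref{prop:equiv_symmetry}, plus the two cited structural facts for $D_{\mathsf{H}}^2$ (the $\Lcont=1$ continuity bound and the $C=2$ quasi-triangle inequality) and the Jensen argument that randomization does not help in the easy direction. The only place where care is required is choosing the right instantiation scales in \pref{prop:equiv_symmetry} --- $2\gamma$ for the lower bound and $\gamma/6$ for the upper bound --- so that the resulting slack terms line up exactly with the stated $1/\gamma$ and $3/\gamma$.
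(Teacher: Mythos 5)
Your argument is correct and is exactly the chaining the paper intends (the paper states the corollary follows "by combining" Propositions~\ref{prop:equiv_symmetry} and~\ref{prop:randomized_equivalence} without spelling out the scales): the instantiations at $2\gamma$ and $\gamma/6$, the use of symmetry to identify $\Dflipshort_{\mathsf{H}}$ with $D_{\mathsf{H}}$, the $\Lcont=1$ and $C=2$ facts, and the Jensen step bounding the randomized \CompShort by $\sup_{\Mbar\in\conv(\cM)}\compH(\cM,\Mbar)$ all check out, and the resulting additive terms $1/(4\gamma)\le 1/\gamma$ and $3/\gamma$ match the statement.
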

This shows that for Hellinger distance---at least from a statistical perspective---there is
no benefit to using the Optimistic \CompShort or randomized
\CompShort compared to the original version. In particular, this implies that regret bounds based on
the randomized \CompShort with Hellinger distance (such as those found
in recent work of \citet{chen2022unified}) do not offer improvement over the guarantees for vanilla \etd in \citet{foster2021statistical}.
      One caveat,
      though, is that working with
      the Optimistic \CompShort, as well as randomized estimators, has
      potential to give computational improvement, as
      computing a distribution $p\in\Delta(\Pi)$ that minimizes
      $\ocompH(\cM,\mu)$ might be simpler than computing a
      corresponding distribution for $\compH(\cM,\Mbar)$ with
      $\Mbar\in\conv(\cM)$. We are not currently aware of any examples
      where such an improvement occurs, as even maintaining a
      distribution $\mu\in\Delta(\cM)$ requires intractably large
      memory for most classes of interest.

      \arxiv{\paragraph{Implications for reinforcement learning}}
      \neurips{\fakepar{Implication for model-free RL}}
      The bilinear divergence $\Dbipi{Q}{M}
  = \sum_{h=1}^{H}\prn*{\Enmpi{M}{\pi}\brk*{
  \lest(Q;z_h)}}^2$ that we adopt in \pref{sec:rl} is asymmetric, as
are closely related divergences such as squared Bellman error. Here,
there are two reasons why optimistic estimation offers
meaningful advantages.

\noindent\textbf{1.} By \pref{prop:equiv_symmetry} ($\Dbishort$ satisfies
  \pref{ass:continuity} with $\Lcont=\bigoh(H)$), a natural
  alternative to optimistic estimation is to estimate with respect to
  the flipped divergence $\Dbiflipshort$, then appeal to Algorithm 3
  of \citet{foster2021statistical}. The issue with this approach is that minimizing the flipped
  estimation error, which takes the form
  \[
    \Est^{\Dbiflipshort}=\sum_{t=1}^{T}\En_{\pi\sim{}p\ind{t}}\En_{\Mhat\ind{t}\sim{}\mu\ind{t}}\brk*{
      \Dbipi[\pi\ind{t}]{\Qmstarstar}{\Mhat\ind{t}}
    }
    =\sum_{t=1}^{T}\En_{\pi\sim{}p\ind{t}}\En_{\Mhat\ind{t}\sim{}\mu\ind{t}}\brk*{
      \sum_{h=1}^{H}\prn*{\Enmpi{\Mhat\ind{t}}{\pi\ind{t}}\brk*{
  \lest(\Qmstarstar;z_h)}}^2
    }
  \]
  appears to be challenging in model-free settings; we are not
  aware of any algorithms that accomplish this\arxiv{
  task}.\footnote{While \citet{foster2021statistical} do give regret
    bounds for model-free RL using this divergence, they only bound
    Bayesian regret, and do not provide an explicit algorithm for
    the frequentist setting.}\loose

  \noindent\textbf{2.}
  On the other hand, a second alternative is to perform estimation with
  respect to the un-flipped divergence $\Dbishort$ (which can be
  accomplished with \pref{prop:estimation_bilinear} by taking
  $\gamma\to\infty$), and appeal to vanilla \etd (either
  \pref{alg:etd}, or Algorithm 3
  of \citet{foster2021statistical} if one wishes to incorporate
  randomized estimators). However, the following result shows that
  unlike the \newcomp, the \vanilla \CompShort with the divergence
  $\Dbishort$ does not admit a favorable bound, even for tabular
  reinforcement learning.
  \begin{proposition}
    \label{prop:bilinear_separation}
    Let $\cM$ be the class of all horizon-$H$ tabular MDPs with
    $\abs{\cS}=2$ and
    $\abs{\cA}=2$. Consider the discrepancy function $\lest(Q;z_h) =
    (Q_h(s_h,a_h)-r_h-\max_{a'\in\cA}Q_{h+1}(s_{h+1},a'))$. Then we
    have $\ocompbi(\cM)\approxleq{} \frac{H}{\gamma}$, yet there
    exists $\Mbar\in\cM$ for which $\compbi(\cM,\Mbar) \approxgeq \frac{2^{H}}{\gamma}\wedge{}1$.
  \end{proposition}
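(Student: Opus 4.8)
The plan is to establish the two claims separately. For the bound $\ocompbi(\cM)\approxleq H/\gamma$, I would invoke \pref{prop:dec_bilinear}(1). With the stated discrepancy $\lest(Q;z_h)=Q_h(s_h,a_h)-r_h-\max_{a'}Q_{h+1}(s_{h+1},a')$, the quantity $\Enmpi{M}{\pi}\brk*{\lest(Q;z_h)}$ is precisely the average Bellman error of $Q$ under $(M,\pi)$, so \emph{every} tabular MDP is bilinear relative to the class $\cQ$ of tabular $Q$-functions with \emph{on-policy} estimation ($\piestq=\piq$): take $X_h(Q;M)$ to be the layer-$h$ state--action occupancy of $\piq$ under $M$ and $W_h(Q';M)=Q'_h-\cTm_hQ'_{h+1}$, so that $\langle X_h(Q;M),W_h(Q';M)\rangle=\Enmpi{M}{\piq}\brk*{\lest(Q';z_h)}$, the residual condition \cref{eq:bilinear_residual} holds with equality, and $\Enmpi{M}{\pi}\brk*{\lest(\Qmstar;z_h)}=0$ since the optimal $Q$-function has zero Bellman residual. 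Hence $\dimbi(\cQ;\cM)\le\abs{\cS}\abs{\cA}=4$, and \pref{prop:dec_bilinear}(1) gives $\ocompbi(\cM)\approxleq H\dimbi(\cQ;\cM)/\gamma=O(H/\gamma)$.

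The substance is the lower bound on $\compbi(\cM,\Mbar)$, for which I would use a \emph{rescaled combination lock}. Write $\cS=\crl{\mathsf{s}_\circ,\mathsf{s}_\bullet}$, $\cA=\crl{0,1}$, fix $d_1=\delta_{\mathsf{s}_\circ}$, and let $\Mbar\in\cM$ be any MDP with identically zero rewards, so that $\Qmstar[\Mbar]\equiv 0$ and, for every $M$ and $\pi$, $\Dbipi{\Mbar}{M}=\sum_{h=1}^{H}\prn[\big]{\Enmpi{M}{\pi}\brk*{-r_h}}^2$. For a hidden combination $a^\st\in\crl{0,1}^H$ and a scale $\epsilon\in(0,1]$, let $M_{a^\st,\epsilon}\in\cM$ be the MDP in which $\mathsf{s}_\bullet$ is absorbing with zero reward, at each layer $h$ the action $a^\st_h$ keeps the state at $\mathsf{s}_\circ$ while any other action moves it to $\mathsf{s}_\bullet$, and the only nonzero reward is $R_H(\mathsf{s}_\circ,a^\st_H)=\epsilon$. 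Two computations then drive the argument: (i) for $\pi\in\PiRNS$ the probability that $\pi$ ``unlocks'' the combination is $q_\pi(a^\st)\ldef\prod_{h=1}^{H}\pi_h(\mathsf{s}_\circ)\brk*{a^\st_h}$, and for $M=M_{a^\st,\epsilon}$ one has $\fm(\pi)=\epsilon\cdot q_\pi(a^\st)$, so the optimal value $\fm(\pim)$ equals $\epsilon$; (ii) since all layer-$h<H$ rewards vanish, $\Dbipi{\Mbar}{M_{a^\st,\epsilon}}=\epsilon^2 q_\pi(a^\st)^2\le\epsilon^2 q_\pi(a^\st)$.

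To conclude, fix any $p\in\Delta(\Pi)$. Since $\sum_{a^\st\in\crl{0,1}^H}q_\pi(a^\st)=1$ for every $\pi$ (each factor $\pi_h(\mathsf{s}_\circ)$ is a distribution over $\crl{0,1}$), averaging over $\pi\sim p$ and pigeonholing over the $2^H$ combinations produces some $a^\st$ with $\En_{\pi\sim p}\brk*{q_\pi(a^\st)}\le 2^{-H}$. Setting $\epsilon=\min\crl{1,\,2^{H-2}/\gamma}$ and $M=M_{a^\st,\epsilon}$, the payoff $\En_{\pi\sim p}\brk*{\fm(\pim)-\fm(\pi)-\gamma\Dbipi{\Mbar}{M}}$ is at least $\epsilon(1-2^{-H})-\gamma\epsilon^2 2^{-H}\ge\epsilon\prn[\big]{\tfrac12-\gamma\epsilon 2^{-H}}\ge\epsilon/4$, using $\gamma\epsilon 2^{-H}\le 2^{-2}$. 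As $p$ was arbitrary (and passing to $\sup_{M'\in\cM}$ only increases the value), $\compbi(\cM,\Mbar)\ge\tfrac14\min\crl{1,2^{H-2}/\gamma}\approxgeq\tfrac{2^H}{\gamma}\wedge 1$.

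I expect the main obstacle to be the combination-lock bookkeeping rather than any conceptual difficulty: one must verify carefully that $\Dbipi{\Mbar}{M_{a^\st,\epsilon}}$ really collapses to the squared unlock probability (this is exactly why it is crucial that the zero $Q$-function have zero Bellman residual at every layer $h<H$, which forces all early rewards to vanish), that $\sum_{a^\st}q_\pi(a^\st)=1$ holds even for randomized non-stationary $\pi$ so the pigeonhole step is valid, and that a single rescaling parameter $\epsilon$ lets one reference model $\Mbar$ witness the bound across the entire range of $\gamma$ (without the rescaling, the lower bound is only meaningful for $\gamma\lesssim 2^{H}$).
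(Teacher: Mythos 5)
Your proof is correct and follows essentially the same route as the paper: the upper bound via \pref{prop:dec_bilinear}, and the lower bound via a combination-lock family with a zero-reward reference model $\Mbar$ (so $Q^{\sss{\Mbar},\star}\equiv 0$), an averaging/pigeonhole argument showing some combination has unlock probability at most $2^{-H}$ under any $p$, and a reward scale $\Delta$ (your $\epsilon$) tuned to $\gamma$ to cover the whole range. The only cosmetic difference is that you evaluate $\Dbishort$ directly (getting $\epsilon^2 q_\pi^2$), whereas the paper passes through the squared Bellman error $\Dsbeshort$ and Jensen's inequality; both yield the same bound.
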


\arxiv{\subsection{Insufficiency of Posterior Sampling}}
\neurips{\fakepar{Insufficiency of posterior sampling}}
For contextual bandits, where
$\ocompSq(\cM)\approxleq\frac{\abs{\cA}}{\gamma}$
\citep{zhang2022feel}, and bilinear classes, where
$\ocompbi(\cM)\approxleq\frac{H\cdot\dimbi(\cM)}{\gamma}$
(\pref{prop:dec_bilinear}), a strategy that achieves the bound on
the \newcomp is \emph{posterior sampling} (this is also the approach
taken in
\citet{agarwal2022model,agarwal2022non,zhong2022posterior}). That is, given a
distribution $\mu\in\Delta(\cM)$, choosing
$p(\pi)=\mu(\crl{M\in\cM\mid{}\pim=\pi})$ in \pref{eq:optimistic_dec}
certifies the desired bound on $\ocompD(\cM,\mu)$ for these
examples. \etdopttext subsumes and generalizes posterior sampling, but
in light of the fact that this simple strategy succeeds for
large classes of problems, it is reasonable to ask if there is a
sense in which posterior sampling is universal, and whether it can
achieve the value of the \newcomp for any model class. This would be
desirable, since it would indicate that solving the minimax problem in
\pref{alg:main} is not necessary. The following sample shows that this
is not the case: there are model classes (specifically, MDPs with a
constant number of actions) for which the regret of posterior sampling
is exponentially large compared to the regret of \pref{alg:main}.
\begin{proposition}
  \label{prop:cheating}
  Consider the divergence $\Dhelspi{\cdot}{\cdot}$.
  For any $S\in\bbN$ and $H\geq{}\log_2(S)$, there exists a class of 
  horizon-$H$ MDPs $\cM$ with $\abs{\cS}=S$ and $\abs{\cA}=3$ that
  satisfies the following properties:     \neurips{\\
    $\bullet$ There exists an estimation oracle with
    $\EstOptHel\approxleq \log(S/\delta)$ \arxiv{with probability}\neurips{w.p.} at least
    $1-\delta$ for all $\gamma>0$.\\
  $\bullet$ \arxiv{The posterior sampling algorithm}\neurips{Posterior sampling}, which sets
    $p\ind{t}(\pi)=\mu\ind{t}(\crl{M \mid{} \pim=\pi})$, has
    $\En\brk*{\RegDM}\approxgeq{}S\wedge{}2^{\bigom(H)}$.\\
  $\bullet$ \arxiv{We have $\ocompH(\cM)\leq\frac{1}{\gamma}$ and consequently}
    \pref{alg:main} with divergence $D=\Dhelspi{\cdot}{\cdot}$
    has
    $\En\brk*{\RegDM}\leq\bigoht\prn[\big]{\sqrt{T\log(S)}}$.
}
  \arxiv{
  \begin{itemize}
  \item There exists an estimation oracle with
    $\EstOptHel\approxleq \log(S/\delta)$ \arxiv{with probability}\neurips{w.p.} at least
    $1-\delta$ for all $\gamma>0$.
  \item \arxiv{The frequentist posterior sampling algorithm}\neurips{Posterior sampling}, which sets
    $p\ind{t}(\pi)=\mu\ind{t}(\crl{M \mid{} \pim=\pi})$, has
    $\En\brk*{\RegDM}\approxgeq{}S\wedge{}2^{\bigom(H)}$.
      \item \arxiv{We have $\ocompH(\cM)\leq\frac{1}{\gamma}$ and consequently}
    \pref{alg:main} with divergence $D=\Dhelspi{\cdot}{\cdot}$
    has
    $\En\brk*{\RegDM}\leq\bigoht\prn[\big]{\sqrt{T\log(S)}}$.
  \end{itemize}}
\end{proposition}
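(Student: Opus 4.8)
The plan is to build $\cM$ from a complete binary tree of depth $L=\Theta(\log S)\leq H$ whose nodes form the state space (so $\abs{\cS}=\Theta(S)$; pad $\cS$ up to exactly $S$), together with an extra ``revealing'' action. Take $\cA=\crl{0,1,a_\star}$: the actions $0,1$ move to the left/right child (deterministically, and shared across all models), while $a_\star$ at the root enters a sink. Each model $M_i$ designates a distinct leaf $\ell_i$; under $M_i$, arriving at $\ell_i$ earns reward $1$, every other leaf and behavior earns reward $0$, and playing $a_\star$ at the root earns a deterministic observation identifying $i$ together with value $<1$ (e.g.\ a reward proportional to $i$, rescaled to lie in $[0,\tfrac12]$). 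Then $\fm(\pim)=1$ for every $M\in\cM$, the optimal policy $\pim[M_i]$ of $M_i$ is the path to $\ell_i$, and $\picirc:=$``play $a_\star$ at the root'' has $\fm(\picirc)<1$ for every $M$, so posterior sampling never plays it. Crucially, since the tree is model-independent, following the path to a leaf $\ell\neq\ell_i$ under $M_i$ produces a trajectory---states, actions, and all-zero rewards---that is \emph{identical} across all models whose leaf is not $\ell$, so the only information such a play carries is whether $\Mstar$'s leaf equals $\ell$ (read off the terminal reward). I take $\cM$ to consist of these $\Theta(S)$ models; realizability holds for $\Mstar$.

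For the estimation claim, I would note that since $\fm(\pim)\equiv1$ the optimism bonus $\gamma^{-1}(\fmstar(\pimstar)-\fmhatt(\pi\subs{\Mhat\ind{t}}))$ in $\EstOptHel$ vanishes identically, so $\EstOptHel=\Est^{\Dhelshort}$; the standard exponential-weights (Bayesian mixture) online estimation oracle for Hellinger distance over the finite class $\cM$ then attains $\Est^{\Dhelshort}\approxleq\log(\abs{\cM}/\delta)=\log(S/\delta)$ with probability $1-\delta$, for \emph{any} adaptively-chosen sequence $p\ind{1},\dots,p\ind{T}$---in particular the one generated by posterior sampling. For the posterior-sampling lower bound, posterior sampling plays $\pi\ind{t}=\pim[M_j]$ with $M_j\sim\mu\ind{t}$ and never plays $\picirc$, so by the observation above each round only reveals whether $\Mstar=M_j$; hence $\mu\ind{t}$ stays uniform over the not-yet-played models and collapses to $\delta_{\Mstar}$ exactly when $\Mstar$ is chosen. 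A short telescoping argument then shows the round $\tau$ at which $\Mstar$ is first played is uniform on $[\abs{\cM}]$, and every earlier round incurs per-round regret $\fmstar(\pimstar)-\fmstar(\pi\ind{t})=1$, so $\En\brk*{\RegDM}=\En\brk*{\min\crl{\tau-1,T}}\approxgeq\abs{\cM}\wedge T$, which is $S\wedge 2^{\bigom(H)}$ in the relevant regime.

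For $\ocompH(\cM)\leq1/\gamma$, fix $\mu\in\Delta(\cM)$ and $\gamma\geq1/2$, and take the decision distribution $p=(1-\tfrac1{2\gamma})\,\qopt+\tfrac1{2\gamma}\,\delta_{\picirc}$, where $\qopt(\pi)=\mu(\crl{M:\pim=\pi})$ is the posterior-sampling distribution for $\mu$. For any adversary $M\in\cM$: $\En_{\Mbar\sim\mu}\brk*{\fmbar(\pimbar)}=1$; $\En_{\pi\sim p}\brk*{\fm(\pi)}\geq(1-\tfrac1{2\gamma})\,\En_{\Mbar\sim\mu}\brk*{\fm(\pimbar)}=(1-\tfrac1{2\gamma})\mu(M)$, using $\fm(\picirc)\geq0$ and that distinct models have distinct optimal leaves; the Hellinger term from the $\qopt$-component is non-negative, while the $\picirc$-component contributes $\Dhels{\Mbar(\picirc)}{M(\picirc)}=2$ whenever $\Mbar\neq M$, since the revealing observations are disjoint point masses. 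Substituting into \pref{eq:optimistic_dec}, the min-max objective at this $p$ is at most $1-(1-\tfrac1{2\gamma})\mu(M)-2\gamma\cdot\tfrac1{2\gamma}(1-\mu(M))=\tfrac{\mu(M)}{2\gamma}\leq\tfrac1\gamma$; for $\gamma<1/2$, $\ocompH(\cM)\leq1\leq1/\gamma$ trivially. Feeding $\ocompH(\cM)\leq1/\gamma$ and $\EstOptHel\approxleq\log(S/\delta)$ into \pref{thm:main} then gives $\RegDM\approxleq T/\gamma+\gamma\log(S/\delta)$ with probability $1-\delta$; optimizing over $\gamma$ and taking $\delta=1/T$ (using $\RegDM\leq T$ always) yields $\En\brk*{\RegDM}\leq\bigoht(\sqrt{T\log S})$.

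The main obstacle is the DEC bound: one \emph{fixed} $p$ must push the min-max objective below $1/\gamma$ \emph{simultaneously} over all reference distributions $\mu$ and all adversary models $M$, which is subtle because the optimal $p$ must interpolate between near-concentrated $\mu$ (where $\qopt$ should carry almost all the mass, since $\picirc$ only wastes regret) and near-uniform $\mu$ (where $\picirc$ is essential); the mixing weight $\tfrac1{2\gamma}$ is what makes both regimes work. The quantitative heart of the matter is that $\picirc$ furnishes $\Omega(1)$ Hellinger separation between \emph{every} pair of models at once, whereas posterior sampling's information gain is only of order $\nrm*{\mu}_2^2$ (playing a random leaf separates the reference from the truth only on the rare event that it is the jackpot for one of them), hence far too slow at the scale $\gamma\asymp\sqrt{T/\log S}$ at which \pref{thm:main} is tuned---this is precisely why \pref{alg:main} beats posterior sampling on this class. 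A secondary, routine point is fitting the tree and the reveal into $H\geq\log_2 S$ layers, three actions, and $S$ states.
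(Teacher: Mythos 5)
Your proposal is correct and follows essentially the same route as the paper's proof: a deterministic binary-tree ``combination lock'' family in which each model hides its identity behind a revealing action at the root that no model's optimal policy ever takes, so that (i) determinism makes the proper randomized Hellinger estimation error reduce to a $0$--$1$ loss with zero comparator loss and hence $O(\log(\abs{\cM}/\delta))$, (ii) posterior sampling degenerates to a noiseless $N$-armed bandit with $\Omega(N)$ regret, and (iii) mixing the posterior-sampling distribution with weight $\Theta(1/\gamma)$ on the revealing policy certifies $\ocompH(\cM)\lesssim 1/\gamma$. The only (immaterial) differences are cosmetic: you encode the model index in a deterministic reward at a single sink reached via a third action, whereas the paper encodes it in which of $N$ distinct self-looping states is entered under a second action, and your mixing weight is $1/(2\gamma)$ rather than $1/\gamma$.
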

This shows that posterior sampling does not provide a universal
mechanism for exploration, and highlights the need for deliberate
strategies such as \etd.

\arxiv{
\subsection*{Acknowledgements}
NG is supported by a Fannie \& John Hertz Foundation Fellowship and an
NSF Graduate Fellowship. AR acknowledges support from the NSF through
award DMS-2031883, from the ARO through award W911NF-21-1-0328, and
from the DOE through award DE-SC0022199.
}

\bibliography{refs} 

\clearpage

\appendix

        \neurips{
        	\onecolumn
        	\renewcommand{\contentsname}{Contents of Appendix}
        	\addtocontents{toc}{\protect\setcounter{tocdepth}{2}}
        	{
        		\hypersetup{hidelinks}
        		\tableofcontents
        	}
        }

\section{Additional Related Work}
In this section we discuss additional related work not already covered.

\citet{chen2022unified} build on the \CompShort framework by giving
regret bounds for a \etd that incorporates randomized estimators, but
not optimism. For the case of finite classes, their guarantees scale
as roughly
\[
  \RegDM \approxleq{}
  \compgenrand[\Dhelshort](\cM)\cdot{}T + \gamma\cdot\log(\abs{\cM}/\delta),
\]
where $\compgenrand[\Dhelshort](\cM)$ is the \CompShort for randomized
estimators defined in \cref{sec:discussion}. As discussed in
\cref{sec:discussion}, this regret bound cannot improve upon the
guarantees for the original \etd method in
\citet{foster2021statistical} beyond constants, as we have
$\sup_{\Mbar}\comp[4\gamma]^{\mathsf{H}}(\cM,\Mbar)\leq{}\compgenrand[\Dhelshort](\cM)$. In
addition, since the algorithm does not incorporate optimism, it cannot
be directly applied to model-free reinforcement learning settings.

\citet{foster2023tight} give upper and lower bounds on optimal regret based on a variant
of the \CompShort called the \emph{constrained \CompText}. These
results tighten the original regret bounds in
\citet{foster2021statistical}, but the upper bounds still scale
with $\EstHFull=\log(\abs{\cM}/\delta)$, rendering them unsuitable for
model-free RL. Nonetheless it would be interesting to explore
whether the techniques in this work can be combined with optimistic estimation.

        \neurips{
          \section{Additional Results}
          \label{sec:additional}
  \subsection{Optimistic Estimation-to-Decisions with Batching}
  \label{sec:batched}
\arxiv{\begin{algorithm}[t]}
  \neurips{\begin{algorithm}[htp]}
    \setstretch{1.3}
     \begin{algorithmic}[1]
       \State \textbf{parameters}:
       \Statex[1] Online estimation oracle $\AlgEst$ with batch size $n$.
       \Statex[1] Exploration parameter $\gamma>0$.
       \Statex[1] Divergence $\Dgen{\cdot}{\cdot}$ with sufficient
       statistic space $\Suff$.
       \State Let $K\ldef{}T/n$.
  \For{$k=1, 2, \cdots, K$}
  \State Receive randomized estimator $\mu\ind{k}\in\Delta(\Suff) = \AlgEst\ind{t}\prn[\big]{(B\ind{i})_{i=1}^{k-1} }$.
\State \mbox{\arxiv{Compute}\neurips{Get}
$p\ind{k}\gets\argmin_{p\in\Delta(\Act)}\sup_{M\in\cM}\En_{\act\sim{}p}\En_{\suffhat\sim\mu\ind{k}}\brk*{\fsuffhat(\pisuffhat)-\fm(\pi)
    -\gamma\cdot\Dgenpi{\suffhat}{M}}$.
  \algcommentlight{Eq. \pref{eq:optimistic_dec}.}}
\State{}\multiline{Sample batch
$B\ind{k}=\crl*{(\pi\ind{k,l},r\ind{k,l},o\ind{k,l})}_{l=1}^{n}$ where
$\pi\ind{k,l}\sim{}p\ind{k}$ and
$(r\ind{k,l},o\ind{k,l})\sim\Mstar(\pi\ind{k.l})$, and update estimation
oracle with $B\ind{k}$.}
\EndFor
\end{algorithmic}
\caption{Optimistic \etdtext (\etdopt) with Batching}
\label{alg:main_batched}
\end{algorithm}

For our application to reinforcement learning, it will be useful to
generalize \etdopt to accomodate estimation algorithms
that draw \emph{batches} of multiple samples from each distribution
$p\ind{t}$. Given a \emph{batch size} $n$, we break the
$T$ rounds of the decision making protocol into $K\ldef{}T/n$
contiguous epochs (or, ``iterations''). Within each epoch, the
learner's distribution $p\ind{k}$ is unchanged (we index by $k$ rather
than $t$ to reflect this), and we create a \emph{batch}
$B\ind{k}=\crl*{(\pi\ind{k,l},r\ind{k,l},o\ind{k,l})}_{l=1}^{n}$ by
sampling $\pi\ind{k,l}\sim{}p\ind{k}$ independently and observing
$(r\ind{k,l},o\ind{k,l})\sim\Mstar(\pi\ind{k,l})$ for each
$l\in\brk{n}$. We consider estimation oracles of the form
\neurips{$\mu\ind{k}=\AlgEst\ind{k}\prn*{ \crl*{B\ind{k}}_{i=1}^{k-1}}$,}
\arxiv{\[
\mu\ind{k}=\AlgEst\ind{k}\prn*{ \crl*{B\ind{k}}_{i=1}^{k-1}},
\]}
and measure estimation error via
\begin{equation}
  \EstOptD \ldef{}
\sum_{k=1}^{K}\En_{\act\ind{k}\sim{}p\ind{k}}\En_{\Mhat\ind{k}\sim\mu\ind{k}}\brk*{\Dgenpi{\Mhat\ind{k}}{\Mstar}
  +
  \gamma^{-1}(\fmstar(\pimstar)-\fmhatt(\pi\subs{\Mhat\ind{k}}))}.\label{eq:estimation_batched}
\end{equation}
We assume that the estimation oracle ensures that with probability at least $1-\delta$,
$\EstOptD\leq{}\EstOptDFullKn$, where $\EstOptDFullKn$ is a known
upper bound.

\pref{alg:main_batched} is a variant of \etdopt that incorporates
batching. The algorithm updates the distribution $p\ind{k}$ in the
same fashion as its non-batched counterpart, but does so only at the
beginning of each epoch. The main guarantee for this method as follows.
\begin{theorem}
  \label{thm:main_batched}
  Let $T\in\bbN$ be given, and let $n$ be the batch size. For any
  $\delta>0$, \pref{alg:main_batched} ensures that with probability at
  least $1-\delta$,
  \neurips{$    \RegDM \leq \ocompD(\cM)\cdot{}T + \gamma{}n\cdot{}\EstOptD(T/n,n,\delta)$.}
\arxiv{\begin{align}
  \label{eq:main_multi_sample}
    \RegDM \leq \ocompD(\cM)\cdot{}T + \gamma{}n\cdot{}\EstOptD(T/n,n,\delta).
       \end{align}
       }
\end{theorem}
\neurips{See \cref{app:main_proofs} for the proof.}
When working with divergences for which unbiased estimates are
unavailable, this approach can lead to stronger guarantees than
\pref{thm:main}. We refer to the proof of
\pref{prop:estimation_bilinear} for a concrete example.

}

\neurips{
  \subsection{Model-Free RL: Tighter Guarantees under Bellman Completeness}
  \label{sec:complete}
\arxiv{In the remainder of the section}\neurips{In this section}, we show how to derive tighter
estimation guarantees (and consequently tighter regret bounds) when
$\cQ$ satisfies a \emph{Bellman completeness} assumption \citep{,zanette2020learning,jin2021bellman}. For a given
MDP $M$, let $\cTm_h\brk*{f}(s,a) \ldef \En\sups{M}\brk*{r_h +
  \max_{a'}f(s_{h+1},a')\mid{}s_h=s,a_h=a}$ denote the Bellman
operator for layer $h$.
\begin{assumption}[Completeness]
  We assume that $\cQ=\cQ_1\times\cdots\times\cQ_H$ is a product
  class, and that for all $h$ and $Q_{h+1}\in\cQ_{h+1}$,
  $\brk{\cTmstar_hQ_{h+1}}\in\cQ_{h}$.
\end{assumption}
As before, we take $\Psi=\cQ$, $\suffmap(M)\ldef{}\Qmstar$, and $\fq(\piq)\ldef\En_{s_1\sim{}d_1}\brk*{Q_1(s_1,\piq(s_1))}$. For the divergence
$\Dgenshort$, we now appeal to \emph{squared Bellman error} (e.g., \cite{jin2021bellman,xie2022role}):
\begin{align}
  \Dsbepi{Q}{M}
  = \sum_{h=1}^{H}\Enmpi{M}{\pi}\brk*{\prn*{Q_h(s_h,a_h)-\brk{\cT\sups{M}_hQ_{h+1}}(s_h,a_h)}^2}.
  \label{eq:discrepancy}
\end{align}
We abbreviate $\EstOptSB=\EstOptSBlong$ and $\ocompsbe(\cM,\mu) =
\odec^{\Dsbeshort}_{\gamma}(\cM,\mu)$.

\paragraph{Estimation}
\cref{alg:ts3_rl} %
performs optimistic online estimation with squared Bellman error. The
algorithm is an adaptation of a two-timescale exponential weights strategy
originally introduced by \citet{agarwal2022non} within an optimistic posterior
sampling algorithm referred to as \tst. We show that this technique
leads to a self-contained online estimation guarantee
outside the context of the \tst algorithm.

\begin{proposition}[Estimation for square Bellman error]
  \label{prop:ts3_rl}
  Assume that $\cQ$ satisfies completeness relative to $M^{\star}$. Moreover assume $\sum_{h=1}^{H}\sup_{s,a} r_h(s,a) \leq 1$ and $\sup_{Q,h,s,a} Q_h(s,a) \leq 1$.
  Then for any $\gamma\geq{}1
  $ and $\eta\leq 1/(2^{16}(\log (|\cQ|K/\delta) + 1))$
  , with batch size $n=H$
  ($K\ldef{}T/n$), $\lambda=1/8$,  $\beta=(12\gamma H)^{-1}$ and $\delta>0$, 
  \cref{alg:ts3_rl} ensures that with probability at least $1-\delta$,
    \begin{align}
      \EstOptSB  \approxleq{}
      \frac{H\log\abs{\cQ}}{\eta} + \frac{\eta\log (\abs{\cQ}K/\delta)K}{\gamma} + \frac{K}{\gamma^2 H}.
    \end{align}
    whenever $\cQ$ satisfies completeness relative to
    $\Mstar$.\footnote{\citet{agarwal2022non} give a tighter estimation
      error bound of roughly $      \EstOptSB  \approxleq{}
      \log^2(\abs{\cQ}HK) + \frac{K}{\gamma^2}$, but this result takes advantage of a Bellman rank assumption on the underlying MDP. The estimation error bound we state here does not require any structural assumptions on the MDP under consideration, but gives a worse rate.
      }
  \end{proposition}
Note that \cref{prop:ts3_rl} does not make use of the bilinear class
assumption, and only requires that $\cQ$ satisfies completeness. \neurips{As
such, we expect that this result will find use more broadly.}

\paragraph{Regret bound for bilinear classes}
To provide regret bounds, we assume that $\Mstar$ satisfies the
bilinear class property relative to $\cQ$ as in \cref{sec:rl}. In addition to
assuming that $\Mstar$ is bilinear, we make the following
restrictions: (1) $\piestq=\piq$, i.e. estimation is on policy, (2)
$\lest(Q;z_h)=Q_h(s_h,a_h)-r_h-\max_{a'}Q_{h+1}(s_{h+1},a')$, so that 
$\Enmpi{M}{\pi}\brk*{\lest(Q;z_h)}$ is the average
Bellman error for $Q$ under $M$.\footnote{These restrictions
  correspond to restricting attention to \emph{$Q$-type Bellman rank}, a
  special case of the bilinear class property \citep{jin2021bellman,foster2021statistical}.}
  With this discrepancy function,
Jensen's inequality implies that $\ocompsbe(\cM)\leq{}\ocompbi(\cM)$,
so combining \pref{thm:main_batched}, \pref{prop:ts3_rl}, and
\pref{prop:dec_bilinear}, we obtain the following result.
\begin{corollary}[Regret bound under completeness]
  \label{cor:regret_complete}
Let $\cQ$ be given. Assume that $\Mstar\in\cM$, where $\cM$ is
bilinear relative to $\cQ$, and that
completeness holds. Moreover assume $\sum_{h=1}^{H}\sup_{s,a} r_h(s,a) \leq 1$ and $\sup_{Q,h,s,a} Q_h(s,a) \leq 1$. Abbreviate $d\equiv\dimbi(\cQ;\cM)$. For an appropriate choice of $n$ and
$\gamma$, \pref{alg:main_batched}, using \cref{alg:ts3_rl} (with
appropriate parameter choice) as an oracle, enjoys
the following guarantees with probability at least $1-\delta$:
\begin{align}
\RegDM \approxleq{} H d^{1/3} (\log(\abs{\cQ}K/\delta))^{4/5}T^{2/3} .
  \end{align}
\end{corollary}
This improves upon the $T^{3/4}$-type rate in \cref{cor:bilinear}.

  \begin{algorithm}[tp]
      \caption{Two-Timescale Exponential Weights for Bellman Complete Value Function Classes}
\label{alg:ts3_rl}
    \begin{algorithmic}[1] 
    \State Initialize $\histSet\ind{0}=\emptyset$.
    \For{$k=1,\dots,K$}
    \State For any $Q,Q'\in \cQ$ and $h\in [H]$, define 
    \begin{gather*}
      \Delta_h\ind{k}(Q',Q) \ldef Q'(s_h\ind{k,h},a_h\ind{k,h}) - r_h\ind{k,h} - Q(s_{h+1}\ind{k,h}),\\
      q\ind{k}(Q'|Q) \ldef q\ind{k}(Q'|Q,\histSet\ind{k-1}) \propto \exp\prn*{-\lambda\cdot \frac{1}{H}\sum\limits_{s=1}^{k-1} \sum\limits_{h=1}^{H}\Delta_h\ind{s}(Q',Q)^2},\\
      L\ind{k}(Q) \ldef \frac{1}{H} \sum\limits_{h=1}^{H} \Delta_h\ind{k}(Q,Q)^2 + \frac{1}{\lambda} \log \En_{Q'\sim q\ind{k}(\cdot|Q)} \brk*{ \exp \prn*{-\lambda \cdot \frac{1}{H} \sum\limits_{h=1}^{H}\Delta_h\ind{k}(Q',Q)^2}  }    ,\\
      \mu\ind{k}(Q) \ldef  \mu\ind{k}(Q|\histSet\ind{k-1}) \propto \exp\prn*{ -\eta \sum\limits_{s=1}^{k-1} \prn*{ L\ind{s}(Q)  - \beta\cdot \frac{1}{H} \sum\limits_{h=1}^{H} \max_a Q(s_1\ind{s,h},a)} }.
    \end{gather*}
    \State Predict $\mu\ind{k}$.
    \For{$l=1,\dots,H$}
    \State \multiline{Play $\pi\ind{k,l}\sim p\ind{k}$ and obtain the trajectory $o\ind{k,l}=(s_1\ind{k,l},a_1\ind{k,l},r_1\ind{k,l}),\ldots, 
    (s_H\ind{k,l},a_H\ind{k,l},r_H\ind{k,l})$, where
    $p\ind{k}\in\Delta(\Pi)$ is a decision distribution produced by
    any batched algorithm (e.g., \cref{alg:main_batched}) that selects a
    decision adaptively based on $\mu\ind{k}$.}
    \EndFor
    \State Update $\histSet\ind{k} \gets \histSet\ind{k-1}\cup \bigcup_{l=1}^H\set{s_l\ind{k,l},a_l\ind{k,l},r_l\ind{k,l},s_{l+1}\ind{k,l}} \cup \bigcup_{l=1}^H \set{s_1\ind{k,l}}$.
    \EndFor
    \end{algorithmic}
    \end{algorithm}

\neurips{    
\subsection{Proofs from \cref{sec:complete}}

\cref{prop:ts3_rl} is an application of more general results given in
\cref{app:completeness}, which analyze a generalization of
\cref{alg:ts3_rl} for a more general online learning setting. To
\cref{prop:ts3_rl}, we simply apply these results to the reinforcement
learning framework.
\begin{proof}[\pfref{prop:ts3_rl}]
  Let the batch size $n=H$ be fixed, and let $K\ldef{}T/n$ be the number of epochs. Recall that for each step $k\in\brk{K}$, the estimation
  oracle is given a batch of examples $B\ind{k}=\crl*{(\pi\ind{k,l},r\ind{k,l},o\ind{k,l})}_{l=1}^{n}$ where
  $\pi\ind{k,l}\sim{}p\ind{k}$ and
  $(r\ind{k,l},o\ind{k,l})\sim\Mstar(\pi\ind{k,l})$. Each observation
  (trajectory) takes the form
  $o\ind{k,l}=(s_1\ind{k,l},a_1\ind{k,l},r_1\ind{k,l}),\ldots, 
  (s_H\ind{k,l},a_H\ind{k,l},r_H\ind{k,l})$. 
  We abbreviate $\Qstar=\Qmstarstar$.
  \paragraph{Estimation algorithm} For each step $k$, the randomized
  estimator $\mu\ind{k}$ selected as described in
  \pref{alg:ts3_rl}. This algorithm is an instantiation of
  \pref{alg:ts3} in the general online learning setting described in
  \cref{app:completeness}, with $\Gclass = \cQ$ and for all $h\in
  [H]$, $\Xclass_h = \cS \times \cA$, $\Yclass_h = \bbR\times \cS$ and
  $\Zclass = \ngedit{\cS^H}$. The unknown kernels are the transition
  distributions for the corresponding layers of the MDP $\Mstar$, and the loss functions are
  \begin{align*}
  \loss_{h,1}((s_h, a_h), Q) &\ldef Q_h(s_h, a_h) ,\\
  \loss_{h,2}((r_h,s_{h+1}),Q) &\ldef r_h +  \max_a Q_h(s_{h+1},a),\\
  \loss_3(\set{s_1\ind{l}}_{l\in [H]} ) &\ldef  - \frac{1}{H} \sum\limits_{l=1}^{H} \max_a Q_1(s_1\ind{l},a) 
  \end{align*}
  Finally, take $\x_h\ind{k} = (s_h\ind{k,h}, a_h\ind{k,h})$,
  $\y_h\ind{k} = (r_h\ind{k,h}, s_{h+1}\ind{k,h})$ and $\z\ind{k} =
  \set{s_1\ind{k,h}}_{h\in [H]}$. \jqedit{It is important to note that
    $s_h\ind{k,h}, a_h\ind{k,h},r_h\ind{k,h}, s_{h+1}\ind{k,h}$ are
    taken from different trajectories for $h\in [H]$, so
    $\y_h\ind{k}\mid{}\x_h\ind{k}$ are independent from one other for
    $h\in [H]$.} \ngedit{Moreover, note that the distributions
    $p\ind{k} \in \Delta(\Pi)$ play the role of nature: the
    distribution of the tuple ($x_h\ind{k}, y_h\ind{k}, w\ind{k})$ for $h \in [H]$ is determined by running a policy $\pi\ind{k} \sim p\ind{k}$ in the ground-truth MDP $M^\star$.} With this configuration, observe that in the notation of \cref{app:completeness}, we have\ngedit{, for any $Q$,} 
  \begin{align*}
    \En_{\x_{1:H}\ind{k}}\cE(Q,Q,\x_{1:H}\ind{k})^2 &= \frac{1}{H} \sum\limits_{h=1}^{H}\En_{\x_h\ind{k}}(\loss_{h,1}(\x_h\ind{k},Q) - \En[\loss_{h,2}(\y_h,Q)|\x_h\ind{k}])^2 \\
    &= \frac{1}{H} \sum\limits_{h=1}^{H}\En_{\pi\ind{k,h}\sim p\ind{k}}\Enmpi{\Mstar}{\pi\ind{k,h}}\brk*{\prn*{Q_h(s_h,a_h)-\brk{\cT\sups{\Mstar}_hQ_{h+1}}(s_h,a_h)}^2} \\
    &= \frac{1}{H} \En_{\pi\sim p\ind{k}}\Enmpi{\Mstar}{\pi}\brk*{ \sum\limits_{h=1}^{H}\prn*{Q_h(s_h,a_h)-\brk{\cT\sups{\Mstar}_hQ_{h+1}}(s_h,a_h)}^2}\\
    &= \frac{1}{H} \En_{\pi\sim p\ind{k}} \Dsbepi{Q}{\Mstar}.
  \end{align*}
and 
\begin{align*}
  \En_{\x_{1:H}\ind{k},\z\ind{k}}\iota\ind{k}(Q) &= \frac{1}{H} \sum\limits_{l=1}^{H} \En_{\pi\ind{k,h}\sim p\ind{k}}\Enmpi{\Mstar}{\pi\ind{k,h}} \brk*{ \max_a Q_1^*(s_1\ind{k,l},a) - \max_a Q_1(s_1\ind{k,l},a)}\\
  &= \fmstar(\pimstar)-\fq(\piq) .
\end{align*}

  \paragraph{Estimation error bound} We take $\alpha = 12\coef$, so that \pref{thm:main-OL} implies that with probability at least $1-\delta$, 
  \begin{align*}
    &\sum\limits_{k=1}^{K}  \En_{Q\sim \mu\ind{k}} \prn*{\frac{1}{H} \En_{\pi\sim p\ind{k}} \Dsbepi{Q}{\Mstar} + \alpha (\fmstar(\pimstar)-\fq(\piq) )  } \\
    &\approxleq \eta\alpha\log(\abs{\cQ}K/\delta)K  + \log\abs{\cQ}/\eta  + \alpha^2 K.
  \end{align*}
  Then by taking $\alpha = \frac{1}{\gamma H}$, this further implies that with probability at least $1-\delta$,
  \begin{align*}
    \EstOptSB &= \sum\limits_{k=1}^{K}  \En_{\pi\sim p\ind{k}}\En_{Q\sim \mu\ind{k}} \prn*{ \Dsbepi{Q}{\Mstar} + \frac{1}{\gamma} (\fmstar(\pimstar)-\fq(\piq) )  } \\
    &\approxleq H( \eta\alpha\log(\abs{\cQ}K/\delta)K  + \log\abs{\cQ}/\eta  + \alpha^2 K)\\
    &\approxleq{} \frac{H\log\abs{\cQ}}{\eta} + \frac{\eta\log (\abs{\cQ}K/\delta)K}{\gamma} + \frac{K}{\gamma^2 H}.
  \end{align*}
\end{proof}

\begin{proof}[\pfref{cor:regret_complete}] We choose $n=H$ and apply \cref{alg:ts3_rl} as the estimation oracle. 
We first consider the ``trivial'' parameter regime in which $ H d^{1/3}(\log(\abs{\cQ}K/\delta))^{-1/5}T^{-1/3} \geq 1/(2^{16}(\log (|\cQ|K/\delta) + 1))$. Here, $T \approxleq H d^{1/3} (\log(\abs{\cQ}K/\delta))^{4/5}T^{2/3}$, and thus
 \begin{align*}
  \RegDM
  \approxleq{} H d^{1/3} (\log(\abs{\cQ}K/\delta))^{4/5}T^{2/3}.
 \end{align*}
 When the case above, does not hold, we proceed as in the theorem statement, choosing $\eta = H d^{1/3}(\log(\abs{\cQ}K/\delta))^{-1/5}T^{-1/3} \leq  1/(2^{16}(\log (|\cQ|K/\delta) + 1))$. Combining \pref{thm:main_batched} and  \pref{prop:ts3_rl} then gives %
  \begin{align*}
    \RegDM
    \approxleq{} \ocompsbe(\cM)\cdot{}T 
    + \gamma{}\frac{H^2\log\abs{\cQ}}{\eta} + \eta\log (\abs{\cQ}K/\delta)T + \frac{K}{\gamma^2}
  \end{align*}
  with probability at least $1-\delta$.
    Next, using \pref{prop:dec_bilinear} to bound $\ocompsbe(\cM)$ in the above display, it follows that 
    \begin{align*}
          \RegDM
      \approxleq{} \frac{HdT}{\gamma} + \gamma{}\frac{H^2\log\abs{\cQ}}{\eta} + \eta\log (\abs{\cQ}K/\delta)T + \frac{K}{\gamma^2} .
    \end{align*}
    We choose $\gamma  = d^{2/3} (\log (\abs{\cQ}K/\delta))^{-2/5} T^{1/3}$ to obtain 
    \begin{align*}
      \RegDM
      \approxleq{} H d^{1/3} (\log(\abs{\cQ}K/\delta))^{4/5}T^{2/3} 
    \end{align*}
    with probability at least $1-\delta$.

  \end{proof}

 }

   }

\section{Technical Tools}

\subsection{Preliminaries}
\begin{lemma}
\label{lem:technical}
For all $x\in [0,1]$, we have
\begin{align*}
  e^{-x} \leq 1 - (1-1/e) x \leq 1- x/2,\mathand
  e^x \leq  1 + (e-1)x \leq 1+2x.
\end{align*}
\end{lemma}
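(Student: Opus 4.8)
The statement is an elementary calculus fact, and the plan is to prove each of the two displayed chains separately, treating the exponential bound via convexity of the exponential and the linear refinement via a numerical comparison.

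For the first chain, $e^{-x}\le 1-(1-1/e)x\le 1-x/2$ on $[0,1]$: the map $x\mapsto e^{-x}$ is convex, so on the interval $[0,1]$ its graph lies below the secant line through the endpoints $(0,1)$ and $(1,1/e)$; that secant line is precisely $x\mapsto 1-(1-1/e)x$, which yields the first inequality. For the refinement, since $x\ge 0$ it suffices to check $1-1/e\ge 1/2$, equivalently $e\ge 2$, which is immediate.

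For the second chain, $e^{x}\le 1+(e-1)x\le 1+2x$ on $[0,1]$: again $x\mapsto e^{x}$ is convex, hence lies below the secant through $(0,1)$ and $(1,e)$, namely $x\mapsto 1+(e-1)x$, giving the first inequality; and since $x\ge 0$, the refinement $1+(e-1)x\le 1+2x$ reduces to $e-1\le 2$, i.e.\ $e\le 3$.

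There is no genuine obstacle here. The only points requiring a little care are the orientation of the secant-versus-graph inequality (the secant lies \emph{above} a convex graph on the interval spanned by its two defining points, not outside it) and the fact that the linear refinements rely on $x\ge 0$. If one prefers not to quote convexity of the exponential, an equivalent route is to set $\varphi(x)=1-(1-1/e)x-e^{-x}$, observe $\varphi(0)=\varphi(1)=0$ and $\varphi''(x)=-e^{-x}<0$ so that $\varphi$ is concave and hence nonnegative on $[0,1]$, and argue symmetrically for $1+(e-1)x-e^{x}$; I would present the secant-line argument as the cleaner of the two.
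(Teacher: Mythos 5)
Your proof is correct. The paper states this lemma without proof (it is treated as a standard elementary fact), so there is no argument to compare against; your secant-line justification via convexity of the exponential, together with the numerical checks $e\ge 2$ and $e\le 3$ for the linear refinements, is a complete and clean verification.
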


\begin{lemma}
\label{lem:technical-2}
For all $x\geq -1/8$, we have $e^{-x} \leq 1 - x + x^2$.
\end{lemma}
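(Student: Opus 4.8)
The plan is to reduce the inequality to an elementary convexity/monotonicity argument for an auxiliary function. Define $g(x) := 1 - x + x^2 - e^{-x}$, so that the claim is equivalent to $g(x) \geq 0$ for all $x \geq -1/8$. First I would record the base point: $g(0) = 1 - 0 + 0 - 1 = 0$, so the inequality is tight at $x = 0$, and the natural guess is that $x = 0$ is the minimizer of $g$ over the relevant range.

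Next I would differentiate: $g'(x) = -1 + 2x + e^{-x}$, with $g'(0) = 0$, and $g''(x) = 2 - e^{-x}$. The key observation is that on the domain $x \geq -1/8$ we have $e^{-x} \leq e^{1/8} < 2$, since $1/8 < \ln 2$; hence $g''(x) > 0$ throughout $[-1/8, \infty)$, so $g'$ is strictly increasing on this interval. Combined with $g'(0) = 0$, this forces $g'(x) \leq 0$ on $[-1/8, 0]$ and $g'(x) \geq 0$ on $[0, \infty)$.

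Finally I would conclude: $g$ is non-increasing on $[-1/8, 0]$ and non-decreasing on $[0, \infty)$, so its minimum over $[-1/8, \infty)$ is attained at $x = 0$, where $g(0) = 0$. Therefore $g(x) \geq 0$ for all $x \geq -1/8$, which rearranges exactly to $e^{-x} \leq 1 - x + x^2$.

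There is no genuine obstacle here; the only point requiring a moment's care is verifying the sign of $g''$ at the left endpoint, i.e. that $e^{1/8} < 2$, which is precisely where the constant $-1/8$ is used (any lower bound strictly larger than $-\ln 2$ would suffice, and indeed for sufficiently negative $x$ the exponential dominates the quadratic and the inequality fails, so some such restriction is necessary). Everything else is routine.
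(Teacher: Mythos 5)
Your proof is correct and follows essentially the same route as the paper's: both analyze the auxiliary function $1-x+x^2-e^{-x}$ (up to sign) via its second derivative, using $e^{1/8}<2$ to establish convexity on $[-1/8,\infty)$ and conclude that the minimum is attained at $x=0$. The only difference is cosmetic (sign convention and that the paper checks the sign of $f'$ on $[0,\infty)$ directly rather than via monotonicity).
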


\begin{proof}[\pfref{lem:technical-2}]
Let $f(x) = e^{-x} - 1 + x - x^2$. We have $f''(x) = e^{-x} - 2 < 0$
for $x\geq -1/8$. Thus, $f'$ is monotonically decreasing on $x\geq
-1/8$, so for $x\in [-1/8, 0]$, $f'(x) \geq f'(0) = 0$. Hence, $f(x)$
is non-decreasing on $x\in [-1/8, 0]$. Furthermore, $f'(x) = -e^{-x} +
1 - 2x \leq 0$ for $x\geq 0$. Thus $f(x)$ obtains maximum value at
$x=0$, and $f(x) \leq f(0) = 0$.
\end{proof}

\subsection{Basic Online Learning Results}

In this section we state a technical lemma regarding the
performance of the exponential weights algorithm for online
learning. Let $\cG$ be an abstract set of hypotheses. We consider the following online
learning process.

For $t=1,\ldots,T$: %
\begin{itemize}
\item Learner predicts a (random) hypothesis $g\ind{t}\in\Gclass$.
\item Nature reveals $\loss\ind{t}\in\cL\ldef{}(\cG\to\bbR)$ and learner suffers loss
  $\loss\ind{t}(g\ind{t})$.
\end{itemize}
We define regret to the class $\cG$ via
\begin{equation}
  \label{eq:regret_ol}
    \RegOL = \sum_{t=1}^{T}\En_{g\ind{t}\sim\mu\ind{t}}\brk*{\loss\ind{t}(g\ind{t})} - \inf_{g\in\cG}\sum_{t=1}^{T}\loss\ind{t}(g),
  \end{equation}
  where $\mu\ind{t}\in\Delta(\cG)$ is the learner's randomization
  distribution for step $t$.

  \begin{lemma}
    \label{lem:exp_weights}
Consider the exponential weights update
    method with learning rate $\eta>0$, which sets
    \[
      \mu\ind{t}(g)\propto\exp\prn*{-\eta\sum_{i<t}\ls\ind{i}(g)}.
    \]
    For any sequence of non-negative loss functions
    $\ls\ind{1},\ldots,\ls\ind{T}$, this algorithm satisfies
    \begin{equation}
      \label{eq:exp_weights_1}
    \RegOL \leq{}
\frac{\eta}{2}\sum_{t=1}^{T}\En_{g\ind{t}\sim\mu\ind{t}}\brk*{(\loss\ind{t}(g\ind{t}))^2}
+ \frac{\log\abs{\cG}}{\eta}.
\end{equation}
In addition, for any sequence of loss functions
$\ls\ind{1},\ldots,\ls\ind{T}$ with $\ls\ind{t}(g)\in\brk*{-L,L}$ for
all $g\in\cG$, if $\eta\leq{}(2L)^{-1}$, then
    \begin{equation}
      \label{eq:exp_weights_2}
\RegOL \leq{}
2\eta\sum_{t=1}^{T}\En_{g\ind{t}\sim\mu\ind{t}}\brk*{(\loss\ind{t}(g\ind{t})-\En_{g'\sim\mu\ind{t}}\brk*{\loss\ind{t}(g')})^2}
+ \frac{\log\abs{\cG}}{\eta} \\
\leq{}
4\eta\sum_{t=1}^{T}\En_{g\ind{t}\sim\mu\ind{t}}\brk*{(\loss\ind{t}(g\ind{t}))^2}
+ \frac{\log\abs{\cG}}{\eta}
\end{equation}

  \end{lemma}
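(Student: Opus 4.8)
The plan is to run the standard exponential-weights potential argument. Define the partition function $W_t \ldef \sum_{g\in\cG}\exp\prn*{-\eta\sum_{i<t}\ls\ind{i}(g)}$, so that $W_1 = \abs{\cG}$ and, directly from the definition of $\mu\ind{t}$, $W_{t+1}/W_t = \En_{g\sim\mu\ind{t}}\brk*{e^{-\eta\ls\ind{t}(g)}}$. Telescoping the product gives $\log W_{T+1} - \log\abs{\cG} = \sum_{t=1}^{T}\log\En_{g\sim\mu\ind{t}}\brk*{e^{-\eta\ls\ind{t}(g)}}$, while keeping only the largest summand in $W_{T+1}$ gives $\log W_{T+1}\geq -\eta\inf_{g\in\cG}\sum_{t=1}^{T}\ls\ind{t}(g)$. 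Combining these and rearranging, it suffices in both parts to bound, for each $t$, the per-round quantity $\log\En_{g\sim\mu\ind{t}}\brk*{e^{-\eta\ls\ind{t}(g)}} + \eta\En_{g\sim\mu\ind{t}}\brk*{\ls\ind{t}(g)}$ by the appropriate quadratic term, then sum over $t$ and divide by $\eta$.

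For the first inequality, non-negativity of the losses gives $x\ldef\eta\ls\ind{t}(g)\geq 0$, so the elementary bound $e^{-x}\leq 1-x+\tfrac12 x^2$ (valid for $x\geq 0$, since the difference vanishes to first order at $0$ and has non-negative second derivative on $[0,\infty)$) yields $\En_{g\sim\mu\ind{t}}\brk*{e^{-\eta\ls\ind{t}(g)}}\leq 1 - \eta\En_{g\sim\mu\ind{t}}\brk*{\ls\ind{t}(g)} + \tfrac{\eta^2}{2}\En_{g\sim\mu\ind{t}}\brk*{(\ls\ind{t}(g))^2}$. Applying $\log(1+u)\leq u$, the linear term cancels against $\eta\En_{g\sim\mu\ind{t}}\brk*{\ls\ind{t}(g)}$, leaving $\tfrac{\eta^2}{2}\En_{g\sim\mu\ind{t}}\brk*{(\ls\ind{t}(g))^2}$; summing over $t$ and dividing by $\eta$ gives exactly \cref{eq:exp_weights_1}.

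For the second inequality the losses may be negative, so before invoking an exponential bound I would center around the mean. Writing $\bar\ls\ind{t}\ldef\En_{g'\sim\mu\ind{t}}\brk*{\ls\ind{t}(g')}$, factor $\En_{g\sim\mu\ind{t}}\brk*{e^{-\eta\ls\ind{t}(g)}} = e^{-\eta\bar\ls\ind{t}}\En_{g\sim\mu\ind{t}}\brk*{e^{-\eta(\ls\ind{t}(g)-\bar\ls\ind{t})}}$. Since $\abs{\ls\ind{t}(g)-\bar\ls\ind{t}}\leq 2L$ and $\eta\leq(2L)^{-1}$, the exponent lies in $[-1,1]$, where $e^{z}\leq 1+z+z^2$ holds; because $\ls\ind{t}(g)-\bar\ls\ind{t}$ has mean zero under $\mu\ind{t}$, the linear term drops after taking the expectation, and $\log(1+u)\leq u$ gives $\log\En_{g\sim\mu\ind{t}}\brk*{e^{-\eta(\ls\ind{t}(g)-\bar\ls\ind{t})}}\leq \eta^2\En_{g\sim\mu\ind{t}}\brk*{(\ls\ind{t}(g)-\bar\ls\ind{t})^2}$. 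Substituting this back, the $e^{-\eta\bar\ls\ind{t}}$ factor contributes $-\eta\bar\ls\ind{t}$, which cancels against $\eta\En_{g\sim\mu\ind{t}}\brk*{\ls\ind{t}(g)}$; summing and dividing by $\eta$ yields the first inequality of \cref{eq:exp_weights_2} (the stated $2\eta$ is a loose upper bound on the coefficient $\eta$ the argument actually produces), and the second inequality follows from $\En_{g\sim\mu\ind{t}}\brk*{(\ls\ind{t}(g)-\bar\ls\ind{t})^2} = \Var_{g\sim\mu\ind{t}}(\ls\ind{t}(g))\leq \En_{g\sim\mu\ind{t}}\brk*{(\ls\ind{t}(g))^2}$. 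The only point requiring care is selecting the right elementary exponential inequality for each regime and checking that the exponent stays within its range of validity — this is exactly where the hypothesis $\eta\leq(2L)^{-1}$ and the centering step are needed; the remaining steps are routine manipulation of the telescoping potential, so I do not anticipate a genuine obstacle.
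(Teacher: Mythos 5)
Your proposal is correct and follows essentially the same route as the paper: the standard telescoping potential argument reducing the regret to per-round log-moment-generating-function terms, the bound $e^{-x}\le 1-x+\tfrac12 x^2$ for $x\ge 0$ in the first part, and centering around the mean before applying a quadratic exponential bound in the second. The only (harmless) difference is that you use $e^{z}\le 1+z+z^{2}$ on $[-1,1]$ where the paper uses $e^{-x}\le 1-x+2x^{2}$, so you obtain the slightly tighter constant $\eta$ in place of the stated $2\eta$ in \cref{eq:exp_weights_2}.
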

  \begin{proof}[\pfref{lem:exp_weights}]
A standard telescoping argument combined with the fact that $-\inf_{g \in \mathcal{G}} \sum_{t=1}^T \ell\ind{t}(g) \leq \frac{1}{\eta} \log \left( \exp \left( \sum_{g \in \mathcal{G}} -\eta \sum_{t=1}^T \ell\ind{t}(g)\right)\right)$ (e.g., \citet{PLG}) gives that for any choice $\eta>0$
and any sequence of loss functions, exponential weights has
\begin{align}
  \RegOL
  &\leq{} \sum_{t=1}^{T}\En_{g\sim\mu\ind{t}}\brk*{\loss\ind{t}(g)}
  +
  \frac{1}{\eta}\sum_{t=1}^{T}\log\prn*{\sum_{g\in\cG}\mu\ind{t}(g)\exp\prn*{-\eta\loss\ind{t}(g)}}
    + \frac{\log\abs{\cG}}{\eta} \\
    &= \frac{1}{\eta}\sum_{t=1}^{T}\log\prn*{\sum_{g\in\cG}\mu\ind{t}(g)\exp\prn*{-\eta\prn*{\loss\ind{t}(g)-\En_{g'\sim\mu\ind{t}}\brk*{\loss\ind{t}(g')}}}}
      + \frac{\log\abs{\cG}}{\eta}.
      \label{eq:exp_weights_basic}
\end{align}
We first prove \pref{eq:exp_weights_1}. Using that $\log(x)\leq{}x-1$ for $x\geq{}0$ and
$\exp(-x)\leq{}1-x+\frac{x^2}{2}$ for $x\geq{}0$, we have
\[
  \log\prn*{\sum_{g\in\cG}\mu\ind{t}(g)\exp\prn*{-\eta\loss\ind{t}(g)}}
  \leq{} -\eta\En_{g\sim\mu\ind{t}}\brk*{\loss\ind{t}(g)}
  + \frac{\eta^2}{2}\En_{g\sim\mu\ind{t}}\brk*{(\loss\ind{t}(g))^2},
\]
so that
\[
\RegOL \leq{}
\frac{\eta}{2}\sum_{t=1}^{T}\En_{g\sim\mu\ind{t}}\brk*{(\loss\ind{t}(g))^2}
+ \frac{\log\abs{\cG}}{\eta}.
\]

To prove \pref{eq:exp_weights_2}, we use that $\log(x)\leq{}x-1$ for $x\geq{}0$ and
$\exp(-x)\leq{}1-x+2x^2$ whenever $\abs{x}\leq{}1$ to get
\[
  \log\prn*{\sum_{g\in\cG}\mu\ind{t}(g)\exp\prn*{-\eta\prn*{\loss\ind{t}(g)-\En_{g'\sim\mu\ind{t}}\brk*{\loss\ind{t}(g')}}}}
  \leq{} 2\eta^2\En_{g\sim\mu\ind{t}}\brk*{\prn*{\loss\ind{t}(g)-\En_{g'\sim\mu\ind{t}}\brk*{\loss\ind{t}(g')}}^2},
\]
so that
\[
\RegOL \leq{}
2\eta\sum_{t=1}^{T}\En_{g\sim\mu\ind{t}}\brk*{(\loss\ind{t}(g)-\En_{g'\sim\mu\ind{t}}\brk*{\loss\ind{t}(g')})^2}
+ \frac{\log\abs{\cG}}{\eta}.
\]

\end{proof}

\subsection{Online Learning with Completeness}
\label{app:completeness}

In this section, we give guarantees for an online learning algorithm
\cref{alg:ts3}, which generalizes the two-timescale exponential
weights algorithm (\cref{alg:ts3_rl}) of \citet{agarwal2022non}. We describe and analyze the
algorithm in a general online learning framework, which abstracts away
the core problem solved by \cref{alg:ts3_rl}: value function
estimation using a Bellman complete value function class.

Let $\cG$ be an abstract set of hypotheses. We consider and online
learning process parameterized by a positive integer $H$ and $\alpha \in [0,1]$. 
\begin{itemize}
    \item There are $2H+1$ outcome spaces,
      $\set{\Xclass_h}_{h\in\brk{H}}$, $\crl{\Yclass_h}_{h\in [H]}$,
      and $\Zclass$.
    \item There are $H$ unknown probability kernels $\set{\Kernel_h:\Xclass_h\to\Yclass_h}_{h\in [H]}$.
    \item There are $2H+1$ known loss functions
      $\set{\loss_{h,1}:\Xclass_h\times \cG \to [0,1]}_{h\in\brk{H}}$,
      $\crl{\loss_{h,2}:\Yclass_h\times \cG \to [0,1]}_{h\in H}$, and $l_3:\Zclass\times \cG \to [0,1]$.
    \end{itemize}
    Define $\histSet\ind{0} = \emptyset$. We consider the following
    process. For $t=1,\ldots,T$: 
\begin{itemize}
\item Learner predicts a (randomized) hypothesis $f\ind{t}\in\Gclass$.
\item Nature reveals $\set{\x_h\ind{t},\y_h\ind{t}}_{h\in [H]}$ and
  $\z\ind{t}$. The outcomes $\crl{\x_h\ind{t}}_{h\in\brk{H}}$ can be
  chosen adaptively, but they are mutually
independent given $\histSet\ind{t-1}$. The outcome $\z\ind{t}$ can be
chosen adaptively based on $\histSet\ind{t-1}$ and $\set{\x_h\ind{t}}_{h\in [H]}$. 
Each outcome $\y_h\ind{t}\sim \Kernel_h(\x_h\ind{t})$ is drawn independently for each $h \in [H]$. %
\item The history is updated via $\histSet\ind{t} \gets \histSet\ind{t-1}\bigcup \set{\x_h\ind{t},\y_h\ind{t}}_{h\in [H]}\bigcup\set{\z\ind{t}}$.
\item The learner suffers loss
\begin{align}
  \frac{1}{H}\sum\limits_{h=1}^{H}(\loss_{h,1}(\x_h\ind{t},f\ind{t}) - \En[\loss_{h,2}(\y_h,f\ind{t})\mid{}\x_h\ind{t}])^2 + \alpha l_3(\z\ind{t},f\ind{t}). \label{eq:ol_loss}
\end{align}
\end{itemize}

The learner's goal is to minimize a form of regret for the cumulative
loss given in \cref{eq:ol_loss}. This loss function reflects two objective
. The first objective involves the $H$ losses
$\set{\loss_{1,h}}_{h\in [H]}$ with corresponding outcomes
$\set{\x_{h}}_{h\in [H]}$, as well as the $H$ losses
$\set{\loss_{2,h}}_{h\in [H]}$ tied to outcomes $\set{\y_{h}}_{h\in
  [H]}$, which are generated stochastically based on $\set{\x_h}_{h\in [H]}$. The primary objective is to minimize the primary error
$$\frac{1}{H}\sum\limits_{t=1}^{T}\sum\limits_{h=1}^{H}(\loss_{h,1}(\x_h\ind{t},f\ind{t}) - \En[\loss_{h,2}(\y_h,f\ind{t})\mid{}\x_h\ind{t}])^2$$
The secondary objective is to minimize $ \sum\limits_{t=1}^{T}
l_3(\z\ind{t},f\ind{t})$, and the ultimate goal is to minimize a weighted sum of the two objectives.

This online learning setup, adapted from \citet{agarwal2022non},
generalizes the reinforcement learning setting in which
\cref{alg:ts3_rl} operates. The adaptively chosen outcome $\x_h\ind{t}$
corresponds to the state-action pair at the $h$-th step,
$(s_h\ind{t}, a_h\ind{t})$, with the policy at time $t$ chosen in an
adaptive, potentially adversarial fashion. The conditionally
stochastic outcome $\y_h\ind{t}$ corresponds to the reward and the
next state $(r_h\ind{t}, s_{h+1}\ind{t})$, which is sampled
independently from the MDP's reward distribution and transition distribution at step
$h$, and is conditionally independent given $(s_h\ind{t},
a_h\ind{t})$. The learner's objective in the RL framework is to predict a value
function $f\ind{t} = Q\ind{t}$ that minimizes the squared Bellman error,
realized by selecting the losses $\loss_{1,h}(\x_h\ind{t}, f\ind{t}) =
Q_h\ind{t}(s_h\ind{t},a_h\ind{t})$ and $\loss_{2,h}(\y_h\ind{t},
f\ind{t}) = r_h\ind{t} + \max_a Q_{h+1}\ind{t}(s_{h+1}\ind{t},a)$. The
secondary objective is to predict the value function optimistically,
with $\loss_3(\z\ind{t},f\ind{t}) = -\max_{a}Q(s_1\ind{t},a)$ and $\z\ind{t} = s_1\ind{t}$.

\begin{algorithm}[htp]
  \caption{Two-Timescale Exponential Weights (adapted from \citet{agarwal2022non})}
\label{alg:ts3}
\begin{algorithmic}[1]
  \State Initialize $S_0\gets\emptyset$.
\For{$t=1,2,\dots,T$}
\State For all $f,g\in \Gclass$, define 
\begin{gather*}
  \Delta_h\ind{t}(g,f) \ldef \loss_{h,1}(\x_h\ind{t},g) - \loss_{h,2}(\y_h\ind{t},f),\\
  q\ind{t}(g\mid{}f) \ldef q\ind{t}(g\mid{}f,\histSet\ind{t-1})  \propto \exp\prn*{-\lambda \cdot \frac{1}{H}\sum\limits_{s=1}^{t-1} \sum\limits_{h=1}^{H} \Delta_h\ind{s}(g,f)^2},\\
  L\ind{t}(f) \ldef \frac{1}{H}\sum\limits_{h=1}^{H}\Delta_h\ind{t}(f,\ngedit{f})^2 + \frac{1}{\lambda} \log \prn*{\En_{g\sim q\ind{t}(\cdot\mid{}f)} \brk*{ \exp \prn*{-\lambda\cdot \frac{1}{H} \sum\limits_{h=1}^{H}\Delta_h\ind{t}(g,f)^2}  }}   ,\\ %
  p\ind{t}(f) \ldef p\ind{t}(f\mid{}\histSet\ind{t-1}) \propto \exp\prn*{ -\eta \sum\limits_{s=1}^{t-1} (\coef l_3(\z\ind{s},\ngedit{f}) + L\ind{s}(f))  }. %
\end{gather*}
\State Sample and predict $f\ind{t}\sim p\ind{t}$.
\State Observe $\set{\x_h\ind{t},\y_h\ind{t}}_{h\in [H]}$, $\z\ind{t}$.  and update $S_t \gets S_{t-1}\bigcup \set{\x_h\ind{t},\y_h\ind{t}}_{h\in [H]}\bigcup\set{\z\ind{t}}$.
\EndFor
\end{algorithmic}
\end{algorithm}

To analyze \cref{alg:ts3}, we make a generalized realizability assumption and
a generalized completeness assumption; these assumptions abstract away the
notions of realizability and completeness in RL.

\begin{assumption}[Realizability for online learning]
\label{ass:realizability-OL}
There exists $f^*\in \Gclass$ such that for all $h\in [H]$ and $\x_h\in \Xclass_h$, we have
\begin{align*}
\loss_{h,1}(\x_h,f^*) = \En[\loss_{h,2}(\y_h,f^*)\mid{}\x_h]. 
\end{align*}
\end{assumption}

\begin{assumption}[Completeness for online learning]
\label{ass:completeness-OL}
For any $f\in\cG$, there exists $g\in\cG$ such that for all $h\in [H]$ and $\x_h\in \Xclass_h$, we have
\begin{align*}
\loss_{h,1}(\x_h,g) = \En[\loss_{h,2}(\y_h,f)\mid{}\x_h].
\end{align*}
For any $f\in\cG$, we denote the corresponding $g\in\cG$ satisfying
this property by $g = \cT f$.
\end{assumption}

For functions $g$ and $f$ and outcome $\x_h$, define
\begin{align*}
\cE_h(g,f,\x_h) \ldef \loss_{h,1}(\x_h,g) - \En[\loss_{h,2}(\y_h,f)\mid{}\x_h];
\end{align*}
this quantity generalizes the notion of Bellman error for
reinforcement learning. Recalling that $\Delta_h\ind{t}(f,g) \ldef \loss_{h,1}(\x_h\ind{t},f) -
\loss_{h,2}(\y_h\ind{t},g)$, it follows immediately
\begin{align*}
\En [\Delta_h\ind{t}(g,f) \mid{} \x_h\ind{t}]=  \En [ (\loss_{h,1}(\x_h\ind{t},g) - \loss_{h,2}(\y_h\ind{t},f))\mid{}\x_h\ind{t}] =  \cE_h(g,f,\x_h\ind{t}).
\end{align*}
In addition, let us define
\begin{align*}
    \cE(g,f,x_{1:H})^2 &\ldef
                         \frac{1}{H}\sum\limits_{h=1}^{H}\prn*{\cE_h(g,f,\x_h)}^2,
                         \intertext{and}
    \iota\ind{t}(f) &\ldef  l_3(\z\ind{t},f) - l_3(\z\ind{t},f^*).                         
\end{align*}

The following result is the main theorem concerning the performance of \cref{alg:ts3}.

\begin{theorem}
\label{thm:main-OL}
Let $\lambda = 1/8$, $\eta < 1/(2^{16}(\log (|\Gclass|T/\delta) + 1))$, and $0<\coef<1$. Under \pref{ass:realizability-OL} and \pref{ass:completeness-OL}, 
for any $\delta \in (0,1)$, with probability at least $1-\delta$,
\begin{align*}
  &\frac{1}{H} \sum\limits_{t=1}^{T}\sum\limits_{h=1}^{H}\En_{t-1} \brk*{\En_{f\sim p\ind{t}}\ngedit{\left[(\loss_{h,1}(\x_h\ind{t},f) - \En[\loss_{h,2}(\y_h,f)\mid{}\x_h\ind{t}])^2 + 12\coef l_3(\z\ind{t},f)\right] } \mid{} \x_h\ind{t}}\\
  &- \prn*{\frac{1}{H}\sum\limits_{t=1}^{T}\sum\limits_{h=1}^{H}(\loss_{h,1}(\x_h\ind{t},f^*) - \En[\loss_{h,2}(\y_h,f^*)\mid{}\x_h\ind{t}])^2 + 12\coef l_3(\z\ind{t},f^*)} \\
  &=\sum\limits_{t=1}^{T}  \En_{t-1}\brk*{\En_{f\sim p\ind{t}} \brk*{\cE(f,f,x_{1:H}\ind{t})^2 + 12\coef \iota\ind{t}(f)} \mid{} \x_{1:H}\ind{t}}\\
  &\leq 2^{16}( \eta\coef\log(|\Gclass|T/\delta)T  + \log(|\Gclass|)/\eta  + \coef^2 T).
\end{align*}
\end{theorem}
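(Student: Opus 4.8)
The statement to establish has two parts: an exact identity between the two displayed expressions on the left, and an inequality bounding their common value. The plan is to dispose of the identity by unwinding definitions, and then to bound the right-hand side by analyzing \cref{alg:ts3} as two nested exponential-weights procedures, with \pref{ass:realizability-OL} and \pref{ass:completeness-OL} supplying a variance cancellation that makes the key concentration step self-bounding. For the identity: conditioning on $\x_h\ind{t}$ (and on the sampled hypothesis $f\sim p\ind{t}$), the quantity $\En[\loss_{h,2}(\y_h,f)\mid\x_h\ind{t}]$ is deterministic, so each summand $(\loss_{h,1}(\x_h\ind{t},f)-\En[\loss_{h,2}(\y_h,f)\mid\x_h\ind{t}])^2$ equals $\cE_h(f,f,\x_h\ind{t})^2$; averaging $\tfrac1H\sum_h$ turns this into $\cE(f,f,x_{1:H}\ind{t})^2$, the two $l_3$-blocks combine into $12\coef\,\iota\ind{t}(f)$ by the definition of $\iota\ind{t}$, and the $f^*$-blocks that are subtracted off vanish because \pref{ass:realizability-OL} gives $\cE_h(f^*,f^*,\cdot)=0$ while $\iota\ind{t}(f^*)=0$ by definition. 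It therefore suffices to show that, with probability $1-\delta$, $R \ldef \sum_{t=1}^T\En_{t-1}\brk*{\En_{f\sim p\ind{t}}\brk*{\cE(f,f,x_{1:H}\ind{t})^2 + 12\coef\,\iota\ind{t}(f)}\mid x_{1:H}\ind{t}}$ is at most the claimed bound.

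\emph{Outer layer.} The distribution $p\ind{t}$ is exponential weights run on the surrogate losses $\hat{\loss}\ind{t}(f)\ldef \coef\,l_3(\z\ind{t},f)+L\ind{t}(f)$. Since $\loss_{h,1},\loss_{h,2},l_3\in[0,1]$ one checks $L\ind{t}(f)\in[-1,1]$ and hence $\hat{\loss}\ind{t}(f)\in[-2,2]$, and the hypothesis on $\eta$ gives $\eta\le 1/4$, so \pref{lem:exp_weights} (\cref{eq:exp_weights_2}) with comparator $f^*$, after rearranging and using $l_3(\z\ind{t},f)-l_3(\z\ind{t},f^*)=\iota\ind{t}(f)$, yields
\[
  \sum_{t=1}^T\En_{f\sim p\ind{t}}\brk*{L\ind{t}(f)+\coef\,\iota\ind{t}(f)}
  \;\le\; \sum_{t=1}^T L\ind{t}(f^*) \;+\; 4\eta\sum_{t=1}^T\En_{f\sim p\ind{t}}\brk*{\hat{\loss}\ind{t}(f)^2} \;+\; \frac{\log\abs{\Gclass}}{\eta}.
\]

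\emph{Inner layer and putting things together.} For each fixed $f$, $q\ind{t}(\cdot\mid f)$ is exponential weights over $g\in\Gclass$ on the nonnegative losses $\ell\ind{t}_f(g)\ldef\tfrac1H\sum_h\Delta_h\ind{t}(g,f)^2\in[0,1]$, and $L\ind{t}(f)=\ell\ind{t}_f(f)-m\ind{t}_f$ with $m\ind{t}_f$ the associated mix loss; the aggregating-algorithm telescoping (cf.\ the proof of \pref{lem:exp_weights}) sandwiches $\sum_t m\ind{t}_f$ between $\min_g\sum_t\ell\ind{t}_f(g)$ and $\min_g\sum_t\ell\ind{t}_f(g)+\log\abs{\Gclass}/\lambda$. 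Taken at $f=f^*$, the lower bound gives $\sum_tL\ind{t}(f^*)\le\sum_t\ell\ind{t}_{f^*}(f^*)-\min_g\sum_t\ell\ind{t}_{f^*}(g)$; by \pref{ass:realizability-OL} the increments of this difference form a self-bounding martingale (the reward-variance terms cancel in $\Delta_h\ind{t}(f^*,f^*)^2-\Delta_h\ind{t}(g,f^*)^2$, so the conditional variance is $O(1/H)$ times the nonnegative conditional mean), and a Freedman bound with a union over $g\in\Gclass$ gives $\sum_tL\ind{t}(f^*)\lesssim\log(\abs{\Gclass}/\delta)$. Taken at a general $f$ with comparator $g=\cT f$, which exists by \pref{ass:completeness-OL} and satisfies $\cE_h(\cT f,f,\cdot)=0$, the same cancellation shows $L\ind{t}(f)$ is — up to a self-bounding martingale deviation and the inner regret $\log\abs{\Gclass}/\lambda$ — a lower estimate of $\cE(f,f,x_{1:H}\ind{t})^2$. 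Feeding this, the concentration of $\sum_t\coef\,\iota\ind{t}(f)$, and the worst-case bound $\sum_t\coef^2 l_3(\z\ind{t},f)^2\le\coef^2T$ on part of the second-order term into the outer inequality gives $R\lesssim \log\abs{\Gclass}/\eta + \log(\abs{\Gclass}/\delta)/\lambda + \coef^2 T$ plus a contribution of order $\eta\log(\abs{\Gclass}T/\delta)$ times $R$ and the first-order quantities; fixing $\lambda=1/8$ and using $\eta<1/(2^{16}(\log(\abs{\Gclass}T/\delta)+1))$ absorbs that contribution into the left-hand side and into the $\eta\coef\log(\abs{\Gclass}T/\delta)T$ term, giving the stated bound with constant $2^{16}$. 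The extra factor $12$ on $\iota\ind{t}$ and the additive $\coef^2 T$ are precisely the slack incurred when passing from realized quantities to conditional means.

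The main obstacle is the inner-layer self-bounding step: one must argue, uniformly over $f\in\Gclass$ (and over all prefixes, which is what forces $\log(\abs{\Gclass}T/\delta)$ rather than $\log(\abs{\Gclass}/\delta)$), that $L\ind{t}(f)$ tracks the conditional Bellman error $\cE(f,f,x_{1:H}\ind{t})^2$ with a deviation that is only a small multiple of that error plus a logarithmic additive term. This multiplicative control is exactly what lets the deviation be absorbed once $\lambda$ and $\eta$ are fixed as in the hypotheses, and it relies on completeness providing an exact conditional-mean matcher $\cT f\in\Gclass$ (so that reward variances cancel) together with the boundedness of all losses in $[0,1]$.
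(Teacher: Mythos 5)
Your high-level architecture matches the paper's — two nested exponential-weights updates, completeness supplying the zero-loss inner comparator $\cT f$, realizability making the $f^*$ comparator term nonpositive, and a self-bounding variance property that lets small $\eta$ absorb second-order terms — and your handling of the identity and of $f^*$ is fine. But the proposal does not actually carry out the step that constitutes the bulk of the paper's proof, and the mechanism you sketch for it fails as stated. The crux is an upper bound on $\sum_{t=1}^{T}\En_{t-1}\brk*{\En_{f\sim p\ind{t}}Z\ind{t}(f)}$, where $Z\ind{t}(f)$ is the inner mix loss: since $\En_{t-1}\brk*{L\ind{t}(f)}=\cE(f,f,x_{1:H}\ind{t})^2-\En_{t-1}\brk*{Z\ind{t}(f)}$, this is exactly the gap between what the outer regret bound controls and what the theorem asserts. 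You propose to control it by the aggregating-algorithm telescoping ``taken at a general $f$ with comparator $\cT f$.'' That telescoping is valid only for a \emph{fixed} $f$: from $q\ind{t+1}(g\mid f)\propto q\ind{t}(g\mid f)e^{-\lambda\delta\ind{t}(g,f)}$ and $\delta\ind{t}(\cT f,f)=0$ one gets $Z\ind{t}(f)=\frac{1}{\lambda}\bigl(\log\frac{1}{q\ind{t}(\cT f\mid f)}-\log\frac{1}{q\ind{t+1}(\cT f\mid f)}\bigr)$, which sums to at most $\log\abs{\Gclass}/\lambda$ for that single $f$. But the quantity you need averages $Z\ind{t}(f)$ over $f\sim p\ind{t}$ with $p\ind{t}$ changing every round, so the telescoping leaves cross terms $\En_{p\ind{t+1}-p\ind{t}}\brk*{\log\frac{1}{q\ind{t+1}(\cT f\mid f)}}$ that do not vanish. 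Controlling them is precisely the content of \pref{lem:Z-term}, \pref{lem:term-I}, and \pref{lem:term-II}: one bounds the density-ratio increment $e^{-\eta[\coef\iota\ind{t}(f)+\delta\ind{t}(f,f)-Z\ind{t}(f)-Z\ind{t}]}-1$ in conditional expectation by $O\prn*{\eta(\coef+\En_{t-1}[Z\ind{t}(f)]+\cE(f,f,\cdot)^2)}$, multiplies by the high-probability bound $\log\frac{1}{q\ind{t}(\cT f\mid f)}\leq 2\log(\abs{\Gclass}T/\delta)$ from \pref{lem:good-event}, and absorbs the result using the smallness of $\eta$. That is where the $\eta\coef\log(\abs{\Gclass}T/\delta)T$ term actually originates — not from a union bound over prefixes in a Freedman inequality, as you suggest. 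Without this distribution-shift analysis the argument is circular, because the only per-round upper bound on $\En_{t-1}[Z\ind{t}(f)]$ in terms of $\cE(f,f,\cdot)^2$ is the very statement you are trying to prove (\pref{lem:g-Qstar-positivity} gives only the reverse inequality, up to constants).

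A second, smaller issue sits in your outer layer: invoking the generic second-order bound \cref{eq:exp_weights_2} leaves you with $4\eta\sum_t\En_{f\sim p\ind{t}}\brk*{(\coef l_3(\z\ind{t},f)+L\ind{t}(f))^2}$, and since $L\ind{t}(f)^2$ contains $Z\ind{t}(f)^2$, bounding it (via $\En_{t-1}[Z\ind{t}(f)^2]\leq 40\,\En_{t-1}[Z\ind{t}(f)]$, \pref{lem:z-square-controlled-by-z}) again routes through the same missing bound on $\sum_t\En_{p\ind{t}}[Z\ind{t}(f)]$. The paper instead bounds the outer log-partition function directly (\pref{lem:refined-exp-weight-guarantee}), extracting a negative $-\frac{\eta}{2}\cE(f,f,\cdot)^2$ contribution and a $+32\eta\sum_t Z\ind{t}(f)$ contribution, both of which feed into the single absorption step. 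So both layers of the proof ultimately hinge on the $Z$-bound (\pref{lem:final-z-term}) that your proposal identifies as the ``main obstacle'' but leaves unestablished.
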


\subsubsection{Proof of \cref{thm:main-OL}}
For our analysis, it will be useful to consider the following offset version of the loss:
\begin{align*}
    \delta_h\ind{t}(g,f) &\ldef \Delta_h\ind{t}(g,f)^2 - (\En[\loss_{h,2}(\y_h,f)\mid{}\x_h\ind{t}] - \loss_{h,2}(\y_h\ind{t},f) )^2.
\end{align*}
We use $x_{1:H}$ as a shorthand for $\set{\x_h}_{h\in [H]}$ and further define 
\begin{align*}
    \delta\ind{t}(g,f) &\ldef  \frac{1}{H} \sum\limits_{h=1}^{H}\delta_h\ind{t}(g,f),\\
    \delta\ind{t}(f) &\ldef \En_{g\sim{}q\ind{t}(\cdot\mid{}f)} \delta\ind{t}(g,f),\\
    Z\ind{t}(f) &\ldef - \frac{1}{\lambda}\log\En_{g\sim q\ind{t}(\cdot\mid{}f)} \exp\prn*{-\lambda  \delta\ind{t}(g,f)},\\
    Z\ind{t} &= - \frac 1\eta \log \E_{f \sim p\ind{t}} \exp \left( - \eta \cdot [\beta \iota\ind{t}(f) + \delta\ind{t}(f,f) - Z\ind{t}(f)]\right).
\end{align*}
Recall from \cref{alg:ts3} that
\begin{align*}
  q\ind{t}(g\mid{}f) &\ldef q\ind{t}(g\mid{}f,\histSet\ind{t-1})  \propto \exp\prn*{-\lambda \cdot \frac{1}{H}\sum\limits_{s=1}^{t-1} \sum\limits_{h=1}^{H} \Delta_h\ind{s}(g,f)^2},\\
  p\ind{t}(f) &\ldef p\ind{t}(f\mid{}\histSet\ind{t-1}) \propto \exp\prn*{ -\eta \sum\limits_{s=1}^{t-1} (\coef l_3(\z\ind{s},\ngedit{f}) + L\ind{s}(f))  }. 
\end{align*}
Thus, we can verify the following relationships:%
\begin{align}
q\ind{t}(g\mid{}f) &= \frac{\exp\prn*{-\lambda \sum\limits_{s=1}^{t-1}  \delta\ind{s}(g,f)}}{\sum\limits_{g'\in \Gclass} \exp\prn*{-\lambda \sum\limits_{s=1}^{t-1}  \delta\ind{s}(g',f)} }, \nonumber\\
p\ind{t}(f) &=  \frac{\exp\prn*{-\eta \sum\limits_{s=1}^{t-1} [\coef \iota\ind{s}(f) + \delta\ind{s}(f,f) -Z\ind{s}(f) ]  }}{\sum\limits_{f'\in \Gclass}\exp\prn*{-\eta \sum\limits_{s=1}^{t-1}[ \coef \iota\ind{s}(f') + \delta\ind{s}(f',f') -Z\ind{s}(f')  ] }},\label{eq:ptf-rephrasing}\\ %
q\ind{t+1}(g\mid{}f) &= q\ind{t}(g\mid{}f) \cdot e^{-\lambda [ \delta\ind{t}(g,f) - Z\ind{t}(f) ]},\nonumber\\
p\ind{t+1}(f) &= p\ind{t}(f) \cdot e^{-\eta[ \coef \iota\ind{t}(f) + \delta\ind{t}(f,f) -Z\ind{t}(f) - Z\ind{t} ]}.\nonumber
\end{align}
In what follows, we use $\En_{t-1}\brk{\cdot}$ to \dfedit{abbreviate
$\En[\cdot\mid{}\filt\ind{t-1}]$, where $\filt\ind{t-1}\ldef{}\sigma(S\ind{t-1})$.}

\begin{proof}[\pfref{thm:main-OL}]
  Under \pref{ass:realizability-OL}, $(\loss_{h,1}(\x_h\ind{t},f^*) - \En[\loss_{h,2}(\y_h,f^*)\mid{}\x_h\ind{t}])^2  = 0$. Thus, the first equality holds by definition as
\begin{align*}
  &\frac{1}{H} \sum\limits_{t=1}^{T}\sum\limits_{h=1}^{H}\En_{t-1} \brk*{\En_{f\sim p\ind{t}}\ngedit{\left[(\loss_{h,1}(\x_h\ind{t},f) - \En[\loss_{h,2}(\y_h,f)\mid{}\x_h\ind{t}])^2 + 12\coef l_3(\z\ind{t},f) \right]} \mid{} \x_h\ind{t}}\\
  &- \prn*{\frac{1}{H}\sum\limits_{t=1}^{T}\sum\limits_{h=1}^{H}(\loss_{h,1}(\x_h\ind{t},f^*) - \En[\loss_{h,2}(\y_h,f^*)\mid{}\x_h\ind{t}])^2 + 12\coef l_3(\z\ind{t},f^*)} \\
  &=\sum\limits_{t=1}^{T}  \En_{t-1}\brk*{\En_{f\sim p\ind{t}} \prn*{\cE(f,f,x_{1:H}\ind{t})^2 + 12\coef \iota\ind{t}(f)} \mid{} \x_{1:H}\ind{t}}.
\end{align*}

To prove the result, we appeal to two technical lemmas,
\pref{lem:refined-exp-weight-guarantee} and \pref{lem:final-z-term},
stated below. Plugging the bound from \pref{lem:final-z-term} into \pref{lem:refined-exp-weight-guarantee}, we have
\begin{align*}
    &\sum\limits_{t=1}^{T} \En_{t-1} \brk*{\En_{f\sim p\ind{t}} \cE(f,f,x_{1:H}\ind{t})^2\mid{} x_{1:H}\ind{t} } + 6\coef\sum\limits_{t=1}^{T}  \En_{t-1} \brk*{\En_{f\sim p\ind{t}} \iota(f)\mid{} x_{1:H}\ind{t} } \\
    &\leq 64 \prn*{ \frac{1}{128}\sum\limits_{t=1}^{T} \En_{t-1} \brk*{\En_{f\sim p\ind{t}}\brk*{ \cE(f,f,x_{1:H}\ind{t})^2}\mid{} x_{1:H}\ind{t} } +(48\eta^2\coef^2 + 8\eta\coef)\log (|\Gclass|T/\delta)T  + 8\eta\coef^2 T + 16 \log \abs*{\Gclass}  }\\
    &\hspace{1.5in} + \frac{6}{\eta}\log |\Gclass| + 18\coef^2T.
\end{align*}
Rearranging, we obtain
\begin{align*}
  \sum\limits_{t=1}^{T}  \En_{t-1} \brk*{\En_{f\sim p\ind{t}} \prn*{\cE(f,f,x_{1:H}\ind{t})^2 + 12\coef \iota\ind{t}(f)}\mid{} x_{1:H}\ind{t} } &\leq 2^{16}( \eta\coef\log(|\Gclass|T/\delta)T  + \log(|\Gclass|)/\eta  + \coef^2 T).
\end{align*}
\end{proof}

We now state the technical lemmas, \pref{lem:refined-exp-weight-guarantee} and \pref{lem:final-z-term}, used in the proof above

\begin{lemma}
\label{lem:refined-exp-weight-guarantee} 
Under \pref{ass:realizability-OL},
for any $0 \leq \eta\leq 1/24$ and $0<\coef <1$, we have
\begin{align*}
  \hspace{2in}&\hspace{-2in}\sum\limits_{t=1}^{T} \En_{t-1} \brk*{\En_{f\sim p\ind{t}}\cE(f,f,x_{1:H}\ind{t})^2\mid{} x_{1:H}\ind{t} }  + 6\coef\sum\limits_{t=1}^{T}  \En_{t-1} \brk*{\En_{f\sim p\ind{t}} \iota\ind{t}(f) \mid{} x_{1:H}\ind{t} } \\
    &\leq 64 \sum\limits_{t=1}^{T}\En_{t-1} \brk*{\En_{f\sim p\ind{t}} Z\ind{t}(f)\mid{} x_{1:H}\ind{t} } + 18\eta\coef^2 T  + \frac{6}{\eta}\log |\Gclass|.
\end{align*}
\end{lemma}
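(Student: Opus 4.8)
The plan is to recognize the outer update $p\ind{t}$ as plain exponential weights over $\Gclass$ applied to the realized losses $\ell\ind{t}(f)\ldef\coef\,l_3(\z\ind{t},f)+L\ind{t}(f)$ with $L\ind{t}(f)=\delta\ind{t}(f,f)-Z\ind{t}(f)$: expanding $\tfrac1H\sum_h\Delta_h\ind{t}(\cdot,f)^2$ and cancelling the (first-argument-independent) term $\tfrac1H\sum_h(\En[\loss_{h,2}(\y_h,f)\mid\x_h\ind{t}]-\loss_{h,2}(\y_h\ind{t},f))^2$ gives precisely the representation of $q\ind{t}$ and $p\ind{t}$ recorded in \pref{eq:ptf-rephrasing}. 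Writing $m_h\ldef\En[\loss_{h,2}(\y_h,f)\mid\x_h\ind{t}]$, one has the pointwise identity $\delta_h\ind{t}(g,f)=\cE_h(g,f,\x_h\ind{t})^2+2\,\cE_h(g,f,\x_h\ind{t})(m_h-\loss_{h,2}(\y_h\ind{t},f))$; since all losses lie in $[0,1]$, this yields $\delta\ind{t}(g,f)\in[-1,3]$ and, via Jensen, $Z\ind{t}(f)\in[-1,3]$, so $\ell\ind{t}$ is bounded by an absolute constant and $\eta\le1/24$ is small enough to invoke \pref{lem:exp_weights}. For the competitor $f^{*}$, \pref{ass:realizability-OL} gives $\Delta_h\ind{t}(f^{*},f^{*})=m_h^{*}-\loss_{h,2}(\y_h\ind{t},f^{*})$ with $m_h^{*}\ldef\En[\loss_{h,2}(\y_h,f^{*})\mid\x_h\ind{t}]$, hence $\delta\ind{t}(f^{*},f^{*})=0$ \emph{pointwise}, $L\ind{t}(f^{*})=-Z\ind{t}(f^{*})$, and $\iota\ind{t}(f^{*})=0$. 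Applying \pref{lem:exp_weights} and using $l_3(\z\ind{t},f)-l_3(\z\ind{t},f^{*})=\iota\ind{t}(f)$ therefore gives, almost surely,
\[
\sum_{t=1}^{T}\En_{f\sim p\ind{t}}\brk*{\coef\,\iota\ind{t}(f)+\delta\ind{t}(f,f)-Z\ind{t}(f)}\le-\sum_{t=1}^{T}Z\ind{t}(f^{*})+4\eta\sum_{t=1}^{T}\En_{f\sim p\ind{t}}\brk*{\ell\ind{t}(f)^2}+\frac{\log\abs*{\Gclass}}{\eta}.
\]

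Next I would pass to conditional expectations using the offset-loss identity: because $\y_h\ind{t}\sim\Kernel_h(\x_h\ind{t})$ is drawn independently across $h$, the cross term above is conditionally mean zero and the squared noise is exactly subtracted, so $\En_{t-1}[\delta_h\ind{t}(g,f)\mid\x_h\ind{t}]=\cE_h(g,f,\x_h\ind{t})^2$ and thus $\En_{t-1}[\delta\ind{t}(f,f)\mid\x_{1:H}\ind{t}]=\cE(f,f,\x_{1:H}\ind{t})^2$. Since $p\ind{t}$ is $\filt\ind{t-1}$-measurable and $\z\ind{t}$ (hence $\iota\ind{t}$) is a function of $(\filt\ind{t-1},\x_{1:H}\ind{t})$, applying $\En_{t-1}[\,\cdot\mid\x_{1:H}\ind{t}]$ term by term turns the left side into the quantity appearing in the lemma, with $Z\ind{t}(f)$, $Z\ind{t}(f^{*})$ and $\ell\ind{t}(f)^2$ left under the conditional expectation. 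To handle the competitor term I would prove $\En_{t-1}[Z\ind{t}(f^{*})\mid\x_{1:H}\ind{t}]\ge0$: writing $\delta\ind{t}(g,f)=\cE(g,f,\x_{1:H}\ind{t})^2+\xi\ind{t}(g,f)$ with $\xi\ind{t}(g,f)$ a sum of conditionally independent mean-zero terms of magnitude at most $2\abs{\cE_h(g,f,\x_h\ind{t})}/H$, Hoeffding's lemma gives $\En_{t-1}[e^{-\lambda\xi\ind{t}(g,f)}\mid\x_{1:H}\ind{t}]\le e^{(2\lambda^2/H)\,\cE(g,f,\x_{1:H}\ind{t})^2}$, whence (for $\lambda=1/8$, so $1-2\lambda/H\ge\tfrac34$) $\En_{t-1}[e^{-\lambda\delta\ind{t}(g,f)}\mid\x_{1:H}\ind{t}]\le1$; averaging over $g\sim q\ind{t}(\cdot\mid f)$, which is independent of $\y\ind{t}$, and applying Jensen to the convex function $u\mapsto-\lambda^{-1}\log u$ gives $\En_{t-1}[Z\ind{t}(f)\mid\x_{1:H}\ind{t}]\ge0$ for every $f$, in particular $f^{*}$.

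The crux is the second-moment term $4\eta\sum_t\En_{f\sim p\ind{t}}[\ell\ind{t}(f)^2]$. Bounding $\ell\ind{t}(f)^2\lesssim\coef^2+\delta\ind{t}(f,f)^2+Z\ind{t}(f)^2$, the same mean-zero decomposition gives $\En_{t-1}[\delta\ind{t}(f,f)^2\mid\x_{1:H}\ind{t}]\lesssim\cE(f,f,\x_{1:H}\ind{t})^2$, which—multiplied by $\mathcal O(\eta)$—is absorbed into the left-hand side of the lemma. For $Z\ind{t}(f)^2$, the estimate above also gives $\En_{t-1}[e^{-\lambda Z\ind{t}(f)}\mid\x_{1:H}\ind{t}]=\En_{t-1}[\En_{g\sim q\ind{t}(\cdot\mid f)}e^{-\lambda\delta\ind{t}(g,f)}\mid\x_{1:H}\ind{t}]\le1$; combined with $Z\ind{t}(f)\in[-1,3]$ and the elementary bound $e^{-u}\ge1-u+u^2/3$ on $[-\tfrac18,1]$ (in the spirit of \pref{lem:technical-2}), this yields $\En_{t-1}[Z\ind{t}(f)^2\mid\x_{1:H}\ind{t}]\le24\,\En_{t-1}[Z\ind{t}(f)\mid\x_{1:H}\ind{t}]$, so the $Z\ind{t}(f)^2$ contribution is absorbed into the $64\sum_t\En_{t-1}[\En_{f\sim p\ind{t}}Z\ind{t}(f)\mid\x_{1:H}\ind{t}]$ term. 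Only $\coef^2$ survives, producing the $\mathcal O(\eta\coef^2T)$ error; rearranging and tracking the absolute constants gives the claim.

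The step I expect to be the main obstacle is exactly this last one: squeezing an $\mathcal O(\eta\coef^2T)$ error out of the second-moment term, rather than the $\mathcal O(\eta T)$ a crude $\abs*{\ell\ind{t}(f)}\lesssim1$ bound would give. The improvement rests on the one-sided exponential control $\En_{t-1}[e^{-\lambda Z\ind{t}(f)}\mid\x_{1:H}\ind{t}]\le1$ surviving the Hoeffding correction, which is the reason for the specific choice $\lambda=1/8$. A secondary subtlety is the measurability bookkeeping needed to convert the almost-sure exponential-weights inequality into the stated inequality between sums of conditional expectations; this is routine because the offset losses are conditionally unbiased for $\cE_h^2$, but it must be carried out carefully.
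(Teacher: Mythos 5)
Your proposal is essentially correct in its mechanics but takes a genuinely different route from the paper. You invoke the Bernstein-type second-moment bound for exponential weights (\pref{lem:exp_weights}, second display) on the composite loss $\coef\,\iota\ind{t}(f)+\delta\ind{t}(f,f)-Z\ind{t}(f)$, extract the $\cE^2$ term from the conditional mean of $\delta\ind{t}(f,f)$, and then absorb the second-moment correction using the self-bounding facts $\En_{t-1}[\delta\ind{t}(f,f)^2]\lesssim \cE(f,f,x_{1:H}\ind{t})^2$ and $\En_{t-1}[Z\ind{t}(f)^2]\lesssim \En_{t-1}[Z\ind{t}(f)]$. The paper instead works directly with the log-partition form of the exponential-weights regret, splits $\log\En_{f\sim p\ind{t}}\exp(-\eta[\coef\iota\ind{t}(f)+\delta\ind{t}(f,f)-Z\ind{t}(f)])$ into three pieces via a three-way H\"older inequality, and controls each piece through the conditional MGF bound of \pref{lem:exp-control}. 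The crucial difference is where the $\cE^2$ term with a \emph{positive} coefficient comes from: the paper gets it directly from $\log\En\exp(-3\eta\delta\ind{t}(f,f))\le -\tfrac{\eta}{2}\En\,\cE^2$, with no competing positive $\cE^2$ contribution, whereas in your route it comes from the linear term with coefficient $1$ and is then eroded by the second-moment correction. Your rederivations of the supporting facts ($\delta\ind{t}(f^*,f^*)=0$ pointwise, $\En_{t-1}[Z\ind{t}(f^*)]\ge 0$, and the slick $\En_{t-1}[e^{-\lambda Z\ind{t}(f)}]\le 1$ plus $e^{-u}\ge 1-u+u^2/3$ argument giving $\En_{t-1}[Z^2]\le 24\,\En_{t-1}[Z]$, which improves on the paper's constant $40$ from \pref{lem:z-square-controlled-by-z}) are all sound and match \pref{lem:delta_Ta_a}, \pref{lem:g-Qstar-positivity}, and \pref{lem:4-th-moment-controlled-by-the-2-nd-moment}.

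The one place where ``tracking the absolute constants'' does not give the claim as stated is the absorption step. With $\En_{t-1}[\ell\ind{t}(f)^2]\le 3\coef^2+3\En_{t-1}[\delta\ind{t}(f,f)^2]+3\En_{t-1}[Z\ind{t}(f)^2]\le 3\coef^2+15\,\cE^2+72\,\En_{t-1}[Z\ind{t}(f)]$, the term $4\eta\En_{t-1}[\ell\ind{t}(f)^2]$ feeds back roughly $60\eta\cdot\cE^2$ against the coefficient-$1$ term $\cE^2$ on the left. At the claimed ceiling $\eta=1/24$ this is $2.5>1$, so nothing positive survives; your route closes only for $\eta\lesssim 1/120$, and it produces an $\iota$-coefficient of about $2\coef$ rather than $6\coef$ and different constants throughout. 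This is harmless for the paper's application (the lemma is only ever used with $\eta<2^{-16}/(\log(\abs{\Gclass}T/\delta)+1)$, and the $\iota$-coefficient just renormalizes $\coef$ downstream), but you should state the weaker range of $\eta$ rather than claim the lemma verbatim. Finally, the ``measurability bookkeeping'' you flag is real and not entirely routine: the exponential-weights inequality is pathwise in the realized losses, while the lemma compares sums of conditional expectations $\En_{t-1}[\,\cdot\mid x_{1:H}\ind{t}]$; replacing realized $\delta\ind{t}(f,f)$ and $Z\ind{t}(f)$ by their conditional means introduces a martingale remainder that must be controlled (or the statement read in expectation). The paper's own proof elides the same point, so this is a shared subtlety rather than a defect specific to your approach.
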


\begin{lemma}
\label{lem:final-z-term}
Let $\lambda = 1/8$ and $\eta < 1/(2^{16}(\log (|\Gclass|T/\delta) + 1))$. Under \pref{ass:completeness-OL}, for any $0<\delta<1$, with probability at least $1-\delta$, we have
\begin{align*}
     \sum\limits_{t=1}^{T}\En_{t-1} \brk*{\En_{f\sim p\ind{t}}  Z\ind{t}(f) \mid{} x_{1:H}\ind{t} } &\leq  \frac{1}{128}\sum\limits_{t=1}^{T} \En_{t-1} \brk*{\En_{f\sim p\ind{t}}  \cE(f,f,x_{1:H}\ind{t})^2\mid{} x_{1:H}\ind{t} }\\
     &\hspace{1in}   +(48\eta^2\coef^2 + 8\eta\coef)\log (|\Gclass|T/\delta)T  \\
     &\hspace{1in}+ 8\eta\coef^2 T + 16 \log \abs*{\Gclass} .
\end{align*}
\end{lemma}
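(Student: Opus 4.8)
The plan is to bound the inner mix-loss $Z\ind{t}(f)$ pathwise, then use a self-bounding martingale concentration over the finite class $\Gclass$ to pass from the random per-round offsets to their conditional means, and only afterwards average over $f\sim p\ind{t}$. First I would collect three elementary facts. (i) Writing $a=\loss_{h,1}(\x_h\ind{t},g)$, $\bar b=\En\brk*{\loss_{h,2}(\y_h,f)\mid{}\x_h\ind{t}}$ and $b=\loss_{h,2}(\y_h\ind{t},f)$, one has $\delta_h\ind{t}(g,f)=(a-b)^2-(\bar b-b)^2=(a-\bar b)(a+\bar b-2b)=\cE_h(g,f,\x_h\ind{t})^2+2\cE_h(g,f,\x_h\ind{t})(\bar b-b)$, so that $\En\brk*{\delta\ind{t}(g,f)\mid{}x_{1:H}\ind{t}}=\cE(g,f,x_{1:H}\ind{t})^2\ge0$, $\delta\ind{t}(g,f)\ge-1$ always, and, by mutual independence of $\crl{\y_h\ind{t}}$ given $x_{1:H}\ind{t}$, $\En\brk*{\delta\ind{t}(g,f)^2\mid{}x_{1:H}\ind{t}}\lesssim\cE(g,f,x_{1:H}\ind{t})^2$. (ii) By \pref{ass:completeness-OL}, $\loss_{h,1}(\x_h,\cT f)=\En\brk*{\loss_{h,2}(\y_h,f)\mid{}\x_h}$, whence $\delta\ind{t}(\cT f,f)\equiv0$ \emph{pathwise}. (iii) From the definition of $L\ind{t}$ in \pref{alg:ts3} together with $\Delta_h\ind{t}(f,f)^2-(\bar b-b)^2=\delta_h\ind{t}(f,f)$, one gets the identity $L\ind{t}(f)=\delta\ind{t}(f,f)-Z\ind{t}(f)$.

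Next I would bound $Z\ind{t}(f)$ via the inner exponential-weights structure. Jensen's inequality (concavity of $-\log$) gives $Z\ind{t}(f)\le\En_{g\sim q\ind{t}(\cdot\mid{}f)}\delta\ind{t}(g,f)$, and the second-order estimate $e^{-u}\le1-u+u^2$ for $u\ge-1/8$ (\pref{lem:technical-2}, applicable here since $\lambda=1/8$ and $\delta\ind{t}(g,f)\ge-1$) sharpens this to $\En_{g\sim q\ind{t}(\cdot\mid{}f)}\delta\ind{t}(g,f)\le Z\ind{t}(f)+\lambda\,\En_{g\sim q\ind{t}(\cdot\mid{}f)}\delta\ind{t}(g,f)^2$. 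Combining with the aggregating (telescoping) bound $\sum_{s\le t}Z\ind{s}(f)\le\lambda^{-1}\log|\Gclass|=8\log|\Gclass|$, valid for every fixed $f$ with competitor $\cT f$ precisely because $\delta\ind{t}(\cT f,f)\equiv0$, together with fact~(i), this reduces the claim to controlling $\sum_t\En_{f\sim p\ind{t}}\En_{g\sim q\ind{t}(\cdot\mid{}f)}\cE(g,f,x_{1:H}\ind{t})^2$ by $\tfrac1{128}\sum_t\En_{f\sim p\ind{t}}\cE(f,f,x_{1:H}\ind{t})^2$ plus the stated lower-order terms. For this I would apply a Freedman-type inequality with a union bound over the $|\Gclass|^2$ pairs $(g,f)$ and the $T$ rounds --- this is where the $\log(|\Gclass|T/\delta)$ factors enter --- to replace the stochastic offsets $\delta\ind{s}(g,f)$ steering the inner weights $q\ind{s}(\cdot\mid{}f)$ by their conditional means $\cE(g,f,\cdot)^2$, and then apply the self-bounding trick $\sqrt{V\log(1/\delta)}\le\tfrac12 V+\tfrac12\log(1/\delta)$ with $V$ proportional to the very quantity being estimated, so that the dominant term is absorbed with the small constant $\tfrac1{128}$; the accumulated aggregating price contributes the additive $16\log|\Gclass|=2\lambda^{-1}\log|\Gclass|$. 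The $\eta$- and $\coef$-dependent errors $(48\eta^2\coef^2+8\eta\coef)\log(|\Gclass|T/\delta)T+8\eta\coef^2T$ appear because all of this averaging is carried out against the outer weights $p\ind{t}(f)\propto\exp\prn*{-\eta\sum_{s<t}\brk*{\coef\iota\ind{s}(f)+L\ind{s}(f)}}$, whose MGF / stability estimates cost $O(\eta\cdot\mathrm{range}^2)$ per round, with the range controlled by $\coef$ through the bonus $\iota\ind{s}$ (recall $|\iota\ind{s}|\le1$).

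The main obstacle is the mismatch between the \emph{outer} resampling $f\sim p\ind{t}$, which draws a fresh $f$ each round, and the \emph{inner} aggregation, whose clean cumulative guarantee $\sum_t Z\ind{t}(f)\le8\log|\Gclass|$ holds only for a \emph{fixed} $f$; one cannot crudely sum over $f\in\Gclass$ without an $|\Gclass|$ blow-up. Threading the martingale concentration through both exponential-weights layers simultaneously while keeping the error \emph{self-bounded} --- so that the $\tfrac1{128}\sum_t\En_{f\sim p\ind{t}}\cE(f,f,x_{1:H}\ind{t})^2$ term is reabsorbed by the left-hand side in \pref{thm:main-OL} --- is the delicate step, and it is exactly here that the separation of learning rates matters: $\lambda=1/8$ is tuned so the inner second-order inequality closes ($1-O(\lambda)>0$), while the very small $\eta<1/(2^{16}(\log(|\Gclass|T/\delta)+1))$ makes $p\ind{t}$ drift slowly enough that the re-draws cost only $O(\eta)$-sized corrections per round.
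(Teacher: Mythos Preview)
Your preliminary facts (i)–(iii) are correct, as is the identification of the central obstacle: the per-$f$ aggregating bound $\sum_t Z\ind{t}(f)\le \lambda^{-1}\log|\Gclass|$ does not survive averaging against the time-varying $p\ind{t}$. However, your proposed resolution has a genuine gap.

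You claim the problem ``reduces to controlling $\sum_t\En_{f\sim p\ind{t}}\En_{g\sim q\ind{t}(\cdot\mid f)}\cE(g,f,x_{1:H}\ind{t})^2$ by $\tfrac{1}{128}\sum_t\En_{f\sim p\ind{t}}\cE(f,f,x_{1:H}\ind{t})^2$,'' and then sketch a Freedman-plus-union-bound argument. But neither step is substantiated. The reduction itself already presupposes the hard part: the aggregating bound holds only for a \emph{single fixed} $f$ summed over all $t$, and invoking it does not produce the quantity $\sum_t\En_{f\sim p\ind{t}}[\cdot]$ you want to control. A Freedman inequality over $(g,f)$ pairs lets you replace $\delta\ind{s}(g,f)$ by $\cE(g,f,\cdot)^2$ inside $q\ind{t}$, but that still leaves you with a per-$f$ inner aggregating bound and no mechanism to transport it across the changing outer distribution. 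Saying ``small $\eta$ makes $p\ind{t}$ drift slowly'' is the right intuition, but you never write down what object is being tracked or why its increments are controllable.

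The paper's argument supplies exactly this missing mechanism: it introduces the coupled potential $\Phi\ind{t}=\En_{f\sim p\ind{t}}\brk[\big]{\log\tfrac{1}{q\ind{t}(\cT f\mid f)}}$ and computes $\Phi\ind{t+1}-\Phi\ind{t}$ explicitly from the update rules $q\ind{t+1}=q\ind{t}e^{-\lambda[\delta\ind{t}-Z\ind{t}(f)]}$ and $p\ind{t+1}=p\ind{t}e^{-\eta[\beta\iota\ind{t}+\delta\ind{t}(f,f)-Z\ind{t}(f)-Z\ind{t}]}$. Because $\delta\ind{t}(\cT f,f)=0$, one of the three resulting pieces is exactly $-\lambda\,\En_{p\ind{t}}[Z\ind{t}(f)]$; telescoping then yields
\[
\sum_t\En_{t-1}\brk*{\En_{p\ind{t}} Z\ind{t}(f)}\;\le\;\tfrac{1}{\lambda}\,\text{Term I}\;-\;\text{Term II}\;+\;\tfrac{1}{\lambda}\log|\Gclass|,
\]
where Terms I and II are weighted by the multiplicative drift $e^{-\eta[\cdots]}-1=O(\eta)$. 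The $\log(|\Gclass|T/\delta)$ factor enters not through a Freedman bound over $|\Gclass|^2$ pairs but via a Markov inequality on $\sup_f 1/q\ind{t+1}(\cT f\mid f)$, giving a uniform bound on $\log\tfrac{1}{q\ind{t}(\cT f\mid f)}$. Terms I and II are then bounded by $O(\eta)\cdot[\En_{p\ind{t}}Z\ind{t}(f)+\En_{p\ind{t}}\cE(f,f,\cdot)^2]\cdot\log(|\Gclass|T/\delta)$ plus $\beta$-dependent remainders, and the $Z$-contribution on the right is reabsorbed into the left because the coefficient $O(\eta\lambda^{-1}\log(|\Gclass|T/\delta))$ is at most $1/2$ under the stated constraint on $\eta$. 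The coupled potential $\Phi\ind{t}$ is the idea your sketch is missing; without it (or an equivalent device) the argument does not close.
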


The remainder of the proof is organized as follows.
\begin{itemize}
\item \cref{sec:complete_basic} presents basic technical lemmas.
\item \cref{subsubsec:exp-guarantee} presents the proof of
  \cref{lem:refined-exp-weight-guarantee}.
\item \cref{subsubsec:self-bound-Z} presents the proof of
  \cref{lem:final-z-term}.
\item \cref{sec:termi,sec:termii} contain additional technical
    lemmas used in the proof of \cref{lem:final-z-term}.
\end{itemize}

\subsubsection{Basic properties}
\label{sec:complete_basic}

In this section, we present basic technical results that will be used
within the proof of \pref{lem:final-z-term} and
\pref{lem:refined-exp-weight-guarantee}.

In this section, we will present some basic properties of $\delta\ind{t}(g,f)$, $\cE(g,f,x_{1:H}\ind{t})^2$ and $Z\ind{t}(f)$ for any $t\in [T],g,f\in \Gclass, x_{1:H}\ind{t}\in \bigcup_{h\in[H]} \Xclass_h$. These properties are mainly due to a sub-Gaussian term in the definition of $\delta\ind{t}(g,f)$. Recall
\begin{align*}
\delta\ind{t}(g,f) &= \frac{1}{H} \sum\limits_{h=1}^{H} (\loss_{h,1}(\x_h\ind{t},g) - \loss_{h,2}(\y_h\ind{t},f))^2 - (\En[\loss_{h,2}(\y_h,f)\mid{}\x_h\ind{t}] - \loss_{h,2}(\y_h\ind{t},f))^2 \\
&= \frac{1}{H} \sum\limits_{h=1}^{H}\Large( (\loss_{h,1}(\x_h\ind{t},g)- \En[\loss_{h,2}(\y_h,f)\mid{}\x_h\ind{t}] )^2 \\
&\quad + 2 (\En[\loss_{h,2}(\y_h,f)\mid{}\x_h\ind{t}]- \loss_{h,2}(\y_h\ind{t},f) ) (\loss_{h,1}(\x_h\ind{t},g)  - \En[\loss_{h,2}(\y_h,f)\mid{}\x_h\ind{t}] )\Large)\\
&=   \cE(g,f,x_{1:H}\ind{t})^2 + \frac{1}{H} \sum\limits_{h=1}^{H}  2 (\En[\loss_{h,2}(\y_h,f)\mid{}\x_h\ind{t}]- \loss_{h,2}(\y_h\ind{t},f) ) \cE_h(g,f,\x_h\ind{t}).
\end{align*}
Note that the quantity $\En[\loss_{h,2}(\y_h,f)\mid{}\x_h\ind{t}]-
\loss_{h,2}(\y_h\ind{t},f) $, conditioned on $\x_h\ind{t}$, is a mean $0$ random variable bounded by $[-1,1]$. Thus, we have
\begin{align}
\label{eq:expectation-delta}
\En\brk*{ \delta\ind{t}(g,f) \mid{} x_{1:H}\ind{t} }= \En\brk*{ \cE(g,f,x_{1:H}\ind{t})^2\mid{} x_{1:H}\ind{t} }.
\end{align}
Furthermore, since $\En[\loss_{h,2}(\y_h,f)\mid{}\x_h\ind{t}]- \loss_{h,2}(\y_h\ind{t},f) $ is sub-Gaussian with variance proxy $1/2$ by Hoeffding's inequality, we have the following three lemmas.

\begin{lemma}
\label{lem:exp-control} For any $t\leq T$, $g,f\in \cG$ and $c\in \bbR$, we have
\begin{align*}
    \En\brk*{\exp \prn*{-c \delta\ind{t}(g,f)  }\mid{} x_{1:H}\ind{t} } &\leq \En\brk*{\exp \prn*{-c (1-2c) \cE(g,f,x_{1:H}\ind{t})^2  }\mid{} x_{1:H}\ind{t} }.
\end{align*}
In addition, for any $0<c< \frac{1}{2}$, we have $\En \brk*{\exp \prn*{-c \delta\ind{t}(g,f)  } \mid{} x_{1:H}\ind{t} } \leq 1$.
\end{lemma}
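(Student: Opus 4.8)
The plan is to work directly from the decomposition of $\delta\ind{t}(g,f)$ recorded immediately before the lemma, together with the conditional independence structure of the noise. Writing
\[
\delta\ind{t}(g,f) = \cE(g,f,x_{1:H}\ind{t})^2 + \frac{2}{H}\sum_{h=1}^{H}\xi_h\ind{t}\,\cE_h(g,f,\x_h\ind{t}),
\qquad
\xi_h\ind{t}\ldef \En\brk*{\loss_{h,2}(\y_h,f)\mid{}\x_h\ind{t}}-\loss_{h,2}(\y_h\ind{t},f),
\]
the key observations are that, conditionally on $x_{1:H}\ind{t}$, the variables $\{\xi_h\ind{t}\}_{h\in[H]}$ are mutually independent (each $\y_h\ind{t}\sim\Kernel_h(\x_h\ind{t})$ is drawn independently across $h$), each has conditional mean zero, and each is bounded in $[-1,1]$, while every $\cE_h(g,f,\x_h\ind{t})$ — and hence $\cE(g,f,x_{1:H}\ind{t})^2$ — is a deterministic function of $x_{1:H}\ind{t}$.

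First I would factor the conditional moment generating function over $h$: abbreviating $\cE^2\ldef \tfrac1H\sum_{h=1}^H\cE_h(g,f,\x_h\ind{t})^2=\cE(g,f,x_{1:H}\ind{t})^2$, the decomposition gives $-c\,\delta\ind{t}(g,f)=\sum_{h=1}^H\big(-\tfrac{c}{H}\cE_h(g,f,\x_h\ind{t})^2-\tfrac{2c}{H}\xi_h\ind{t}\cE_h(g,f,\x_h\ind{t})\big)$, so
\[
\En\brk*{\exp\prn*{-c\,\delta\ind{t}(g,f)}\mid{}x_{1:H}\ind{t}}
= \prod_{h=1}^{H} e^{-\frac{c}{H}\cE_h(g,f,\x_h\ind{t})^2}\,\En\brk*{\exp\prn*{-\tfrac{2c}{H}\,\xi_h\ind{t}\,\cE_h(g,f,\x_h\ind{t})}\mid{}\x_h\ind{t}}.
\]
Then I would apply Hoeffding's lemma (the sub-Gaussian MGF bound for a mean-zero random variable bounded in an interval of length $2$) to each factor with $\lambda=-\tfrac{2c}{H}\cE_h(g,f,\x_h\ind{t})$, which yields $\En[\exp(-\tfrac{2c}{H}\xi_h\ind{t}\cE_h(g,f,\x_h\ind{t}))\mid\x_h\ind{t}]\le \exp(\tfrac{2c^2}{H^2}\cE_h(g,f,\x_h\ind{t})^2)$. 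Multiplying the $H$ bounds and using $H\ge 1$ to replace $\tfrac{2c^2}{H^2}$ by $\tfrac{2c^2}{H}$ gives $\En[\exp(-c\,\delta\ind{t}(g,f))\mid x_{1:H}\ind{t}]\le \exp(-c(1-2c)\,\cE^2)$, which is precisely the claimed bound — the right-hand side being deterministic given $x_{1:H}\ind{t}$, the conditional expectation there is vacuous. For the second assertion, observe that when $0<c<\tfrac12$ the exponent $-c(1-2c)\cE^2$ is nonpositive, so the bound just obtained is at most $1$.

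The only delicate point — and thus the ``main obstacle,'' though it is minor — is keeping the conditioning straight: one must verify carefully that given $x_{1:H}\ind{t}$ the factors $\cE_h(g,f,\x_h\ind{t})$ are constants while the residual randomness lives entirely in the conditionally independent, mean-zero, bounded variables $\xi_h\ind{t}$, so that the product factorization and Hoeffding's lemma apply term by term. Everything else is a routine sub-Gaussian computation, and the constant $2c^2/H^2$ could be sharpened (e.g.\ to $c^2/H^2$ using the variance proxy $1/2$ noted in the remark preceding the lemma) without any change to the argument, since all that is used downstream is $H\ge1$.
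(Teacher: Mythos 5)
Your proof is correct and follows essentially the same route as the paper's: decompose $\delta\ind{t}(g,f)$ into $\cE(g,f,x_{1:H}\ind{t})^2$ plus a conditionally mean-zero cross term, bound the cross term's conditional MGF via sub-Gaussianity/Hoeffding using independence across $h$, arrive at the intermediate exponent $-c(1-2c/H)\cE^2$, and relax to $-c(1-2c)\cE^2$ using $H\geq 1$. Your per-$h$ factorization is just a more explicit rendering of the paper's one-line invocation of sub-Gaussianity, and your closing remark about the sharper constant from the variance proxy $1/2$ is accurate but immaterial downstream.
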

\begin{proof}[\pfref{lem:exp-control}]
By the $1/2$-sub-Gaussianity of $\En[\loss_{h,2}(\y_h,f)\mid{}\x_h\ind{t}]- \loss_{h,2}(\y_h\ind{t},f) $, we have
\begin{align*}
    & \En \brk*{\exp \prn*{-c \delta\ind{t}(g,f)  } \mid{} x_{1:H}\ind{t} } \\
    &=  \En \brk*{ \exp (-c\cE(g,f,x_{1:H}\ind{t})^2 )\mid{} x_{1:H}\ind{t} }\En \brk*{ \exp\prn*{- \frac{2c}{H} \sum\limits_{h=1}^{H}   (\En[\loss_{h,2}(\y_h,f)\mid{}\x_h\ind{t}]- \loss_{h,2}(\y_h\ind{t},f) ) \cE_h(g,f,\x_h\ind{t})}\mid{} x_{1:H}\ind{t} }\\
    &\leq \En \brk*{\exp \prn*{-c (1-2c/H) \cE(g,f,x_{1:H}\ind{t})^2   } \mid{} x_{1:H}\ind{t} } \\
    &\leq \En\brk*{\exp \prn*{-c (1-2c) \cE(g,f,x_{1:H}\ind{t})^2   } \mid{} x_{1:H}\ind{t} }.
\end{align*}
\end{proof}

\begin{lemma}
\label{lem:4-th-moment-controlled-by-the-2-nd-moment}
For any $t\leq T$, $g$, $f$ and $x_{1:H}\ind{t}$, we have $ \En\brk*{\delta\ind{t}(g,f)^2\mid{} x_{1:H}\ind{t} } \leq  5\En\brk*{ \cE(g,f,x_{1:H}\ind{t})^2\mid{} x_{1:H}\ind{t} }$.
\end{lemma}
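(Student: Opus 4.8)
Fix $t$, $g$, $f$, and the outcomes $x_{1:H}\ind{t}$, and write everything conditionally on $\x_{1:H}\ind{t}$ (equivalently, given $\filt\ind{t-1}$ together with $\x_{1:H}\ind{t}$). The plan is to use the decomposition already derived in the excerpt,
\[
  \delta\ind{t}(g,f) = \cE(g,f,x_{1:H}\ind{t})^2 + \frac{2}{H}\sum_{h=1}^{H}\varepsilon_h\, \cE_h(g,f,\x_h\ind{t}),
  \qquad \varepsilon_h \ldef \En[\loss_{h,2}(\y_h,f)\mid{}\x_h\ind{t}]-\loss_{h,2}(\y_h\ind{t},f),
\]
where $\cE(g,f,x_{1:H}\ind{t})^2$ and each $\cE_h(g,f,\x_h\ind{t})$ are deterministic given $\x_{1:H}\ind{t}$, while each $\varepsilon_h$ is a mean-zero random variable bounded in $[-1,1]$, and the $\varepsilon_h$ are mutually independent (since the outcomes $\y_h\ind{t}\sim\Kernel_h(\x_h\ind{t})$ are drawn independently across $h$). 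Abbreviating $A \ldef \cE(g,f,x_{1:H}\ind{t})^2$ and $B \ldef \frac{2}{H}\sum_h \varepsilon_h\cE_h(g,f,\x_h\ind{t})$, we have $\En[B\mid{}x_{1:H}\ind{t}] = 0$, so squaring and taking conditional expectation kills the cross term $2AB$, giving $\En[\delta\ind{t}(g,f)^2\mid{}x_{1:H}\ind{t}] = A^2 + \En[B^2\mid{}x_{1:H}\ind{t}]$.

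Next I would bound the two resulting terms. For the variance term, independence and mean-zeroness of the $\varepsilon_h$ give $\En[B^2\mid{}x_{1:H}\ind{t}] = \frac{4}{H^2}\sum_h \cE_h(g,f,\x_h\ind{t})^2\,\En[\varepsilon_h^2\mid{}\x_h\ind{t}] \le \frac{4}{H^2}\sum_h \cE_h(g,f,\x_h\ind{t})^2$, using $\En[\varepsilon_h^2\mid{}\x_h\ind{t}]\le 1$ (a mean-zero variable supported in $[-1,1]$ has variance at most $1$). Since $H\ge 1$, this is at most $\frac{4}{H}\sum_h\cE_h(g,f,\x_h\ind{t})^2 = 4A$. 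For the $A^2$ term, note that $\loss_{h,1}\in[0,1]$ and $\En[\loss_{h,2}(\y_h,f)\mid{}\x_h\ind{t}]\in[0,1]$ force $\cE_h(g,f,\x_h\ind{t})^2\le 1$, hence $A = \frac1H\sum_h\cE_h(g,f,\x_h\ind{t})^2 \le 1$ and thus $A^2\le A$.

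Combining the two bounds yields $\En[\delta\ind{t}(g,f)^2\mid{}x_{1:H}\ind{t}] \le A + 4A = 5A = 5\,\cE(g,f,x_{1:H}\ind{t})^2$ (using $\En[A\mid{}x_{1:H}\ind{t}] = A$ on the right-hand side, since $A$ is deterministic given $\x_{1:H}\ind{t}$), which is exactly the claim. There is no real obstacle here beyond bookkeeping: the only points that need care are (i) invoking the conditional independence of the $\y_h\ind{t}$ across $h$ to split $\En[B^2]$ into a sum with no cross terms, and (ii) tracking the boundedness constants ($\cE_h\in[-1,1]$, $\varepsilon_h\in[-1,1]$, $H\ge1$) to ensure the final constant is $5$ rather than something larger. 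If one wanted a slightly tighter constant one could use $\En[\varepsilon_h^2\mid{}\x_h\ind{t}]\le 1/4$, but $5$ suffices for the downstream use in \pref{lem:final-z-term}, so I would not bother.
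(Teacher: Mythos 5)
Your proposal is correct and follows essentially the same route as the paper: the same decomposition of $\delta\ind{t}(g,f)$ into $\cE(g,f,x_{1:H}\ind{t})^2$ plus a conditionally mean-zero sum, cancellation of the cross term, the bound $\cE(g,f,x_{1:H}\ind{t})^4\le \cE(g,f,x_{1:H}\ind{t})^2$ via $|\cE_h|\le 1$, and the bound on the variance term via independence across $h$ and $|\varepsilon_h|\le 1$, yielding the constant $5$. Your write-up is, if anything, slightly more explicit than the paper's about where conditional independence of the $\y_h\ind{t}$ is invoked.
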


\begin{proof}[\pfref{lem:4-th-moment-controlled-by-the-2-nd-moment}]
The result follows by writing %
\begin{align*}
    \En\brk*{ \delta\ind{t}(g,f)^2\mid{} x_{1:H}\ind{t} } &= \En\brk*{\prn*{\cE(g,f,x_{1:H}\ind{t})^2 + \frac{1}{H} \sum\limits_{h=1}^{H}  2 (\En[\loss_{h,2}(\y_h,f)\mid{}\x_h\ind{t}]- \loss_{h,2}(\y_h\ind{t},f) ) \cE_h(g,f,\x_h\ind{t})}^2 \mid{} x_{1:H}\ind{t} }\\
    &=      \En\brk*{\cE(g,f,x_{1:H}\ind{t})^4  +  \frac{4}{H^2} \prn*{\sum\limits_{h=1}^{H} (\En[\loss_{h,2}(\y_h,f)\mid{}\x_h\ind{t}]- \loss_{h,2}(\y_h\ind{t},f) ) \cE_h(g,f,\x_h\ind{t})}^2\mid{} x_{1:H}\ind{t} }  \\
    &\leq   \En\brk*{\cE(g,f,x_{1:H}\ind{t})^2  +  \frac{4}{H^2} \sum\limits_{h=1}^{H} \prn*{(\En[\loss_{h,2}(\y_h,f)\mid{}\x_h\ind{t}]- \loss_{h,2}(\y_h\ind{t},f) ) \cE_h(g,f,\x_h\ind{t})}^2\mid{} x_{1:H}\ind{t} } \\
    &\leq 5  \En\brk*{  \cE(g,f,x_{1:H}\ind{t})^2\mid{} x_{1:H}\ind{t} },
\end{align*}
where the first equality is by defintion, the second equality is by expanding the terms and notice that the cross terms have zero mean, the first inequality is obtained using the fact that $|\cE_h(g,f,x_h)| \leq 1$ and that the cross terms of the expansion of the second term have zero mean and the final inequality is by  $| \En[\ell_{h,2}(y_h, f) \mid x_h\ind{t}] - \ell_{h,2}(y_h\ind{t}, f)| \leq 1$.
\end{proof}

\begin{lemma}
\label{lem:g-Qstar-positivity}
For all $t\leq T$, as long as $0\leq \lambda \leq 1/8$, we have that
for all $f$,
\begin{align*}
    4\En_{t-1}[ Z\ind{t}(f)\mid{}x_{1:H}\ind{t}] \ge \En_{g\sim q\ind{t}(\cdot\mid{}f,\histSet\ind{t-1})} \brk*{\cE(g,f,x_{1:H}\ind{t})^2}  \geq 0
\end{align*}
almost surely. In particular, $ \En_{t-1}[ (Z\ind{t}(f^*))\mid{}x_{1:H}\ind{t}] \geq 0$.
\end{lemma}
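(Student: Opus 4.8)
The plan is to establish the single estimate
\[
4\,\En_{t-1}\brk*{Z\ind{t}(f)\mid{}x_{1:H}\ind{t}}\ \ge\ \En_{g\sim q\ind{t}(\cdot\mid{}f)}\brk*{\cE(g,f,x_{1:H}\ind{t})^2};
\]
the rightmost inequality in the lemma is then immediate since $\cE(\cdot)^2\ge0$, and the ``in particular'' claim is the special case $f=f^{*}$. Throughout I condition on $\filt\ind{t-1}$ and on $x_{1:H}\ind{t}$, and I rely on three elementary observations: (i) $q\ind{t}(\cdot\mid{}f)$ is $\filt\ind{t-1}$-measurable (it is built from $\histSet\ind{t-1}$), so it is a frozen distribution under this conditioning; (ii) $\cE(g,f,x_{1:H}\ind{t})^2=\tfrac1H\sum_{h=1}^{H}\cE_h(g,f,\x_h\ind{t})^2$ depends only on $x_{1:H}\ind{t}$ and the (fixed) kernels, hence is deterministic after conditioning; and (iii) since each $\cE_h$ is a difference of two quantities in $[0,1]$, we have $\cE_h\in[-1,1]$ and therefore $\cE(g,f,x_{1:H}\ind{t})^2\in[0,1]$.

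First I would use convexity of $u\mapsto-\tfrac1\lambda\log u$: by Jensen,
\[
\En_{t-1}\brk*{Z\ind{t}(f)\mid{}x_{1:H}\ind{t}}\ \ge\ -\frac1\lambda\log\En_{t-1}\brk*{\En_{g\sim q\ind{t}(\cdot\mid{}f)}\exp\prn*{-\lambda\delta\ind{t}(g,f)}\mid{}x_{1:H}\ind{t}}.
\]
Since all integrands are nonnegative and $q\ind{t}(\cdot\mid{}f)$ does not depend on $\set{\y_h\ind{t}}_{h}$, I exchange the two expectations (Tonelli) and apply \cref{lem:exp-control} with $c=\lambda$ to each $g$, using that $\cE(g,f,x_{1:H}\ind{t})^2$ is deterministic given $x_{1:H}\ind{t}$:
\[
\En_{t-1}\brk*{\exp\prn*{-\lambda\delta\ind{t}(g,f)}\mid{}x_{1:H}\ind{t}}\ \le\ \exp\prn*{-\lambda(1-2\lambda)\,\cE(g,f,x_{1:H}\ind{t})^2}.
\]
This reduces the lemma to the deterministic inequality $-\tfrac1\lambda\log\En_{g\sim q\ind{t}(\cdot\mid{}f)}\exp\prn*{-\lambda(1-2\lambda)\,\cE_g^2}\ \ge\ \tfrac14\,\En_{g\sim q\ind{t}(\cdot\mid{}f)}\cE_g^2$, where I abbreviate $\cE_g^2\ldef\cE(g,f,x_{1:H}\ind{t})^2\in[0,1]$.

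To close this, set $Y_g\ldef(1-2\lambda)\cE_g^2\in[0,1]$, so that $\lambda Y_g\in[0,1/8]$. Applying $e^{-x}\le1-x/2$ for $x\in[0,1]$ (\cref{lem:technical}) pointwise and averaging over $g$ gives $\En_g\exp(-\lambda Y_g)\le1-\tfrac\lambda2\En_gY_g$, whose right-hand side lies in $(0,1]$ because $\tfrac\lambda2\En_gY_g\le\tfrac1{16}$; then $-\log(1-y)\ge y$ yields
\[
-\frac1\lambda\log\En_g\exp(-\lambda Y_g)\ \ge\ \frac12\En_gY_g\ =\ \frac{1-2\lambda}{2}\En_g\cE_g^2\ \ge\ \frac14\En_g\cE_g^2,
\]
where the final step uses $\lambda\le1/8$, so that $\tfrac{1-2\lambda}{2}\ge\tfrac38\ge\tfrac14$. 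Chaining the three displays gives $\En_{t-1}[Z\ind{t}(f)\mid{}x_{1:H}\ind{t}]\ge\tfrac14\En_{g\sim q\ind{t}(\cdot\mid{}f)}[\cE(g,f,x_{1:H}\ind{t})^2]\ge0$, as claimed; taking $f=f^{*}$ gives $\En_{t-1}[Z\ind{t}(f^{*})\mid{}x_{1:H}\ind{t}]\ge0$.

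I do not expect a substantive obstacle. The only point requiring care is the measurability/conditioning bookkeeping behind the first two displays: one must check that conditioning on $\filt\ind{t-1}$ freezes $q\ind{t}(\cdot\mid{}f)$ and that conditioning on $x_{1:H}\ind{t}$ makes $\cE(g,f,x_{1:H}\ind{t})^2$ deterministic, since it is exactly this that licenses both the Tonelli exchange and the conditional application of \cref{lem:exp-control} inside the logarithm.
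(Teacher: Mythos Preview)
Your proposal is correct and follows essentially the same approach as the paper's own proof: apply Jensen to pull $\En_{t-1}[\cdot\mid x_{1:H}\ind{t}]$ inside the $-\tfrac1\lambda\log(\cdot)$, invoke \cref{lem:exp-control} to replace $\delta\ind{t}(g,f)$ by $(1-2\lambda)\cE(g,f,x_{1:H}\ind{t})^2$, then use \cref{lem:technical} ($e^{-x}\le 1-x/2$) together with $\log(1-y)\le -y$ and the bound $1-2\lambda\ge 1/2$ to finish. Your measurability remarks about $q\ind{t}(\cdot\mid f)$ being $\filt\ind{t-1}$-measurable and $\cE(g,f,x_{1:H}\ind{t})^2$ being deterministic given $x_{1:H}\ind{t}$ are exactly the bookkeeping that justifies the paper's steps as well.
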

\begin{proof}[\pfref{lem:g-Qstar-positivity}]
Recall $Z\ind{t}(f) = \ngedit{-}\frac{1}{\lambda} \log\En_{g\sim q\ind{t}(\cdot\mid{}f)} \exp(-\lambda \delta\ind{t}(g,f))$, thus we have
\begin{align*}
-\lambda \En_{t-1} [ Z\ind{t}(f)\mid{}x_{1:H}\ind{t}] &=  \En_{t-1} [ \log\En_{g\sim q\ind{t}(\cdot\mid{}f)} \exp(-\lambda \delta\ind{t}(g,f)) \mid{} x_{1:H}\ind{t}]\\
&\leq  \log \En_{t-1} [ \En_{g\sim q\ind{t}(\cdot\mid{}f)} \exp(-\lambda \delta\ind{t}(g,f)) \mid{} x_{1:H}\ind{t}] \tag{Jensen}\\
&\leq  \log \En_{t-1} \brk*{\En_{g\sim q\ind{t}(\cdot\mid{}f)}\exp \prn*{-\lambda (1-2\lambda) \cE(g,f,x_{1:H}\ind{t})^2  }\mid{} x_{1:H}\ind{t} } \tag{\pref{lem:exp-control}}\\
&\leq   \log \En_{t-1} \brk*{\En_{g\sim q\ind{t}(\cdot\mid{}f)} \prn*{1- \frac{1}{2} \lambda (1-2\lambda) \cE(g,f,x_{1:H}\ind{t})^2 }\mid{} x_{1:H}\ind{t} }  \tag{\pref{lem:technical}} \\
&\leq -\frac{\lambda}{4}\En_{t-1} \brk*{ \En_{g\sim q\ind{t}(\cdot\mid{}f)} \cE(g,f,x_{1:H}\ind{t})^2\mid{} x_{1:H}\ind{t} } = -\frac{\lambda}{4}\En_{g\sim q\ind{t}(\cdot\mid{}f)} \cE(g,f,x_{1:H}\ind{t})^2.
\end{align*}

\end{proof}
Recalling that  $Z\ind{t}(f) = -\frac{1}{\lambda}\log\En_{g\sim q\ind{t}(\cdot\mid{}f)} \exp(-\lambda \delta\ind{t}(g,f))$ and $ \delta\ind{t}(f) = \En_{g\sim q\ind{t}(\cdot\mid{}f)} \delta\ind{t}(g,f)$, we obtain the following relationship via Jensen's inequality and second order expansion of the exponential function.
\begin{lemma}
\label{lem:z-to-delta}
For any $t\leq T$ and realization of $\histSet\ind{t-1}$, we have that
as long as $\lambda < 1/8$, $ Z\ind{t}(f) \leq \delta\ind{t}(f) $ and
$\abs*{Z\ind{t}(f)} \leq   \frac{9}{8} \En_{g\sim
  q\ind{t}(\cdot\mid{}f)} \abs*{\delta\ind{t}(g,f)}$ for all $g,f\in\cG$.
\end{lemma}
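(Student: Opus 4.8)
The plan is to obtain both inequalities from elementary facts: the first from Jensen's inequality for the concave map $\log$, and the second from a second-order expansion of $\exp$ valid on $[-1/8,\infty)$, together with the boundedness of $\delta\ind{t}(g,f)$.

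First I would record that $\abs*{\delta\ind{t}(g,f)}\le 1$. Since every loss takes values in $[0,1]$, we have $\Delta_h\ind{t}(g,f)=\loss_{h,1}(\x_h\ind{t},g)-\loss_{h,2}(\y_h\ind{t},f)\in[-1,1]$, so $\Delta_h\ind{t}(g,f)^2\in[0,1]$, and likewise $(\En[\loss_{h,2}(\y_h,f)\mid\x_h\ind{t}]-\loss_{h,2}(\y_h\ind{t},f))^2\in[0,1]$; hence $\delta_h\ind{t}(g,f)\in[-1,1]$ and, averaging over $h$, $\delta\ind{t}(g,f)\in[-1,1]$. In particular $\lambda\,\delta\ind{t}(g,f)\ge-\lambda>-1/8$, which is what is needed to invoke \pref{lem:technical-2} below.

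For $Z\ind{t}(f)\le\delta\ind{t}(f)$: by concavity of $\log$ and Jensen's inequality, $\log\En_{g\sim q\ind{t}(\cdot\mid f)}\exp(-\lambda\delta\ind{t}(g,f))\ge\En_{g\sim q\ind{t}(\cdot\mid f)}\log\exp(-\lambda\delta\ind{t}(g,f))=-\lambda\,\delta\ind{t}(f)$; dividing by $-\lambda<0$ gives the claim. Chaining with $\delta\ind{t}(f)=\En_{g\sim q\ind{t}(\cdot\mid f)}\delta\ind{t}(g,f)\le\En_{g\sim q\ind{t}(\cdot\mid f)}\abs*{\delta\ind{t}(g,f)}\le\tfrac{9}{8}\En_{g\sim q\ind{t}(\cdot\mid f)}\abs*{\delta\ind{t}(g,f)}$ already yields the upper half of the second inequality (with room to spare). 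For the lower bound on $Z\ind{t}(f)$, apply \pref{lem:technical-2} with $x=\lambda\delta\ind{t}(g,f)$ to get $\exp(-\lambda\delta\ind{t}(g,f))\le 1-\lambda\delta\ind{t}(g,f)+\lambda^2\delta\ind{t}(g,f)^2$; taking expectation over $g\sim q\ind{t}(\cdot\mid f)$ and using $\log(1+y)\le y$ gives $-\lambda Z\ind{t}(f)=\log\En_{g\sim q\ind{t}(\cdot\mid f)}\exp(-\lambda\delta\ind{t}(g,f))\le-\lambda\,\delta\ind{t}(f)+\lambda^2\En_{g\sim q\ind{t}(\cdot\mid f)}\delta\ind{t}(g,f)^2$, hence $Z\ind{t}(f)\ge\delta\ind{t}(f)-\lambda\En_{g\sim q\ind{t}(\cdot\mid f)}\delta\ind{t}(g,f)^2$. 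Since $\abs*{\delta\ind{t}(g,f)}\le 1$ we have $\delta\ind{t}(g,f)^2\le\abs*{\delta\ind{t}(g,f)}$ and $\delta\ind{t}(f)\ge-\En_{g\sim q\ind{t}(\cdot\mid f)}\abs*{\delta\ind{t}(g,f)}$, so $Z\ind{t}(f)\ge-(1+\lambda)\En_{g\sim q\ind{t}(\cdot\mid f)}\abs*{\delta\ind{t}(g,f)}\ge-\tfrac{9}{8}\En_{g\sim q\ind{t}(\cdot\mid f)}\abs*{\delta\ind{t}(g,f)}$ using $\lambda<1/8$. Combining with the upper bound gives $\abs*{Z\ind{t}(f)}\le\tfrac{9}{8}\En_{g\sim q\ind{t}(\cdot\mid f)}\abs*{\delta\ind{t}(g,f)}$.

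This argument is essentially routine; there is no serious obstacle. The only points requiring care are getting the direction of Jensen's inequality right for the concave $\log$, and verifying that $\lambda\delta\ind{t}(g,f)$ stays in the admissible range $[-1/8,\infty)$ of \pref{lem:technical-2} and that $1+\lambda\le 9/8$ — both of which are exactly where the hypothesis $\lambda<1/8$ (combined with $\abs*{\delta\ind{t}(g,f)}\le 1$) is used.
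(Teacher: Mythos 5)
Your proof is correct and follows essentially the same route as the paper's: Jensen's inequality for the first claim, and the expansion $e^{-x}\le 1-x+x^2$ (the paper's \pref{lem:technical-2}) together with $\log(1+y)\le y$ and the bound $\abs{\delta\ind{t}(g,f)}\le 1$ (so $\delta\ind{t}(g,f)^2\le\abs{\delta\ind{t}(g,f)}$ and $1+\lambda\le 9/8$) for the second. Your version is slightly more careful than the paper's in explicitly verifying that $\lambda\,\delta\ind{t}(g,f)\ge-1/8$ lies in the admissible range of the expansion, but the argument is the same.
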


\begin{proof}[\pfref{lem:z-to-delta}]
By Jensen's inequality,
\begin{align*}
Z\ind{t}(f) = - \frac{1}{\lambda} \log \En_{g\sim q\ind{t}(\cdot\mid{}f)}\exp \prn*{-\lambda \delta\ind{t}(g,f)} \leq \En_{g\sim q\ind{t}(\cdot\mid{}f)} \delta\ind{t}(g,f) = \delta\ind{t}(f). 
\end{align*}
On the other hand, applying \pref{lem:technical-2}, we have
\begin{align*}
-\lambda Z\ind{t}(f) &= \log \En_{g\sim q\ind{t}(\cdot\mid{}f)} \exp \prn*{-\lambda \delta\ind{t}(g,f)} \\
&\leq  \En_{g\sim q\ind{t}(\cdot\mid{}f)} \prn*{-\lambda \delta\ind{t}(g,f) + \lambda^2\delta\ind{t}(g,f)^2  }. 
\end{align*}
Thus 
\begin{align*}
\abs*{Z\ind{t}(f)}  \leq  \abs*{\delta\ind{t}(f)} + \lambda\En_{g\sim q\ind{t}(\cdot\mid{}f)} \delta\ind{t}(g,f)^2 \leq  \frac{9}{8} \En_{g\sim q\ind{t}(\cdot\mid{}f)} \abs*{\delta\ind{t}(g,f)}.
\end{align*}
\end{proof}

Another important fact we will use is that $f^*$ satisfies $\delta\ind{t}(f^*,f^*) = 0$. Furthermore, for any $f\in \cG$, the pair $\cT f,f$ always has $\delta\ind{t}(\cT f,f) = 0$ for all $t\in [T]$.

\begin{lemma}
\label{lem:delta_Ta_a}
For any $t\leq T$, under \pref{ass:realizability-OL} we have
$\delta\ind{t}(f^*,f^*) = 0$. In addition, if \pref{ass:completeness-OL} is satisfied, then for any $f$, $\delta\ind{t}(\cT f,f) = 0$.
\end{lemma}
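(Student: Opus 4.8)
The plan is to show that $\delta\ind{t}(g,f)$ vanishes \emph{pathwise} (not merely in conditional expectation) as soon as the residuals $\cE_h(g,f,\x_h\ind{t})$ are all zero, and then to observe that \pref{ass:realizability-OL} and \pref{ass:completeness-OL} are exactly the hypotheses that force these residuals to vanish for the pairs $(f^*,f^*)$ and $(\cT f,f)$ respectively.

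First I would recall that $\Delta_h\ind{t}(g,f) = \loss_{h,1}(\x_h\ind{t},g) - \loss_{h,2}(\y_h\ind{t},f)$ and invoke the identity derived just above the lemma,
\[
\delta\ind{t}(g,f) = \cE(g,f,x_{1:H}\ind{t})^2 + \frac{1}{H}\sum_{h=1}^{H} 2\bigl(\En[\loss_{h,2}(\y_h,f)\mid{}\x_h\ind{t}] - \loss_{h,2}(\y_h\ind{t},f)\bigr)\,\cE_h(g,f,\x_h\ind{t}),
\]
where $\cE(g,f,x_{1:H}\ind{t})^2 = \frac1H\sum_h \cE_h(g,f,\x_h\ind{t})^2$. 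From this it is immediate that whenever $\cE_h(g,f,\x_h\ind{t})=0$ for all $h\in[H]$, both terms on the right-hand side vanish, so $\delta\ind{t}(g,f)=0$ surely. (Equivalently, one can argue straight from the definition $\delta_h\ind{t}(g,f) = \Delta_h\ind{t}(g,f)^2 - (\En[\loss_{h,2}(\y_h,f)\mid{}\x_h\ind{t}] - \loss_{h,2}(\y_h\ind{t},f))^2$: when $\loss_{h,1}(\x_h\ind{t},g) = \En[\loss_{h,2}(\y_h,f)\mid{}\x_h\ind{t}]$ the two squared quantities coincide and cancel.)

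Then I would substitute the two assumptions. Under \pref{ass:realizability-OL}, $f^*$ satisfies $\loss_{h,1}(\x_h,f^*) = \En[\loss_{h,2}(\y_h,f^*)\mid{}\x_h]$ for every $h$ and $\x_h$, i.e. $\cE_h(f^*,f^*,\x_h) = 0$; taking $g=f=f^*$ and $\x_h = \x_h\ind{t}$ in the observation above yields $\delta\ind{t}(f^*,f^*)=0$. Under \pref{ass:completeness-OL}, for any $f\in\cG$ the map $\cT f$ satisfies $\loss_{h,1}(\x_h,\cT f) = \En[\loss_{h,2}(\y_h,f)\mid{}\x_h]$ for every $h,\x_h$, i.e. $\cE_h(\cT f,f,\x_h)=0$, so the same observation gives $\delta\ind{t}(\cT f,f)=0$.

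There is essentially no obstacle here: the lemma is a direct unwinding of the definitions of $\delta\ind{t}$ and $\cE_h$ together with the two assumptions. The only subtlety worth flagging is that the conclusion is a \emph{sure} equality rather than an identity in expectation, which is why the argument should proceed through the pointwise decomposition of $\delta\ind{t}(g,f)$ rather than through \pref{eq:expectation-delta}.
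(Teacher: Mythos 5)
Your proof is correct and follows essentially the same route as the paper's: the paper argues directly from the definition $\delta_h\ind{t}(g,f) = \Delta_h\ind{t}(g,f)^2 - (\En[\loss_{h,2}(\y_h,f)\mid{}\x_h\ind{t}] - \loss_{h,2}(\y_h\ind{t},f))^2$, noting that under each assumption the two squared quantities coincide pointwise and cancel, which is exactly the "equivalent" direct argument you flag in your parenthetical. Routing through the decomposition $\delta\ind{t}(g,f)=\cE(g,f,x_{1:H}\ind{t})^2 + \frac{1}{H}\sum_h 2(\En[\loss_{h,2}(\y_h,f)\mid{}\x_h\ind{t}]-\loss_{h,2}(\y_h\ind{t},f))\cE_h(g,f,\x_h\ind{t})$ is an equally valid pointwise argument and changes nothing of substance.
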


\begin{proof}[\pfref{lem:delta_Ta_a}]
By the definition of $f^*$,
\begin{align*}
  \delta\ind{t}(f^*,f^*) &= \frac{1}{H}\sum\limits_{h=1}^{H} (\loss_{h,1}(\x_h\ind{t},f^*) - \loss_{h,2}(\y_h\ind{t},f^*))^2 - (\En[\loss_{h,2}(\y_h,f^*)\mid{}\x_h\ind{t}] - \loss_{h,2}(\y_h\ind{t},f^*))^2 = 0.
\end{align*}
Likewise, by the definition of $\cT f$, 
\begin{align*}
    \delta\ind{t}(\cT f,f) &= \frac{1}{H}\sum\limits_{h=1}^{H} (\loss_{h,1}(\x_h\ind{t},\cT f) - \loss_{h,2}(\y_h\ind{t},f))^2 - (\En[\loss_{h,2}(\y_h,f)\mid{}\x_h\ind{t}] - \loss_{h,2}(\y_h\ind{t},f))^2 = 0.
\end{align*}
\end{proof}

\subsubsection{\pfref{lem:refined-exp-weight-guarantee}}
\label{subsubsec:exp-guarantee}
In this section, we prove \cref{lem:refined-exp-weight-guarantee}, which
gives a guarantee for the outer exponential weights update used within \cref{alg:ts3}.

\begin{proof}[\pfref{lem:refined-exp-weight-guarantee}] The definition
  of the exponential weights update (in particular, \cref{eq:ptf-rephrasing}) implies that 
\begin{align*}
\hspace{1.5in}&\hspace{-1.5in} -\eta \cdot  \min_{f} \prn*{ \sum\limits_{t=1}^{T} \coef \iota\ind{t}(f) +  \delta\ind{t}(f,f)-   Z\ind{t}(f) } - \log |\Gclass|\\
&\leq \sum\limits_{t=1}^{T}\log \prn*{ \En_{f\sim p\ind{t}} \exp\prn*{ -\eta \prn*{  \coef\iota\ind{t}(f) + \delta\ind{t}(f,f) -   Z\ind{t}(f) }  }} \\
&\leq  \frac{1}{3}\sum\limits_{t=1}^{T}\log\prn*{ \En_{f\sim p\ind{t}} \exp\prn*{ -3\eta  \coef\iota\ind{t}(f)}} + \frac{1}{3}\sum\limits_{t=1}^{T} \log \prn*{ \En_{f\sim p\ind{t}} \exp\prn*{ -3\eta  \delta\ind{t}(f,f)} }\\
&\hspace{1.75in} +\frac{1}{3}\sum\limits_{t=1}^{T} \log \prn*{\En_{f\sim p\ind{t}} \exp(3 \eta Z\ind{t}(f) )},
\end{align*}
\ngedit{where the final inequality holds due to the fact that $\En[XYZ] \leq \sqrt[3]{\En[X^3] \En[Y^3] \En[Z^3]}$ for} \jqedit{positive} \ngedit{random variables $X,Y,Z$ (which in turn can be shown via a repeated application of H\"{o}lder's inequality). }
We further have
\begin{align*}
  \hspace{1in}&\hspace{-1in}\En_{t-1} \brk*{ \log \prn*{ \En_{f\sim p\ind{t}} \exp\prn*{ -3\eta  \delta\ind{t}(f,f)}} \mid{} x_{1:H}\ind{t} }\\
    &\leq\log\prn*{ \En_{t-1} \brk*{\En_{f\sim p\ind{t}} \exp\prn*{ -3\eta  \delta\ind{t}(f,f)} \mid{} x_{1:H}\ind{t} }}\tag{Jensen}\\
    &\leq\log\prn*{ \En_{t-1} \brk*{\En_{f\sim p\ind{t}}  \exp(-  \eta (1- 6\eta)  \cE(f,f,x_{1:H}\ind{t})^2 )  \mid{} x_{1:H}\ind{t} }}\tag{\pref{lem:exp-control}}\\
    &\leq \log\prn*{ \En_{t-1} \brk*{ \En_{f\sim p\ind{t}} (1 -  \eta (1- 6\eta)  \cE(f,f,x_{1:H}\ind{t})^2)  \mid{} x_{1:H}\ind{t} }}\tag{\pref{lem:technical}}\\
    &\leq - \frac{\eta}{2} \En_{t-1} \brk*{\En_{f\sim p\ind{t}}\cE(f,f,x_{1:H}\ind{t})^2\mid{} x_{1:H}\ind{t} } .
\end{align*}
Meanwhile, using \pref{lem:technical-2}, we get
\begin{align*}
\log \prn*{\En_{t-1} \brk*{\En_{f\sim p\ind{t}} \exp\prn*{ -3\eta \coef\iota\ind{t}(f) }\mid{} x_{1:H}\ind{t} } }
  \leq  \En_{t-1} \brk*{\En_{f\sim p\ind{t}}  (-3\eta \coef\iota\ind{t}(f) +  9\eta^2\coef^2 (\iota\ind{t}(f))^2)\mid{} x_{1:H}\ind{t} }. 
\end{align*}
Moreover, under \pref{ass:realizability-OL}, the benchmark term is negative by
\begin{align*}
  \hspace{1.5in}&\hspace{-1.5in}\En_{t-1} \brk*{\min_{f} \prn*{ \sum\limits_{t=1}^{T} \delta\ind{t}(f,f)-   Z\ind{t}(f)  + \coef\iota\ind{t}(f)} \mid{} x_{1:H}\ind{t} }\\
  &\leq  \En_{t-1} \brk*{\sum\limits_{t=1}^{T} \delta\ind{t}(f^*,f^*)-   Z\ind{t}(f^*) + \coef\iota\ind{t}(f^*) \mid{} x_{1:H}\ind{t} }\\
    &= - \sum\limits_{t=1}^{T} \En_{t-1} \brk*{Z\ind{t}(f^*)\mid{} x_{1:H}\ind{t} } \tag{\pref{lem:delta_Ta_a} and the definition of $\iota\ind{t}$} \\
    &\leq 0. \tag{\pref{lem:g-Qstar-positivity}}
\end{align*}

Furthermore, the log-exponential term is---up to a constant---bounded by its first order expansion:
\begin{align*}
  \hspace{1in}&\hspace{-1in}\sum\limits_{t=1}^{T} \log\prn*{ \En_{t-1} \brk*{\En_{f\sim p_t} \exp\prn*{ 3\eta Z\ind{t}(f) } \mid{} x_{1:H}\ind{t} }}\\
    &\leq \sum\limits_{t=1}^{T} \log\prn*{ \En_{t-1} \brk*{\En_{f\sim p_t} \exp\prn*{ 3\eta \delta\ind{t}(f) } \mid{} x_{1:H}\ind{t} }}\tag{\pref{lem:z-to-delta}}\\
    &=  \sum\limits_{t=1}^{T} \log\prn*{ \En_{t-1} \brk*{\En_{f\sim p\ind{t}}\exp\prn*{ 3\eta \En_{g\sim q\ind{t}(\cdot\mid{}f)}  \delta\ind{t}(g,f) }\mid{} x_{1:H}\ind{t} } }\\
    &\leq \sum\limits_{t=1}^{T} \log\prn*{ \En_{t-1} \brk*{\En_{f\sim p\ind{t}}\En_{g\sim q\ind{t}(\cdot\mid{}f)}\exp\prn*{ 3\eta   \delta\ind{t}(g,f) } \mid{} x_{1:H}\ind{t} }}\tag{by Jensen} \\
    &\leq   \sum\limits_{t=1}^{T} \log\prn*{ \En_{t-1} \brk*{\En_{f\sim p\ind{t}}\En_{g\sim q\ind{t}(\cdot\mid{}f)}  \exp \prn*{3\eta(1+6\eta)\cE(g,f,x_{1:H}\ind{t})^2}\mid{} x_{1:H}\ind{t} }} \tag{\pref{lem:exp-control}}\\
    &\leq \sum\limits_{t=1}^{T} \En_{t-1} \brk*{\En_{f\sim p\ind{t}}\En_{g\sim q\ind{t}(\cdot\mid{}f)}  6\eta(1+6\eta)\cE(g,f,x_{1:H}\ind{t})^2 \mid{} x_{1:H}\ind{t} }  \tag{\pref{lem:technical}}\\
    &\leq 32\eta \sum\limits_{t=1}^{T}\En_{t-1} \brk*{\En_{f\sim p\ind{t}} Z\ind{t}(f)\mid{} x_{1:H}\ind{t} } \tag{\pref{lem:g-Qstar-positivity}}.
\end{align*}
\ngedit{
By using the fact that $|\iota\ind{t}(f)| \leq 1$ and combining the above displays, we obtain that
\begin{align*}
-\log |\MG| \leq & -\frac{\eta}{6} \En_{t-1} \left[ \En_{f \sim p\ind{t}} \cE(f, f, x_{1:H}\ind{t})^2 \mid x_{1:H}\ind{t} \right] + \frac{32\eta}{3} \sum_{t=1}^T \E_{t-1} [\E_{f \sim p\ind{t}} Z\ind{t}(f) \mid x_{1:H}\ind{t}] + 3 \eta^2 \beta^2 - \E_{t-1}[\E_{f \sim p\ind{t}}[ \eta \beta \iota\ind{t}(f)]]\nonumber.
\end{align*}
Rearranging yields the desired statement.
}
\end{proof}

\subsubsection{\pfref{lem:final-z-term}}
\label{subsubsec:self-bound-Z}

We state three technical lemmas,
\cref{lem:Z-term,lem:term-I,lem:term-II}, which are proven in subsequent
subsections, then prove \cref{lem:final-z-term} as a consequence.

\begin{lemma}
\label{lem:Z-term}
Almost surely with respect to the draw of $\histSet\ind{T}$, we have
\begin{align*}
  \hspace{1in}&\hspace{-1in}\sum\limits_{t=1}^{T} \En_{t-1} \brk*{ \En_{f\sim p\ind{t}(\cdot\mid{}\histSet\ind{t-1})} Z\ind{t}(f)\mid{} x_{1:H}\ind{t} } \\
&\leq \frac{1}{\lambda}
\underbrace{\sum\limits_{t=1}^{T} \En_{t-1} \brk*{ \En_{p\ind{t}}\brk*{ \prn*{ e^{-\eta[ \coef\iota\ind{t}(f)+ \delta\ind{t}(f,f) -Z\ind{t}(f) - Z\ind{t} ]} -1 }\log \frac{1}{q\ind{t}(\cT f\mid{}f)}}\mid{} x_{1:H}\ind{t} } }_{\mathrm{Term~I}}  \\
&\quad - \underbrace{\sum\limits_{t=1}^{T} \En_{t-1} \brk*{\En_{p\ind{t}} \brk*{\prn*{ e^{-\eta[ \coef\iota\ind{t}(f)+  \delta\ind{t}(f,f) -Z\ind{t}(f) - Z\ind{t} ]} -1 }   Z\ind{t}(f)}\mid{} x_{1:H}\ind{t} }}_{\mathrm{Term~II}} + \frac{1}{\lambda} \log \abs*{\Gclass}  .
\end{align*}
\end{lemma}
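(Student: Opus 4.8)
The plan is to view $Z\ind{t}(f)$ as the per-round decrease of a potential that tracks the mass placed by the inner exponential-weights distribution $q\ind{t}(\cdot\mid{}f)$ on the Bellman backup $\cT f$ of $f$. First I would note that by \pref{ass:completeness-OL} we have $\cT f\in\Gclass$, and by \pref{lem:delta_Ta_a} we have $\delta\ind{t}(\cT f,f)=0$ for every $t$. Setting $\Phi\ind{t}(f)\ldef\log\tfrac{1}{q\ind{t}(\cT f\mid{}f)}$ and evaluating the inner update $q\ind{t+1}(g\mid{}f)=q\ind{t}(g\mid{}f)\,e^{-\lambda[\delta\ind{t}(g,f)-Z\ind{t}(f)]}$ at $g=\cT f$, the vanishing of $\delta\ind{t}(\cT f,f)$ gives the key identity
\[
\lambda Z\ind{t}(f)=\Phi\ind{t}(f)-\Phi\ind{t+1}(f),
\]
which holds pathwise for every realization.

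Next I would bring in the outer update. Since $p\ind{t+1}(f)=p\ind{t}(f)\,e^{-\eta[\coef\iota\ind{t}(f)+\delta\ind{t}(f,f)-Z\ind{t}(f)-Z\ind{t}]}$, the factor $e^{-\eta[\cdots]}$ inside Term~I and Term~II is exactly the likelihood ratio $p\ind{t+1}(f)/p\ind{t}(f)$, so for any $\phi\colon\Gclass\to\bbR$,
\[
\En_{f\sim p\ind{t}}\brk*{(e^{-\eta[\cdots]}-1)\phi(f)}=\En_{f\sim p\ind{t+1}}[\phi(f)]-\En_{f\sim p\ind{t}}[\phi(f)].
\]
Applying $\En_{f\sim p\ind{t}}$ to the key identity, then writing $\En_{p\ind{t}}\Phi\ind{t+1}(f)=\En_{p\ind{t+1}}\Phi\ind{t+1}(f)-\En_{p\ind{t}}\brk*{(e^{-\eta[\cdots]}-1)\Phi\ind{t+1}(f)}$, and substituting $\Phi\ind{t+1}(f)=\Phi\ind{t}(f)-\lambda Z\ind{t}(f)$ in that last term to re-express it through $\Phi\ind{t}$, yields the exact per-round identity
\begin{align*}
\En_{p\ind{t}}Z\ind{t}(f)&=\tfrac1\lambda\brk*{\En_{p\ind{t}}\Phi\ind{t}(f)-\En_{p\ind{t+1}}\Phi\ind{t+1}(f)}\\
&\qquad+\tfrac1\lambda\En_{p\ind{t}}\brk*{(e^{-\eta[\cdots]}-1)\Phi\ind{t}(f)}-\En_{p\ind{t}}\brk*{(e^{-\eta[\cdots]}-1)Z\ind{t}(f)}.
\end{align*}

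Summing over $t=1,\dots,T$, the first bracket telescopes to $\tfrac1\lambda\prn*{\En_{p\ind{1}}\Phi\ind{1}(f)-\En_{p\ind{T+1}}\Phi\ind{T+1}(f)}$; since $q\ind{1}(\cdot\mid{}f)$ is uniform on $\Gclass$ we get $\Phi\ind{1}(f)=\log\abs{\Gclass}$, and since $q\ind{T+1}(\cT f\mid{}f)\le1$ we get $\Phi\ind{T+1}(f)\ge0$, so this boundary contribution is at most $\tfrac1\lambda\log\abs{\Gclass}$. The two remaining sums are, respectively, $\tfrac1\lambda$ times Term~I and minus Term~II once $\En_{t-1}\brk*{\cdot\mid{}x_{1:H}\ind{t}}$ is applied to each summand; this conditioning step is harmless because $q\ind{t}$, $p\ind{t}$, and $\Phi\ind{t}(f)$ are $\filt\ind{t-1}$-measurable, so the telescoping boundary terms are unaffected, while $\delta\ind{t}$, $Z\ind{t}$, $\iota\ind{t}$, and $Z\ind{t}$ involve only the time-$t$ outcomes. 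Assembling the pieces gives the claim.

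The main obstacle is the index-and-sign bookkeeping in the per-round identity: one has to split $\En_{p\ind{t}}\Phi\ind{t+1}(f)$ into an $\En_{p\ind{t+1}}$-term plus a drift, and then use $\Phi\ind{t+1}=\Phi\ind{t}-\lambda Z\ind{t}$ to rewrite the drift through $\Phi\ind{t}$ in precisely the form that separates into a $\tfrac1\lambda$-weighted $\Phi\ind{t}$ part (Term~I) and a unit-weighted $Z\ind{t}$ part (Term~II) --- a single misplaced sign destroys the telescoping structure. A secondary, routine point is checking that $\cT f$ is well-defined and lies in $\Gclass$, so that $q\ind{t}(\cT f\mid{}f)$ makes sense, and that $q\ind{T+1}(\cT f\mid{}f)\le1$, which is what allows the terminal potential term to be discarded; both are immediate from \pref{ass:completeness-OL} and the fact that $q\ind{T+1}(\cdot\mid{}f)$ is a probability distribution.
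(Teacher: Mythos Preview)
Your proposal is correct and follows essentially the same approach as the paper: both use the potential $\Phi\ind{t}(f)=\log\frac{1}{q\ind{t}(\cT f\mid f)}$, exploit $\delta\ind{t}(\cT f,f)=0$ to obtain $\lambda Z\ind{t}(f)=\Phi\ind{t}(f)-\Phi\ind{t+1}(f)$, couple this to the outer update via the likelihood ratio $p\ind{t+1}(f)/p\ind{t}(f)=e^{-\eta[\cdots]}$, and then telescope with the boundary bounds $\Phi\ind{1}=\log|\Gclass|$ and $\Phi\ind{T+1}\ge 0$. The paper arrives at the same per-round identity by starting from $\En_{p\ind{t+1}}\Phi\ind{t+1}(f)-\En_{p\ind{t}}\Phi\ind{t}(f)$ and expanding, rather than first isolating $Z\ind{t}(f)$, but this is just a reordering of the same algebra.
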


\begin{lemma}
\label{lem:term-I}
Under \pref{ass:completeness-OL}, for any $0<\delta<1$, with probability at least
$1-\delta$ over the draw of $\histSet\ind{T}$, we have for all $t\leq
T$,
\begin{align*}
  \hspace{1in}&\hspace{-1in}\En_{t-1}\brk*{\En_{f\sim p\ind{t}} \prn*{   e^{-\eta\brk*{\coef\iota\ind{t}(f) + \delta\ind{t}(f,f) -Z\ind{t}(f) - Z\ind{t}  }}  -1 } \log \frac{1}{q\ind{t}(\cT f\mid{}f)}\mid{} x_{1:H}\ind{t} }\\
    &\leq 32\eta \En_{t-1}\brk*{\En_{f\sim p\ind{t}} \brk*{ \prn*{Z\ind{t}(f)+ \cE(f,f,x_{1:H}\ind{t})^2  }\log (\abs*{\Gclass}T/\delta)  } \mid{} x_{1:H}\ind{t} }\\
    &\hspace{1in}  +(24\eta^2\coef^2 + 4\eta\coef)\log (\abs*{\Gclass}T/\delta). 
\end{align*}
\end{lemma}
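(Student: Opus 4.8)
\textbf{Proof plan for \cref{lem:term-I}.} The plan is to reduce the quantity in question to a variance-type term and control it with the basic lemmas of \cref{sec:complete_basic}. Write $x_f \ldef \coef\iota\ind{t}(f) + \delta\ind{t}(f,f) - Z\ind{t}(f) - Z\ind{t}$, so that the left-hand side of \cref{lem:term-I} is $\En_{t-1}\brk*{\En_{f\sim p\ind{t}}\brk*{(e^{-\eta x_f}-1)\log\tfrac1{q\ind{t}(\cT f\mid f)}}\mid x_{1:H}\ind{t}}$. The argument rests on two facts. First, the exact identity $\En_{f\sim p\ind{t}}[e^{-\eta x_f}] = 1$ (equivalently $\En_{f\sim p\ind{t}}[e^{-\eta x_f}-1]=0$), which is immediate from the definition of $Z\ind{t}$, or from the update $p\ind{t+1}(f) = p\ind{t}(f)e^{-\eta x_f}$ in \cref{eq:ptf-rephrasing} by summing over $f\in\Gclass$. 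Second, a uniform bound $0\le\log\tfrac1{q\ind{t}(\cT f\mid f)}\le\Lambda$, with $\Lambda = O(\log(\abs{\Gclass}T/\delta))$, holding for all $t$ and $f$ on an event of probability $\ge 1-\delta$.

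For the second fact: completeness (\cref{ass:completeness-OL}) ensures $\cT f\in\Gclass$, and \cref{lem:delta_Ta_a} gives $\delta\ind{s}(\cT f,f)=0$ for all $s$, so the $q$-formula in \cref{eq:ptf-rephrasing} yields $q\ind{t}(\cT f\mid f) = \bigl(\sum_{g\in\Gclass}\exp(-\lambda\sum_{s<t}\delta\ind{s}(g,f))\bigr)^{-1}$; hence $\log\tfrac1{q\ind{t}(\cT f\mid f)}\in\brk*{0,\ \log\abs{\Gclass} + \lambda\max_{g}\sum_{s<t}(-\delta\ind{s}(g,f))}$. For each fixed $(g,f)$, the process $\exp(-\lambda\sum_{s\le t}\delta\ind{s}(g,f))$ is a nonnegative supermartingale with respect to the natural filtration, since \cref{lem:exp-control} with $c=\lambda<\tfrac12$ gives $\En[e^{-\lambda\delta\ind{t}(g,f)}\mid\filt\ind{t-1}]\le1$ after averaging over $x_{1:H}\ind{t}$; Ville's maximal inequality together with a union bound over the $\abs{\Gclass}^2$ pairs (and a further union over $t\in[T]$ if one does not invoke Ville) delivers the claimed bound on $\log\tfrac1{q\ind{t}(\cT f\mid f)}$. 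Since $q\ind{t}$ is $\filt\ind{t-1}$-measurable, this bound is available inside each $\En_{t-1}[\,\cdot\mid x_{1:H}\ind{t}]$.

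On the good event, dropping the (nonpositive) contribution of the negative part of $e^{-\eta x_f}-1$ and using $0\le\log\tfrac1{q\ind{t}(\cT f\mid f)}\le\Lambda$, then the mean-zero identity and $1-e^{-z}\le z$:
\begin{align*}
  \En_{f\sim p\ind{t}}\brk*{(e^{-\eta x_f}-1)\log\tfrac1{q\ind{t}(\cT f\mid f)}}
  &\le \Lambda\,\En_{f\sim p\ind{t}}\brk*{(e^{-\eta x_f}-1)_+}\\
  &= \Lambda\,\En_{f\sim p\ind{t}}\brk*{(1-e^{-\eta x_f})_+}
  \ \le\ \eta\Lambda\,\En_{f\sim p\ind{t}}\brk*{(x_f)_+}.
\end{align*}
It therefore suffices to bound $\En_{t-1}\brk*{\En_{f\sim p\ind{t}}(x_f)_+\mid x_{1:H}\ind{t}}$ by $32\En_{t-1}\brk*{\En_{f\sim p\ind{t}}(Z\ind{t}(f)+\cE(f,f,x_{1:H}\ind{t})^2)\mid x_{1:H}\ind{t}}$ plus an $O(\coef+\eta\coef^2)$ remainder. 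For this, decompose $(x_f)_+$ into its pieces; control the centered part using $\En_{f\sim p\ind{t}}[x_f]\le\eta\,\En_{f\sim p\ind{t}}[(\coef\iota\ind{t}(f)+\delta\ind{t}(f,f)-Z\ind{t}(f))^2]$ (a consequence of $\En_{f\sim p\ind{t}}[e^{-\eta x_f}]=1$, $e^{-z}\ge1-z$, and \cref{lem:technical-2}); and bound the fluctuations of $\delta\ind{t}(f,f)$ and $Z\ind{t}(f)$ via \cref{lem:4-th-moment-controlled-by-the-2-nd-moment} ($\En_{t-1}[\delta\ind{t}(f,f)^2\mid x]\le 5\,\cE(f,f,x_{1:H}\ind{t})^2$), \cref{lem:z-to-delta} ($\abs{Z\ind{t}(f)}\le\tfrac98\En_{g\sim q\ind{t}(\cdot\mid f)}\abs{\delta\ind{t}(g,f)}$), and \cref{lem:g-Qstar-positivity} ($\En_{g\sim q\ind{t}(\cdot\mid f)}\cE(g,f,x_{1:H}\ind{t})^2\le 4\En_{t-1}[Z\ind{t}(f)\mid x_{1:H}\ind{t}]$). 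Cauchy--Schwarz, concavity of $\sqrt\cdot$, and a weighted AM--GM then convert these into the asserted bound, with the residual lower-order terms absorbed into $4\eta\coef\log(\abs{\Gclass}T/\delta)$ and $24\eta^2\coef^2\log(\abs{\Gclass}T/\delta)$ (using $0<\coef<1$ and $\eta\log(\abs{\Gclass}T/\delta)\lesssim1$).

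The main obstacle is this last step. A priori $(x_f)_+$ involves $Z\ind{t}$ and $Z\ind{t}(f)$, which are only $O(1)$ in magnitude, whereas the target bound needs it to be $O(\text{small})\cdot(Z\ind{t}(f)+\cE^2)$ plus a tiny $\coef$-dependent remainder. The cancellation that makes this possible is supplied by the exact identity $\En_{f\sim p\ind{t}}[e^{-\eta x_f}]=1$, which makes only the \emph{fluctuation} of $x_f$ about its $p\ind{t}$-mean (not its magnitude) relevant, together with the sub-Gaussian decomposition $\delta\ind{t}(g,f)=\cE(g,f,x_{1:H}\ind{t})^2+(\text{mean-zero noise})$ and the comparison lemmas above; lining up the numerical constants — in particular producing the factor $32$ while keeping the additive slack down to $\eta\coef$ and $\eta^2\coef^2$ multiples of $\log(\abs{\Gclass}T/\delta)$ — is the delicate part, and is where I expect most of the real work to lie.
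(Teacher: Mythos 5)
Your skeleton---a high-probability uniform bound on $\log\frac{1}{q\ind{t}(\cT f\mid f)}$ (your supermartingale argument is essentially the paper's \cref{lem:good-event}) multiplied into a per-step bound on the exponential factor---matches the paper's, and the mean-one identity $\En_{f\sim p\ind{t}}\brk{e^{-\eta x_f}}=1$ is correct. The gap is the reduction to $\eta\,\En_{f\sim p\ind{t}}\brk{(x_f)_+}$, where $x_f\ldef\coef\iota\ind{t}(f)+\delta\ind{t}(f,f)-Z\ind{t}(f)-Z\ind{t}$. Once you take a pointwise positive part, the subsequent conditional expectation $\En_{t-1}\brk{(x_f)_+\mid x_{1:H}\ind{t}}$ no longer benefits from the cancellation of the mean-zero noise inside $\delta\ind{t}(f,f)$: writing $\delta\ind{t}(f,f)=\cE(f,f,x_{1:H}\ind{t})^2+N$ with $N=\frac{2}{H}\sum_h\prn{\En\brk{\loss_{h,2}(\y_h,f)\mid\x_h\ind{t}}-\loss_{h,2}(\y_h\ind{t},f)}\cE_h(f,f,\x_h\ind{t})$ mean-zero, one finds $\En_{t-1}\brk{(\delta\ind{t}(f,f))_+\mid x_{1:H}\ind{t}}=\cE^2+\tfrac12\En_{t-1}\brk{|N|}+\ldots$, and $\En_{t-1}\brk{|N|}$ is generically of order $\cE(f,f,x_{1:H}\ind{t})/\sqrt{H}$, i.e.\ \emph{first} order in $\cE$. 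This term cannot be absorbed into $32\prn{Z\ind{t}(f)+\cE(f,f,x_{1:H}\ind{t})^2}$ plus the allotted slack $O(\eta\coef+\eta^2\coef^2)\log(\abs{\Gclass}T/\delta)$: by \cref{lem:g-Qstar-positivity,lem:z-to-delta}, $\En_{t-1}\brk{Z\ind{t}(f)}$ is of order $\En_{g\sim q\ind{t}(\cdot\mid f)}\cE(g,f,\cdot)^2$, which vanishes when $q\ind{t}(\cdot\mid f)$ has concentrated near $\cT f$ and so gives no help against $\cE(f,f,\cdot)$; and your AM--GM converts $\cE/\sqrt{H}$ into $c\,\cE^2+\frac{1}{4cH}$, where pushing $\frac{1}{4cH}$ into the $O(\coef)$ slack forces $c\gtrsim(\coef H)^{-1}$ and blows the coefficient of $\cE^2$ up to $\Theta(1/(\coef H))$ rather than $32$ (in the RL application $\coef=(12\gamma H)^{-1}$, so this is $\Theta(\gamma)$). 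So the stated constants are not reachable along this route.

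The paper avoids the issue by never taking positive parts or absolute values of $x_f$: since $p\ind{t}$ and $\log\frac{1}{q\ind{t}(\cT f\mid f)}$ are $\filt\ind{t-1}$-measurable, it pulls the nonnegative log-factor out of $\En_{t-1}\brk{\cdot\mid x_{1:H}\ind{t}}$ and bounds the \emph{signed} quantity $\En_{t-1}\brk{e^{-\eta x_f}-1\mid x_{1:H}\ind{t}}$ for each fixed $f$ (\cref{lem:distribution-shift}), splitting the exponential into six factors via $a_1\cdots a_6\le\frac16\sum_i a_i^6$ and controlling each conditional moment generating function with \cref{lem:exp-control}. Keeping $N$ inside the exponential is what lets its sub-Gaussian contribution $e^{O(\eta^2)\cE_h^2}$ be absorbed by the $e^{-\Omega(\eta)\cE^2}$ factor---exactly the cancellation your positive-part step forfeits. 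If you replace the reduction to $(x_f)_+$ with such a per-$f$ MGF bound on $\En_{t-1}\brk{e^{-\eta x_f}-1}$, the remainder of your outline (good event, measurability, combination) goes through as written.
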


\begin{lemma}
\label{lem:term-II}
For any $t\leq T$, the following bound holds almost surely with respect to the draw of $\histSet\ind{t}$:
 \begin{align*}
  \hspace{1.5in}&\hspace{-1.5in}\En_{t-1}\brk*{ \En_{f\sim p\ind{t}} \brk*{\prn*{   e^{-\eta\brk*{\coef\iota\ind{t}(f) +\delta\ind{t}(f,f) -Z\ind{t}(f) - Z\ind{t}  }}  -1 }   Z\ind{t}(f)}\mid{} x_{1:H}\ind{t} } \\
    &\leq \eta \En_{t-1}\brk*{ \En_{f\sim p\ind{t}}\brk*{15 \cE(f,f,x_{1:H}\ind{t})^2  +280 Z\ind{t}(f)   }\mid{} x_{1:H}\ind{t} } + 4\eta\beta^2.
 \end{align*}
\end{lemma}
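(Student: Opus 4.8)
The plan is to Taylor-expand the exponential factor $e^{-\eta A\ind{t}(f)}$ (where, following \cref{eq:ptf-rephrasing}, $A\ind{t}(f)\ldef\coef\iota\ind{t}(f)+\delta\ind{t}(f,f)-Z\ind{t}(f)-Z\ind{t}$) around its first-order term $1-\eta A\ind{t}(f)$, multiply by $Z\ind{t}(f)$, take $\En_{t-1}[\cdot\mid x_{1:H}\ind{t}]$, and then convert each resulting product of ``small'' quantities into a combination of $\cE(f,f,x_{1:H}\ind{t})^2$, $Z\ind{t}(f)$, and $\coef^2$, the latter two being exactly the slack available on the right-hand side. First I would record a priori magnitude bounds: $l_3\in[0,1]$ gives $\abs{\iota\ind{t}(f)}\le 1$; $\Delta_h\ind{t}(\cdot,\cdot)\in[-1,1]$ gives $\abs{\delta\ind{t}(g,f)}\le 3$ for all $g,f$ and hence $\abs{Z\ind{t}(f)}\le\tfrac98\En_{g\sim q\ind{t}(\cdot\mid f)}\abs{\delta\ind{t}(g,f)}\le\tfrac{27}{8}$ via \pref{lem:z-to-delta}; a parallel computation on the outer weights bounds $\abs{Z\ind{t}}$ by an absolute constant, so $\abs{A\ind{t}(f)}=O(1)$ and $\eta\abs{A\ind{t}(f)}\le 1$ since $\eta\le\tfrac1{24}$. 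Combining $e^{-x}\le 1-x+x^2$ for $x\ge-\tfrac18$ (\pref{lem:technical-2}) with $e^{-x}\ge 1-x$ then yields
\[
\prn*{e^{-\eta A\ind{t}(f)}-1}Z\ind{t}(f)\;\le\;-\eta A\ind{t}(f)Z\ind{t}(f)\;+\;\eta^2 A\ind{t}(f)^2\abs{Z\ind{t}(f)},
\]
the sign of $Z\ind{t}(f)$ dictating which of the two exponential estimates to apply. Expanding the linear term splits it into $-\eta\coef\iota\ind{t}(f)Z\ind{t}(f)$, $-\eta\,\delta\ind{t}(f,f)Z\ind{t}(f)$, $\eta Z\ind{t}(f)^2$, and $\eta Z\ind{t}Z\ind{t}(f)$, plus the $O(\eta^2)$ correction.

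The heart of the argument is bounding (after conditioning) the cross/square terms $\delta\ind{t}(f,f)Z\ind{t}(f)$ and $Z\ind{t}(f)^2$. Using $\abs{Z\ind{t}(f)}\le\tfrac98\En_{g\sim q\ind{t}(\cdot\mid f)}\abs{\delta\ind{t}(g,f)}$ (\pref{lem:z-to-delta}), then AM--GM to pass to products of squared $\delta$'s, then $\En_{t-1}[\delta\ind{t}(g,f)^2\mid x]\le 5\En_{t-1}[\cE(g,f,x)^2\mid x]$ (\pref{lem:4-th-moment-controlled-by-the-2-nd-moment}), and finally $\En_{g\sim q\ind{t}(\cdot\mid f)}\cE(g,f,x)^2\le 4\En_{t-1}[Z\ind{t}(f)\mid x]$ (\pref{lem:g-Qstar-positivity}), every leftover lands as a constant multiple of $\En_{t-1}[\cE(f,f,x)^2\mid x]$ or $\En_{t-1}[Z\ind{t}(f)\mid x]$ with \emph{no residual additive constant}. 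For the terms involving $\coef\iota\ind{t}(f)$ and $Z\ind{t}$, the same chain applies after the splittings $\coef\abs{\delta}\le\tfrac12(\coef^2+\delta^2)$ and $Z\ind{t}Z\ind{t}(f)\le\tfrac12(Z\ind{t}^2+Z\ind{t}(f)^2)$ (and the bound on $\abs{Z\ind{t}}$), producing a $\coef^2$-contribution that is exactly the $4\eta\coef^2$ budget; the $O(\eta^2)$ correction is reduced to $\eta\cdot O(\cE^2+Z\ind{t}(f)+\coef^2)$ using $\eta\le 1$ and the same conversions applied to $A\ind{t}(f)^2\le 4(\coef^2\iota^2+\delta\ind{t}(f,f)^2+Z\ind{t}(f)^2+Z\ind{t}^2)$. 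Collecting everything and using that the constants $15,280,4$ in the statement carry enough room to absorb the numerical factors $\tfrac98$, $5$, $4$ and their products yields the claim.

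The main obstacle is exactly those cross/square terms: a crude bound $\abs{Z\ind{t}(f)}\le O(1)$ would leave a term of order $\sqrt{\En_{t-1}[\cE^2\mid x]}$ (e.g.\ via Jensen on \pref{lem:4-th-moment-controlled-by-the-2-nd-moment}), which the statement forbids since the $\cE^2$ and $Z\ind{t}(f)$ terms on the right are first-order while the residual error is only $4\eta\coef^2$. Avoiding this forces one to exploit that $Z\ind{t}(f)$ is itself a \emph{second-order} quantity — controlled through the $\delta\ind{t}(g,f)$'s and, in conditional expectation, by $\En_{t-1}[Z\ind{t}(f)\mid x]$ via \pref{lem:g-Qstar-positivity} — so that a product of two such quantities can be charged entirely to $\cE^2$, $Z\ind{t}(f)$, and $\coef^2$; the remaining work is careful but routine tracking of the (large, but absolute) constants.
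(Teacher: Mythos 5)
Your proposal is correct and follows essentially the same route as the paper's proof: linearize the exponential to reduce to $O(\eta)$ times products of $\coef\abs{\iota\ind{t}(f)}$, $\abs{\delta\ind{t}(f,f)}$, $\abs{Z\ind{t}(f)}$, $\abs{Z\ind{t}}$ with $Z\ind{t}(f)$, apply Young's inequality, and then convert the conditional second moments to first-order quantities via \pref{lem:4-th-moment-controlled-by-the-2-nd-moment}, \pref{lem:z-square-controlled-by-z}, and \pref{lem:g-Qstar-positivity} (the paper uses the cruder bound $e^{x}-1\le 2x$ on the absolute values rather than your signed expansion with a quadratic correction, but this is immaterial). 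The only step to make explicit is that $\En_{t-1}\brk{Z\ind{t}{}^2\mid x_{1:H}\ind{t}}$ must itself be pushed through Jensen on the outer log-partition function and the same second-moment machinery (this is exactly \pref{lem:log-partition-control}, and is where $3\eta\coef^2$ of the $4\eta\coef^2$ budget comes from); your ``the same chain applies'' correctly gestures at this, whereas a worst-case constant bound on $Z\ind{t}$ would not suffice, as you yourself note.
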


\begin{proof}[\pfref{lem:final-z-term}]
    By \pref{lem:Z-term}, we have
\begin{align*}
    \sum\limits_{t=1}^{T} \En_{t-1}\brk*{ \En_{f\sim p\ind{t}}  Z\ind{t}(f) \mid{} x_{1:H}\ind{t} } &\leq  \frac{1}{\lambda} \cdot (\text{Term I}) - (\text{Term II})  + \frac{1}{\lambda} \log \abs*{\Gclass}. 
\end{align*}
Thus under \pref{ass:completeness-OL}, using the bound in
\pref{lem:term-I} for Term I and using the bound in \pref{lem:term-II}
for Term II, we have with probability at least $1-\delta$,
\begin{align*}
  \hspace{.5in}&\hspace{-.5in}\sum\limits_{t=1}^{T} \En_{t-1}\brk*{ \En_{f\sim p\ind{t}}  Z\ind{t}(f)  \mid{} x_{1:H}\ind{t} }\\
    &\leq \frac{1}{\lambda} \cdot \sum\limits_{t=1}^{T} 32\eta  \En_{t-1}\brk*{ \En_{f\sim p\ind{t}} \brk*{ \prn*{Z\ind{t}(f)+ \cE(f,f,x_{1:H}\ind{t})^2  } \log (\abs*{\Gclass}T/\delta)  }  \mid{} x_{1:H}\ind{t} } \\
    &\hspace{1in}  +(24\eta^2\coef^2 + 4\eta\coef)\log (\abs*{\Gclass}T/\delta)T\\
    &\hspace{1in} + \eta \sum\limits_{t=1}^{T}  \En_{t-1}\brk*{  \En_{f\sim p\ind{t}}\brk*{15 |\cE(f,f,x_{1:H}\ind{t})|^2  +280 Z\ind{t}(f)   } \mid{} x_{1:H}\ind{t} }\\
    &\hspace{1in} + 4\eta\coef^2 T + \frac{1}{\lambda} \log \abs*{\Gclass}.
\end{align*}
Reorganizing the terms, we have
\begin{align*}
  \hspace{1in}&\hspace{-1in}\prn*{1 - \frac{32\eta}{\lambda}\log (\abs*{\Gclass}T/\delta) -280 \eta   } \sum\limits_{t=1}^{T} \En_{t-1}\brk*{ \En_{f\sim p\ind{t}} Z\ind{t}(f) \mid{} x_{1:H}\ind{t} } \\
&\leq \prn*{\frac{32\eta}{\lambda}\log (\abs*{\Gclass}T/\delta)  + 15\eta }\sum\limits_{t=1}^{T}  \En_{t-1}\brk*{ \En_{f\sim p\ind{t}} \cE(f,f,x_{1:H}\ind{t})^2 \mid{} x_{1:H}\ind{t} } \\
&\quad \hspace{1in} +(24\eta^2\coef^2 + 4\eta\coef)\log (\abs*{\Gclass}T/\delta)T + 4\eta\coef^2 T +  \frac{1}{\lambda} \log \abs*{\Gclass} .
\end{align*}
Using the assumed bound on $\eta$ and $\lambda$ gives
\begin{align*}
   \frac{1}{2} \sum\limits_{t=1}^{T} \En_{t-1}\brk*{ \En_{f\sim p\ind{t}} Z\ind{t}(f)  \mid{} x_{1:H}\ind{t} } &\leq \frac{1}{2^8} \sum\limits_{t=1}^{T}  \En_{t-1}\brk*{ \En_{f\sim p\ind{t}} \brk*{ \cE(f,f,x_{1:H}\ind{t})^2} \mid{} x_{1:H}\ind{t} } \\
   &\hspace{.7in}   +(24\eta^2\coef^2 + 4\eta\coef)\log (\abs*{\Gclass}T/\delta)T  + 4\eta\coef^2 T + 8 \log \abs*{\Gclass} .
\end{align*}

\end{proof}

\subsubsection{Proof of \cref{lem:Z-term} (Error Decomposition)}

    \begin{proof}[\pfref{lem:Z-term}]
Recalling the relationships $q\ind{t+1}(g\mid{}f) = q\ind{t}(g\mid{}f) \cdot e^{-\lambda [ \delta\ind{t}(g,f) - Z\ind{t}(f) ]}$ and 
$p\ind{t+1}(f) = p\ind{t}(f) \cdot e^{-\eta[ \coef \iota\ind{t}(f) +
  \delta\ind{t}(f,f) -Z\ind{t}(f) - Z\ind{t} ]}$, the proof begins
with the following manipulation:
\begin{align*}
  \hspace{1.5in}&\hspace{-1.5in}\En_{f\sim p\ind{t+1}(\cdot\mid{}\histSet\ind{t})} \brk*{\log \frac{1}{q\ind{t+1}(\cT f\mid{}f)}} - \En_{f\sim p\ind{t}(\cdot\mid{}\histSet\ind{t-1})}\brk*{ \log \frac{1}{q\ind{t}(\cT f\mid{}f)}}\notag\\
    &=  \En_{p\ind{t+1}-p_{t}} \brk*{\log \frac{1}{q\ind{t}(\cT f\mid{}f)}} + \En_{p\ind{t+1}-p_{t}}\brk*{ \log \frac{q\ind{t}(\cT f\mid{}f)}{q\ind{t+1}(\cT f\mid{}f)}} +  \En_{p_{t}}\brk*{ \log \frac{q\ind{t}(\cT f\mid{}f)}{q\ind{t+1}(\cT f\mid{}f)}}\notag\\
    &= \En_{p\ind{t}}\brk*{\prn*{ e^{-\eta \brk*{ \coef\iota\ind{t}(f)+ \delta\ind{t}(f,f) - Z\ind{t}(f)  - Z\ind{t}}} -1}\log \frac{1}{q\ind{t}(\cT f\mid{}f)}}   \\
    &+ \En_{p_{t}}\brk*{\prn*{ e^{-\eta \brk*{ \coef\iota\ind{t}(f)+ \delta\ind{t}(f,f) - Z\ind{t}(f)  - Z\ind{t}}} -1}\cdot \lambda \cdot  \prn*{ \delta\ind{t}(\cT f,f)   - Z\ind{t}(f)} }   \\
    &+ \lambda\cdot \En_{p_{t}}\brk*{ \delta\ind{t}(\cT f,f)   - Z\ind{t}(f)}.
\end{align*}
Thus, \ngedit{taking expectation with respect to $\En_{t-1}[\cdot]$, dividing by $\lambda$, rearranging}, summing over $t$ on both sides, \ngedit{and taking advatnage of telescoping}, we obtain
\begin{align*}
  \hspace{1in}&\hspace{-1in}\sum\limits_{t=1}^{T} \En_{t-1} \brk*{ \En_{f\sim p\ind{t}(\cdot\mid{}\histSet\ind{t-1})} Z\ind{t}(f)\mid{} x_{1:H}\ind{t} } \\
&\leq \frac{1}{\lambda} \sum\limits_{t=1}^{T} \En_{t-1} \brk*{ \En_{p\ind{t}}\brk*{ \prn*{ e^{-\eta[ \coef\iota\ind{t}(f)+ \delta\ind{t}(f,f) -Z\ind{t}(f) - Z\ind{t} ]} -1 }\log \frac{1}{q\ind{t}(\cT f\mid{}f)}}\mid{} x_{1:H}\ind{t} }   \\
&\quad +\sum\limits_{t=1}^{T} \En_{t-1} \brk*{\En_{p\ind{t}}\brk*{\prn*{ e^{-\eta \brk*{ \coef\iota\ind{t}(f)+ \delta\ind{t}(f,f) - Z\ind{t}(f)  - Z\ind{t}}} -1}\cdot \prn*{ \delta\ind{t}(\cT f,f)   - Z\ind{t}(f)}}\mid{} x_{1:H}\ind{t} } \\
&\quad + \sum\limits_{t=1}^{T} \En_{t-1} \brk*{\En_{p_{t}}\brk*{ \delta\ind{t}(\cT f,f)}\mid{} x_{1:H}\ind{t} } \\
&\quad + \frac{1}{\lambda} \prn*{\En_{0} \brk*{\En_{f\sim p\ind{1}} \log \frac{1}{q_1(\cT f\mid{}f)} \mid{}x_{1:H}\ind{1}}  -  \En_{T} \brk*{\En_{f\sim p\ind{T+1} } \log \frac{1}{q_{T+1}(\cT f\mid{}f)}\mid{}x_{1:H}\ind{T+1}}  }.
\end{align*}
Finally, we use that $\delta\ind{t}(\cT f, f) = 0$, $ \log \frac{1}{q_{T+1}(\cT f\mid{}f)} \ge 0$, and $\log \frac{1}{q_1(\cT f\mid{}f)} = \log \abs*{\Gclass} $.

\end{proof}

\subsubsection{Proof of \cref{lem:term-I} (Bound on Term I)}
\label{sec:termi}

Toward proving \cref{lem:term-I}, we state and prove a series of
technical lemmas, \cref{lem:good-event,lem:z-square-controlled-by-z,lem:log-partition-control,lem:distribution-shift}.

\begin{lemma}
\label{lem:good-event}
Under \pref{ass:completeness-OL}, for any $0<\delta < 1$, the event
$\Event\ind{t}$ defined below holds \jqedit{simultaneously} for all $t\leq T$ with
probability at least $1-\delta$ over the draw of $\histSet\ind{T}$:
\begin{align*}
\Event\ind{t} = \crl*{ \sup_{f\in \Gclass} ~ \log \frac{1}{q\ind{t+1}(\cT f\mid{}f)} \leq 2\log (\abs*{\Gclass}T/\delta )}.
\end{align*}
\end{lemma}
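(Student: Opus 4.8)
The plan is to reduce the claimed high-probability bound to a single concentration argument for a normalizing constant. First I would exploit completeness: by \pref{lem:delta_Ta_a}, \pref{ass:completeness-OL} implies $\delta\ind{s}(\cT f,f)=0$ for every $s\leq T$ and every $f\in\Gclass$, so the unnormalized exponential-weights mass that $q\ind{t+1}(\cdot\mid{}f)$ assigns to $\cT f$ is $\exp\prn*{-\lambda\sum_{s=1}^{t}\delta\ind{s}(\cT f,f)}=1$. Since $q\ind{t+1}(g\mid{}f)\propto\exp\prn*{-\lambda\sum_{s=1}^{t}\delta\ind{s}(g,f)}$, normalized over $g\in\Gclass$, this gives the exact identity
\[
  \log\frac{1}{q\ind{t+1}(\cT f\mid{}f)} \;=\; \log\sum_{g\in\Gclass}\exp\prn*{-\lambda\sum_{s=1}^{t}\delta\ind{s}(g,f)},
\]
and I write $W_{t,f}\ldef\sum_{g\in\Gclass}\exp\prn*{-\lambda\sum_{s=1}^{t}\delta\ind{s}(g,f)}$ for the sum inside the logarithm. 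Then $\Event\ind{t}$ follows (after taking the supremum over $f$) once we show that $W_{t,f}\leq\abs*{\Gclass}^2T/\delta$ simultaneously for all $t\leq T$ and $f\in\Gclass$ with probability at least $1-\delta$, since this yields $\log\frac{1}{q\ind{t+1}(\cT f\mid{}f)}\leq\log\prn*{\abs*{\Gclass}^2T/\delta}\leq 2\log\prn*{\abs*{\Gclass}T/\delta}$, using $\delta<1$ and $T\geq1$.

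Second, I would bound $\En\brk*{W_{t,f}}$ for each fixed $f$. Fix $g\in\Gclass$ as well, and let $\gfilt\ind{s}$ be the $\sigma$-algebra generated by $\histSet\ind{s-1}$ and $\crl{\x_h\ind{s}}_{h\in[H]}$, so that $\gfilt\ind{s-1}\subseteq\filt\ind{s-1}\subseteq\gfilt\ind{s}$ with $\filt\ind{s}=\sigma(\histSet\ind{s})$. Since, conditionally on $\gfilt\ind{s}$, the terms $\En[\loss_{h,2}(\y_h,f)\mid{}\x_h\ind{s}]-\loss_{h,2}(\y_h\ind{s},f)$ are sub-Gaussian with variance proxy $1/2$, \pref{lem:exp-control} applied with $c=\lambda=1/8<1/2$ gives $\En\brk*{\exp\prn*{-\lambda\delta\ind{s}(g,f)}\mid{}\gfilt\ind{s}}\leq1$. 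Hence the partial products $\prod_{s=1}^{t'}\exp\prn*{-\lambda\delta\ind{s}(g,f)}$ form a nonnegative supermartingale, and iterating this bound gives $\En\brk*{\exp\prn*{-\lambda\sum_{s=1}^{t}\delta\ind{s}(g,f)}}\leq1$ for each fixed $g,f$; summing over $g\in\Gclass$ gives $\En\brk*{W_{t,f}}\leq\abs*{\Gclass}$.

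Finally I would apply Markov's inequality: for each of the $\abs*{\Gclass}\cdot T$ pairs $(f,t)\in\Gclass\times[T]$ this gives $\Pr\brk*{W_{t,f}\geq\abs*{\Gclass}^2T/\delta}\leq\delta/\prn*{\abs*{\Gclass}T}$, and a union bound over these pairs completes the proof. The step that needs care is the filtration bookkeeping in the supermartingale claim: because the outcomes $\x_h\ind{s}$ may be chosen adaptively (even adversarially), they must be folded into the conditioning $\sigma$-algebra $\gfilt\ind{s}$ before \pref{lem:exp-control} is applied, and it is precisely there that the conditional independence of the $\y_h\ind{s}$ over $h\in[H]$ is used. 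The remaining pieces—the identity from the first paragraph and the constant accounting $\log\prn*{\abs*{\Gclass}^2T/\delta}\leq2\log\prn*{\abs*{\Gclass}T/\delta}$ (valid since $\delta<1$, $T\geq1$)—are routine.
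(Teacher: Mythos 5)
Your proposal is correct and follows essentially the same route as the paper's proof: use \pref{lem:delta_Ta_a} to identify $1/q\ind{t+1}(\cT f\mid{}f)$ with the normalizing sum $\sum_{g}\exp\prn*{-\lambda\sum_{s\leq t}\delta\ind{s}(g,f)}$, bound its expectation by $\abs*{\Gclass}$ per $f$ (the paper instead bounds $\En\brk{\sup_f(\cdot)}\leq\abs*{\Gclass}^2$ by summing over $f$, which is the same accounting), and finish with Markov plus a union bound. Your explicit supermartingale/filtration bookkeeping for iterating \pref{lem:exp-control} is a more careful rendering of the conditional-independence claim the paper states in one line, but it is the same argument.
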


\begin{proof}[\pfref{lem:good-event}]
Recalling that $q\ind{t+1}(g\mid{}f) = \frac{\exp\prn*{-\lambda \sum\limits_{s=1}^{t}  \delta\ind{s}(g,f)}}{\sum\limits_{g'\in \Gclass} \exp\prn*{-\lambda \sum\limits_{s=1}^{t}  \delta\ind{s}(g',f)} }$, we have
\begin{align*}
\log \prn*{\En_{\histSet\ind{t}}\brk*{ \sup_{f} \frac{1}{q\ind{t+1}(\cT f\mid{}f)}} }&=   \log \prn*{\En_{\histSet\ind{t}} \brk*{\sup_{f}\sum\limits_{g} \exp \prn*{-\lambda \sum\limits_{s=1}^{t} \delta\ind{s}(g,f)}  }} \tag{\pref{lem:delta_Ta_a}}\\
    &\leq   \log \prn*{\En_{\histSet\ind{t}} \brk*{ \sum\limits_{f,g} \exp \prn*{-\lambda \sum\limits_{s=1}^{t} \delta\ind{s}(g,f) } }}\\
    &=    \log  \prn*{\sum\limits_{f,g}  \En_{\histSet\ind{t}} \brk*{\exp \prn*{-\lambda \sum\limits_{s=1}^{t} \delta\ind{s}(g,f)  }}}\\
    &\leq   2\log \abs*{\Gclass} , \tag{\pref{lem:exp-control}}
\end{align*}
\ngedit{where the final inequality uses $0 < \lambda < 1/2$ as well as the fact that for any fixed choice of $f,g$, conditioned on $x_{1:H}\ind{s}$, $s \leq t$, the random variables $\delta\ind{s}(g,f)$ (for $s \leq t$) are independent.} 
 Then, by Markov's inequality and the union bound, we have the desired result.
\end{proof}

\begin{lemma}
\label{lem:z-square-controlled-by-z}
For any $t\leq T$, $0\le \lambda\le 1/8$, $f$, almost surely with respect to the draw of $\histSet\ind{t}$ and $x_{1:H}\ind{t}$, we have \[\En_{t-1}\brk[\big]{ \abs*{Z\ind{t}(f)}^2 \mid{} x_{1:H}\ind{t} }\leq 40\cdot \En_{t-1}\brk*{ Z\ind{t}(f) \mid{} x_{1:H}\ind{t} }.\]
\end{lemma}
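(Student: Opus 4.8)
The plan is to chain together three lemmas already proven in this subsection: \pref{lem:z-to-delta} (to pass from $Z\ind{t}(f)$ to $\delta\ind{t}(\cdot,f)$), \pref{lem:4-th-moment-controlled-by-the-2-nd-moment} (to trade a second moment of $\delta\ind{t}$ for the averaged squared error $\cE(\cdot,f,x_{1:H}\ind{t})^2$), and \pref{lem:g-Qstar-positivity} (to bound that averaged squared error by $4\En_{t-1}[Z\ind{t}(f)\mid x_{1:H}\ind{t}]$). Composing these gives the claim with room to spare in the constant.

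Concretely, I would first apply \pref{lem:z-to-delta}, which for $\lambda\le 1/8$ gives the pointwise bound $\abs{Z\ind{t}(f)}\le \tfrac{9}{8}\En_{g\sim q\ind{t}(\cdot\mid f)}\abs{\delta\ind{t}(g,f)}$. Squaring and applying Jensen's inequality with respect to the distribution $q\ind{t}(\cdot\mid f)$ yields $Z\ind{t}(f)^2 \le \tfrac{81}{64}\,\En_{g\sim q\ind{t}(\cdot\mid f)}\delta\ind{t}(g,f)^2$. Taking $\En_{t-1}[\,\cdot\mid x_{1:H}\ind{t}]$ on both sides, exchanging it with the ($\histSet\ind{t-1}$-measurable) average over $g$, and invoking \pref{lem:4-th-moment-controlled-by-the-2-nd-moment} gives $\En_{t-1}[Z\ind{t}(f)^2\mid x_{1:H}\ind{t}] \le \tfrac{405}{64}\,\En_{g\sim q\ind{t}(\cdot\mid f)}\cE(g,f,x_{1:H}\ind{t})^2$, where I use that $\cE(g,f,x_{1:H}\ind{t})^2$ is a deterministic function of $x_{1:H}\ind{t}$, so that the inner conditional expectation is trivial. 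Finally, \pref{lem:g-Qstar-positivity} bounds $\En_{g\sim q\ind{t}(\cdot\mid f)}\cE(g,f,x_{1:H}\ind{t})^2 \le 4\,\En_{t-1}[Z\ind{t}(f)\mid x_{1:H}\ind{t}]$, so altogether $\En_{t-1}[Z\ind{t}(f)^2\mid x_{1:H}\ind{t}]\le \tfrac{405}{16}\,\En_{t-1}[Z\ind{t}(f)\mid x_{1:H}\ind{t}]\le 40\,\En_{t-1}[Z\ind{t}(f)\mid x_{1:H}\ind{t}]$, since $\tfrac{405}{16}<40$.

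There is no genuine difficulty here; the proof is essentially three lines of inequalities. The only points requiring care are the measurability bookkeeping---namely, that $q\ind{t}(\cdot\mid f)$ is $\histSet\ind{t-1}$-measurable and hence constant under the conditioning, so that Fubini/linearity lets us push $\En_{t-1}[\,\cdot\mid x_{1:H}\ind{t}]$ inside $\En_{g\sim q\ind{t}(\cdot\mid f)}$, and that $(g,x_{1:H})\mapsto\cE(g,f,x_{1:H})^2$ depends only on $x_{1:H}$ (and $f,g$)---together with simply tracking the constants $\tfrac{81}{64}$, $5$, and $4$. I would also verify at the outset that the hypothesis $\lambda\le 1/8$ is enough to apply \pref{lem:z-to-delta}'s bound on $\abs{Z\ind{t}(f)}$ (the borderline $\lambda=1/8$ being covered by \pref{lem:technical-2}).
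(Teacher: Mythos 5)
Your proof is correct and follows essentially the same route as the paper's: both chain \pref{lem:z-to-delta}, Jensen's inequality, \pref{lem:4-th-moment-controlled-by-the-2-nd-moment}, and \pref{lem:g-Qstar-positivity} in that order. The only difference is that you carry the constant $\tfrac{81}{64}$ exactly (arriving at $\tfrac{405}{16}<40$) whereas the paper rounds $\prn{\tfrac{9}{8}}^2$ up to $2$ at the first step to land exactly on $2\cdot 5\cdot 4=40$.
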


\begin{proof}[\pfref{lem:z-square-controlled-by-z}]
Using \pref{lem:z-to-delta}, we get
\begin{align*}
    \En_{t-1}\brk*{ \abs*{Z\ind{t}(f)}^2\mid{} x_{1:H}\ind{t} } &\le 2 \En_{t-1}\brk*{  \prn*{\En_{g\sim q\ind{t}(\cdot\mid{}f)} \abs*{\delta\ind{t}(g,f)}}^2\mid{} x_{1:H}\ind{t} } \\
&\leq 2\En_{t-1}\brk*{  \En_{g\sim q\ind{t}(\cdot\mid{}f)} \abs*{\delta\ind{t}(g,f)}^2 \mid{} x_{1:H}\ind{t} } \tag{by Jensen}\\
&\le 10\En_{t-1}\brk*{  \En_{g\sim q\ind{t}(\cdot\mid{}f)} \cE(g,f,x_{1:H}\ind{t})^2 \mid{} x_{1:H}\ind{t} } \tag{\pref{lem:4-th-moment-controlled-by-the-2-nd-moment}} \\
&\le 40 \En_{t-1}\brk*{ Z\ind{t}(f)\mid{} x_{1:H}\ind{t} }. \tag{\pref{lem:g-Qstar-positivity}}
\end{align*}

\end{proof}

\begin{lemma}
\label{lem:log-partition-control}
For any $t\leq T$, almost surely with respect to the draw of $\histSet\ind{t}$ and $x_{1:H}\ind{t}$, we have
\begin{align*}
    \En_{t-1}\brk*{ \abs*{Z\ind{t}}^2 \mid{} x_{1:H}\ind{t} } \leq \En_{t-1}\brk*{ \En_{f\sim p\ind{t}(\cdot \mid{}\histSet\ind{t-1})} [10\cE(f,f,x_{1:H}\ind{t})^2 + 80Z\ind{t}(f)]\mid{} x_{1:H}\ind{t} }  +  \jqedit{3\beta^2}.
\end{align*}
\end{lemma}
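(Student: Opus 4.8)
The plan is to view $Z\ind{t}=-\tfrac1\eta\log\En_{f\sim p\ind{t}}\exp(-\eta X\ind{t}(f))$ as a scaled log-moment-generating function of the random variable $X\ind{t}(f)\ldef \beta\iota\ind{t}(f)+\delta\ind{t}(f,f)-Z\ind{t}(f)$ (with $f\sim p\ind{t}$), and to bound its conditional second moment by sandwiching $Z\ind{t}$ between the first- and second-order Taylor approximations of the exponential. Before doing so I would record the boundedness facts needed to justify those Taylor estimates: $|\iota\ind{t}(f)|\le 1$ since $\loss_3$ is $[0,1]$-valued; $|\delta\ind{t}(f,f)|\lesssim 1$ since $\cE_h(f,f,\cdot)$ and $\En[\loss_{h,2}(\y_h,f)\mid \x_h\ind{t}]-\loss_{h,2}(\y_h\ind{t},f)$ both lie in $[-1,1]$; and $|Z\ind{t}(f)|\lesssim 1$ by \pref{lem:z-to-delta} together with the bound on $|\delta\ind{t}(g,f)|$. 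Hence $|X\ind{t}(f)|\le C_0$ for an absolute constant $C_0$, and since $\eta$ is small this places $\eta X\ind{t}(f)$ inside the range where \pref{lem:technical} and \pref{lem:technical-2} apply.

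Next I would establish the two-sided estimate $\En_{f\sim p\ind{t}}X\ind{t}(f)-\eta\En_{f\sim p\ind{t}}X\ind{t}(f)^2\le Z\ind{t}\le \En_{f\sim p\ind{t}}X\ind{t}(f)$: the upper bound is Jensen's inequality for $\log$, and the lower bound follows from $e^{-u}\le 1-u+u^2$ (\pref{lem:technical-2}) and then $\log(1+u)\le u$. Writing $A\ldef\En_{f\sim p\ind{t}}X\ind{t}(f)$ and $B\ldef\En_{f\sim p\ind{t}}X\ind{t}(f)^2$, this gives $Z\ind{t}^2\le A^2+2\eta|A|B+\eta^2B^2$; crucially I would not bound the correction by a free constant but keep the factor $B$, using $|A|\le C_0$, $B\le C_0^2$ (deterministically) and $A^2\le B$ (Jensen) to obtain $Z\ind{t}^2\le(1+c_0\eta)\,\En_{f\sim p\ind{t}}X\ind{t}(f)^2$ for an absolute $c_0$.

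The third step is to expand $X\ind{t}(f)^2$. I would peel off the optimism term via Young's inequality using $|\iota\ind{t}(f)|\le 1$, and split $(\delta\ind{t}(f,f)-Z\ind{t}(f))^2\le 2\delta\ind{t}(f,f)^2+2Z\ind{t}(f)^2$, which (with a calibrated Young parameter) yields $X\ind{t}(f)^2\lesssim \delta\ind{t}(f,f)^2+Z\ind{t}(f)^2+\beta^2$ with explicit constants. Applying $\En_{t-1}[\,\cdot\mid x_{1:H}\ind{t}]$ and commuting it past the $\histSet\ind{t-1}$-measurable average over $f\sim p\ind{t}$, I would use \pref{lem:4-th-moment-controlled-by-the-2-nd-moment} with $g=f$ to replace $\En_{t-1}[\delta\ind{t}(f,f)^2\mid x_{1:H}\ind{t}]$ by $5\,\En_{t-1}[\cE(f,f,x_{1:H}\ind{t})^2\mid x_{1:H}\ind{t}]$, and \pref{lem:z-square-controlled-by-z} to replace $\En_{t-1}[Z\ind{t}(f)^2\mid x_{1:H}\ind{t}]$ by $40\,\En_{t-1}[Z\ind{t}(f)\mid x_{1:H}\ind{t}]$. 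Because $\cE(f,f,\cdot)^2\ge 0$ and $\En_{t-1}[Z\ind{t}(f)\mid x_{1:H}\ind{t}]\ge 0$ (\pref{lem:g-Qstar-positivity}), the residual $O(\eta)$ and Young-parameter factors multiplying these quantities, and the $O(\eta)$ multiple of $\beta^2$, get absorbed, leaving the claimed $\En_{t-1}[|Z\ind{t}|^2\mid x_{1:H}\ind{t}]\le \En_{t-1}[\En_{f\sim p\ind{t}}[10\cE(f,f,x_{1:H}\ind{t})^2+80Z\ind{t}(f)]\mid x_{1:H}\ind{t}]+3\beta^2$.

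The main obstacle is the constant bookkeeping. The target constants $10$, $80$, $3$ leave little slack relative to the factors $5$ and $40$ produced by \pref{lem:4-th-moment-controlled-by-the-2-nd-moment} and \pref{lem:z-square-controlled-by-z} and the factor $2$ from the $(\delta-Z)^2$ split, so the Young split of the $\beta\iota\ind{t}(f)$ contribution must be chosen precisely, and the $O(\eta)$ corrections genuinely must be routed into the nonnegative $\cE(f,f,\cdot)^2$ and $\En_{t-1}[Z\ind{t}(f)\mid x_{1:H}\ind{t}]$ terms (and into $\beta^2$) rather than kept as standalone absolute constants. A secondary point of care is the mildly self-referential definition of $X\ind{t}(f)$, which contains $Z\ind{t}(f)$; this is harmless since $Z\ind{t}(f)$ is a different object from $Z\ind{t}$ and is controlled independently through \pref{lem:z-to-delta} and \pref{lem:z-square-controlled-by-z}.
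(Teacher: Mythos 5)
Your proposal is correct and takes essentially the same route as the paper: the paper's proof simply bounds $\abs*{Z\ind{t}}\leq \En_{f\sim p\ind{t}}\brk*{\abs{\beta\iota\ind{t}(f)}+\abs{\delta\ind{t}(f,f)}+\abs{Z\ind{t}(f)}}$ by Jensen, squares via $(a+b+c)^2\leq 3(a^2+b^2+c^2)$, and then applies \pref{lem:4-th-moment-controlled-by-the-2-nd-moment} and \pref{lem:z-square-controlled-by-z} exactly as you do, so your two-sided Taylor sandwich for the log-partition function is sound but more machinery than is needed. The one delicate point you flag---the constants---is real but resolves differently than you hope: no quadratic split can put coefficient $2$ on both $\abs{\delta\ind{t}(f,f)}^2$ and $\abs{Z\ind{t}(f)}^2$ simultaneously (the associated quadratic form is not positive semidefinite), and indeed the paper's own display only yields $15\,\cE(f,f,x_{1:H}\ind{t})^2+120\,Z\ind{t}(f)+3\beta^2$ rather than the stated $10/80/3$; this discrepancy is immaterial because the downstream use in \pref{lem:term-II} has ample slack in its constants.
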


\begin{proof}[\pfref{lem:log-partition-control}]
Using Jensen's inequality, we have
\begin{align*}
    \abs*{Z\ind{t}} &=  \abs*{-\frac{1}{\eta} \log \prn*{ \En_{f\sim p\ind{t}} \exp (-\eta [\jqedit{\beta \iota\ind{t}(f)} + \delta\ind{t}(f,f) - Z\ind{t}(f)  ] )} }\\
    &\leq  \En_{f\sim p\ind{t}} \brk*{ \jqedit{\abs*{\beta \iota\ind{t}(f)}}+ \abs*{ \delta\ind{t}(f,f) } + \abs*{ Z\ind{t}(f) }}.
\end{align*}
Thus by \pref{lem:4-th-moment-controlled-by-the-2-nd-moment} and \pref{lem:z-square-controlled-by-z}, we have
\begin{align*}
    \En_{t-1}\brk*{ \abs*{Z\ind{t}}^2\mid{} x_{1:H}\ind{t} } &\leq 3 \En_{t-1}\brk*{   \En_{f\sim p\ind{t}}\brk*{ \jqedit{\abs*{\beta \iota\ind{t}(f)}^2} +  \abs*{ \delta\ind{t}(f,f) }^2 + \abs*{ Z\ind{t}(f) }^2}\mid{} x_{1:H}\ind{t} }\\
    &\leq \En_{t-1}\brk*{   \En_{f\sim p\ind{t}} \brk*{ 10\cE(f,f,x_{1:H}\ind{t})^2 + 80 Z\ind{t}(f)}\mid{} x_{1:H}\ind{t} } +  \jqedit{3\beta^2}. 
\end{align*}

\end{proof}

\begin{lemma}
\label{lem:distribution-shift}
For any $0\le \eta \le 1/240$, $f$, almost surely with respect to the draw of $\histSet\ind{t}$ and $x_{1:H}\ind{t}$, we have
\begin{align*}
    \hspace{1in}&\hspace{-1in}\En_{t-1} \brk*{   e^{-\eta\brk*{ \coef\iota\ind{t}(f) +\delta\ind{t}(f,f) -Z\ind{t}(f) - Z\ind{t}  }}  -1 \mid{} x_{1:H}\ind{t} }\\
    &\leq   \En_{t-1}\brk*{16\eta Z\ind{t}(f) +4\eta \En_{f'\sim p\ind{t}(\cdot\mid{}\histSet\ind{t-1})} \cE(f',f',x_{1:H}\ind{t})^2\mid{} x_{1:H}\ind{t} }\\
    &\hspace{1in}+ 12\eta^2\coef^2 + 2\eta\coef.
\end{align*}
\end{lemma}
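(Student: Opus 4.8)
The plan is to apply a second-order expansion of the exponential and then control the first- and second-order terms separately, using the structural lemmas already established (\cref{lem:technical-2}, \cref{lem:4-th-moment-controlled-by-the-2-nd-moment}, \cref{lem:z-square-controlled-by-z}, \cref{lem:log-partition-control}, \cref{lem:g-Qstar-positivity}) together with Jensen's inequality applied to the definition of the normalizer $Z\ind{t}$. Abbreviate $Y(f)\ldef \coef\iota\ind{t}(f)+\delta\ind{t}(f,f)-Z\ind{t}(f)-Z\ind{t}$, so the target quantity is $\En_{t-1}[e^{-\eta Y(f)}-1\mid x_{1:H}\ind{t}]$. First I would record the crude deterministic bounds $\abs{\iota\ind{t}(f)}\le 1$ (since $l_3\in[0,1]$), $\abs{\delta\ind{t}(f,f)}\le 1$, $\abs{Z\ind{t}(f)}\le 1$ (as $\delta\ind{t}(g,f)\in[-1,1]$ and $\lambda\le 1/8$), and $\abs{Z\ind{t}}\le 3$, so that $\abs{\eta Y(f)}\le 6\eta\le 1/40\le 1/8$; \cref{lem:technical-2} then yields $e^{-\eta Y(f)}-1\le -\eta Y(f)+\eta^2 Y(f)^2$, and it remains to bound the two conditional expectations on the right.

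\textbf{First-order term.} Write $-\eta Y(f)=-\eta\coef\iota\ind{t}(f)-\eta\delta\ind{t}(f,f)+\eta Z\ind{t}(f)+\eta Z\ind{t}$. The $\iota$ piece contributes at most $\eta\coef$ in expectation; by \cref{eq:expectation-delta} the $\delta$ piece equals $-\eta\En_{t-1}[\cE(f,f,x_{1:H}\ind{t})^2\mid\cdot]$, a useful negative contribution; the $Z\ind{t}(f)$ piece is kept. For $\eta Z\ind{t}$, apply Jensen to the definition $-\eta Z\ind{t}=\log\En_{f'\sim p\ind{t}}\exp(-\eta(\coef\iota\ind{t}(f')+\delta\ind{t}(f',f')-Z\ind{t}(f')))$ to get $Z\ind{t}\le \En_{f'\sim p\ind{t}}[\coef\iota\ind{t}(f')+\delta\ind{t}(f',f')-Z\ind{t}(f')]$, then use $\abs{\iota}\le 1$ and \cref{eq:expectation-delta} once more to obtain $\En_{t-1}[\eta Z\ind{t}\mid\cdot]\le \eta\coef+\eta\En_{t-1}[\En_{f'\sim p\ind{t}}\cE(f',f',\cdot)^2\mid\cdot]-\eta\En_{t-1}[\En_{f'\sim p\ind{t}}Z\ind{t}(f')\mid\cdot]$. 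Crucially, the last (nonpositive, by \cref{lem:g-Qstar-positivity}) term is retained rather than discarded.

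\textbf{Second-order term and collecting.} Group $Y(f)=\coef\iota\ind{t}(f)+(\delta\ind{t}(f,f)-Z\ind{t}(f))-Z\ind{t}$, then use $(a+b+c)^2\le 3(a^2+b^2+c^2)$ and $(\delta\ind{t}(f,f)-Z\ind{t}(f))^2\le 2\delta\ind{t}(f,f)^2+2Z\ind{t}(f)^2$ to get $Y(f)^2\le 3\coef^2\iota\ind{t}(f)^2+6\delta\ind{t}(f,f)^2+6Z\ind{t}(f)^2+3Z\ind{t}^2$. Bounding each term in conditional expectation: $\coef^2\iota^2\le\coef^2$; $\En_{t-1}[\delta\ind{t}(f,f)^2\mid\cdot]\le 5\En_{t-1}[\cE(f,f,\cdot)^2\mid\cdot]$ (\cref{lem:4-th-moment-controlled-by-the-2-nd-moment}); $\En_{t-1}[Z\ind{t}(f)^2\mid\cdot]\le 40\En_{t-1}[Z\ind{t}(f)\mid\cdot]$ (\cref{lem:z-square-controlled-by-z}); and $\En_{t-1}[Z\ind{t}^2\mid\cdot]\le \En_{t-1}[\En_{f'\sim p\ind{t}}(10\cE(f',f',\cdot)^2+80Z\ind{t}(f'))\mid\cdot]+3\coef^2$ (\cref{lem:log-partition-control}). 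Adding the first- and second-order bounds, the coefficient of $\En_{t-1}[\En_{f'\sim p\ind{t}}Z\ind{t}(f')\mid\cdot]$ is $-\eta+3\eta^2\cdot 80=-\eta+240\eta^2\le 0$ exactly when $\eta\le 1/240$, so that term drops; likewise the coefficient of $\En_{t-1}[\cE(f,f,\cdot)^2\mid\cdot]$ is $-\eta+6\eta^2\cdot 5\le 0$, so it drops too. What remains is $2\eta\coef$ from the $\iota$ pieces, $3\eta^2\coef^2+3\eta^2\cdot 3\coef^2=12\eta^2\coef^2$ from the $\coef^2$ pieces, at most $16\eta\,\En_{t-1}[Z\ind{t}(f)\mid\cdot]$ from $\eta Z\ind{t}(f)+6\eta^2\cdot 40\,Z\ind{t}(f)^2$ (using $240\eta^2\le\eta$), and at most $4\eta\,\En_{t-1}[\En_{f'\sim p\ind{t}}\cE(f',f',\cdot)^2\mid\cdot]$ from the remaining $\En_{f'}\cE^2$ pieces --- exactly the claimed inequality.

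\textbf{Main obstacle.} The delicate point is the bookkeeping around the normalizer: \cref{lem:log-partition-control}, applied to $Z\ind{t}^2$, reintroduces a term proportional to $\En_{f'\sim p\ind{t}}Z\ind{t}(f')$ that is absent from the right-hand side of the claim, so one must keep (rather than drop via \cref{lem:g-Qstar-positivity}) the negative copy $-\eta\,\En_{f'\sim p\ind{t}}Z\ind{t}(f')$ that Jensen produces in the first-order analysis, and the resulting cancellation is precisely what forces $\eta\le 1/240$. A secondary subtlety is choosing the grouping in the $(a+b+c)^2$ step so that the $\coef^2$ constant lands on $12$. Beyond this, the remaining checks --- that $p\ind{t}$ is $\histSet\ind{t-1}$-measurable and that $\cE_h(f',f',x_h\ind{t})$ is determined by $x_h\ind{t}$ and $f'$, so conditioning on $x_{1:H}\ind{t}$ commutes with $\En_{f'\sim p\ind{t}}$ --- are routine.
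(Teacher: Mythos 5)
Your proof is correct, but it takes a genuinely different route from the paper's. The paper first eliminates $Z\ind{t}$ and $Z\ind{t}(f)$ via Jensen (as you do for the first-order term), then writes the resulting upper bound as a product of six exponentials and invokes $a_1\cdots a_6\le\frac16\sum_i a_i^6$ to reduce to six separate moment-generating-function bounds, each handled by \cref{lem:exp-control}, \cref{lem:technical}, or \cref{lem:technical-2}; the constraint $\eta\le 1/240$ there emerges from term (f), where one needs $-6\eta+1440\eta^2\le 0$. You instead apply the quadratic expansion $e^{-x}\le 1-x+x^2$ once to the whole exponent (valid since $\abs{\eta Y(f)}\le 6\eta\le 1/40$), and then control $\En_{t-1}[Y(f)^2]$ directly via \cref{lem:4-th-moment-controlled-by-the-2-nd-moment}, \cref{lem:z-square-controlled-by-z}, and — unlike the paper's proof of this lemma — \cref{lem:log-partition-control} for $\En_{t-1}[Z\ind{t}^2]$ (no circularity: that lemma is established independently and earlier). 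Your observation that one must retain the negative $-\eta\,\En_{f'\sim p\ind{t}}Z\ind{t}(f')$ term produced by Jensen so that it cancels the $+240\eta^2\,\En_{f'\sim p\ind{t}}Z\ind{t}(f')$ reintroduced by \cref{lem:log-partition-control} is exactly the right delicate point, and it recovers the same threshold $\eta\le 1/240$ from a different cancellation. The arithmetic checks out and lands on the stated constants ($2\eta\coef$, $12\eta^2\coef^2$, $16\eta$, $4\eta$). Your version is arguably more transparent about where the $1/240$ comes from; the paper's version avoids needing an a priori bound on $\abs{Z\ind{t}}$ beyond what the MGF lemmas supply, but both ultimately rest on the same moment-control machinery.
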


\begin{proof}[\pfref{lem:distribution-shift}]
  \ngedit{Jensen's inequality gives that
    \begin{align*}
      Z\ind{t} &\leq \En_{f' \sim p\ind{t}} \left[ \beta \iota\ind{t}(f') + \delta\ind{t}(f',f') - Z\ind{t}(f') \right]\\
      Z\ind{t}(f) &\leq \E_{g \sim q\ind{t}(\cdot | f)}\left[ \delta\ind{t}(g,f) \right] = \delta\ind{t}(f).
      \end{align*}}
    We may then write
\begin{align*}
    &\En_{t-1}\brk*{   e^{-\eta\brk*{\coef\iota\ind{t}(f) + \delta\ind{t}(f,f) -Z\ind{t}(f) - Z\ind{t}  }}  -1 \mid{} x_{1:H}\ind{t} } \\
    &\leq \En_{t-1}\brk*{   e^{-\eta\coef\iota\ind{t}(f) -\eta\delta\ind{t}(f,f) + \eta \delta\ind{t}(f) +\eta  \En_{f'\sim p\ind{t}}[ \coef\iota\ind{t}(f') + \delta\ind{t}(f',f') - Z\ind{t}(f')  ]  } -1\mid{} x_{1:H}\ind{t} }   \\
    &\leq \frac{1}{6} \En_{t-1}\brk*{ \En_{f'\sim p\ind{t}} \brk*{ e^{-6\eta\coef\iota\ind{t}(f)} + e^{-6\eta\delta\ind{t}(f,f)}  + e^{6\eta \delta\ind{t}(f)}  + e^{6\eta\coef\iota\ind{t}(f')} + e^{6\eta \delta\ind{t}(f',f')} + e^{-6\eta Z\ind{t}(f')} -6 }\mid{} x_{1:H}\ind{t} },
\end{align*}
\ngedit{
where the second equality uses Jensen's inequality to pull the $\En_{f' \sim p\ind{t}}[\cdot]$ outside of the exponential and then the fact that $a_1 \cdots a_6 \leq \frac 16 \sum_{i=1}^6 a_i^6$ for real numbers $a_1, \ldots, a_6$.} 
We control the six terms on the right hand side separately as follows:

\begin{enumerate}[label=$\bullet$,leftmargin=5mm]
    \item Term (a): Using \pref{lem:technical-2} we get 
    \begin{align*}
      \En_{t-1}\brk*{ e^{-6\eta\coef\iota\ind{t}(f)} \mid{} x_{1:H}\ind{t} }  \leq 1 + 6\eta\coef + 36\eta^2\coef^2. 
  \end{align*}  
  \item Term (b): Using \pref{lem:exp-control} we get 
  \begin{align*}
    \En_{t-1}\brk*{ e^{-6\eta\delta\ind{t}(f,f)} \mid{} x_{1:H}\ind{t} }  \leq 1.
  \end{align*}  
  \item Term (c): 
  \begin{align*}
    \En_{t-1}\brk*{ e^{6\eta \delta\ind{t}(f)}\mid{} x_{1:H}\ind{t} } &= \En_{t-1}\brk*{ \exp (6\eta  \En_{g\sim{}q\ind{t}(\cdot\mid{}f)} \delta\ind{t}(g,f)  )\mid{} x_{1:H}\ind{t} } \\
    &\leq \En_{t-1}\brk*{ \En_{g\sim{}q\ind{t}(\cdot\mid{}f)} \exp (6\eta  \delta\ind{t}(g,f)  )\mid{} x_{1:H}\ind{t} } \\
    &\leq \En_{t-1}\brk*{ \En_{g\sim{}q\ind{t}(\cdot\mid{}f)} \exp\prn*{6\eta(1+12\eta)\cE(g,f,x_{1:H}\ind{t})^2 }\mid{} x_{1:H}\ind{t} } \tag{\pref{lem:exp-control}} \\
    &\leq  1 + 16\eta\En_{t-1}\brk*{ \En_{g\sim{}q\ind{t}(\cdot\mid{}f)} \cE(g,f,x_{1:H}\ind{t})^2 \mid{} x_{1:H}\ind{t} }\tag{\pref{lem:technical}}\\
    &\leq 1 +  64\eta \En_{t-1}\brk*{ Z\ind{t}(f)\mid{} x_{1:H}\ind{t} }. \tag{\pref{lem:g-Qstar-positivity}}
\end{align*}
\item Term (d): Using \pref{lem:technical-2}, we have
\begin{align*}
  \En_{t-1}\brk*{  \En_{f'\sim p\ind{t}} e^{6\eta\coef\iota\ind{t}(f')}\mid{} x_{1:H}\ind{t} }&\leq 1 + 6 \eta\coef + 36\eta^2\coef^2. 
\end{align*}
\item Term (e): 
\begin{align*}
  \hspace{1in}&\hspace{-1in}\En_{t-1}\brk*{  \En_{f'\sim p\ind{t}} e^{6\eta\delta\ind{t}(f',f')}\mid{} x_{1:H}\ind{t} }   \\
    &\leq  \En_{t-1}\brk*{ \En_{f'\sim p\ind{t}} \exp\prn*{6\eta(1+12\eta ) \cE(f',f',x_{1:H}\ind{t})^2 }\mid{} x_{1:H}\ind{t} }  \tag{\pref{lem:exp-control}} \\
    &\leq 1+ 16 \eta   \En_{t-1}\brk*{ \En_{f'\sim p\ind{t}} \cE(f',f',x_{1:H}\ind{t})^2\mid{} x_{1:H}\ind{t} }. \tag{\pref{lem:technical}}
\end{align*} 
\item Term (f): 
\begin{align*}
  \hspace{1in}&\hspace{-1in}\En_{t-1}\brk*{\En_{f'\sim p\ind{t}}   e^{-6\eta Z\ind{t}(f')} \mid{} x_{1:H}\ind{t} }  \\
    &\leq  \En_{t-1}\brk*{\En_{f'\sim p\ind{t}}   1 - 6\eta Z\ind{t}(f') + 36\eta^2(Z\ind{t}(f'))^2 \mid{} x_{1:H}\ind{t} }\tag{\pref{lem:technical-2}} \\
    &\leq   1 +  (-6\eta + 1440\eta^2) \En_{t-1}\brk*{ \En_{f'\sim p\ind{t}}   Z\ind{t}(f') \mid{} x_{1:H}\ind{t} }\tag{\pref{lem:z-square-controlled-by-z}}\\
    &\leq 1. \tag{\pref{lem:g-Qstar-positivity}}
\end{align*} 
\end{enumerate}

\end{proof}

\begin{proof}[\pfref{lem:term-I}]
  Combining the preceding lemmas, we have with probability at least $1-\delta$ \ngedit{over the draw of $S\ind{T}$}, \jqedit{the event $A\ind{t}$ defined in \cref{lem:good-event} holds for all $t\leq T$ simultaneously, and thus}
    \begin{align*}
        &\En_{t-1}\brk*{\En_{f\sim p\ind{t}} \prn*{   e^{-\eta\brk*{\coef\iota\ind{t}(f)+\delta\ind{t}(f,f) -Z\ind{t}(f) - Z\ind{t}  }}  -1 } \log \frac{1}{q\ind{t}(\cT f\mid{}f)}\mid{} x_{1:H}\ind{t} }\\
        &\leq  \En_{f\sim p\ind{t}} \brk*{\prn*{ 16\eta  \En_{t-1}\brk*{ Z\ind{t}(f) +\En_{f'\sim p\ind{t}} \cE(f',f',x_{1:H}\ind{t})^2\mid{} x_{1:H}\ind{t} } + 12\eta^2\coef^2+2\eta\coef }   \log \frac{1}{q\ind{t}(\cT f\mid{}f)}}\\
        &\leq 16\eta \En_{t-1}\brk*{\En_{f\sim p\ind{t}} \brk*{ \prn*{Z\ind{t}(f)+ \cE(f,f,x_{1:H}\ind{t})^2 }\cdot 2\log (\abs*{\Gclass}T/\delta)  }\mid{} x_{1:H}\ind{t} } \\
        &\hspace{1in} + (24\eta^2\coef^2 + 4\eta\coef)\log (\abs*{\Gclass}T/\delta), 
    \end{align*}
    where the first inequality is derived from
    \pref{lem:distribution-shift}, and the second inequality follows from
    \pref{lem:g-Qstar-positivity} and \pref{lem:good-event}.
\end{proof}

\subsubsection{Proof of \cref{lem:term-II} (Bound on Term II)}
\label{sec:termii}

\begin{proof}[\pfref{lem:term-II}]
  We compute
\begin{align*}
  \hspace{1in}&\hspace{-1in}\En_{t-1}\brk*{ \En_{f\sim p\ind{t}} \brk*{\prn*{   e^{-\eta\brk*{\coef\iota\ind{t}(f) + \delta\ind{t}(f,f) -Z\ind{t}(f) - Z\ind{t}  }}  -1 }   Z\ind{t}(f)}\mid{} x_{1:H}\ind{t} }\\
    &\le \En_{t-1}\brk*{ \En_{f\sim p\ind{t}} \brk*{\prn*{   e^{\eta\prn*{\coef|\iota\ind{t}(f)| + |\delta\ind{t}(f,f)| + |Z\ind{t}(f)| + |Z\ind{t}|  }    } - 1}  \abs*{ Z\ind{t}(f)}}\mid{} x_{1:H}\ind{t} } \\
    &\le 2 \eta \En_{t-1}\brk*{ \En_{f\sim p\ind{t}} \brk*{ \brk*{\coef|\iota\ind{t}(f)| + |\delta\ind{t}(f,f)| + |Z\ind{t}(f)| + |Z\ind{t}|  } |Z\ind{t}(f)|    }\mid{} x_{1:H}\ind{t} } \\
    &\le  \eta \En_{t-1}\brk*{ \En_{f\sim p\ind{t}} \brk*{ |\delta\ind{t}(f,f)|^2 + 5|Z\ind{t}(f)|^2 + |Z\ind{t}|^2  } \mid{} x_{1:H}\ind{t} } + \eta \coef^2\\
    &\le   \eta \En_{t-1}\brk*{ \En_{f\sim p\ind{t}} \brk*{15 \cE(f,f,x_{1:H}\ind{t})^2 + 280 Z\ind{t}(f)} \mid{} x_{1:H}\ind{t} } + 4\eta\coef^2, 
\end{align*}
where the second inequality applies \pref{lem:technical} \ngedit{(together with the fact that $\eta \in (0,1/240)$)}, \ngedit{the third inequality uses Young's inequality and the fact that $|\iota\ind{t}(f)| \leq 1$ for all $f$, and the final inequality applies \pref{lem:z-square-controlled-by-z} (to bound $\E_{t-1}[|Z\ind{t}(f)|^2]$),  \pref{lem:log-partition-control} (to bound $\E_{t-1}[|Z\ind{t}|^2]$), and \cref{lem:4-th-moment-controlled-by-the-2-nd-moment} (to bound $\E_{t-1}[\delta\ind{t}(f,f) \mid x_{1:H}\ind{t}]$)
}.
\end{proof}

\section{Proofs from \cref{sec:main} and \cref{sec:discussion}}
\label{app:main}

\subsection{Proofs from \cref{sec:main}}
\label{app:main_proofs}

\begin{proof}[Proof of \pref{thm:main} and \pref{thm:main_batched}]
We prove \pref{thm:main_batched}; \pref{thm:main} is the special case
of this result in which $n=1$. Observe that for
\pref{alg:main_batched}, we have
\begin{align*}
  \RegDM
  =
  n\cdot\sum_{k=1}^{K}\En_{\pi\ind{k}\sim{}p\ind{k}}\brk*{\fmstar(\pimstar)
  - \fmstar(\pi\ind{k})}.
\end{align*}
We can rewrite this sum as
\begin{align*}
  &\sum_{k=1}^{K}\En_{\pi\ind{k}\sim{}p\ind{k}}\brk*{\fmstar(\pimstar)
  - \fmstar(\pi\ind{k})} \\
  & =   \sum_{k=1}^{K}\En_{\pi\ind{k}\sim{}p\ind{k}}\En_{\suffhat\ind{k}\sim\mu\ind{k}}\brk*{f^{\suffhat\ind{k}}(\pi_{\suffhat\ind{k}})
    - \fmstar(\pi\ind{k})} + \En_{\suffhat\ind{k}\sim\mu\ind{k}}
    \brk*{(\fmstar(\pimstar) -
    f^{\suffhat\ind{k}}(\pi_{\suffhat\ind{k}}))}\\
    & =   \sum_{k=1}^{K}\En_{\pi\ind{k}\sim{}p\ind{k}}\En_{\suffhat\ind{k}\sim\mu\ind{k}}\brk*{f^{\suffhat\ind{k}}(\pi_{\suffhat\ind{k}})
      - \fmstar(\pi\ind{k})
      -\gamma\cdot\Dgenpi[\pi\ind{k}]{\suffhat\ind{k}}{\Mstar}
      } \\
  &~~~~~~~~~~~~~+ \En_{\suffhat\ind{k}\sim\mu\ind{k}}
    \brk*{\gamma\cdot\Dgenpi[\pi\ind{k}]{\suffhat\ind{k}}{\Mstar} +
    (\fmstar(\pimstar) -
    f^{\suffhat\ind{k}}(\pi_{\suffhat\ind{k}}))}\\
  & =   \sum_{k=1}^{K}\En_{\pi\ind{k}\sim{}p\ind{k}}\En_{\suffhat\ind{k}\sim\mu\ind{k}}\brk*{f^{\suffhat\ind{k}}(\pi_{\suffhat\ind{k}})
      - \fmstar(\pi\ind{k})
      -\gamma\cdot\Dgenpi[\pi\ind{k}]{\suffhat\ind{k}}{\Mstar}
      }  + \gamma\cdot{}\EstOptD.
\end{align*}
For each step $k$, by the choice of $p\ind{k}$, we have
\begin{align*}
  &\En_{\pi\ind{k}\sim{}p\ind{k}}\En_{\suffhat\ind{k}\sim\mu\ind{k}}\brk*{f^{\suffhat\ind{k}}(\pi_{\suffhat\ind{k}})
      - \fmstar(\pi\ind{k})
    -\gamma\cdot\Dgenpi[\pi\ind{k}]{\suffhat\ind{k}}{\Mstar}} \\
  &\leq\sup_{M\in\cM}\En_{\pi\ind{k}\sim{}p\ind{k}}\En_{\suffhat\ind{k}\sim\mu\ind{k}}\brk*{f^{\suffhat\ind{k}}(\pi_{\suffhat\ind{k}})
      - \fm(\pi\ind{k})
    -\gamma\cdot\Dgenpi[\pi\ind{k}]{\suffhat\ind{k}}{M}}\\
&=\inf_{p\in\Delta(\Pi)}\sup_{M\in\cM}\En_{\pi\sim{}p}\En_{\suffhat\sim\mu\ind{k}}\brk*{f^{\suffhat}(\pi_{\suffhat})
      - \fm(\pi)
                                                            -\gamma\cdot\Dgenpi[\pi]{\suffhat}{M}}\\
  &= \ocompD(\cM,\mu\ind{k})\leq\ocompD(\cM).
\end{align*}
Finally, we use that probability at least $1-\delta$, $\EstOptD\leq{}\EstOptDFullKn$.

\end{proof}

We next prove \pref{prop:estimation_bilinear}, which gives a bound on
\begin{align}
\EstOptBi = \sum_{k=1}^{K}\En_{\act\ind{k}\sim{}p\ind{k}}\En_{\Mhat\ind{k}\sim\mu\ind{k}}\brk*{\sum_{h=1}^{H}\prn*{\Enmpi{\Mstar}{\pi}\brk*{
  \lest(\wh{Q}\ind{k};z_h)}}^2
  +
  \gamma^{-1}(\fmstar(\pimstar)-\fmhatt(\pi\subs{\Mhat\ind{k}}))}\nonumber.
\end{align}

\begin{algorithm}[htp]
    \setstretch{1.3}
    \begin{algorithmic}[1]
      \State \textbf{parameters}:
      \Statex[1] $\bullet$ Number of rounds $T$, Batch size $n$
      \Statex[1] $\bullet$ Learning rate $\eta>0$.
         \Statex[1] $\bullet$ Discrepancy function $\lest(Q;z_h\ind{k,l})$
         Exploration parameter $\gamma>0$.
         \State Let $K=T/n$ and $B\ind{0}\ldef\emptyset$.
  \For{$k=1, 2, \cdots, K$}
  \State Form the randomized
estimator $\mu\ind{k}$ via
$\mu\ind{k}(Q)
  \propto\exp\prn*{
  -\eta\sum_{i<k}\ls\ind{i}(Q)
}$, where
\[
  \ls\ind{i}(Q) \ldef \sum_{h=1}^{H}\prn*{\frac{1}{n}\sum_{l=1}^{n}\lest(Q;z_h\ind{i,l})}^2
  - \frac{1}{8\gamma}\cdot{}\fq(\piq)
  \]
\State Receive batch of samples $B\ind{k}=\crl*{(\pi\ind{k,l},r\ind{k,l},o\ind{k,l})}_{l=1}^{n}$ where
$\pi\ind{k,l}\sim{}p\ind{k}$ and
$(r\ind{k,l},o\ind{k,l})\sim\Mstar(\pi\ind{k,l})$.
\Statex[1]\hfill\algcommentlight{$p\ind{k}$ is the decision
  distribution produced by \etdopt (cf. \cref{alg:main_batched}).}
\State Let
$z_h\ind{k,l}=(s_h\ind{k,l},a_h\ind{k,l},r_h\ind{k,l},s_{h+1}\ind{k,l})$,
where we recall $o\ind{k,l}=(s_1\ind{k,l},a_1\ind{k,1},r_1\ind{k,l}),\ldots, (s_H\ind{k,l},a_H\ind{k,1},r_H\ind{k,l})$.
\EndFor
\end{algorithmic}
\caption{Optimistic Estimation for Bilinear Classes}
\label{alg:bilinear_est}
\end{algorithm}
\begin{proof}[\pfref{prop:estimation_bilinear}]%
  \neurips{Throughout this proof, we use the batched estimation notation from
  \cref{sec:batched}. }Let $n$ be fixed, and let $K\ldef{}T/n$ be the number of epochs. Recall that for each step $k\in\brk{K}$, the estimation
oracle is given a batch of examples $B\ind{k}=\crl*{(\pi\ind{k,l},r\ind{k,l},o\ind{k,l})}_{l=1}^{n}$ where
$\pi\ind{k,l}\sim{}p\ind{k}$ and
$(r\ind{k,l},o\ind{k,l})\sim\Mstar(\pi\ind{k,l})$. Each observation
(trajectory) takes the form
$o\ind{k,l}=(s_1\ind{k,l},a_1\ind{k,l},r_1\ind{k,l}),\ldots, 
(s_H\ind{k,l},a_H\ind{k,l},r_H\ind{k,l})$. Throughout the proof, we
use the notation
\[
  \cE_h\ind{k}(Q) = \Enmpi{\Mstar}{\pi\ind{k}}\brk*{\lest(Q;z_h)},
\]
and abbreviate $\Qstar=\Qmstarstar$.
\paragraph{Estimation algorithm}
Define
\[
\cEhat_h\ind{k}(Q) = \frac{1}{n}\sum_{l=1}^{n}\lest(Q;z_h\ind{k,l}),
\]
where
$z_h\ind{k,l}\ldef{}(s_h\ind{k,l},a_h\ind{k,l},r_h\ind{k,l},s_{h+1}\ind{k,l})$. Let
$\eta>0$ and $\alpha>0$ be parameters whose values will be chosen at
the end of the proof. Defining
\[
\ls\ind{k}(Q) \ldef \sum_{h=1}^{H}(\cEhat_h\ind{k}(Q))^2 - \alpha\fq(\piq),
\]
\cref{alg:bilinear_est} chooses
\begin{align*}
  \mu\ind{k}(Q)
  \propto\exp\prn*{
  -\eta\sum_{i<k}\ls\ind{i}(Q)
  }
\end{align*}
as the randomized estimator for epoch $k$.
\paragraph{Estimation error bound}
Let us abbreviate $L=\Lbi(\cQ;\Mstar)$. Observe that for all $k$,
$\abs{\ls\ind{k}(Q)}\leq{}R\ldef{}HL^2 + \alpha$ almost
surely. Hence, \pref{lem:exp_weights} implies that as long as
$\eta\leq{}1/2R$, we have
\begin{align}
  \sum_{k=1}^{K}\En_{Q\sim\mu\ind{k}}\brk*{\ls\ind{k}(Q)}
  -   \sum_{k=1}^{K}\ls\ind{k}(\Qstar)
  \leq{}
  4\eta\sum_{k=1}^{K}\En_{Q\sim\mu\ind{k}}\brk*{(\ls\ind{k}(Q))^2}
  + \frac{\log\abs{\cQ}}{\eta}
  \label{eq:bilinear_est1}
\end{align}
For each $k\in\brk{K}$, we have that for all $Q\in\cQ$,
\begin{align*}
(\ls\ind{k}(Q))^2
  \leq{} 2\prn*{\sum_{h=1}^{H}(\cEhat_h\ind{k}(Q))^2}^2 + 2\alpha^2
  \leq{} 2HL^2\sum_{h=1}^{H}(\cEhat_h\ind{k}(Q))^2 + 2\alpha^2.
\end{align*}
As a result, \pref{eq:bilinear_est1} implies that
\begin{align}
&  \sum_{k=1}^{K}\sum_{h=1}^{H}\En_{Q\sim\mu\ind{k}}\brk*{\prn*{\cEhat_h\ind{k}(Q)}^2}
  +
                \alpha\sum_{k=1}^{K}\En_{Q\sim\mu\ind{k}}\brk*{(\fmstar(\pimstar)-\fq(\piq)} \\
  &\leq{}
  \sum_{k=1}^{K}\sum_{h=1}^{H}\prn*{\cEhat_h\ind{k}(\Qstar)}^2
+   8\eta{}HL^2\sum_{k=1}^{K}\sum_{h=1}^{H}\En_{Q\sim\mu\ind{k}}\brk*{\prn*{\cEhat_h\ind{k}(Q)}^2}
  + 8\eta{}\alpha^2K
  + \frac{\log\abs{\cQ}}{\eta}.
  \label{eq:bilinear_est2}
\end{align}
Whenever $\eta\leq{}\frac{1}{16HL^2}$, rearranging gives
\begin{align}
&  \frac{1}{2}\sum_{k=1}^{K}\sum_{h=1}^{H}\En_{Q\sim\mu\ind{k}}\brk*{\prn*{\cEhat_h\ind{k}(Q)}^2}
  +
                \alpha\sum_{k=1}^{K}\En_{Q\sim\mu\ind{k}}\brk*{(\fmstar(\pimstar)-\fq(\piq)} \\
  &\leq{}
  \sum_{k=1}^{K}\sum_{h=1}^{H}\prn*{\cEhat_h\ind{k}(\Qstar)}^2
  + 8\eta{}\alpha^2K
  + \frac{\log\abs{\cQ}}{\eta}.
  \label{eq:bilinear_est3}
\end{align}
We now appeal to the following lemma.
\begin{lemma}
  \label{lem:bilinear_conc}
  With probability at least $1-\delta$, it holds that for all
  $k\in\brk{K}$, $h\in\brk{H}$, and $Q\in\cQ$,
  \begin{align}
    \label{eq:bilinear_conc}
    \frac{1}{2}(\cE_h\ind{k}(Q))^2 - \vepsconcs  \leq  (\cEhat_h\ind{k}(Q))^2 \leq{} 2 (\cE_h\ind{k}(Q))^2 + 2\vepsconcs,
  \end{align}
  where $\vepsconc\ldef{}L\sqrt{\frac{2\log(\abs{\cQ}KH/\delta)}{n}}$.
\end{lemma}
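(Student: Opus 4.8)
The plan is to prove \cref{lem:bilinear_conc} via a uniform concentration argument over the finite class $\cQ$, combined with an elementary algebraic inequality to pass from concentration of $\cEhat_h\ind{k}(Q)$ around $\cE_h\ind{k}(Q)$ to the two-sided bound on squares stated in \cref{eq:bilinear_conc}. First I would fix $k\in\brk{K}$, $h\in\brk{H}$, and $Q\in\cQ$, and recall that by definition $\cEhat_h\ind{k}(Q) = \frac{1}{n}\sum_{l=1}^n \lest(Q;z_h\ind{k,l})$, while $\cE_h\ind{k}(Q) = \Enmpi{\Mstar}{\pi\ind{k}}\brk{\lest(Q;z_h)}$. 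Conditioned on $p\ind{k}$ (equivalently, on the history up through epoch $k-1$), the samples $z_h\ind{k,1},\ldots,z_h\ind{k,n}$ are i.i.d.\ draws obtained by sampling $\pi\ind{k,l}\sim p\ind{k}$ and running $\pi\ind{k,l}$ in $\Mstar$, so each $\lest(Q;z_h\ind{k,l})$ is an i.i.d.\ random variable with mean $\cE_h\ind{k}(Q)$ and range bounded by $\Lbi(\cQ;\Mstar)=L$ in absolute value (by definition of $L$ as an almost-sure upper bound on $\abs{\lest(Q;z_h)}$). Hoeffding's inequality then gives, for each fixed $(k,h,Q)$, that $\abs{\cEhat_h\ind{k}(Q) - \cE_h\ind{k}(Q)} \leq L\sqrt{\frac{\log(2/\delta')}{2n}}$ with probability at least $1-\delta'$. (One technical point: since $p\ind{k}$ is itself random, this should be stated as a conditional concentration bound given the sigma-algebra generated by the first $k-1$ batches, and then the unconditional statement follows by taking expectations; the conditional independence structure is exactly as in the RL setup described elsewhere in the paper.)

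Next I would take a union bound over all $k\in\brk{K}$, $h\in\brk{H}$, and $Q\in\cQ$, which is a set of size $KH\abs{\cQ}$. Choosing $\delta' = \delta/(KH\abs{\cQ})$ and noting that $\log(2KH\abs{\cQ}/\delta) \leq 2\log(\abs{\cQ}KH/\delta)$ (absorbing constants, or just being slightly generous), we get that with probability at least $1-\delta$, simultaneously for all $(k,h,Q)$,
\begin{align*}
  \abs{\cEhat_h\ind{k}(Q) - \cE_h\ind{k}(Q)} \leq L\sqrt{\frac{2\log(\abs{\cQ}KH/\delta)}{n}} = \vepsconc.
\end{align*}
This is exactly the definition of $\vepsconc$ given in the lemma statement. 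On this event I would then convert the additive error on $\cEhat$ into the multiplicative-plus-additive bound on squares. Writing $a = \cE_h\ind{k}(Q)$ and $\hat a = \cEhat_h\ind{k}(Q)$ with $\abs{\hat a - a}\leq \vepsconc$, the upper bound follows from $\hat a^2 \leq (\abs{a}+\vepsconc)^2 \leq 2a^2 + 2\vepsconcs$ (using $(x+y)^2\leq 2x^2+2y^2$), and the lower bound follows from $a^2 = (\hat a + (a - \hat a))^2 \leq 2\hat a^2 + 2\vepsconcs$, i.e.\ $\hat a^2 \geq \frac12 a^2 - \vepsconcs$. These are precisely the two inequalities in \cref{eq:bilinear_conc}.

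The argument is essentially routine, so there is no deep obstacle; the only place requiring care is the conditional-independence bookkeeping: one must be careful that within a batch the samples $\{z_h\ind{k,l}\}_{l=1}^n$ are conditionally i.i.d.\ given the past (which holds because $p\ind{k}$ is fixed within epoch $k$ and each $\pi\ind{k,l}$ is drawn independently), and that the boundedness constant $L=\Lbi(\cQ;\Mstar)$ genuinely dominates $\abs{\lest(Q;z_h)}$ almost surely under $\Mstar$ for every $Q\in\cQ$ — this is guaranteed by \cref{def:bilinear}. I would also note that the high-probability event is the same event used in the rest of the proof of \cref{prop:estimation_bilinear}, so it should be introduced once and conditioned on throughout. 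If a sub-Gaussian/Azuma refinement were desired (e.g.\ to avoid the boundedness assumption), one could instead invoke a martingale concentration inequality, but since $L$ is assumed to be an almost-sure bound, plain Hoeffding plus a union bound suffices.
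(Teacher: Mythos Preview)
Your proposal is correct and follows essentially the same approach as the paper: Hoeffding's inequality for each fixed $(k,h,Q)$, a union bound over the finite set $\brk{K}\times\brk{H}\times\cQ$, and then the elementary inequality $(x+y)^2\leq 2x^2+2y^2$ (which the paper labels ``AM-GM'') to pass to the two-sided bound on squares. Your treatment of the conditional independence structure is more explicit than the paper's, but the argument is otherwise identical.
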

Going forward, we condition on the event in
\pref{lem:bilinear_conc}. Applying the inequality
\pref{eq:bilinear_conc} within \pref{eq:bilinear_est3}, we have
\begin{align}
&  \frac{1}{4}\sum_{k=1}^{K}\sum_{h=1}^{H}\En_{Q\sim\mu\ind{k}}\brk*{\prn*{\cE_h\ind{k}(Q)}^2}
  +
                \alpha{}\sum_{k=1}^{K}\En_{Q\sim\mu\ind{k}}\brk*{(\fmstar(\pimstar)-\fq(\piq)} \\
  &\leq{}
    2\sum_{k=1}^{K}\sum_{h=1}^{H}\prn*{\cE_h\ind{k}(\Qstar)}^2
  + 8\eta{}\alpha^2K
    + \frac{\log\abs{\cQ}}{\eta} + 3HK\vepsconcs\notag\\
  &=8\eta{}\alpha^2K
    + \frac{\log\abs{\cQ}}{\eta} + 3HK\vepsconcs,
  \label{eq:bilinear_est4}
\end{align}
where the last equality uses that $\cE_h\ind{k}(\Qstar)=0$ for all
$h\in\brk{H}$, and $k\in\brk{K}$, which is a consequence of
\pref{def:bilinear}. Next, we recall that
\[
  \sum_{k=1}^{K}\sum_{h=1}^{H}\En_{Q\sim\mu\ind{k}}\brk*{\prn*{\cE_h\ind{k}(Q)}^2}
  = \sum_{k=1}^{K}\En_{Q\sim\mu\ind{k}}\brk*{\Dbipi[\pi\ind{k}]{Q}{\Mstar}}.
\]
A standard application of Freedman's inequality (c.f Lemma A.3 in
\citet{foster2021statistical}) implies that with probability at least
$1-\delta$,
\begin{align*}
  \sum_{k=1}^{K}\En_{Q\sim\mu\ind{k}}\brk*{\Dbipi[\pi\ind{k}]{Q}{\Mstar}}
  \geq{}
  \frac{1}{2}\sum_{k=1}^{K}\En_{\pi\sim{}p\ind{k}}\En_{Q\sim\mu\ind{k}}\brk*{\Dbipi[\pi\ind{k}]{Q}{\Mstar}}
  - \bigoh(HL^2\log(H/\delta)).
\end{align*}
Putting everything together, we have that %
\begin{align*}
  \frac{1}{8}\sum_{k=1}^{K}\En_{\pi\ind{k}\sim{}p\ind{k}}\En_{Q\sim\mu\ind{k}}\brk*{\Dbipi[\pi\ind{k}]{Q}{\Mstar}
  + 8\alpha (\fmstar(\pimstar)-\fq(\piq)}
  \approxleq{} \eta{}\alpha^2K
    + \frac{\log\abs{\cQ}}{\eta} + HK\vepsconcs + HL^2\log(H/\delta).
\end{align*}
Choosing $\alpha=\frac{1}{8\gamma}$, this gives
\begin{align*}
\EstOptBi &\approxleq  \eta{}\alpha^2K
  + \frac{\log\abs{\cQ}}{\eta} + HK\vepsconcs + HL^2\log(H/\delta) \\
          &\approxleq{}
            \eta{}\alpha^2K
            + \frac{\log\abs{\cQ}}{\eta} + HL^2\log(\abs{\cQ}KH/\delta)\prn*{1+\frac{K}{n}}
\end{align*}
We choose
$\eta=\sqrt{\frac{\log\abs{\cQ}}{\alpha^2K}}\wedge{}\frac{1}{16R}$,
which satisfies the constraints described earlier in the proof, and
gives
\begin{align*}
  \EstOptBi           &\approxleq{}
                        \frac{\sqrt{K\log\abs{\cQ}}}{\gamma}
                        + R\log\abs{\cQ}
                        + 
                        HL^2\log(\abs{\cQ}KH/\delta)\prn*{1+\frac{K}{n}}
  \\
   &\approxleq{}
     \frac{\sqrt{K\log\abs{\cQ}}}{\gamma}
     + 
                        HL^2\log(\abs{\cQ}KH/\delta)\prn*{1+\frac{1}{\gamma}+\frac{K}{n}}.
\end{align*}
  
\end{proof}

\begin{proof}[\pfref{lem:bilinear_conc}]%
  For any fixed $k\in\brk{K}$ and $Q\in\cQ$, Hoeffding's inequality
  implies that with probability at least $1-\delta$,
  \begin{align*}
    \abs*{\cE_h\ind{k}(Q)-\cEhat_h\ind{k}(Q)}
    \leq{} L\sqrt{\frac{2\log(1/\delta)}{n}}.
  \end{align*}
  By a standard union bound it follows that with probability at least
  $1-\delta$, for all $k\in\brk{K}$,
  $h\in\brk{H}$, and $Q\in\cQ$ simultaneously,
    \begin{align*}
    \abs*{\cE_h\ind{k}(Q)-\cEhat_h\ind{k}(Q)}
    \leq{} L\sqrt{\frac{2\log(\abs{\cQ}KH/\delta)}{n}} \rdef \vepsconc.
    \end{align*}
    Whenever this event occurs, the AM-GM inequality implies that
    \[
      (\cE_h\ind{k}(Q))^2 \leq{} 2 (\cEhat_h\ind{k}(Q))^2 + 2\vepsconcs
    \]
    and likewise $      (\cEhat_h\ind{k}(Q))^2 \leq{} 2 (\cE_h\ind{k}(Q))^2 + 2\vepsconcs$.
\end{proof}

\begin{proof}[\pfref{prop:dec_bilinear}]
  Let $\mu\in\Delta(\cQ)$ be fixed. Fix $\alpha\in(0,1)$, and let
  $\pialphaq$ be the randomized policy that---for each
  $h$---independently plays $\pi_{Q,h}$ with probability
$1-\alpha/H$ and $\piest_{Q,h}$ with probability $\alpha/H$. Let
$p\in\Delta(\Pi)$ be the distribution induced by sampling $Q\in\mu$
and playing $\pialphaq$. Translated to our notation, \citet{foster2021statistical} (cf. Proof
  of Theorem 7.1, Eq. (152)) shows that for all MDPs $M$, this
  strategy guarantees that for all $\eta>0$,
  \begin{align*}
    \En_{\pi\sim{}p}\En_{Q\sim\mu}\brk*{
    \fq(\piq) - \fm(\pi)
    }
    &\leq{} \alpha + 
    \frac{H\cdot{}\dimbi(\cQ;M)}{2\eta}
  + \frac{\eta{}}{2}\sum_{h=1}^{H}\En_{Q,Q'\sim\mu}\brk[\Big]{\tri*{X_h(Q;M),W_h(Q';M)
      }^{2}}\\
        &= \alpha + 
    \frac{H\cdot{}\dimbi(\cQ;M)}{2\eta}
          + \frac{\eta{}}{2}\sum_{h=1}^{H}\En_{Q,Q'\sim\mu}\brk*{
\prn*{\Enmpi{M}{\pi^{\vphantom{\mathrm{est}}}_{Q}\circ_{h}\piest_{Q}}\brk*{
        \lest(Q';z_h)}
        }^2
          }.
  \end{align*}
  
  In the on-policy case in which $\piestq=\piq$, it suffices to set
  $\alpha=0$, which gives
  \begin{align*}
\sum_{h=1}^{H}\En_{Q,Q'\sim\mu}\brk*{
\prn*{\Enmpi{M}{\pi^{\vphantom{\mathrm{est}}}_{Q}\circ_{h}\piest_{Q}}\brk*{
        \lest(Q';z_h)}
        }^2
    } 
    & =
      \sum_{h=1}^{H}\En_{Q,Q'\sim\mu}\brk*{
\prn*{\Enmpi{M}{\piq}\brk*{
        \lest(Q';z_h)}
        }^2
    }\\
    &= \sum_{h=1}^{H}\En_{\pi\sim{}p}\En_{Q'\sim\mu}\brk*{
\prn*{\Enmpi{M}{\pi}\brk*{
        \lest(Q';z_h)}
        }^2
      }
      = \En_{\pi\sim{}p}\En_{Q\sim\mu}\brk*{\Dbipi{Q}{M}},
  \end{align*}
  where the last equality relabels $Q'\gets{}Q$. Setting $\eta=2\gamma$ yields the result.

    In the general case where $\piestq\neq{}\piq$, we have that for
    all $Q'\in\cQ$,\footnote{Note that this result uses that the
      quantity $\Enmpi{M}{\pi}\brk*{
        \lest(Q';z_h)}$ only depends on the policy $\pi$ through $a_1,\ldots,a_h$.}
    \begin{align*}
      \En_{\pi\sim{}p}\brk*{
\prn*{\Enmpi{M}{\pi}\brk*{
        \lest(Q';z_h)}
        }^2}
      \geq{}\frac{\alpha}{H}(1-\alpha/H)^{H-1}
      \En_{Q\sim\mu}\brk*{
\prn*{\Enmpi{M}{\pi^{\vphantom{\mathrm{est}}}_{Q}\circ_{h}\piest_{Q}}\brk*{
        \lest(Q';z_h)}
        }^2}.
    \end{align*}
    We have
    $\frac{\alpha}{H}(1-\alpha/H)^{H-1}\geq{}\frac{\alpha}{2H}$
    whenever $\alpha\leq{}1/2$, which gives
      \begin{align*}
    \En_{\pi\sim{}p}\En_{Q\sim\mu}\brk*{
    \fq(\piq) - \fm(\pi)
    }
        &\leq{} \alpha + 
    \frac{H\cdot{}\dimbi(\cQ;M)}{2\eta}
          +
          \frac{\eta{}H}{\alpha}\cdot{}\En_{\pi\sim{}p}\En_{Q\sim\mu}\brk*{\Dbipi{Q}{M}}\\
        &= \alpha + 
    \frac{H^2\cdot{}\dimbi(\cQ;M)}{2\gamma\alpha}
          + \gamma\cdot{}\En_{\pi\sim{}p}\En_{Q\sim\mu}\brk*{\Dbipi{Q}{M}},
      \end{align*}
      where the last line chooses $\eta=\gamma\alpha/H$. To conclude,
      we set $\alpha = \sqrt{H^2\dimbi(\cQ;\cM)/4\gamma}$, which is
      admissible whenever $\gamma\geq{}H^2\dimbi(\cQ;\cM)$.

    \end{proof}

    \begin{proof}[\pfref{cor:bilinear}]
 In both of the cases in the theorem statement (on-policy and off-policy), whenever $\gamma\geq{}1$ and $n\leq\sqrt{T}$ (to ensure that $K/n \geq 1$), combining \pref{thm:main_batched} and  \pref{prop:estimation_bilinear} gives %
  \begin{align*}
    \RegDM
    \approxleq{} \ocompbi(\cM)\cdot{}T 
    + \gamma{}HL^2\log(\abs{\cQ}KH/\delta)\cdot\frac{T}{n} + \sqrt{nT\log\abs{\cQ}}.
  \end{align*}
  We choose $n=\sqrt{T}$, which
  is admissible whenever $T$ is sufficiently large, and gives
    \begin{align*}
    \RegDM
    \approxleq{} \ocompbi(\cM)\cdot{}T + \gamma HL^2\log(\abs{\cQ}KH/\delta) \sqrt{T} + \sqrt{\log\abs{\cQ}}\cdot T^{3/4}.
    \end{align*}
    In the on-policy case, we have, from \pref{prop:dec_bilinear}, that 
    \begin{align*}
          \RegDM
      \approxleq{} \frac{HdT}{\gamma} + \gamma HL^2\log(\abs{\cQ}KH/\delta) \sqrt{T} + \sqrt{\log\abs{\cQ}}\cdot T^{3/4},
    \end{align*}
    and in the off-policy case, we have
        \begin{align*}
          \RegDM
      \approxleq{} \sqrt{\frac{H^2d}{\gamma}}{T} + \gamma HL^2\log(\abs{\cQ}KH/\delta) \sqrt{T} + \sqrt{\log\abs{\cQ}}\cdot T^{3/4}.
        \end{align*}
        Choosing $\gamma$ to balance yields the result.
      \end{proof}

      \arxiv{    
\subsubsection{Proofs from \cref{sec:complete}}

\cref{prop:ts3_rl} is an application of more general results given in
\cref{app:completeness}, which analyze a generalization of
\cref{alg:ts3_rl} for a more general online learning setting. To
\cref{prop:ts3_rl}, we simply apply these results to the reinforcement
learning framework.
\begin{proof}[\pfref{prop:ts3_rl}]
  Let the batch size $n=H$ be fixed, and let $K\ldef{}T/n$ be the number of epochs. Recall that for each step $k\in\brk{K}$, the estimation
  oracle is given a batch of examples $B\ind{k}=\crl*{(\pi\ind{k,l},r\ind{k,l},o\ind{k,l})}_{l=1}^{n}$ where
  $\pi\ind{k,l}\sim{}p\ind{k}$ and
  $(r\ind{k,l},o\ind{k,l})\sim\Mstar(\pi\ind{k,l})$. Each observation
  (trajectory) takes the form
  $o\ind{k,l}=(s_1\ind{k,l},a_1\ind{k,l},r_1\ind{k,l}),\ldots, 
  (s_H\ind{k,l},a_H\ind{k,l},r_H\ind{k,l})$. 
  We abbreviate $\Qstar=\Qmstarstar$.
  \paragraph{Estimation algorithm} For each step $k$, the randomized
  estimator $\mu\ind{k}$ selected as described in
  \pref{alg:ts3_rl}. This algorithm is an instantiation of
  \pref{alg:ts3} in the general online learning setting described in
  \cref{app:completeness}, with $\Gclass = \cQ$ and for all $h\in
  [H]$, $\Xclass_h = \cS \times \cA$, $\Yclass_h = \bbR\times \cS$ and
  $\Zclass = \ngedit{\cS^H}$. The unknown kernels are the transition
  distributions for the corresponding layers of the MDP $\Mstar$, and the loss functions are
  \begin{align*}
  \loss_{h,1}((s_h, a_h), Q) &\ldef Q_h(s_h, a_h) ,\\
  \loss_{h,2}((r_h,s_{h+1}),Q) &\ldef r_h +  \max_a Q_h(s_{h+1},a),\\
  \loss_3(\set{s_1\ind{l}}_{l\in [H]} ) &\ldef  - \frac{1}{H} \sum\limits_{l=1}^{H} \max_a Q_1(s_1\ind{l},a) 
  \end{align*}
  Finally, take $\x_h\ind{k} = (s_h\ind{k,h}, a_h\ind{k,h})$,
  $\y_h\ind{k} = (r_h\ind{k,h}, s_{h+1}\ind{k,h})$ and $\z\ind{k} =
  \set{s_1\ind{k,h}}_{h\in [H]}$. \jqedit{It is important to note that
    $s_h\ind{k,h}, a_h\ind{k,h},r_h\ind{k,h}, s_{h+1}\ind{k,h}$ are
    taken from different trajectories for $h\in [H]$, so
    $\y_h\ind{k}\mid{}\x_h\ind{k}$ are independent from one other for
    $h\in [H]$.} \ngedit{Moreover, note that the distributions
    $p\ind{k} \in \Delta(\Pi)$ play the role of nature: the
    distribution of the tuple ($x_h\ind{k}, y_h\ind{k}, w\ind{k})$ for $h \in [H]$ is determined by running a policy $\pi\ind{k} \sim p\ind{k}$ in the ground-truth MDP $M^\star$.} With this configuration, observe that in the notation of \cref{app:completeness}, we have\ngedit{, for any $Q$,} 
  \begin{align*}
    \En_{\x_{1:H}\ind{k}}\cE(Q,Q,\x_{1:H}\ind{k})^2 &= \frac{1}{H} \sum\limits_{h=1}^{H}\En_{\x_h\ind{k}}(\loss_{h,1}(\x_h\ind{k},Q) - \En[\loss_{h,2}(\y_h,Q)|\x_h\ind{k}])^2 \\
    &= \frac{1}{H} \sum\limits_{h=1}^{H}\En_{\pi\ind{k,h}\sim p\ind{k}}\Enmpi{\Mstar}{\pi\ind{k,h}}\brk*{\prn*{Q_h(s_h,a_h)-\brk{\cT\sups{\Mstar}_hQ_{h+1}}(s_h,a_h)}^2} \\
    &= \frac{1}{H} \En_{\pi\sim p\ind{k}}\Enmpi{\Mstar}{\pi}\brk*{ \sum\limits_{h=1}^{H}\prn*{Q_h(s_h,a_h)-\brk{\cT\sups{\Mstar}_hQ_{h+1}}(s_h,a_h)}^2}\\
    &= \frac{1}{H} \En_{\pi\sim p\ind{k}} \Dsbepi{Q}{\Mstar}.
  \end{align*}
and 
\begin{align*}
  \En_{\x_{1:H}\ind{k},\z\ind{k}}\iota\ind{k}(Q) &= \frac{1}{H} \sum\limits_{l=1}^{H} \En_{\pi\ind{k,h}\sim p\ind{k}}\Enmpi{\Mstar}{\pi\ind{k,h}} \brk*{ \max_a Q_1^*(s_1\ind{k,l},a) - \max_a Q_1(s_1\ind{k,l},a)}\\
  &= \fmstar(\pimstar)-\fq(\piq) .
\end{align*}

  \paragraph{Estimation error bound} We take $\alpha = 12\coef$, so that \pref{thm:main-OL} implies that with probability at least $1-\delta$, 
  \begin{align*}
    &\sum\limits_{k=1}^{K}  \En_{Q\sim \mu\ind{k}} \prn*{\frac{1}{H} \En_{\pi\sim p\ind{k}} \Dsbepi{Q}{\Mstar} + \alpha (\fmstar(\pimstar)-\fq(\piq) )  } \\
    &\approxleq \eta\alpha\log(\abs{\cQ}K/\delta)K  + \log\abs{\cQ}/\eta  + \alpha^2 K.
  \end{align*}
  Then by taking $\alpha = \frac{1}{\gamma H}$, this further implies that with probability at least $1-\delta$,
  \begin{align*}
    \EstOptSB &= \sum\limits_{k=1}^{K}  \En_{\pi\sim p\ind{k}}\En_{Q\sim \mu\ind{k}} \prn*{ \Dsbepi{Q}{\Mstar} + \frac{1}{\gamma} (\fmstar(\pimstar)-\fq(\piq) )  } \\
    &\approxleq H( \eta\alpha\log(\abs{\cQ}K/\delta)K  + \log\abs{\cQ}/\eta  + \alpha^2 K)\\
    &\approxleq{} \frac{H\log\abs{\cQ}}{\eta} + \frac{\eta\log (\abs{\cQ}K/\delta)K}{\gamma} + \frac{K}{\gamma^2 H}.
  \end{align*}
\end{proof}

\begin{proof}[\pfref{cor:regret_complete}] We choose $n=H$ and apply \cref{alg:ts3_rl} as the estimation oracle. 
We first consider the ``trivial'' parameter regime in which $ H d^{1/3}(\log(\abs{\cQ}K/\delta))^{-1/5}T^{-1/3} \geq 1/(2^{16}(\log (|\cQ|K/\delta) + 1))$. Here, $T \approxleq H d^{1/3} (\log(\abs{\cQ}K/\delta))^{4/5}T^{2/3}$, and thus
 \begin{align*}
  \RegDM
  \approxleq{} H d^{1/3} (\log(\abs{\cQ}K/\delta))^{4/5}T^{2/3}.
 \end{align*}
 When the case above, does not hold, we proceed as in the theorem statement, choosing $\eta = H d^{1/3}(\log(\abs{\cQ}K/\delta))^{-1/5}T^{-1/3} \leq  1/(2^{16}(\log (|\cQ|K/\delta) + 1))$. Combining \pref{thm:main_batched} and  \pref{prop:ts3_rl} then gives %
  \begin{align*}
    \RegDM
    \approxleq{} \ocompsbe(\cM)\cdot{}T 
    + \gamma{}\frac{H^2\log\abs{\cQ}}{\eta} + \eta\log (\abs{\cQ}K/\delta)T + \frac{K}{\gamma^2}
  \end{align*}
  with probability at least $1-\delta$.
    Next, using \pref{prop:dec_bilinear} to bound $\ocompsbe(\cM)$ in the above display, it follows that 
    \begin{align*}
          \RegDM
      \approxleq{} \frac{HdT}{\gamma} + \gamma{}\frac{H^2\log\abs{\cQ}}{\eta} + \eta\log (\abs{\cQ}K/\delta)T + \frac{K}{\gamma^2} .
    \end{align*}
    We choose $\gamma  = d^{2/3} (\log (\abs{\cQ}K/\delta))^{-2/5} T^{1/3}$ to obtain 
    \begin{align*}
      \RegDM
      \approxleq{} H d^{1/3} (\log(\abs{\cQ}K/\delta))^{4/5}T^{2/3} 
    \end{align*}
    with probability at least $1-\delta$.

  \end{proof}

 }

\subsection{Proofs from \cref{sec:discussion}}

\begin{proof}[\pfref{prop:equiv_symmetry}]
  We begin with the upper bound. First, note that by
  \pref{ass:continuity} and the AM-GM inequality, we have
  \begin{align}
    \ocompD(\cM) &= \sup_{\mu\in\Delta(\cM)}\inf_{p\in\Delta(\Pi)}\sup_{M\in\cM}\En_{\pi\sim{}p}\En_{\Mbar\sim\mu}\brk*{
  \fmbar(\pimbar) - \fm(\pi)  - \gamma\cdot{}\Dgenpi{\Mbar}{M}
                   } \notag\\
    &\leq{} \sup_{\mu\in\Delta(\cM)}\inf_{p\in\Delta(\Pi)}\sup_{M\in\cM}\En_{\pi\sim{}p}\En_{\Mbar\sim\mu}\brk*{
  \fmbar(\pimbar) - \fmbar(\pi)  - \frac{\gamma}{2}\cdot{}\Dgenpi{\Mbar}{M}
  } + \frac{\Lcont^2}{2\gamma}.\label{eq:equiv1}
  \end{align}
  Consider any fixed choice for $\mu\in\Delta(\cM)$. By Sion's minimax theorem (see \citet{foster2021statistical} for
  details), we have
  \begin{align*}
    &\inf_{p\in\Delta(\Pi)}\sup_{M\in\cM}\En_{\pi\sim{}p}\En_{\Mbar\sim\mu}\brk*{
  \fmbar(\pimbar) - \fmbar(\pi)  - \frac{\gamma}{2}\cdot{}\Dgenpi{\Mbar}{M}
    } \\
    & =
      \inf_{p\in\Delta(\Pi)}\sup_{\nu\in\Delta(\cM)}\En_{\pi\sim{}p}\En_{\Mbar\sim\mu}\En_{M\sim\nu}\brk*{
  \fmbar(\pimbar) - \fmbar(\pi)  - \frac{\gamma}{2}\cdot{}\Dgenpi{\Mbar}{M}
      } \\
        & =
          \sup_{\nu\in\Delta(\cM)}\inf_{p\in\Delta(\Pi)}\En_{\pi\sim{}p}\En_{\Mbar\sim\mu}\En_{M\sim\nu}\brk*{
  \fmbar(\pimbar) - \fmbar(\pi)  - \frac{\gamma}{2}\cdot{}\Dgenpi{\Mbar}{M}
    },
  \end{align*}
  so that the main term in \pref{eq:equiv1} is equal to 
  \begin{align*}
    &
      \sup_{\mu\in\Delta(\cM)}\sup_{\nu\in\Delta(\cM)}\inf_{p\in\Delta(\Pi)}\En_{\pi\sim{}p}\En_{\Mbar\sim\mu}\En_{M\sim\nu}\brk*{
  \fmbar(\pimbar) - \fmbar(\pi)  - \frac{\gamma}{2}\cdot{}\Dgenpi{\Mbar}{M}
      } \\
        &\leq{}
      \sup_{\nu\in\Delta(\cM)}\inf_{p\in\Delta(\Pi)}\sup_{\mu\in\Delta(\cM)}\En_{\pi\sim{}p}\En_{\Mbar\sim\mu}\En_{M\sim\nu}\brk*{
  \fmbar(\pimbar) - \fmbar(\pi)  - \frac{\gamma}{2}\cdot{}\Dgenpi{\Mbar}{M}
          } \\
            &=
\sup_{\nu\in\Delta(\cM)}\inf_{p\in\Delta(\Pi)}\sup_{\Mbar\in\cM}\En_{\pi\sim{}p}\En_{M\sim\nu}\brk*{
  \fmbar(\pimbar) - \fmbar(\pi)  - \frac{\gamma}{2}\cdot{}\Dgenpi{\Mbar}{M}
    }.
  \end{align*}
  Relabeling, this is equal to
  \begin{align*}
    \sup_{\mu\in\Delta(\cM)}\inf_{p\in\Delta(\Pi)}\sup_{M\in\cM}\En_{\pi\sim{}p}\En_{\Mbar\sim\mu}\brk*{
  \fm(\pim) - \fm(\pi)  - \frac{\gamma}{2}\cdot{}\Dgenpi{M}{\Mbar}
    } =     \sup_{\mu\in\Delta(\cM)}\compgenrandbasic[\Dflipshort]_{\gamma/2}(\cM,\mu).
  \end{align*}

We now prove the lower bound. Using \pref{ass:continuity} and the
AM-GM inequality once more, we have
  \begin{align}
    \ocompD(\cM) &= \sup_{\mu\in\Delta(\cM)}\inf_{p\in\Delta(\Pi)}\sup_{M\in\cM}\En_{\pi\sim{}p}\En_{\Mbar\sim\mu}\brk*{
  \fmbar(\pimbar) - \fm(\pi)  - \gamma\cdot{}\Dgenpi{\Mbar}{M}
                   } \notag\\
    &\geq{} \sup_{\mu\in\Delta(\cM)}\inf_{p\in\Delta(\Pi)}\sup_{M\in\cM}\En_{\pi\sim{}p}\En_{\Mbar\sim\mu}\brk*{
  \fmbar(\pimbar) - \fmbar(\pi)  - \frac{3\gamma}{2}\cdot{}\Dgenpi{\Mbar}{M}
      } - \frac{\Lcont^2}{2\gamma}.\notag\\
                 &\geq{} \sup_{\mu\in\Delta(\cM)}\sup_{\nu\in\Delta(\cM)}\inf_{p\in\Delta(\Pi)}\En_{\pi\sim{}p}\En_{\Mbar\sim\mu}\En_{M\sim\nu}\brk*{
  \fmbar(\pimbar) - \fmbar(\pi)  - \frac{3\gamma}{2}\cdot{}\Dgenpi{\Mbar}{M}
      } - \frac{\Lcont^2}{2\gamma}.\label{eq:equiv2}
  \end{align}
  Using the minimax theorem in the same fashion as before, the main term in \pref{eq:equiv2} 
  is equal to 
  \begin{align*}
    &\sup_{\nu\in\Delta(\cM)}\sup_{\mu\in\Delta(\cM)}\inf_{p\in\Delta(\Pi)}\En_{\pi\sim{}p}\En_{\Mbar\sim\mu}\En_{M\sim\nu}\brk*{
  \fmbar(\pimbar) - \fmbar(\pi)  - \frac{3\gamma}{2}\cdot{}\Dgenpi{\Mbar}{M}
      } \\
    &=\sup_{\nu\in\Delta(\cM)}\inf_{p\in\Delta(\Pi)}\sup_{\mu\in\Delta(\cM)}\En_{\pi\sim{}p}\En_{\Mbar\sim\mu}\En_{M\sim\nu}\brk*{
  \fmbar(\pimbar) - \fmbar(\pi)  - \frac{3\gamma}{2}\cdot{}\Dgenpi{\Mbar}{M}
      } \\
    &=\sup_{\nu\in\Delta(\cM)}\inf_{p\in\Delta(\Pi)}\sup_{\Mbar\in\cM}\En_{\pi\sim{}p}\En_{M\sim\nu}\brk*{
  \fmbar(\pimbar) - \fmbar(\pi)  - \frac{3\gamma}{2}\cdot{}\Dgenpi{\Mbar}{M}
      }.
  \end{align*}
  Relabeling, this is equal to
  \begin{align*}
    \sup_{\mu\in\Delta(\cM)}\inf_{p\in\Delta(\Pi)}\sup_{M\in\cM}\En_{\pi\sim{}p}\En_{\Mbar\sim\mu}\brk*{
  \fm(\pim) - \fm(\pi)  - \frac{3\gamma}{2}\cdot{}\Dgenpi{M}{\Mbar}
      } =  \sup_{\mu\in\Delta(\cM)}\compgenrandbasic[\Dflipshort]_{3\gamma/2}(\cM,\mu).
  \end{align*}
  
\end{proof}

\begin{proof}[\pfref{prop:randomized_equivalence}]
  Let $\Mbar$ be arbitrary. By Sion's minimax theorem (see \citet{foster2021statistical} for
  details), we have
  \begin{align*}
    \compgen(\cM,\Mbar)
  = \sup_{\mu\in\Delta(\cM)}\inf_{p\in\Delta(\Pi)}\En_{\pi\sim{}p,M\sim\mu}\brk*{\fm(\pim)-\fm(\pi)-\gamma\cdot\Dgenpi{\Mbar}{M}}.
  \end{align*}
  By the assumed triangle inequality for $\Dgenshort$, we have that for all $\pi\in\Pi$,
  \begin{align*}
    \En_{M,M'\sim\mu}\brk*{\Dgenpi{M}{M'}}
    &\leq{}     C\En_{M\sim\mu}\brk*{\Dgenpi{\Mbar}{M}}
    +    C\En_{M'\sim\mu}\brk*{\Dgenpi{\Mbar}{M'}} \\
    &= 2C\En_{M\sim\mu}\brk*{\Dgenpi{\Mbar}{M}}.
  \end{align*}
  Applying this bound above, we have that
  \begin{align*}
        \compgen(\cM,\Mbar)
    &\leq{}
      \sup_{\mu\in\Delta(\cM)}\inf_{p\in\Delta(\Pi)}\En_{\pi\sim{}p,M\sim\mu}\brk*{\fm(\pim)-\fm(\pi)-\frac{\gamma}{2C}\cdot\En_{M'\sim\mu}\brk*{\Dgenpi{M'}{M}}}\\
    &\leq{}
      \sup_{\nu\in\Delta(\cM)}\sup_{\mu\in\Delta(\cM)}\inf_{p\in\Delta(\Pi)}\En_{\pi\sim{}p,M\sim\mu}\brk*{\fm(\pim)-\fm(\pi)-\frac{\gamma}{2C}\cdot\En_{M'\sim\nu}\brk*{\Dgenpi{M'}{M}}}\\
    &\leq{}
      \sup_{\nu\in\Delta(\cM)}\inf_{p\in\Delta(\Pi)}\sup_{M\in\cM}\En_{\pi\sim{}p}\brk*{\fm(\pim)-\fm(\pi)-\frac{\gamma}{2C}\cdot\En_{M'\sim\nu}\brk*{\Dgenpi{M'}{M}}}\\
    &=\sup_{\nu\in\Delta(\cM)}\compgenrandbasic_{\gamma/(2C)}(\cM,\nu).
  \end{align*}
  
\end{proof}

\begin{proof}[\pfref{prop:bilinear_separation}]%
      \newcommand{\tfrak}{\mathfrak{t}}%
    \newcommand{\sfrak}{\mathfrak{s}}%
    \newcommand{\afrak}{\mathfrak{a}}%
    \newcommand{\bfrak}{\mathfrak{b}}%
    \newcommand{\avec}{\vec{a}}%
    \newcommand{\Mavec}{M_{\avec}}%
    \newcommand{\fmavec}{f\sups{\Mavec}}%
    \newcommand{\pimavec}{\pi\subs{\Mavec}}%
    \newcommand{\Qmbar}{Q^{\sss{\Mbar},\star}}%
  The upper bound on $\ocompbi(\cM)$ follows from
  \pref{prop:dec_bilinear}, so it remains to prove the lower bound on
  $\compbi(\cM)$.

  For a model $M$, define the Bellman operator
  $\cT\sups{M}_h$ via
  \begin{align}
    [\cT_h\ind{\sss{M}} Q](s, a) = \En_{s_{h+1}\sim{}P_h\sups{M}(\cdot \mid{} s,a), r_h\sim R\sups{M}_h(\cdot \mid{} s,a)}\brk*{r_h + \max_{a'\in\cA} Q(s_{h+1}, a')}.
  \end{align}
  Let 
  \[
    \Dsbepi{Q}{M}
    \ldef{}\sum_{h=1}^{H}\Enmpi{M}{\pi}\brk*{
      \prn*{Q_h(s_h,a_h) - \brk*{\cTm_hQ_{h+1}}(s_h,a_h)}^2
      }
    \]
    and $\Dsbepi{\Mbar}{M}\ldef\Dsbepi{\Qmbar}{M}$.
    By Jensen's inequality, it suffices to lower bound $\compsbe(\cM)$. 
   
    Let $\cS=\crl{\sfrak,\tfrak}$ and $\cA=\crl{\afrak,\bfrak}$. We
    consider a sub-family $\cM'\subset\cM$ of deterministic
    combination lock MDPs parameterized by
    $\avec=(\avec_1,\ldots,\avec_H)\in\cA^{H}$, with $\Mavec$ defined
    as follows.
    \begin{itemize}
    \item The initial state is $s_1=\sfrak$.
    \item For each $h=1,\ldots,H-1$, if $s_h=\sfrak$, then selecting
      $a_h=\avec_h$ transitions to $s_{h+1}=\sfrak$, and selecting
      $a_h\neq{}\avec_h$ transitions to $s_{h+1}=\tfrak$.
      $\tfrak$ is a
      self-looping terminal state: if $s_h=\tfrak$, then
      $s_{h+1}=\tfrak$ regardless of the action taken.
    \item If $s_H=\sfrak$ and $a_H=\avec_H$, then $r_H=\Delta>0$; all
      other state-action tuples have zero reward.
    \end{itemize}
We choose $\Mbar$ such that $\Qmbar_h(s,a)=0$ for all $(h,s,a)$. We calculate that for
all $\avec\in\cA^{H}$, and all policies $\pi\in\PiRNS$,
\begin{itemize}
\item $\fmavec(\pimavec)-\fmavec(\pi)
  = \Delta\cdot\bbP^{\sss{\Mavec},\pi}(a_{1:H}\neq\avec)$.
\item $\Dsbepi{\Qmbar}{\Mavec}
  = \Delta^{2}\cdot\bbP^{\sss{\Mavec},\pi}(a_{1:H}=\avec)$.
\end{itemize}
It follows that
\begin{align*}
  \compsbe(\cM,\Mbar)
  &\geq{}
  \inf_{p\in\Delta(\Pi)}
  \max_{\avec\in\cA^{H}}\En_{\pi\sim{}p}\brk*{
  \Delta\cdot\bbP^{\sss{\Mavec},\pi}(a_{1:H}\neq\avec)
  - \gamma\cdot{}\Delta^{2}\cdot\bbP^{\sss{\Mavec},\pi}(a_{1:H}=\avec)
    }\\
    &\geq{}
  \inf_{p\in\Delta(\Pi)}
      \En_{\avec\sim\unif(\cA^{H})}\En_{\pi\sim{}p}\brk*{
  \Delta\cdot\bbP^{\sss{\Mavec},\pi}(a_{1:H}\neq\avec)
  - \gamma\cdot{}\Delta^{2}\cdot\bbP^{\sss{\Mavec},\pi}(a_{1:H}=\avec)
  }.
\end{align*}
It is straightforward to see by induction that for all $\pi\in\PiRNS$,
$\En_{\avec\sim\unif(\cA^{H})}\brk*{
\bbP^{\sss{\Mavec},\pi}(a_{1:H}=\avec)
  }\leq{}2^{-H}$ and $\En_{\avec\sim\unif(\cA^{H})}\brk*{
\bbP^{\sss{\Mavec},\pi}(a_{1:H}\neq\avec)
}\geq{} \frac{1}{2}$, so we have
\begin{align*}
  \compsbe(\cM,\Mbar)
  \geq{} \frac{\Delta}{2} - \gamma\frac{\Delta^2}{2^{H}}.
\end{align*}
The result follows by choosing $\Delta$ appropriately.

\end{proof}

\begin{proof}[\pfref{prop:cheating}]
  \newcommand{\sfrak}{\mathfrak{s}}%
  \newcommand{\afrak}{\mathfrak{a}}%
  \newcommand{\bfrak}{\mathfrak{b}}%
  \newcommand{\om}[1][M]{o\sups{#1}}%
  \newcommand{\omstar}[1][\Mstar]{o\sups{#1}}%
  \newcommand{\ombar}[1][\Mbar]{o\sups{#1}}%
  \newcommand{\omhat}[1][\Mhat]{o\sups{#1}}%
  \newcommand{\omhatt}[1][\Mhat\ind{t}]{o\sups{#1}}%
  Let $H$ be given, and assume without loss of generality that
  $S=2^{H-1}+2^{H-2}$. Let $N\ldef{}2^{H-2}$. We construct a family of MDPs
$\cM=\crl{M_1,\ldots,M_N}$ with deterministic rewards and transitions as follows.
\begin{itemize}
\item $\cA=\crl{\afrak,\bfrak}$.
\item The state space is $\cS=\cT\cup\cE\cup\crl{\sfrak}$. $\sfrak$ is
  a deterministic initial state which appears only in layer $1$. $\cT$
  represents a depth $H-2$ binary tree
  (with the root at layer $h=2$), which has $N=2^{H-2}$ leaf states in
  layer $H$, labeled by $\crl{1,\ldots,N}$, and $2^{H-1}-1$ states in
  total. $\cE=\crl{N+1,\ldots,2N}$ is an auxiliary collection of
  self-looping terminal states, each of which can appears in layers $h=2,\ldots,H$. In total,
  we have $\abs{\cS}=2^{H-1}+2^{H-2}$.
\end{itemize}
The dynamics and rewards for MDP $M_i$ are as follows.
\begin{itemize}
  \item At layer $h=1$, $s_1=\sfrak$ deterministically.
  \item If $a_1=\afrak$, we transition to $s_2=N+i\in\cE$, and if
    $a_1=\bfrak$, we transition to the root state for the tree
    $\cT$. We receive no reward.
  \item For $h\geq{}2$, all states $s\in\cE$ are self-looping and have no
    reward (i.e. $s_{h+1}=s_h$ if $s_h\in\cE$ for $h>1)$.
  \item For $h\geq{}2$, states in $\cT$ follow a standard
    deterministic binary tree
    structure (e.g., \citep{osband2016lower,domingues2021episodic}). Beginning from the root node at $h=2$, action $\afrak$
    transitions to the left successor, while action $\bfrak$
    transitions to the right successor. For $M_i$, we receive reward
    $1$ for reaching the leaf node $i\in\cT$ at layer $H$, and receive
    zero reward for all other states. Note that the transition
    probabilities for this portion of the MDP do not depend on $i$.
\end{itemize}

\paragraph{Online estimation}
We first construct an online estimation algorithm for the class
$\cM$. Recall that we adopt Hellinger distance, given by $\Dhelspi{\Mhat}{M}=\Dhels{\Mhat(\pi)}{M(\pi)}$.

We first note that since all $M\in\cM$ have $\fm(\pim)=1$, the
optimistic estimation error is equal to the (non-optimistic) estimation error
\[
\EstH = \sum_{t=1}^{T}\En_{\pi\ind{t}\sim{}p\ind{t}}\En_{\Mhat\ind{t}\sim\mu\ind{t}}\brk*{\Dhels{\Mhat\ind{t}(\pi\ind{t})}{\Mstar(\pi\ind{t})}}.
\]
Observe that since all MDPs $M\in\cM$ have deterministic rewards and
transitions, there exists a function $\om(\pi)$ such that
$o\sim{}M(\pi)$ has $o=\om(\pi)$ almost surely. It follows that
\[
  \Dhels{M(\pi)}{M'(\pi)}=2\indic\crl{\om(\pi)\neq{}o\sups{M'}(\pi)}
\]
for all $M,M'\in\cM$, so we have
\[
\EstH = \sum_{t=1}^{T}\En_{\pi\ind{t}\sim{}p\ind{t}}\En_{\Mhat\ind{t}\sim\mu\ind{t}}\brk*{\Dhels{\Mhat\ind{t}(\pi\ind{t})}{\Mstar(\pi\ind{t})}}
  = 2\sum_{t=1}^{T}\En_{\pi\ind{t}\sim{}p\ind{t}}\En_{\Mhat\ind{t}\sim\mu\ind{t}}\brk*{\indic\crl{\omhatt(\pi\ind{t})\neq\omstar(\pi\ind{t})}}.
\]
For the estimation algorithm, we choose
\[
  \mu\ind{t}(M)\propto\exp\prn*{
    -\sum_{i<t}\indic\crl{\om(\pi\ind{i})\neq{}o\ind{i}}
    }.
\]
\pref{lem:exp_weights} implies that with probability $1$, the sequence
$\pi\ind{1},\ldots,\pi\ind{T}$ satisfies
\begin{align*}
  \sum_{t=1}^{T}\En_{\Mhat\ind{t}\sim\mu\ind{t}}\brk*{\indic\crl{\omhatt(\pi\ind{t})\neq{}o\ind{t}}}
  -
  \sum_{t=1}^{T}\indic\crl{\omstar(\pi\ind{t})\neq{}o\ind{t}}
  \leq{}
  \frac{1}{2}\sum_{t=1}^{T}\En_{\Mhat\ind{t}\sim\mu\ind{t}}\brk*{\indic\crl{\omhatt(\pi\ind{t})\neq{}o\ind{t}}}
  + \log\abs{\cM}.
\end{align*}
Since $\omstar(\pi\ind{t})=o\ind{t}$, rearranging yields
\[
  \sum_{t=1}^{T}\En_{\Mhat\ind{t}\sim\mu\ind{t}}\brk*{\indic\crl{\omhatt(\pi\ind{t})\neq{}o\ind{t}}}
  \leq{}2\log\abs{\cM}.
\]
From here, a standard application of Freedman's inequality
\citep{Freedman1975tail} implies
that with probability at least $1-\delta$,
$\sum_{t=1}^{T}\En_{\pi\ind{t}\sim{}p\ind{t}}\En_{\Mhat\ind{t}\sim\mu\ind{t}}\brk*{\indic\crl{\omhatt(\pi\ind{t})\neq{}o\ind{t}}}\approxleq{}
\log(\abs{\cM}/\delta) \approxleq{} \log(S/\delta)$.

\paragraph{Lower bound for posterior sampling}
Observe that for all $M_i\in\cM$, the unique optimal policy has
$\pimi(\sfrak)=\bfrak$ at $h=1$. Thus, the posterior sampling algorithm,
which chooses
$p\ind{t}(\pi)=\mu\ind{t}(\crl*{M\in\cM\mid{}\pim=\pi})$, will never
play $a_1=\afrak$, and will never encounter states in $\cE$. As a
result, the problem is equivalent (for this algorithm) to a multi-armed
bandit problem with $N$ arms and noiseless binary rewards, which
requires $\En\brk*{\RegDM}\approxgeq{}N\approxgeq{}S$ in the worst-case \citep{lattimore2020bandit}.

\paragraph{Upper bound for \pref{alg:main}}
We first bound the \newcomp for $\cM$. For any $\mu\in\Delta(\cM)$, we
can write
\begin{align*}
  \ocomphel(\cM,\mu)
  &=
  \inf_{p\in\Delta(\Pi)}\sup_{M\in\cM}\En_{\pi\sim{}p}\En_{\Mbar\sim\mu}\brk*{
  \fmbar(\pimbar) - \fm(\pi)  - \gamma\cdot{}\Dgen{\Mbar(\pi)}{M(\pi)}
    } \\
  &=
  \inf_{p\in\Delta(\Pi)}\sup_{M\in\cM}\En_{\pi\sim{}p}\En_{\Mbar\sim\mu}\brk*{
  \fmbar(\pimbar) - \fm(\pi)  - 2\gamma\cdot{}\indic\crl{\ombar(\pi)\neq\om(\pi)}
    } \\
    &=
  \inf_{p\in\Delta(\Pi)}\sup_{M\in\cM}\En_{\pi\sim{}p}\En_{\Mbar\sim\mu}\brk*{
      \fm(\pim) - \fm(\pi)  - 2\gamma\cdot{}\indic\crl{\ombar(\pi)\neq\om(\pi)}
  },
\end{align*}
where the last equality uses that $\fm(\pim)=1$ for all $M\in\cM$. We choose $p =
(1-\veps)\mu(\crl*{M\in\cM\mid{}\pim=\cdot})
+ \veps{}\pi_{\afrak}$, where $\pi_{\afrak}$ is the policy that plays
action $\afrak$ deterministically.

Now, let $M\in\cM$ be fixed. Since each model $M_i$ transitions
to $s_2=N+i$ deterministically when $a_1=\afrak$, we 
\[
  \En_{\pi\sim{}p}\indic\crl{\ombar(\pi)\neq\om(\pi)}
  \geq \veps\cdot \indic\crl{\ombar(\pi_{\afrak})\neq\om(\pi_{\afrak})}
  =\veps\cdot{}\indic\crl{M\neq\Mbar}
\]
and
\[
  \En_{\Mbar\sim\mu}\En_{\pi\sim{}p}\indic\crl{\ombar(\pi)\neq\om(\pi)}
  \geq{} \veps\mu(\cM\setminus{}\crl{M}).
\]
Similarly, we have
\[
\En_{\pi\sim{}p}\brk*{
  \fm(\pim) - \fm(\pi)
  } = \veps + (1-\veps)\mu(\cM\setminus{}\crl{M}).
\]
By choosing $\veps=\gamma^{-1}$, which is admissible whenever
$\gamma\geq{}1$, we have
\begin{align*}
  \En_{\pi\sim{}p}\En_{\Mbar\sim\mu}\brk*{
      \fm(\pim) - \fm(\pi)  - 2\gamma\cdot{}\indic\crl{\ombar(\pi)\neq\om(\pi)}
  }
  \leq{} \veps + \mu(\cM\setminus{}\crl{M})
  - 2\gamma \veps\mu(\cM\setminus{}\crl{M})
  \leq{} \frac{1}{\gamma}.
\end{align*}
This establishes that
\[
  \ocomphel(\cM) \leq \frac{1}{\gamma}
\]
for all $\gamma\geq{}1$. A regret bound of the form
$\En\brk*{\RegDM}\leq\bigoht(\sqrt{T\log(S)})$ now follows by invoking
\pref{thm:main} with the estimation guarantee in the prequel and
choosing $\gamma$ appropriately.

\end{proof}

\end{document}